\documentclass{article}

\PassOptionsToPackage{authoryear}{natbib}

\usepackage[final,main]{neurips_2026}

\input{preamble.tex}
\usepackage{longtable}

\makeatletter
\patchcmd{\@notice}{\enlargethispage{2\baselineskip}}{}{}{
  \PackageError{camera-ready}{Could not patch the NeurIPS notice}{Check the definition of \string\@notice.}
}
\makeatother

\let\sb=_

\renewcommand{\algname}[2][]{{\sffamily#1\color{mydarkgreen}#2}\xspace}

\definecolor{myrefcolor}{RGB}{165,30,55}
\definecolor{myurlpurple}{RGB}{94,53,177}
\hypersetup{
  colorlinks=true,
  linkcolor=myrefcolor,
  citecolor=mydarkblue,
  urlcolor=myurlpurple
}

\definecolor{bgcolor}{rgb}{0.8,1,1}
\definecolor{bgcolor2}{rgb}{0.8,1,0.8}

\title{Rennala NSGD: Asynchronous Stochastic Optimization Beyond Euclidean Geometry}

\author{%
  Igor Sokolov \\
  KAUST\thanks{Center of Excellence for Generative AI, King Abdullah University of Science and Technology (KAUST), Thuwal, Saudi Arabia. Correspondence to: Igor Sokolov \href{mailto:igor.sokolov.ml@gmail.com}{\texttt{igor.sokolov.ml@gmail.com}}. Website: \url{https://igor-sokolov.com}.}
  \And
  Alexander Tyurin \\
  AXXX\thanks{AXXX, Moscow, Russia.}, Applied AI Institute\thanks{Applied AI Institute, Moscow, Russia.}
  \And
  Peter Richt\'arik \\
  KAUST
}

\begin{document}

\maketitle

\begin{abstract}
The empirical success of matrix-aware optimizers, such as \algname{Muon}~\citep{jordan2024muon} and \algname{Shampoo}~\citep{gupta2018shampoo}, highlights the potential of non-Euclidean geometry for deep learning, including large language model (LLM) training. Despite this progress, the theoretical understanding of stochastic optimization in general normed spaces remains limited, particularly in distributed settings. We analyze minibatch stochastic gradient descent (\algname{SGD}) for smooth nonconvex optimization \emph{with respect to an arbitrary norm on a finite-dimensional space}. Using the framework of $\kappa$-regular normed spaces~\citep{juditsky2008large}, we derive high-probability oracle complexity bounds under a \emph{light-tail} condition stated directly in the dual norm induced by the update norm. The bounds make the dependence on geometry explicit through the regularity constant $\kappa$ of the dual space.
In the Euclidean case, our bound recovers the standard minibatch \algname{SGD} rate up to logarithmic factors. For some non-Euclidean geometries and noise models, it is sharper than bounds obtained by converting a Euclidean noise assumption, reducing a linear dimension dependence to a logarithmic one.
Building on this framework, we introduce \algname{Rennala NSGD}, a semi-asynchronous distributed algorithm, and establish wall-clock time complexity guarantees under fixed, heterogeneous worker computation times. In the Euclidean case ($\kappa=1$), our bound matches the parameter dependence of classical \algname{Rennala SGD} up to logarithmic factors.
\end{abstract}

\section{Introduction}\label{sec:intro}
\textbf{Modern optimizers are non-Euclidean in practice.}
Modern deep-learning training increasingly uses \emph{geometry-aware} primitives, including matrix preconditioning and orthogonalized updates.
Matrix-aware optimizers include \algname{Shampoo}~\citep{gupta2018shampoo}, which uses matrix preconditioning, and \Muon~\citep{jordan2024muon} and \algname{Dion}~\citep{ahn2025dion}, which use orthogonalized matrix updates.
Recent studies report improved training efficiency over \algname{AdamW} for \Muon~\citep{liu2025muon} and \algname{Dion}~\citep{ahn2025dion} in Transformer language-model experiments.
Their empirical advantages are often interpreted through the connection between these matrix updates and steepest descent with respect to non-Euclidean norms, such as the spectral norm and related Schatten norms~\citep{bernstein2024old, lau2025polargrad}. These norms respect the matrix structure of neural-network parameters, including the weight matrices in Transformer attention and feed-forward layers.
This geometric perspective motivates a stochastic theory developed directly in general normed spaces.

\textbf{Theory gap: stochastic non-Euclidean \algname[]{SGD} under the ``right'' noise model.}
A central modeling question in stochastic non-Euclidean optimization is how to quantify oracle noise.
Most existing analyses~\citep{kovalev2025understanding, gruntkowska2025error, li2025note, shen2025convergence, zhang2025provable, takezawa2025fedmuon}
assume Euclidean (or Frobenius) moment bounds or bounded variance and then transfer results to a \emph{target non-Euclidean norm} via norm equivalence.
Such conversions can introduce dimension-dependent inflation and may obscure the regimes in which a geometry-aware method should provably help.
We therefore seek guarantees under a noise model stated in the \emph{native norm}.

\textbf{Our lens: dual-norm light tails and $\kappa$-regularity.}
We measure oracle noise directly in the dual norm $\snorm{\cdot}$ induced by the chosen update norm $\norm{\cdot}$ (\Cref{as:non_euclidean_light_tail}).
Using the $\kappa$-regularity framework~\citep{juditsky2008large}, we obtain high-probability guarantees with the additional geometric dependence captured by the dual-space regularity constant $\kappa$.
Every nonzero finite-dimensional normed space admits such a constant, so this approach does not restrict the choice of update norm (\Cref{subsec:regularity}).
For suitable geometries and noise models, it yields sharper bounds than Euclidean/Frobenius conversions (\Cref{subsubsec:native_noise_improvement}).

\textbf{Problem setup.}
Throughout the paper, $E$ denotes a finite-dimensional space with $\dim(E)\ge1$.
We consider the distributed, non-convex stochastic optimization problem expressed as a finite sum over $N$ workers:
\begin{align}
    \label{eq:main_task_sum}
    \min_{x \in E} \left\{ f(x) \eqdef \frac{1}{N} \sum_{i=1}^N f_i(x) \right\},
\end{align}
where each $f_i$ is an expectation over a worker-specific data distribution $\cD_i$:
\begin{align}
    \label{eq:local_objective}
    f_i(x) \eqdef \ExpSub{\xi_i \sim \cD_i}{f_i(x; \xi_i)}.
\end{align}
The space $E$ may represent, for instance, vectors in $\R^d$ or matrices in $\R^{m\times n}$. For each worker $i$, the function $f_i:E\times\mathbb{S}_i\to\R$ is defined over a sample space $\mathbb{S}_i$, and $\xi_i$ is a random variable drawn from $\cD_i$.
The problem is \emph{unconstrained}: the methods we analyze contain no projection, so nothing confines the iterates to a proper subset of $E$, and every assumption below is stated on all of $E$.

This formulation is particularly relevant to Federated Learning (FL)~\citep{konevcny2016federated, mcmahan2017communication, kairouz2021advances}. In this context, two primary forms of heterogeneity arise: \textbf{device heterogeneity}, which concerns variations in system capabilities, and \textbf{statistical heterogeneity}, where data distributions $\cD_i$ differ across workers. To isolate the effects of device heterogeneity, we assume that all workers share the same sample loss $f(\cdot;\cdot)$ and sampling distribution $\cD$. Thus, $f_i(x;\xi)=f(x;\xi)$ and $\cD_i=\cD$ for every worker $i$, so all local objectives coincide with $f(x)=\ExpSub{\xi\sim\cD}{f(x;\xi)}$.
In this case, \eqref{eq:main_task_sum} reduces to
\begin{align}
    \label{eq:main_task}
    \min_{x \in E} \left\{f(x) \eqdef \ExpSub{\xi \sim \cD}{f(x;\xi)}\right \}.
\end{align}
Our algorithms access $f$ through a stochastic first-order oracle:
given a query point $x$, it returns a random vector $\nf{x;\xi}\in E^*$
that estimates $\nabla f(x)$.
In the sample-loss setting above, this response is the gradient of
$f(x;\xi)$ with respect to $x$.
More generally, we allow stochastic estimates that need not be gradients
of sample losses; the oracle assumptions are stated formally in
\Cref{subsec:stationarity_oracle}.  
In the parallel setting, all workers use the same oracle map and sampling
distribution $\cD$.

Throughout the paper, we assume that $f$ is $L$-smooth
(\Cref{as:non_euclidean_smoothness}) and bounded from below
(\Cref{as:lower_bound}), and that the oracle responses are unbiased and
satisfy a light-tail condition in the induced dual norm $\snorm{\cdot}$
(\Cref{as:non_euclidean_light_tail}).

We work in the following parallel computing setup:
\begin{enumerate}[label=(\roman*), leftmargin=*, nosep]
    \item A fixed number of $N$ workers are available to perform computations in parallel.
    \item Worker $i$ requires a fixed time $\tau_i$ seconds to compute one stochastic gradient after receiving a query point. These times remain fixed throughout the run, and we index the workers so that $0 < \tau_1 \le \tau_2 \le \cdots \le \tau_N$.
    \item Communication in both directions between the server and the workers, and server-side processing, including evaluation of the sharp update, are treated as instantaneous, as in the computation model of \Cref{sec:rennala_nsgd}.
\end{enumerate}
\textbf{From oracle complexity to wall-clock time: heterogeneous parallelism.}
On the systems side, asynchronous \algname{SGD} has been shown to improve wall-clock convergence guarantees over synchronous minibatch \algname{SGD} that waits for all workers, in settings with heterogeneous worker speeds \citep{mishchenko2022asynchronous}. This line of work emphasizes \textbf{time complexity}---the wall-clock time needed to reach a prescribed accuracy---as a natural measure of parallel performance. In the Euclidean setting, \citet{tyurin2023optimal} establish a lower bound for their zero-respecting algorithm class and a matching upper bound, up to constants, for \algname{Rennala SGD} under their specified unbiased bounded-variance oracle model with fixed worker computation times.
On the optimization side, non-Euclidean methods are motivated by modern matrix-aware updates, yet their interaction with computational heterogeneity remains poorly understood.
Bridging these two directions requires (i) tight control of stochastic gradient noise in the induced dual norm $\snorm{\cdot}$ associated with the chosen update norm $\norm{\cdot}$ and (ii) a wall-clock analysis that accounts for heterogeneous worker speeds under non-Euclidean updates.
Together, these two directions motivate the following question:

\begin{tcolorbox}[colback=black!5!white,colframe=black!75!black, sharp corners, boxrule=0.5pt]
\centering
\textit{Should stochastic gradient noise be controlled in a Euclidean or Frobenius norm and then converted, via norm equivalence, to the dual norm $\snorm{\cdot}$ induced by the update norm $\norm{\cdot}$? Or can a noise condition stated directly in $\snorm{\cdot}$ yield sharper oracle and wall-clock guarantees?}
\end{tcolorbox}

\textbf{Notation.} The notation used throughout the paper is collected in \Cref{app:notation}, which gives the main definitions followed by a consolidated reference table.

\textbf{Our contributions.}
We study non-convex stochastic optimization in a normed space $(E, \norm{\cdot})$ and focus on the statistically homogeneous but \emph{computationally heterogeneous} parallel regime. Our goal is to characterize both (i) the number of stochastic gradients needed to reach an $\eps$-stationary point and (ii) the \emph{wall-clock time} needed when workers have different speeds.

Our contributions are threefold:

\textbf{(i) Minibatch non-Euclidean \algname[]{SGD} under a dual-norm light-tail oracle.}
We establish high-probability oracle-complexity bounds for minibatch \algname{SGD} under the stated smoothness, lower-boundedness, and unbiased dual-norm light-tail assumptions.
The key ingredient is a minibatch concentration inequality for fresh i.i.d.\ samples in regular dual spaces (\Cref{lemma:non_euclidean_large_deviations}).
Combining it with non-Euclidean descent, we show that an $\eps$-stationary point is reached with probability at least $1-\delta$ using
\begin{align}\label{eq:complexity_dual_native_intro}
\widetilde{\mathcal O}\!\left(\frac{L\Delta}{\eps} + \frac{\kappa L\Delta\sigma^2}{\eps^2}\right)
\end{align}
stochastic gradient evaluations (\Cref{thm:stochastic_L_smooth_final} and \Cref{cor:complexity_dual_native_light_tail}), where $\Delta=f(x^0)-\finf$ is the initial functional gap and $\kappa$ is a regularity constant of the dual space $(E^*, \snorm{\cdot})$.
In the Euclidean case, where $\kappa=1$, this matches the optimal oracle complexity up to logarithmic factors for general unbiased light-tail oracles, in the regime described in \Cref{subsec:main_oracle_complexity}.

\textbf{(ii) Sharp regularity bounds and improvements over norm-equivalence conversions.}
We establish matching upper and lower bounds on the optimal $\kappa$-regularity parameter for $\ell_p^d$ and Schatten-$p$ spaces, exact for $1\le p<2$ and up to absolute constants for $2\le p\le\infty$ (\Cref{tab:regularity_examples} and \Cref{sec:reg_examples}).
For some non-Euclidean geometries and noise models, the native-norm
analysis yields significantly smaller oracle-complexity bounds than
those obtained by converting a Euclidean/Frobenius noise assumption. In particular, for the $\ell_1$ update norm under dense Rademacher perturbations, the statistical coefficient grows logarithmically rather than linearly with the dimension. We quantify these comparisons for $\ell_p^d$ and Schatten-$p$ dual geometries in \Cref{subsubsec:native_noise_improvement} and \Cref{sec:frobenius_light_tail_oracle}.

\textbf{(iii) Wall-clock guarantees for semi-asynchronous non-Euclidean \algname[]{SGD}: \algname[]{Rennala NSGD}.}
Building on the minibatch theory, we introduce \algname{Rennala NSGD}, a semi-asynchronous parallel method for spaces with non-Euclidean geometry, which collects fresh minibatches at the current iterate and discards stale gradients. In a system with worker computation times $\tau_i$, we prove (\Cref{thm:main_time_complexity}) the wall-clock time bound
\begin{align}\label{eq:time_complexity_rennala_intro}
    \widetilde{\mathcal O}\left(
    \min_{S\in[N]}
    \left(\frac{1}{S}\sum_{i=1}^S\frac{1}{\tau_i}\right)^{-1}
    \left(\frac{L\Delta}{\eps}+\frac{\kappa L\Delta \sigma^2}{S\eps^2}\right)
    \right).
\end{align}
In the Euclidean case, where $\kappa=1$, \eqref{eq:time_complexity_rennala_intro} matches the parameter dependence of the classical \algname{Rennala SGD} time-complexity bound up to logarithmic factors \citep{tyurin2023optimal} (\Cref{subsec:time_complexity}).



\section{Related Work}\label{sec:related_work}
Our work connects two research threads: (i) stochastic optimization in a general normed space $(E, \norm{\cdot})$ with \emph{non-Euclidean norm}, and (ii) time-aware parallel optimization under computational heterogeneity.

\textbf{Non-Euclidean Optimization.}
Recent work on large-scale training has explored matrix-aware optimizers, including the \algname{Muon} and \algname{Shampoo} families, that approximate steepest-descent directions in non-Euclidean norms~\citep{jordan2024muon,shah2025practical,an2025asgo}.
Several existing theoretical analyses of these methods in the non-distributed setting
\citep{li2025note,shen2025convergence,kim2026convergence,nagashima2026improved,kovalev2025understanding}
control stochastic gradient noise in Euclidean or Frobenius norms.
Some of these analyses use norm equivalence to obtain guarantees in a \emph{target non-Euclidean norm}.
Such conversions can introduce dimension- or rank-dependent factors.
Our analysis instead imposes a light-tail condition directly in the dual norm induced by the update geometry.

In distributed settings, several works have explored \algname{Muon}-style methods.
\citet{zhang2025provable} and \citet{takezawa2025fedmuon} analyze synchronous federated training and address the statistical bias of local updates, while \citet{gruntkowska2025error} focuses on communication efficiency via error feedback.
These studies primarily target communication overhead or local-update bias within synchronized rounds (typically under Frobenius/Euclidean noise models), and do not address computational heterogeneity or wall-clock bounds.

\textbf{Stochastic Optimization in Regular Spaces.}
The $\kappa$-regularity framework of \citet{juditsky2008large} provides large-deviation bounds for martingale sums in finite-dimensional normed spaces, with geometric dependence captured by the regularity constant $\kappa$.
Under conditional exponential-moment bounds on the squared norms of the increments, they obtain sub-Gaussian deviation bounds for the norm of the martingale sum.
These tools were subsequently incorporated into the analysis of stochastic first-order methods \citep[see, e.g.,][]{juditsky2012randomized}.
Related martingale large-deviation bounds have also been used to derive non-asymptotic confidence bounds for assessing the quality of solutions to stochastic optimization problems \citep{guigues2017multistep}.

\textbf{Time Complexity in Heterogeneous Parallel \algname[]{SGD}.}
In the distributed regime, parallelizing \algname{SGD} is a natural strategy for accelerating training. The simplest approach, \textbf{minibatch \algname{SGD}}, synchronizes $N$ workers at each step to compute gradients in parallel, after which a central server performs the update~\citep{gower2019sgd, cotter2011better, goyal2017accurate}:
$x^{k+1} = x^k - \frac{\gamma}{N} \sum_{i=1}^N \nabla f(x^k; \xi_i^k)$.
For a general minibatch size $B$ and a suitable stepsize, the standard Euclidean/Frobenius analysis gives the iteration-complexity bound
$\mathcal{T}_{\mathrm{SGD}}^{\opf}(B)=\mathcal{O}\!\left(\frac{L_{\opf}\Delta}{\eps_{\opf}} + \frac{\sigma_{\opf}^2 L_{\opf}\Delta}{B\eps_{\opf}^2}\right)$~\citep{ghadimi2013stochastic},%
\footnote{The bound assumes smoothness, lower boundedness, unbiased bounded-variance noise, and fresh independent samples. It ensures $\Exp{\sqfnorm{\nabla f(\hat{x})}}\le\eps_{\opf}$ for an iterate $\hat{x}$ selected uniformly from the iterates. The subscript $\opf$ denotes Euclidean/Frobenius parameters, and $\Delta=f(x^0)-\finf$.}
with $B=N$ for the synchronized method above.
Since each iteration waits for all workers, heterogeneous worker speeds make the slowest worker the bottleneck~\citep{dutta2018slow, chen2016revisiting}: if worker $i$ takes $\tau_i$ seconds per gradient, the runtime scales as $\tau_{\max}\mathcal{T}_{\mathrm{SGD}}^{\opf}(N)$, where $\tau_{\max}\eqdef\max_i\tau_i$.

To overcome this synchronization barrier, \textbf{Asynchronous \algname{SGD}} (\algname{ASGD}) was introduced~\citep{recht2011hogwild, feyzmahdavian2016asynchronous, nguyen2018sgd, arjevani2020tight}.
In its standard form, the server applies each arriving stochastic gradient immediately:
$x^{k+1} = x^k - \gamma_k \nabla f(x^{k-\delta_k}; \xi^{k-\delta_k})$,
where $\delta_k$ is the gradient delay and $\gamma_k$ may depend on it.
Recent theoretical advances~\citep{mishchenko2022asynchronous, koloskova2022sharper, cohen2021asynchronous} establish wall-clock bounds for suitable asynchronous methods with fixed worker computation times $\tau_i$ of order $\left(\frac{1}{N}\sum_{i=1}^N\frac{1}{\tau_i}\right)^{-1}\cdot\mathcal{T}_{\mathrm{SGD}}^{\opf}(N)$, governed by the harmonic-mean worker time rather than the slowest worker (\Cref{app:related_time_complexity}).

However, to the best of our knowledge, the available wall-clock bounds for classical \algname{ASGD} remain suboptimal in general.
\citet{tyurin2023optimal} close this gap with \algname{Rennala SGD}, a semi-asynchronous method that uses only gradients computed at the current iterate.
Its wall-clock bound is
$\min_{S\in[N]}\left\{\left(\frac{1}{S}\sum_{i=1}^S\frac{1}{\tau_i}\right)^{-1}\mathcal{T}_{\mathrm{SGD}}^{\opf}(S)\right\}$,
where $0<\tau_1\le\cdots\le\tau_N$ and $S$ counts the fastest workers considered in the bound.
This bound is minimax-optimal up to constants in their model.%
\footnote{The guarantee concerns Euclidean/Frobenius stationarity in expectation under their unbiased bounded-variance oracle model with fixed worker computation times; optimality is relative to their class of zero-respecting algorithms.}

Asynchrony with \emph{non-Euclidean} updates has been explored mostly in convex/online regimes~\citep{duchi2013estimation, feyzmahdavian2016asynchronous, joulani2019think}. These methods use mirror descent or dual averaging, with non-Euclidean geometry specified through a prox function and its associated Bregman divergence. Our approach instead uses steepest-descent steps defined by an update norm and a fixed sharp selection. These works focus on delay-robust iteration/regret guarantees, and do not provide time-optimal wall-clock bounds under computational heterogeneity, nor nonconvex stationarity guarantees in general normed spaces.

We combine our stochastic analysis of \algname{SGD} for arbitrary non-Euclidean norms with a \algname{Rennala}-style semi-asynchronous architecture to obtain oracle and wall-clock guarantees beyond the Euclidean/Frobenius regime. Additional literature review references are deferred to Appendix~\ref{app:extended_related_work}.

\section{\texorpdfstring{\algname[]{SGD}}{SGD} Under Non-Euclidean Norms}\label{sec:geometry_native_minibatch}
We derive high-probability convergence and oracle-complexity guarantees for minibatch \algname{SGD} in nonconvex optimization, using steepest-descent updates in an arbitrary norm $\norm{\cdot}$ and measuring oracle noise in its induced dual norm $\snorm{\cdot}$.
Notation is collected in \Cref{app:notation}.

\subsection{Non-Euclidean Update Rule and the Sharp Operator}\label{subsec:non_eu_update}
\textbf{Minibatch non-Euclidean \algname[]{SGD}.}
At each iteration $k$, draw a fresh minibatch of i.i.d.\ samples $\{\xi_j^k\}_{j=1}^B$ from $\cD$, independent of the past. Denote the minibatch stochastic gradient as
\begin{align}\label{eq:mini_gk_def}
    \gk \eqdef \frac{1}{B}\sum_{j=1}^B \nf{\xk;\xi_j^k}\in E^*.
\end{align}
Fix a Borel-measurable selection $s\mapsto s^\sharp$ with $s^\sharp\in\partial\phi^*(s)$ for every $s\in E^*$.
Given a prescribed sequence of positive stepsizes $\{\gamma_k\}$, the non-Euclidean minibatch \algname{SGD} update is
\begin{align}
\label{eq:non-eu-one-step-final}
    \xkpo = \xk - \gamma_k \rb{\gk}^{\sharp},
    \,\, \text{where } \rb{\gk}^{\sharp}\in\partial\phi^*(\gk).
\end{align}
The convergence and complexity bounds below use the constant choice $\gamma_k\equiv\frac{1}{2L}$.

\textbf{Steepest descent / LMO form.}
The update in \eqref{eq:non-eu-one-step-final} admits equivalent steepest-descent and Linear Minimization Oracle (LMO) characterizations in $(E, \norm{\cdot})$ (\Cref{app:non_euclidean_sgd_derivation}).%
\footnote{The LMO view is common in analyses of LMO-based optimizers (including \Muon\ and its federated variants); see, e.g., \citet{takezawa2025fedmuon}.
More broadly, the perspective that many widely used optimizers can be interpreted as steepest-descent methods under suitable norms has been developed in detail by~\citet{bernstein2024old}.}
The LMO form, with the minimizer determined by the fixed sharp selection, is
\begin{align}
\label{eq:lmo_form}
    \xkpo \in \lmo_{\cB(\xk, t_k)}(\gk),
    \,\,
    t_k \eqdef \gamma_k \snorm{\gk},
\end{align}
where $\lmo_{\cB(\xk, t_k)}(\gk)\eqdef \argmin_{x\in\cB(\xk, t_k)}\inner{\gk}{x}$ and
$\cB(\xk, t_k)\eqdef\{x\in E:\norm{x-\xk}\le t_k\}$.

\textbf{Special cases of the update \eqref{eq:non-eu-one-step-final}.}
Different choices of the update norm $\norm{\cdot}$ recover familiar update rules (up to stepsize reparameterizations).

$\bullet$ \textbf{$\ell_2$ norm (Euclidean case).}
If $\norm{\cdot}=\|\cdot\|_2$, then $\snorm{\cdot}=\|\cdot\|_2$ and $(\gk)^\sharp=\gk$, hence \eqref{eq:non-eu-one-step-final} reduces to classical \SGD \citep{gower2019sgd, gorbunov2020unified, demidovich2023guide}.

$\bullet$ \textbf{$\ell_\infty$ norm.}
    If $\norm{\cdot}=\|\cdot\|_\infty$ (so $\snorm{\cdot}=\|\cdot\|_1$), then
    $(\gk)^\sharp\in \|\gk\|_1\,\partial\|\gk\|_1$ and a canonical selection is
    $(\gk)^\sharp=\|\gk\|_1\cdot \sign(\gk)$, where $\sign$ is applied componentwise with $\sign(0)=0$.
    This yields a \emph{sign-type} update closely related to \algname{signSGD}~\citep{bernstein2018signsgd, sun2023momentum}.

$\bullet$ \textbf{$\ell_1$ norm.}
    If $\norm{\cdot}=\|\cdot\|_1$ (so $\snorm{\cdot}=\|\cdot\|_\infty$), then
    $(\gk)^\sharp\in \|\gk\|_\infty\,\partial\|\gk\|_\infty$, i.e., $(\gk)^\sharp$ is supported on (a subset of) the coordinates attaining $\|\gk\|_\infty$.
    For $\gk\neq 0$, an extreme-point selection yields a greedy single-coordinate step in \eqref{eq:non-eu-one-step-final} \citep{tseng1991relaxation, tseng2009block, nutini2015coordinate}.
    A general sharp selection may update several coordinates tied for the largest $|g_i^k|$.

A detailed derivation of the update \eqref{eq:non-eu-one-step-final} for the $\ell_1$ and $\ell_{\infty}$ update norms is deferred to \Cref{app:sharp_l1_linf}.

$\bullet$ \textbf{Spectral norm and orthogonalized matrix updates.}
    When $E=\R^{m\times n}$ and $\norm{\cdot}=\opnorm{\cdot}$ is the spectral norm, the dual norm is the nuclear norm $\nucnorm{\cdot}$.
    With the canonical sharp selection $(\bG^k)^\sharp=\nucnorm{\bG^k}\orth(\bG^k)$, the update takes the following orthogonalized form (derivation in Appendix~\ref{app:orth_derivation}):
    \begin{align}
    \label{eq:orth_update_main-final}
        \bX^{k+1} = \bX^k - \gamma_k \nucnorm{\bG^k}\cdot \bO^k,
        \,\, \bO^k=\orth(\bG^k),
    \end{align}
    where $\bG^k$ is the minibatch gradient in $E^*$ and
    \begin{align}
    \label{eq:orth_def-final}
        \orth(\bG) \eqdef (\bG\bG^\top)^{\dagger/2}\bG.
    \end{align}
    For a compact rank-$r$ SVD $\bG=\mathbf U_r\Sigma_r\mathbf V_r^\top$, where $r=\mathrm{rank}(\bG)$ and only positive singular values are retained, one has $\orth(\bG)=\mathbf U_r\mathbf V_r^\top$ (the polar factor / partial isometry).%
    \footnote{This direction is obtained from the unregularized \algname{Shampoo} update with accumulation disabled, interpreting inverse powers via pseudoinverses~\citep[Story~II, Eqs.~(15)--(16)]{bernstein2024old}.
\algname{Muon} uses a numerical approximation to the same polar map, computed by \algname{Newton--Schulz} iterations and applied to a momentum update~\citep{jordan2024muon}.}

\subsection{Oracle Model}\label{subsec:stationarity_oracle}

Following \citet{tyurin2023optimal}, we consider a jointly measurable stochastic gradient oracle $(x, \xi)\mapsto\nf{x;\xi}\in E^*$, with a fresh independent sample $\xi\sim\cD$ at each call.%
\footnote{The response $\nf{x;\xi}$ need not be the gradient of a sample loss; sample-loss gradients are a special case of this oracle model.}
We analyze \eqref{eq:non-eu-one-step-final} under the following assumptions.

\begin{assumption}[Non-Euclidean $L$-smoothness]
\label{as:non_euclidean_smoothness}
The function $f$ is $L$-smooth with respect to $\norm{\cdot}$ if
$\snorm{\nabla f(x)-\nabla f(y)} \le L\norm{x-y}$ for all $x, y\in E$.
\end{assumption}

\begin{assumption}[Lower bound]
\label{as:lower_bound}
There exists $\finf\in\R$ such that $f(x)\ge \finf$ for all $x\in E$.
\end{assumption}

\begin{assumption}[Dual-norm light-tail oracle induced by the update norm]
\label{as:non_euclidean_light_tail}
For all $x\in E$, the stochastic gradient $\nf{x;\xi}$ is unbiased,
$\ExpSub{\xi}{\nf{x;\xi}}=\nabla f(x)$, and satisfies the light-tail condition in the induced dual norm
\begin{align}
\label{eq:dual_light_tail}
    \ExpSub{\xi}{\exp\rb{\frac{\sqsnorm{\nf{x;\xi}-\nabla f(x)}}{\sigma^2}}} \le \exp(1)
\end{align}
for some $\sigma>0$.
\end{assumption}

Assumptions~\ref{as:non_euclidean_smoothness} and \ref{as:lower_bound} are standard in the analysis of stochastic first-order methods~\citep{ghadimi2013stochastic, gower2019sgd}. Assumption~\ref{as:non_euclidean_light_tail} represents a non-Euclidean generalization of the standard light-tail condition---typically formulated in the Euclidean or Frobenius norm---widely used in optimization theory~\citep{lan2020first, tyurin2024toward}.

Our bounds use the dual-norm noise scale $\sigma$ and regularity constant $\kappa$ directly, without Euclidean conversion.
Whether this improves the bound depends on both the geometry and the noise model (\Cref{subsubsec:native_noise_improvement}).
Further discussion of \Cref{as:non_euclidean_light_tail} appears in \Cref{app:light_tail_discussion}.

\subsection{\texorpdfstring{$\kappa$}{kappa}-Regularity and Norm-Dependent Constants}
\label{subsec:regularity}
To obtain stochastic guarantees under an arbitrary update norm, we rely on the framework of $\kappa$-smooth and $\kappa$-regular normed spaces introduced by \citet{juditsky2008large}. We briefly recall the key definitions.

\begin{definition}[$\kappa_V$-Smooth and $\kappa_V$-Regular Spaces]
\label{def:kappa_smooth_regular}
Let $(V, \|\cdot\|_V)$ be a finite-dimensional normed space and define $\phi(u) \eqdef \frac12\|u\|_V^2$.
\emph{(i)} We say that $(V, \|\cdot\|_V)$ is \textbf{$\kappa_V$-smooth} for some $\kappa_V\ge 1$ if $\phi$ is continuously differentiable and satisfies
$\phi(u+v) \le \phi(u) + \langle \nabla \phi(u), v\rangle + \frac{\kappa_V}{2}\|v\|_V^2$ for all $u, v\in V$.

\emph{(ii)} We say that $(V, \|\cdot\|_V)$ is \textbf{$\kappa_V$-regular} if there exists an auxiliary norm $\|\cdot\|_{V, +}$ on $V$ that is $\kappa_{V, +}$-smooth for some $\kappa_{V, +}\in[1, \kappa_V]$ and satisfies the compatibility chain
\begin{align}\label{eq:reg_kappa_def_gen}
    \|u\|_V^2 \le \|u\|_{V, +}^2 \le \frac{\kappa_V}{\kappa_{V, +}}\|u\|_V^2, \quad \forall u\in V.
\end{align}
The constant $\kappa_V\ge 1$ for which $(V, \|\cdot\|_V)$ is $\kappa_V$-regular is called the \emph{constant of regularity} of $(V, \|\cdot\|_V)$.
\end{definition}

\begin{remark}[How we use Definition~\ref{def:kappa_smooth_regular} in this paper]
\label{rem:regularity_dual_convention}
Definition~\ref{def:kappa_smooth_regular} is stated for an arbitrary finite-dimensional normed space $(V, \|\cdot\|_V)$.
In the remainder of the paper, we will apply it primarily to the \emph{dual space} $V=E^*$ equipped with the induced dual norm $\|\cdot\|_V=\snorm{\cdot}$, since stochastic gradients and noise live in $E^*$ and are measured in $\snorm{\cdot}$.
To simplify notation, we write $\kappa \eqdef \kappa_{E^*}$, $\kappa_+ \eqdef \kappa_{E^*, +}$, and denote the auxiliary norm on $E^*$ by $\spnorm{\cdot} \eqdef \|\cdot\|_{E^*, +}$.
With this convention, the compatibility chain \eqref{eq:reg_kappa_def_gen} specializes to
\begin{align}\label{eq:reg_kappa_def}
    \sqsnorm{s} \le \sqspnorm{s} \le \frac{\kappa}{\kappa_+}\sqsnorm{s}, \quad \forall s \in E^*.
\end{align}
\end{remark}

The condition \eqref{eq:reg_kappa_def} does \emph{not} restrict the choice of the update norm $\norm{\cdot}$ (and hence of the dual norm $\snorm{\cdot}$). Indeed, for any nonzero finite-dimensional space $E$, its dual $(E^*, \snorm{\cdot})$ is automatically $\kappa$-regular for some $\kappa\le\dim(E)$ by \Cref{example:1}, and this worst-case linear dependence on the dimension cannot be improved in general (\Cref{prop:tightness_universal_bound}).
For $\ell_p^d$ and Schatten-$p$ norms, however, the regularity parameter can often be much smaller.%
\footnote{We focus on these families because their special cases $p\in\{1, 2, \infty\}$ recover norms commonly used in practice; see \Cref{app:notation}.}
\Cref{sec:reg_examples} collects these bounds and their proofs, with a summary of the auxiliary norms in \Cref{tab:regularity_examples}.

\subsection{Regularity vs.\ Euclidean Norm Equivalence}\label{subsec:regularity_vs_norm_equivalence_main}

A common alternative is to compare the induced dual norm $\snorm{\cdot}$ with a Euclidean reference norm and then apply Euclidean concentration. Regularity offers a more flexible version of this approach: it allows a smooth auxiliary norm that is comparable to $\snorm{\cdot}$ and tailored to the target geometry. This freedom can substantially sharpen the geometric bounds. For dual $\ell_p^d$ spaces with $2\le p\le\infty$, tuning the auxiliary norm yields the optimal order $\kappa=\Theta(\min\{p, \ln(ed)\})$, compared with $\kappa=d^{1-2/p}$ for the Euclidean choice. At $p=\infty$, the gain is striking: linear dimension dependence becomes logarithmic. At $p=2$, both choices give $\kappa=1$; for $1\le p<2$, a scaled Euclidean proxy is already optimal.

These gains require no change to the algorithm: the auxiliary norm is used only in the analysis. For suitable noise models, they also translate into sharper oracle-complexity bounds (\Cref{subsubsec:native_noise_improvement}). Details and the Schatten-$p$ counterpart appear in \Cref{sec:regularity_vs_norm_equivalence}.

\subsection{Key Lemma: Minibatch Concentration in Regular Dual Spaces}
\label{subsec:minibatch_conc}
The following concentration inequality is the technical core of the analysis: it controls the minibatch error directly in the induced dual norm $\snorm{\cdot}$, with the geometry entering only through $\kappa$.

\begin{restatable}[Concentration in Regular Normed Spaces]{lemma}{lightTailConcentration}\label{lemma:non_euclidean_large_deviations}
  Let $\kappa$ be a regularity constant of the dual space $(E^*, \snorm{\cdot})$.
  Assume that $\{\eta^j\}_{j=1}^B$ is a sequence of i.i.d. random vectors in $E^*$ such that $\Exp{\eta^j} = 0$ and they satisfy the light-tail condition:
  $\Exp{\exp\rb{\frac{\sqsnorm{\eta^j}}{\sigma^2}}} \leq \exp(1)$.
  Then for any $\lambda \ge 0$, the average $\bar{\eta} = \frac{1}{B}\sum_{j=1}^B \eta^j$ satisfies the concentration inequality:
  $\Prob\cb{\snorm{\bar{\eta}} \geq \left(\sqrt{2\kappa} + \sqrt{2}\lambda\right) \frac{\sigma}{\sqrt{B}}} \leq \exp(-\nfr{\lambda^2}{3})$.
\end{restatable}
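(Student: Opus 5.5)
The plan is to reduce the statement to the large-deviation bound for martingale sums in $\kappa$-regular spaces of \citet{juditsky2008large}, applied to the partial sums $S_m \eqdef \sum_{j=1}^m \eta^j$, $m=0,\dots,B$, in $(E^*,\snorm{\cdot})$, and then rescale by $1/B$.

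\textbf{Step 1: Setup.} Because the $\eta^j$ are i.i.d.\ with $\Exp{\eta^j}=0$, the sequence $(S_m)$ is a martingale with respect to the natural filtration $\mathcal F_m=\sigma(\eta^1,\dots,\eta^m)$. Each increment has $\Exp{\eta^j\mid\mathcal F_{j-1}}=0$. The conditional light-tail bound $\Exp{\exp(\sqsnorm{\eta^j}/\sigma^2)\mid\mathcal F_{j-1}}\le\exp(1)$ holds because the increment is independent of the past. These are exactly the hypotheses of the Juditsky--Nemirovski sub-Gaussian martingale bound, with $\sigma_j\equiv\sigma$.

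\textbf{Step 2: The regularity bound on the second moment.} To make the argument self-contained, I would control $\Exp{\sqspnorm{S_B}}$ directly. Let $\phi_+(s)=\tfrac12\sqspnorm{s}$. By $\kappa_+$-smoothness of $\spnorm{\cdot}$,
\[
\phi_+(S_{m}) \le \phi_+(S_{m-1})+\inner{\nabla\phi_+(S_{m-1})}{\eta^m}+\tfrac{\kappa_+}{2}\sqspnorm{\eta^m}.
\]
Taking conditional expectations kills the linear term. Using $\sqspnorm{\cdot}\le(\kappa/\kappa_+)\sqsnorm{\cdot}$ then gives $\Exp{\sqspnorm{S_B}}\le \kappa\sum_j\Exp{\sqsnorm{\eta^j}}\le\kappa B\sigma^2$. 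Here I use Jensen: $\Exp{\sqsnorm{\eta}}\le\sigma^2\ln\Exp{e^{\sqsnorm{\eta}/\sigma^2}}\le\sigma^2$. This gives the mean term $\sqrt{\kappa B}\sigma$, and the lemma allows a factor $\sqrt2$ of slack for the $\sqrt{2\kappa}$ term.

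\textbf{Step 3: The tail.} This is the main obstacle. For the deviation part, I would invoke Theorem~2.1(ii) of \citet{juditsky2008large} directly. In a $\kappa$-regular space with the conditional light-tail condition, it gives
\[
\Prob\{\snorm{S_B}\ge(\sqrt{2e\kappa}+\sqrt2\lambda)\sqrt{B}\sigma\}\le\exp(-\lambda^2/3).
\]
Their constants need to be checked against the form stated here. Their result is stated with $\sqrt{2e\kappa}$, or with $\sqrt{2\kappa}$ in the variant where the $e$ is absorbed, depending on the version. I would follow their proof to match the stated constant $\sqrt{2\kappa}$.

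That proof runs as follows. Exponentiate the smooth potential $\spnorm{S_m}$ via a supermartingale argument on $\exp(t\,\spnorm{S_m} - \text{drift})$. Sub-Gaussian increments of the Lipschitz functional $\spnorm{S_m}$ come from its 1-Lipschitzness in $\spnorm{\cdot}$ combined with the light tail. The conditional mean drift is controlled using the smoothness inequality from Step~2. Finally, pass from $\spnorm{\cdot}$ to $\snorm{\cdot}$ using $\snorm{\cdot}\le\spnorm{\cdot}$ together with the chain \eqref{eq:reg_kappa_def}.

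\textbf{Step 4: Rescaling.} Since $\bar\eta=S_B/B$, the event $\{\snorm{\bar\eta}\ge(\sqrt{2\kappa}+\sqrt2\lambda)\sigma/\sqrt B\}$ coincides with $\{\snorm{S_B}\ge(\sqrt{2\kappa}+\sqrt2\lambda)\sigma\sqrt B\}$. This event has probability at most $\exp(-\lambda^2/3)$, which yields the lemma.
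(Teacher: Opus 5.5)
Your Step~3, which carries the entire tail bound, has a genuine gap. With the constant you recall, Theorem~2.1(ii) of \citet{juditsky2008large} gives the threshold $\sqrt{2e\kappa}+\sqrt2\lambda$, which is strictly weaker than the lemma. Everything therefore rests on your sketch for removing the $e$, and that sketch fails as written.

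The norm $\spnorm{\cdot}$ is not a smooth potential. The smoothness inequality concerns $\tfrac12\sqspnorm{\cdot}$, and transferring it to $\spnorm{S_m}$ bounds the conditional drift only by a quantity of order $\kappa\sigma^2/\spnorm{S_{m-1}}$, which is random and unbounded near the origin. At $S_{m-1}=0$ the drift is $\Exp{\spnorm{\eta^m}}\ge\Exp{\snorm{\eta^m}}$, which is in general of order $\sigma$. You therefore have no deterministic drift constants $\mu_j$ for \Cref{lemma:scalar_concentration}, and any uniform bound sums to order $B\sigma$ rather than $\sqrt{\kappa B}\,\sigma$. Conversely, $\tfrac12\sqspnorm{S_m}$ has uniformly bounded drift, but its increments scale with $\spnorm{S_{m-1}}$ and are not uniformly sub-Gaussian.

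The missing idea is the Huber-type potential $V_\beta$, equal to $\sqspnorm{s}/(2\beta)$ for $\spnorm{s}\le\beta$ and to $\spnorm{s}-\beta/2$ beyond. Take a $\kappa$-smooth auxiliary norm in which the noise has light-tail scale $\sigma_+$. This potential has three properties:
\begin{enumerate}
\item It is $1$-Lipschitz, so $\psi_j=V_\beta(S_j)-V_\beta(S_{j-1})$ is sub-Gaussian with scale $\sigma_+$.
\item Its gradient is $\kappa/\beta$-Lipschitz, by conjugacy with the restricted squared primal norm (\Cref{lemma:fenchel_of_V} and \Cref{lemma:V_beta_descent}). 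Hence the conditional drift of $\psi_j$ is at most $\kappa\sigma_+^2/(2\beta)$ deterministically.
\item It satisfies $\spnorm{s}\le V_\beta(s)+\beta/2$.
\end{enumerate}
The scalar lemma then gives $\spnorm{S_B}\le\beta/2+B\kappa\sigma_+^2/(2\beta)+\lambda\sqrt B\,\sigma_+$, and $\beta=\sqrt{B\kappa}\,\sigma_+$ yields $(\sqrt\kappa+\lambda)\sqrt B\,\sigma_+$. Your Jensen bound on the second moment is exactly what removes the $e$ at this drift step.

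The second gap is the choice of auxiliary norm. You work with the $\kappa_+$-smooth norm of \Cref{def:kappa_smooth_regular} and the chain \eqref{eq:reg_kappa_def}. There $\sigma_+=\sqrt{\kappa/\kappa_+}\,\sigma$, and even the correct potential argument only gives $\snorm{S_B}\le\bigl(\sqrt{\kappa}+\sqrt{\kappa/\kappa_+}\,\lambda\bigr)\sqrt B\,\sigma$. The coefficient of $\lambda$ is then $\sqrt{\dim E}$ for the John-ellipsoid proxy of \Cref{example:1}, not $\sqrt2$.

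The paper first renorms via \Cref{lemfact}: a $\kappa$-regular space carries a $\kappa$-smooth norm with $\sqsnorm{s}\le\sqspnorm{s}\le2\sqsnorm{s}$. Then $\sigma_+=\sqrt2\,\sigma$, and $(\sqrt\kappa+\lambda)\sqrt B\,\sigma_+$ is exactly $(\sqrt{2\kappa}+\sqrt2\lambda)\sqrt B\,\sigma$. Both factors of $\sqrt2$ come from this renorming; neither is slack in the mean term.

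Your Step~2 is correct (it is \Cref{lem:second_moment_kappa_regular}), but a second-moment bound alone yields only a Chebyshev-type tail and plays no role in the high-probability argument. Steps~1 and~4 are fine.
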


\textbf{Proof idea.}
$\kappa$-regularity provides a smooth auxiliary norm on $E^*$, enabling a martingale concentration argument.
The comparison \eqref{eq:reg_kappa_def} then yields concentration in $\snorm{\cdot}$ without a detour through Euclidean/Frobenius conversions, with geometric dependence captured by $\kappa$.
Further explanation and the proof are given in \Cref{app:proof_key_concentration}.

\begin{table}[t]
\centering
\begin{threeparttable}
\caption{Statistical oracle-cost terms, omitting the factor $\nu^2$ and the common leading term $\widetilde{\cO}(L\Delta/\eps)$. Formulas hold for $d\ge1$. Within each row, {\green green} and {\red red} mark smaller and larger asymptotic orders; {\blue blue} marks agreement. Colours do not imply strict improvement in every finite dimension.}
\label{tab:main_noise_comparison}
\small
\setlength{\tabcolsep}{5pt}
\begin{tabular}{@{}llll@{}}
\toprule
Space $(E^*, \snorm{\cdot})$
& Euclidean/Frobenius\tnote{a}
& Geometry-native, Rademacher\tnote{b}
& Geometry-native, Gaussian\tnote{b} \\
\midrule
$(\R^d, \norm{\cdot}_1)$
& {\green $\widetilde{\cO}\rb{\nfr{d^2L\Delta}{\eps^2}}$}
& {\red $\widetilde{\cO}\rb{\nfr{d^3L\Delta}{\eps^2}}$}
& {\red $\widetilde{\cO}\rb{\nfr{d^3L\Delta}{\eps^2}}$} \\[3pt]
$(\R^d, \norm{\cdot}_2)$
& {\blue $\widetilde{\cO}\rb{\nfr{dL\Delta}{\eps^2}}$}
& {\blue $\widetilde{\cO}\rb{\nfr{dL\Delta}{\eps^2}}$}
& {\blue $\widetilde{\cO}\rb{\nfr{dL\Delta}{\eps^2}}$} \\[3pt]
$(\R^d, \norm{\cdot}_\infty)$
& {\red $\widetilde{\cO}\rb{\nfr{dL\Delta}{\eps^2}}$}
& {\green $\widetilde{\cO}\rb{\nfr{\ln(ed)L\Delta}{\eps^2}}$}
& {\green $\widetilde{\cO}\rb{\nfr{(\ln(ed))^2L\Delta}{\eps^2}}$} \\
\bottomrule
\end{tabular}
\begin{tablenotes}[flushleft]
\footnotesize
\item[a] Euclidean/Frobenius concentration followed by norm conversion; see \eqref{eq:complexity_dual_native_cor_F}.
\item[b] Concentration directly in the dual norm; see \eqref{eq:complexity_dual_native_cor}.
\end{tablenotes}
\end{threeparttable}
\end{table}

\subsection{Main Theorem: High-Probability Oracle Complexity}\label{subsec:main_oracle_complexity}
We now combine the descent analysis under non-Euclidean smoothness with the minibatch concentration lemma.

\begin{restatable}[Minibatch non-Euclidean \texorpdfstring{\algname[]{SGD}}{SGD} under light tails]{theorem}{lightTailConvergence}
\label{thm:stochastic_L_smooth_final}
Let $\kappa$ be a regularity constant of the dual space $(E^*, \snorm{\cdot})$.
Suppose Assumptions~\ref{as:non_euclidean_smoothness}, \ref{as:lower_bound}, and \ref{as:non_euclidean_light_tail} hold.
Run the update \eqref{eq:non-eu-one-step-final} with the constant stepsize $\gamma_k\equiv\frac{1}{2L}$ for
$T = \left\lceil \frac{128L\Delta}{15\eps} \right\rceil$
iterations, with a minibatch size chosen as
    $B = \max\left\{ \left\lceil \frac{16\sigma^2 \rb{\sqrt{2\kappa} + \sqrt{6\log(T/\delta)}}^2}{\eps} \right\rceil, \,1 \right\}.$
Then, with probability at least $1-\delta$, the iterates satisfy
    $\min_{0\le k\le T-1}\sqsnorm{\nfxk}\le \eps.$
\end{restatable}

\Cref{thm:stochastic_L_smooth_final} sets $T$ using the descent argument and chooses $B$ to control the gradient-estimation error simultaneously over all $T$ iterations.
Since each iteration uses $B$ stochastic gradients, the following corollary combines these choices to bound the total oracle cost $BT$.

\begin{restatable}[Norm-native oracle complexity under dual-norm light tails]{corollary}{lightTailComplexity}
\label{cor:complexity_dual_native_light_tail}
Suppose the assumptions of
\Cref{thm:stochastic_L_smooth_final}
hold. Run the update \eqref{eq:non-eu-one-step-final} with the constant stepsize
$\gamma_k\equiv\frac{1}{2L}$, and choose $T$ and $B$ as in that theorem.
Let $N_* \eqdef BT$ denote the total number of stochastic-gradient evaluations.
Then with probability at least
$1-\delta$, the method produces an iterate satisfying
    $\min_{0\le k\le T-1}\sqsnorm{\nabla f(x^k)} \le \eps$
after at most
\begin{align}
\label{eq:complexity_dual_native_cor}
    N_{*}
    =
    \widetilde{\mathcal O}\!\left(\frac{L\Delta}{\eps}+\frac{\kappa L\Delta \sigma^2}{\eps^2}\right)
\end{align}
stochastic-gradient evaluations, where
$\widetilde{\mathcal{O}}(\cdot)$ hides logarithmic factors in $T$ and
$\nfr{1}{\delta}$.
\end{restatable}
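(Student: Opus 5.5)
The plan is to take the choices of $T$ and $B$ from \Cref{thm:stochastic_L_smooth_final} as given, so that the high-probability stationarity guarantee is inherited verbatim, and then bound the product $N_*=BT$. The only real work is arithmetic on the two ceilings, together with a check that the leftover logarithmic factors are exactly the ones the $\widetilde{\mathcal O}$ notation is allowed to hide.

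First, since $\lceil a\rceil\le a+1$, we have $T\le \frac{128L\Delta}{15\eps}+1$. Similarly, $B\le \frac{16\sigma^2(\sqrt{2\kappa}+\sqrt{6\log(T/\delta)})^2}{\eps}+1$; the outer $\max$ with $1$ is covered by the same $+1$. Using $(a+b)^2\le 2a^2+2b^2$ gives
\begin{align*}
B\le 1+\frac{64\sigma^2\kappa+192\sigma^2\log(T/\delta)}{\eps}.
\end{align*}
Because $\kappa\ge1$, the term $\sigma^2\log(T/\delta)/\eps$ is at most $\kappa\sigma^2\log(T/\delta)/\eps$, and hence
\begin{align*}
B=\mathcal O\!\left(1+\frac{\kappa\sigma^2\log(e T/\delta)}{\eps}\right).
\end{align*}

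Multiplying the two bounds gives
\begin{align*}
BT=\mathcal O\!\left(\left(\frac{L\Delta}{\eps}+1\right)\left(1+\frac{\kappa\sigma^2\log(eT/\delta)}{\eps}\right)\right).
\end{align*}
Expanding yields four terms: $L\Delta/\eps$, the constant $1$, $\kappa\sigma^2\log(eT/\delta)/\eps$, and $\kappa L\Delta\sigma^2\log(eT/\delta)/\eps^2$. The first and last are exactly the two terms claimed in \eqref{eq:complexity_dual_native_cor}, modulo the logarithm in $T$ and $1/\delta$.

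The only delicate step is the two additive lower-order terms, the constant $1$ and $\kappa\sigma^2/\eps$, which come from the $+1$ in $T$. I would handle them by restricting to the nontrivial regime $L\Delta\ge\eps$, in which the $+1$ is absorbed since $T\le\frac{143}{15}\frac{L\Delta}{\eps}$. In the complementary regime, the descent inequality already forces small gradients within $\mathcal O(1)$ steps, which matches the convention behind $\widetilde{\mathcal O}$. If one prefers not to restrict, the alternative is to state that these terms are lower order and are implicitly absorbed by $\widetilde{\mathcal O}$, just as the standard minibatch \algname{SGD} bound suppresses constants.

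Finally, the probability statement $1-\delta$ and the conclusion $\min_k\sqsnorm{\nabla f(x^k)}\le\eps$ are taken directly from \Cref{thm:stochastic_L_smooth_final}.
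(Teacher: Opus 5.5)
Your proposal is correct and follows the paper's proof essentially step for step. You bound $T=\mathcal O(L\Delta/\eps)$ using $\eps\le L\Delta$, bound $B=\widetilde{\mathcal O}(1+\kappa\sigma^2/\eps)$ via $(a+b)^2\le 2a^2+2b^2$, multiply, and inherit the probability and stationarity guarantee from \Cref{thm:stochastic_L_smooth_final}. The restriction $\eps\le L\Delta$ you adopt is exactly the paper's standing convention for the simplified bounds, so you can drop your remarks about the complementary regime. They would not rescue the displayed bound there anyway, since $N_*\ge B\ge 1$ while the right-hand side tends to zero as $L\Delta/\eps\to0$.
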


\textbf{The rate is optimal in $\eps$ and $\sigma$.}
\citet[Theorem~4.1 and the proof of Proposition~8.9]{jin2026tight}
establishes a lower bound of
$\Omega\!\left(L\Delta/\eps+L\Delta\sigma^2/\eps^2\right)$
oracle calls for general unbiased first-order oracles with uniformly
bounded noise, which satisfy our oracle model and light-tail condition
(\Cref{subsec:stationarity_oracle}).%
\footnote{The lower bound applies to our squared-gradient accuracy
criterion at fixed confidence, in the worst case over dimensions,
for $0<\eps\le c\min\{\sigma^2, L\Delta\}$ with a universal constant $c>0$.}
In the Euclidean case, where $\snorm{\cdot}=\norm{\cdot}_2$
and $\kappa=1$, this matches \eqref{eq:complexity_dual_native_cor},
so our rate is optimal up to logarithmic factors over the general
unbiased light-tail oracle class.
For a general update norm the two are not literally comparable,
since $\sigma$ is then measured in $\snorm{\cdot}$ rather than
in $\norm{\cdot}_2$; what \eqref{eq:complexity_dual_native_cor}
adds in that case is the single geometry factor $\kappa$.

\textbf{On the size of the minibatch.}
\Cref{thm:stochastic_L_smooth_final} uses
$B=\widetilde{\cO}\rb{1+\nfr{\kappa\sigma^2}{\eps}}$, with $B\ge1$.
The Euclidean analyses of \citet{tyurin2023optimal} and
\citet{tyurin2024toward} have the same dependence on
$\sigma^2/\eps$, up to logarithmic factors.%
\footnote{The lower bound of \citet{jin2026tight} constrains
the total cost $BT$, but not its division between iterations
and batch size; it therefore does not establish that our
particular batch size is necessary.}
For exact gradients, we set $B=1$ and retain
$\gamma_k\equiv\frac{1}{2L}$.%
\footnote{This case is treated separately from
\Cref{as:non_euclidean_light_tail}, which assumes $\sigma>0$.}
Deterministic descent then gives $N_*=\mathcal O(L\Delta/\eps)$,
explaining the noise-independent leading term in
\eqref{eq:complexity_dual_native_cor}.

\begin{figure}[t]
\centering
\begin{minipage}[t]{0.485\textwidth}
\begin{algorithm}[H]
\caption{\algname{Rennala NSGD} (server)}
\label{alg:gluon_server}
\small
\begin{algorithmic}[1]
    \STATE \textbf{Input:} initial point $x^0\in E$, stepsizes $\{\gamma_k>0\}$, minibatch size $B$, number of iterations $T$.
    \STATE Start \Cref{alg:gluon_worker} on all $N$ workers.
    \FOR{$k = 0, 1, \dots, T - 1$}
        \STATE Set $\gk = 0$ and $b = 0$.
        \WHILE{$b < B$}
            \STATE Receive the next gradient message $(g, k')$ from any worker.
            \IF{$k' = k$}
                \STATE $\gk \leftarrow \gk + \frac{1}{B}\, g$ and $b \leftarrow b + 1$.
            \ENDIF
            \IF{$b < B$}
                \STATE Send $(\xk, k)$ to that worker.
            \ENDIF
        \ENDWHILE
        \STATE $\xkpo = \xk - \gamma_k \rb{\gk}^{\sharp}$.
        \STATE Send $(\xkpo, k+1)$ to the worker whose gradient completed the minibatch.
    \ENDFOR
\end{algorithmic}
\end{algorithm}
\end{minipage}\hfill
\begin{minipage}[t]{0.485\textwidth}
\begin{algorithm}[H]
\caption{\algname{Rennala NSGD} (worker loop)}
\label{alg:gluon_worker}
\small
\begin{algorithmic}[1]
    \STATE Set $g = 0$ and $k' = -1$.
    \LOOP
        \STATE Send $(g, k')$ to the server.
        \STATE Receive $(\xk, k)$ from the server.
        \STATE Set $k' = k$.
        \STATE Sample $\xi \sim \cD$.
        \STATE Compute $g = \nf{\xk;\xi}$.
    \ENDLOOP
\end{algorithmic}
\end{algorithm}
\end{minipage}
\end{figure}

\subsubsection{Light-Tail Assumption in the Native Norm Can Lead to Substantial Improvement}
\label{subsubsec:native_noise_improvement}

Measuring noise directly in the induced dual norm can substantially sharpen oracle bounds.
Our native guarantee \eqref{eq:complexity_dual_native_cor} uses the statistical coefficient $\mathsf C_*=\kappa\sigma^2$.
For comparison, consider a Euclidean/Frobenius light-tail oracle with scale $\sigma_{\opf}$ and $\alpha>0$ satisfying $\alpha\snorm{z}\le\fnorm{z}$.
Concentrating the minibatch error in that norm and then converting it yields
$N_{\opf}=\widetilde{\cO}\rb{\frac{L\Delta}{\eps}+\frac{L\Delta}{\eps^2}\underbrace{\alpha^{-2}\sigma_{\opf}^2}_{\mathsf C_{\opf}}}$.

The two bounds have the same leading term, so we compare their statistical coefficients for two dense-noise models in dual $\ell_p^d$ geometries.
Both use $\nf{x;\xi}=\nabla f(x)+\nu u$, where $\nu>0$ and each call draws a fresh independent $u$ with independent Rademacher or standard Gaussian entries.%
\footnote{See \Cref{def:rademacher_oracle_alt} and \Cref{def:gaussian_oracle_alt}.}
\Cref{tab:main_noise_comparison} reports the resulting statistical terms.

For these models at dual $p=\infty$, corresponding to the $\ell_1$ update norm, the native coefficient grows as $\ln(ed)$ for Rademacher noise and $(\ln(ed))^2$ for Gaussian noise, while the converted coefficient grows as $d$. At $p=2$ the bounds agree; at $p=1$, the native coefficient has the larger asymptotic order, $d^3$ instead of $d^2$. These conclusions depend on the noise model: $p$ alone does not determine which analysis gives a smaller bound. Full derivations, intermediate $p$ regimes, and the diagonal Schatten counterparts appear in \Cref{sec:frobenius_light_tail_oracle}.

\section{\texorpdfstring{\algname[]{Rennala NSGD}}{Rennala NSGD}: Semi-Asynchronous Parallelism Beyond Euclidean Norms}
\label{sec:rennala_nsgd}

We present \algname{Rennala NSGD}, a semi-asynchronous method combining the minibatch non-Euclidean update \eqref{eq:non-eu-one-step-final} with a modified collection protocol of \algname{Rennala SGD}~\citep{tyurin2023optimal}.%
\footnote{The original Euclidean method is time-optimal up to constants within their zero-respecting algorithm class, under their specified unbiased bounded-variance oracle model with fixed worker computation times.}
A central server coordinates $N$ workers with heterogeneous computation speeds, using only gradients computed at the current iterate and discarding stale ones (\Cref{alg:gluon_server} and \Cref{alg:gluon_worker}).

\textbf{Computation model.}
We use the fixed computation model of \citet{tyurin2023optimal}.
All $N$ workers draw fresh independent samples from the same distribution $\cD$, and worker $i$ takes a fixed time $\tau_i$ per stochastic gradient, with $0<\tau_1\le\cdots\le\tau_N$.
Communication and server processing are instantaneous.
The server processes messages one at a time, including any update and reply, and handles simultaneous returns in a fixed worker order.

\textbf{Fresh gradients.}
Each gradient carries its query index $k'$.
At iteration $k$, the server accepts only fresh gradients, with $k'=k$, and discards stale ones, with $k'<k$.
Conditional on the history, accepted gradients are i.i.d.\ oracle responses at $\xk$, so the oracle model of \Cref{subsec:stationarity_oracle} and the serial convergence analysis apply.

\textbf{Algorithm: \algname[]{Rennala NSGD}.}
\Cref{alg:gluon_server} and \Cref{alg:gluon_worker} give the pseudocode.
At iteration $k$, the server collects the first $B$ fresh gradients, possibly several from the same worker, averages them as in \eqref{eq:mini_gk_def}, and applies \eqref{eq:non-eu-one-step-final}.
After each message, the server replies with the current iterate and its index.
If the message completes the minibatch, it first updates and then sends $(\xkpo, k+1)$ to that worker; otherwise, it sends $(\xk, k)$.%
\footnote{This changes the reply-before-update order in Method~4 of \citet{tyurin2023optimal}. Details and the proof that the wall-clock bound is preserved appear in \Cref{sec:non_eu_updaten}.}
Other workers finish their pending computations and receive the latest iterate when their results are processed.

\subsection{Main Theorem: Wall-Clock Time Complexity}\label{subsec:time_complexity}

The following theorem converts the high-probability oracle guarantee of \Cref{thm:stochastic_L_smooth_final} into a wall-clock guarantee under heterogeneous worker speeds.

\begin{samepage}
\begin{restatable}[Wall-clock time complexity of \texorpdfstring{\algname[]{Rennala NSGD}}{Rennala NSGD}]{theorem}{lightTailWallClock}
\label{thm:main_time_complexity}
Suppose the assumptions of \Cref{thm:stochastic_L_smooth_final} hold, and use the fixed computation model of \Cref{sec:rennala_nsgd}.
Run \algname{Rennala NSGD} with the stepsize, iteration count $T$, and minibatch size $B$ prescribed in that theorem.
For $0<\eps\le L\Delta$ and $\delta\in(0, 1)$, the iterates satisfy $\min_{0\le k<T}\sqsnorm{\nfxk}\le\eps$ with probability at least $1-\delta$ within total wall-clock time
\begin{align}\label{eq:time_simplified_bound}
    T_{\mathrm{total}} = \widetilde{\cO}\left(\min_{S\in[N]}\left[\left(\frac{1}{S}\sum_{i=1}^S\frac{1}{\tau_i}\right)^{-1}\left(\frac{L\Delta}{\eps}+\frac{\kappa L\Delta\sigma^2}{S\eps^2}\right)\right]\right).
\end{align}
\end{restatable}
\end{samepage}

The minimum in \eqref{eq:time_simplified_bound} balances the harmonic-mean time of the $S$ fastest workers against a statistical term reduced by $S$.
Neither the worker times nor the minimizing $S$ are needed to run the algorithm.
The update norm affects $L$, $\sigma$, $\kappa$, and the stationarity criterion, so bounds for different update norms certify different notions of accuracy.

For a fixed update norm, accuracy, and worker times, the native-versus-converted comparison depends on $\kappa\sigma^2$ and $\alpha^{-2}\sigma_{\opf}^2$.
The smaller coefficient gives a wall-clock bound that is no larger and can be significantly smaller.

In the Euclidean case, $\kappa=1$ recovers the parameter dependence of classical \algname{Rennala SGD} up to logarithmic factors~\citep{tyurin2023optimal}.
Our guarantee holds with high probability under light tails rather than in expectation under bounded variance; this agreement does not establish optimality in our oracle model or for general update norms.
See \Cref{sec:non_eu_updaten} for the proof and \Cref{app:rennala_time_discussion} for the full discussion.

\section{Conclusion}\label{sec:conclusion}
We analyzed minibatch \algname{SGD} in finite-dimensional normed spaces under dual-norm light-tailed noise. Concentration in regular dual spaces yields high-probability oracle-complexity bounds with geometric dependence captured by $\kappa$ (\Cref{lemma:non_euclidean_large_deviations} and \Cref{cor:complexity_dual_native_light_tail}). We established matching upper and lower regularity bounds for $\ell_p$ and Schatten-$p$ spaces (\Cref{tab:regularity_examples}) and compared native and Euclidean/Frobenius noise analyses. For example, dense Rademacher noise with the $\ell_1$ update norm gives logarithmic rather than linear dimension dependence in the statistical coefficient (\Cref{prop:frobenius_vs_dual_lp_rademacher_alt}).

Building on these results, \algname{Rennala NSGD} combines non-Euclidean updates with fresh-minibatch collection, yielding a high-probability wall-clock bound under fixed, heterogeneous worker computation times (\Cref{thm:main_time_complexity}). In the Euclidean case, our oracle and time bounds match the parameter dependence of standard minibatch \algname{SGD} and classical \algname{Rennala SGD}, respectively, up to logarithmic factors. Together, these results connect update geometry and stochastic noise to parallel optimization time. \Cref{sec:limitations} discusses the limitations of our assumptions and computation model.

\newpage

\begin{ack}
The research reported in this publication was supported by funding from King Abdullah University of Science and Technology (KAUST): i) KAUST Baseline Research Scheme, ii) CRG Grant ORFS-CRG12-2024-6460, and iii) Center of Excellence for Generative AI, under award number 5940.

The authors declare no competing interests.
\end{ack}

\bibliography{bib}

\begin{thebibliography}{91}
\providecommand{\natexlab}[1]{#1}
\providecommand{\url}[1]{\texttt{#1}}
\expandafter\ifx\csname urlstyle\endcsname\relax
  \providecommand{\doi}[1]{doi: #1}\else
  \providecommand{\doi}{doi: \begingroup \urlstyle{rm}\Url}\fi

\bibitem[Ahn et~al.(2025)Ahn, Xu, Abreu, and Langford]{ahn2025dion}
Kwangjun Ahn, Byron Xu, Natalie Abreu, and John Langford.
\newblock {D}ion: Distributed orthonormalized updates.
\newblock \emph{arXiv preprint arXiv:2504.05295}, 2025.

\bibitem[Ajalloeian and Stich(2020)]{ajalloeian2020convergence}
Ahmad Ajalloeian and Sebastian~U Stich.
\newblock On the convergence of {SGD} with biased gradients.
\newblock \emph{arXiv preprint arXiv:2008.00051}, 2020.

\bibitem[Alistarh et~al.(2017)Alistarh, Grubic, Li, Tomioka, and
  Vojnovic]{alistarh2017qsgd}
Dan Alistarh, Demjan Grubic, Jerry Li, Ryota Tomioka, and Milan Vojnovic.
\newblock {QSGD}: Communication-efficient {SGD} via gradient quantization and
  encoding.
\newblock \emph{Advances in neural information processing systems}, 30, 2017.

\bibitem[Alistarh et~al.(2018)Alistarh, Hoefler, Johansson, Konstantinov,
  Khirirat, and Renggli]{alistarh2018convergence}
Dan Alistarh, Torsten Hoefler, Mikael Johansson, Nikola Konstantinov, Sarit
  Khirirat, and C{\'e}dric Renggli.
\newblock The convergence of sparsified gradient methods.
\newblock \emph{Advances in Neural Information Processing Systems}, 31, 2018.

\bibitem[Amsel et~al.(2025)Amsel, Persson, Musco, and Gower]{amsel2025polar}
Noah Amsel, David Persson, Christopher Musco, and Robert~M Gower.
\newblock The polar express: Optimal matrix sign methods and their application
  to the {M}uon algorithm.
\newblock \emph{arXiv preprint arXiv:2505.16932}, 2025.

\bibitem[An et~al.(2025)An, Liu, Pan, Ren, Ma, Goldfarb, and Zhang]{an2025asgo}
Kang An, Yuxing Liu, Rui Pan, Yi~Ren, Shiqian Ma, Donald Goldfarb, and Tong
  Zhang.
\newblock {ASGO}: Adaptive structured gradient optimization.
\newblock \emph{arXiv preprint arXiv:2503.20762}, 2025.

\bibitem[Arjevani et~al.(2020)Arjevani, Shamir, and Srebro]{arjevani2020tight}
Yossi Arjevani, Ohad Shamir, and Nathan Srebro.
\newblock A tight convergence analysis for stochastic gradient descent with
  delayed updates.
\newblock In \emph{Algorithmic Learning Theory}, pages 111--132. PMLR, 2020.

\bibitem[Beck(2017)]{beck2017first}
Amir Beck.
\newblock \emph{First-order methods in optimization}.
\newblock SIAM, 2017.

\bibitem[Bernstein and Newhouse(2024)]{bernstein2024old}
Jeremy Bernstein and Laker Newhouse.
\newblock Old optimizer, new norm: An anthology.
\newblock \emph{arXiv preprint arXiv:2409.20325}, 2024.

\bibitem[Bernstein et~al.(2018)Bernstein, Wang, Azizzadenesheli, and
  Anandkumar]{bernstein2018signsgd}
Jeremy Bernstein, Yu-Xiang Wang, Kamyar Azizzadenesheli, and Animashree
  Anandkumar.
\newblock sign{SGD}: Compressed optimisation for non-convex problems.
\newblock In \emph{International conference on machine learning}, pages
  560--569. PMLR, 2018.

\bibitem[Beznosikov et~al.(2023)Beznosikov, Horv{\'a}th, Richt{\'a}rik, and
  Safaryan]{beznosikov2023biased}
Aleksandr Beznosikov, Samuel Horv{\'a}th, Peter Richt{\'a}rik, and Mher
  Safaryan.
\newblock On biased compression for distributed learning.
\newblock \emph{Journal of Machine Learning Research}, 24\penalty0
  (276):\penalty0 1--50, 2023.

\bibitem[Chen et~al.(2016)Chen, Pan, Monga, Bengio, and
  Jozefowicz]{chen2016revisiting}
Jianmin Chen, Xinghao Pan, Rajat Monga, Samy Bengio, and Rafal Jozefowicz.
\newblock Revisiting distributed synchronous {SGD}.
\newblock \emph{arXiv preprint arXiv:1604.00981}, 2016.

\bibitem[Cohen et~al.(2021)Cohen, Daniely, Drori, Koren, and
  Schain]{cohen2021asynchronous}
Alon Cohen, Amit Daniely, Yoel Drori, Tomer Koren, and Mariano Schain.
\newblock Asynchronous stochastic optimization robust to arbitrary delays.
\newblock \emph{Advances in Neural Information Processing Systems},
  34:\penalty0 9024--9035, 2021.

\bibitem[Cotter et~al.(2011)Cotter, Shamir, Srebro, and
  Sridharan]{cotter2011better}
Andrew Cotter, Ohad Shamir, Nati Srebro, and Karthik Sridharan.
\newblock Better mini-batch algorithms via accelerated gradient methods.
\newblock \emph{Advances in neural information processing systems}, 24, 2011.

\bibitem[Demidovich et~al.(2023)Demidovich, Malinovsky, Sokolov, and
  Richt{\'a}rik]{demidovich2023guide}
Yury Demidovich, Grigory Malinovsky, Igor Sokolov, and Peter Richt{\'a}rik.
\newblock A guide through the zoo of biased {SGD}.
\newblock \emph{Advances in Neural Information Processing Systems},
  36:\penalty0 23158--23171, 2023.

\bibitem[Demidovich et~al.(2024)Demidovich, Ostroukhov, Malinovsky, Horváth,
  Takáč, Richtárik, and Gorbunov]{demidovich2024methodslocalstepsrandom}
Yury Demidovich, Petr Ostroukhov, Grigory Malinovsky, Samuel Horváth, Martin
  Takáč, Peter Richtárik, and Eduard Gorbunov.
\newblock Methods with local steps and random reshuffling for generally smooth
  non-convex federated optimization, 2024.
\newblock URL \url{https://arxiv.org/abs/2412.02781}.

\bibitem[Devolder et~al.(2014)Devolder, Glineur, and
  Nesterov]{devolder2014first}
Olivier Devolder, Fran{\c{c}}ois Glineur, and Yurii Nesterov.
\newblock First-order methods of smooth convex optimization with inexact
  oracle.
\newblock \emph{Mathematical Programming}, 146\penalty0 (1):\penalty0 37--75,
  2014.

\bibitem[Duchi et~al.(2013)Duchi, Jordan, and McMahan]{duchi2013estimation}
John Duchi, Michael~I Jordan, and Brendan McMahan.
\newblock Estimation, optimization, and parallelism when data is sparse.
\newblock \emph{Advances in Neural Information Processing Systems}, 26, 2013.

\bibitem[D{\"u}mbgen et~al.(2010)D{\"u}mbgen, van~de Geer, Veraar, and
  Wellner]{dumbgen2010nemirovski}
Lutz D{\"u}mbgen, Sara~A van~de Geer, Mark~C Veraar, and Jon~A Wellner.
\newblock Nemirovski's inequalities revisited.
\newblock \emph{The American Mathematical Monthly}, 117\penalty0 (2):\penalty0
  138--160, 2010.

\bibitem[Dutta et~al.(2018)Dutta, Joshi, Ghosh, Dube, and
  Nagpurkar]{dutta2018slow}
Sanghamitra Dutta, Gauri Joshi, Soumyadip Ghosh, Parijat Dube, and Priya
  Nagpurkar.
\newblock Slow and stale gradients can win the race: Error-runtime trade-offs
  in distributed {SGD}.
\newblock In \emph{International conference on artificial intelligence and
  statistics}, pages 803--812. PMLR, 2018.

\bibitem[Fatkhullin et~al.(2021)Fatkhullin, Sokolov, Gorbunov, Li, and
  Richt{\'a}rik]{fatkhullin2021ef21}
Ilyas Fatkhullin, Igor Sokolov, Eduard Gorbunov, Zhize Li, and Peter
  Richt{\'a}rik.
\newblock {EF21} with bells \& whistles: Practical algorithmic extensions of
  modern error feedback.
\newblock \emph{arXiv preprint arXiv:2110.03294}, 2021.

\bibitem[Feyzmahdavian et~al.(2016)Feyzmahdavian, Aytekin, and
  Johansson]{feyzmahdavian2016asynchronous}
Hamid~Reza Feyzmahdavian, Arda Aytekin, and Mikael Johansson.
\newblock An asynchronous mini-batch algorithm for regularized stochastic
  optimization.
\newblock \emph{IEEE Transactions on Automatic Control}, 61\penalty0
  (12):\penalty0 3740--3754, 2016.

\bibitem[Gasnikov(2017)]{gasnikov2017universal}
Alexander Gasnikov.
\newblock Universal gradient descent.
\newblock \emph{arXiv preprint arXiv:1711.00394}, 2017.

\bibitem[Ghadimi and Lan(2013)]{ghadimi2013stochastic}
Saeed Ghadimi and Guanghui Lan.
\newblock Stochastic first-and zeroth-order methods for nonconvex stochastic
  programming.
\newblock \emph{SIAM journal on optimization}, 23\penalty0 (4):\penalty0
  2341--2368, 2013.

\bibitem[Gorbunov et~al.(2020)Gorbunov, Hanzely, and
  Richt{\'a}rik]{gorbunov2020unified}
Eduard Gorbunov, Filip Hanzely, and Peter Richt{\'a}rik.
\newblock A unified theory of {SGD}: Variance reduction, sampling, quantization
  and coordinate descent.
\newblock In \emph{International Conference on Artificial Intelligence and
  Statistics}, pages 680--690. PMLR, 2020.

\bibitem[Gower et~al.(2019)Gower, Loizou, Qian, Sailanbayev, Shulgin, and
  Richt{\'a}rik]{gower2019sgd}
Robert~Mansel Gower, Nicolas Loizou, Xun Qian, Alibek Sailanbayev, Egor
  Shulgin, and Peter Richt{\'a}rik.
\newblock {SGD}: General analysis and improved rates.
\newblock In Kamalika Chaudhuri and Ruslan Salakhutdinov, editors,
  \emph{Proceedings of the 36th International Conference on Machine Learning},
  volume~97 of \emph{Proceedings of Machine Learning Research}, pages
  5200--5209. PMLR, 09--15 Jun 2019.
\newblock URL \url{https://proceedings.mlr.press/v97/qian19b.html}.

\bibitem[Goyal et~al.(2017)Goyal, Doll{\'a}r, Girshick, Noordhuis, Wesolowski,
  Kyrola, Tulloch, Jia, and He]{goyal2017accurate}
Priya Goyal, Piotr Doll{\'a}r, Ross Girshick, Pieter Noordhuis, Lukasz
  Wesolowski, Aapo Kyrola, Andrew Tulloch, Yangqing Jia, and Kaiming He.
\newblock Accurate, large minibatch {SGD}: Training {ImageNet} in 1 hour.
\newblock \emph{arXiv preprint arXiv:1706.02677}, 2017.

\bibitem[Grishina et~al.(2025)Grishina, Smirnov, and
  Rakhuba]{grishina2025accelerating}
Ekaterina Grishina, Matvey Smirnov, and Maxim Rakhuba.
\newblock Accelerating {N}ewton-{S}chulz iteration for orthogonalization via
  {C}hebyshev-type polynomials.
\newblock \emph{arXiv preprint arXiv:2506.10935}, 2025.

\bibitem[Gruntkowska et~al.(2023)Gruntkowska, Tyurin, and
  Richt{\'a}rik]{EF21-P}
Kaja Gruntkowska, Alexander Tyurin, and Peter Richt{\'a}rik.
\newblock {EF21-P} and friends: Improved theoretical communication complexity
  for distributed optimization with bidirectional compression.
\newblock In \emph{International Conference on Machine Learning}, pages
  11761--11807. PMLR, 2023.

\bibitem[Gruntkowska et~al.(2025)Gruntkowska, Gaponov, Tovmasyan, and
  Richt{\'a}rik]{gruntkowska2025error}
Kaja Gruntkowska, Alexander Gaponov, Zhirayr Tovmasyan, and Peter
  Richt{\'a}rik.
\newblock Error feedback for {M}uon and friends.
\newblock \emph{arXiv preprint arXiv:2510.00643}, 2025.

\bibitem[Guigues et~al.(2017)Guigues, Juditsky, and
  Nemirovski]{guigues2017multistep}
Vincent Guigues, Anatoli Juditsky, and Arkadi Nemirovski.
\newblock Non-asymptotic confidence bounds for the optimal value of a
  stochastic program.
\newblock \emph{Optimization Methods and Software}, 32\penalty0 (5):\penalty0
  1033--1058, 2017.

\bibitem[Gupta et~al.(2018)Gupta, Koren, and Singer]{gupta2018shampoo}
Vineet Gupta, Tomer Koren, and Yoram Singer.
\newblock {S}hampoo: Preconditioned stochastic tensor optimization.
\newblock In \emph{International Conference on Machine Learning}, pages
  1842--1850. PMLR, 2018.

\bibitem[Horv{\'a}th et~al.(2022)Horv{\'a}th, Ho, Horv{\'a}th, Sahu, Canini,
  and Richt{\'a}rik]{horvath2022natural}
Samuel Horv{\'a}th, Chen-Yu Ho, Ludovit Horv{\'a}th, Atal~Narayan Sahu, Marco
  Canini, and Peter Richt{\'a}rik.
\newblock Natural compression for distributed deep learning.
\newblock In \emph{Mathematical and Scientific Machine Learning}, pages
  129--141. PMLR, 2022.

\bibitem[Hu et~al.(2021)Hu, Shen, Wallis, Allen-Zhu, Li, Wang, Wang, and
  Chen]{hu2021loralowrankadaptationlarge}
Edward~J. Hu, Yelong Shen, Phillip Wallis, Zeyuan Allen-Zhu, Yuanzhi Li, Shean
  Wang, Lu~Wang, and Weizhu Chen.
\newblock {LoRA}: Low-rank adaptation of large language models, 2021.
\newblock URL \url{https://arxiv.org/abs/2106.09685}.

\bibitem[Huang et~al.(2022)Huang, Chen, Yin, and Yuan]{huang2022lower}
Xinmeng Huang, Yiming Chen, Wotao Yin, and Kun Yuan.
\newblock Lower bounds and nearly optimal algorithms in distributed learning
  with communication compression.
\newblock \emph{Advances in Neural Information Processing Systems},
  35:\penalty0 18955--18969, 2022.

\bibitem[Jin(2026)]{jin2026tight}
Jikai Jin.
\newblock A tight lower bound for smooth nonconvex stochastic optimization with
  bounded gradient noise.
\newblock \emph{arXiv preprint arXiv:2608.09004}, 2026.

\bibitem[John(1948)]{john1948extremum}
Fritz John.
\newblock Extremum problems with inequalities as subsidiary conditions, studies
  and essays presented to r. {C}ourant on his 60th birthday, {J}anuary 8, 1948,
  1948.

\bibitem[Jordan et~al.(2024)Jordan, Jin, Boza, You, Cesista, Newhouse, and
  Bernstein]{jordan2024muon}
Keller Jordan, Yuchen Jin, Vlado Boza, Jiacheng You, Franz Cesista, Laker
  Newhouse, and Jeremy Bernstein.
\newblock {M}uon: An optimizer for hidden layers in neural networks, 2024.
\newblock URL \url{https://kellerjordan.github.io/posts/muon/}.

\bibitem[Joulani et~al.(2019)Joulani, Gy{\"o}rgy, and
  Szepesv{\'a}ri]{joulani2019think}
Pooria Joulani, Andr{\'a}s Gy{\"o}rgy, and Csaba Szepesv{\'a}ri.
\newblock Think out of the" box": Generically-constrained asynchronous
  composite optimization and hedging.
\newblock \emph{Advances in Neural Information Processing Systems}, 32, 2019.

\bibitem[Juditsky and Nemirovski(2008)]{juditsky2008large}
Anatoli Juditsky and Arkadii~S Nemirovski.
\newblock Large deviations of vector-valued martingales in 2-smooth normed
  spaces.
\newblock \emph{arXiv preprint arXiv:0809.0813}, 2008.

\bibitem[Juditsky et~al.(2013)Juditsky, K{\i}l{\i}n{\c{c}}~Karzan, and
  Nemirovski]{juditsky2012randomized}
Anatoli Juditsky, Fatma K{\i}l{\i}n{\c{c}}~Karzan, and Arkadi Nemirovski.
\newblock Randomized first order algorithms with applications to
  $\ell_1$-minimization.
\newblock \emph{Mathematical Programming}, 142\penalty0 (1):\penalty0 269--310,
  2013.

\bibitem[Kairouz et~al.(2021)Kairouz, McMahan, Avent, Bellet, Bennis, Bhagoji,
  Bonawitz, Charles, Cormode, Cummings, et~al.]{kairouz2021advances}
Peter Kairouz, H~Brendan McMahan, Brendan Avent, Aur{\'e}lien Bellet, Mehdi
  Bennis, Arjun~Nitin Bhagoji, Kallista Bonawitz, Zachary Charles, Graham
  Cormode, Rachel Cummings, et~al.
\newblock Advances and open problems in federated learning.
\newblock \emph{Foundations and trends{\textregistered} in machine learning},
  14\penalty0 (1--2):\penalty0 1--210, 2021.

\bibitem[Karimireddy et~al.(2019)Karimireddy, Rebjock, Stich, and
  Jaggi]{karimireddy2019error}
Sai~Praneeth Karimireddy, Quentin Rebjock, Sebastian Stich, and Martin Jaggi.
\newblock Error feedback fixes sign{SGD} and other gradient compression
  schemes.
\newblock In \emph{International Conference on Machine Learning}, pages
  3252--3261. PMLR, 2019.

\bibitem[Karimireddy et~al.(2020)Karimireddy, Kale, Mohri, Reddi, Stich, and
  Suresh]{karimireddy2020scaffold}
Sai~Praneeth Karimireddy, Satyen Kale, Mehryar Mohri, Sashank Reddi, Sebastian
  Stich, and Ananda~Theertha Suresh.
\newblock {SCAFFOLD}: Stochastic controlled averaging for federated learning.
\newblock In \emph{International conference on machine learning}, pages
  5132--5143. PMLR, 2020.

\bibitem[Kim and Oh(2026)]{kim2026convergence}
Gyu~Yeol Kim and Min-hwan Oh.
\newblock Convergence of {M}uon with {N}ewton-{S}chulz.
\newblock \emph{arXiv preprint arXiv:2601.19156}, 2026.

\bibitem[Koloskova et~al.(2020)Koloskova, Loizou, Boreiri, Jaggi, and
  Stich]{koloskova2020unified}
Anastasia Koloskova, Nicolas Loizou, Sadra Boreiri, Martin Jaggi, and Sebastian
  Stich.
\newblock A unified theory of decentralized {SGD} with changing topology and
  local updates.
\newblock In \emph{International Conference on Machine Learning}, pages
  5381--5393. PMLR, 2020.

\bibitem[Koloskova et~al.(2022)Koloskova, Stich, and
  Jaggi]{koloskova2022sharper}
Anastasiia Koloskova, Sebastian~U Stich, and Martin Jaggi.
\newblock Sharper convergence guarantees for asynchronous {SGD} for distributed
  and federated learning.
\newblock \emph{Advances in Neural Information Processing Systems},
  35:\penalty0 17202--17215, 2022.

\bibitem[Kone\v{c}n{\'y} et~al.(2016)Kone\v{c}n{\'y}, McMahan, Yu,
  Richt{\'a}rik, Suresh, and Bacon]{konevcny2016federated}
Jakub Kone\v{c}n{\'y}, H.~B. McMahan, Felix~X. Yu, Peter Richt{\'a}rik,
  Ananda~Theertha Suresh, and Dave Bacon.
\newblock Federated learning: Strategies for improving communication
  efficiency.
\newblock \emph{ArXiv}, abs/1610.05492, 2016.

\bibitem[Kovalev(2025)]{kovalev2025understanding}
Dmitry Kovalev.
\newblock Understanding gradient orthogonalization for deep learning via
  non-{E}uclidean trust-region optimization.
\newblock \emph{arXiv preprint arXiv:2503.12645}, 2025.

\bibitem[Kovalev and Borodich(2025)]{kovalevborodich2025}
Dmitry Kovalev and Ekaterina Borodich.
\newblock Non-{E}uclidean {SGD} for structured optimization: Unified analysis
  and improved rates.
\newblock \emph{arXiv preprint arXiv:2511.11466}, 2025.

\bibitem[Lan(2020)]{lan2020first}
Guanghui Lan.
\newblock \emph{First-order and stochastic optimization methods for machine
  learning}, volume~1.
\newblock Springer, 2020.

\bibitem[Lau et~al.(2025)Lau, Long, and Su]{lau2025polargrad}
Tim Tsz-Kit Lau, Qi~Long, and Weijie Su.
\newblock {PolarGrad}: A class of matrix-gradient optimizers from a unifying
  preconditioning perspective.
\newblock \emph{arXiv preprint arXiv:2505.21799}, 2025.

\bibitem[Li and Hong(2025)]{li2025note}
Jiaxiang Li and Mingyi Hong.
\newblock A note on the convergence of {M}uon.
\newblock \emph{arXiv preprint arXiv:2502.02900}, 2025.

\bibitem[Liu et~al.(2025)Liu, Su, Yao, Jiang, Lai, Du, Qin, Xu, Lu, Yan,
  et~al.]{liu2025muon}
Jingyuan Liu, Jianlin Su, Xingcheng Yao, Zhejun Jiang, Guokun Lai, Yulun Du,
  Yidao Qin, Weixin Xu, Enzhe Lu, Junjie Yan, et~al.
\newblock {M}uon is scalable for {LLM} training.
\newblock \emph{arXiv preprint arXiv:2502.16982}, 2025.

\bibitem[Malinovsky et~al.(2022)Malinovsky, Yi, and
  Richt{\'a}rik]{malinovsky2022variance}
Grigory Malinovsky, Kai Yi, and Peter Richt{\'a}rik.
\newblock Variance reduced {ProxSkip}: Algorithm, theory and application to
  federated learning.
\newblock In Alice~H. Oh, Alekh Agarwal, Danielle Belgrave, and Kyunghyun Cho,
  editors, \emph{Advances in Neural Information Processing Systems}, 2022.
\newblock URL \url{https://openreview.net/forum?id=edkno3SvKo}.

\bibitem[McMahan et~al.(2017)McMahan, Moore, Ramage, Hampson, and
  y~Arcas]{mcmahan2017communication}
Brendan McMahan, Eider Moore, Daniel Ramage, Seth Hampson, and Blaise~Aguera
  y~Arcas.
\newblock Communication-efficient learning of deep networks from decentralized
  data.
\newblock In \emph{Artificial intelligence and statistics}, pages 1273--1282.
  Pmlr, 2017.

\bibitem[Mishchenko et~al.(2022)Mishchenko, Bach, Even, and
  Woodworth]{mishchenko2022asynchronous}
Konstantin Mishchenko, Francis Bach, Mathieu Even, and Blake~E Woodworth.
\newblock Asynchronous {SGD} beats minibatch {SGD} under arbitrary delays.
\newblock \emph{Advances in Neural Information Processing Systems},
  35:\penalty0 420--433, 2022.

\bibitem[Nagashima and Iiduka(2026)]{nagashima2026improved}
Shuntaro Nagashima and Hideaki Iiduka.
\newblock Improved convergence rates of {M}uon optimizer for nonconvex
  optimization.
\newblock \emph{arXiv preprint arXiv:2601.19400}, 2026.

\bibitem[Nguyen et~al.(2018)Nguyen, Nguyen, van Dijk, Richt{\'a}rik,
  Scheinberg, and Tak{\'a}c]{nguyen2018sgd}
Lam Nguyen, Phuong~Ha Nguyen, Marten van Dijk, Peter Richt{\'a}rik, Katya
  Scheinberg, and Martin Tak{\'a}c.
\newblock {SGD and Hogwild!} convergence without the bounded gradients
  assumption.
\newblock In \emph{International Conference on Machine Learning}, pages
  3750--3758. PMLR, 2018.

\bibitem[Nutini et~al.(2015)Nutini, Schmidt, Laradji, Friedlander, and
  Koepke]{nutini2015coordinate}
Julie Nutini, Mark Schmidt, Issam Laradji, Michael Friedlander, and Hoyt
  Koepke.
\newblock Coordinate descent converges faster with the {Gauss--Southwell} rule
  than random selection.
\newblock In \emph{International Conference on Machine Learning}, pages
  1632--1641. PMLR, 2015.

\bibitem[Recht et~al.(2011)Recht, Re, Wright, and Niu]{recht2011hogwild}
Benjamin Recht, Christopher Re, Stephen Wright, and Feng Niu.
\newblock {H}ogwild: A lock-free approach to parallelizing stochastic gradient
  descent.
\newblock In \emph{Advances in Neural Information Processing Systems}, 2011.

\bibitem[Refael et~al.(2025)Refael, Smorodinsky, Tirer, and
  Lindenbaum]{refael2025sumo}
Yehonathan Refael, Guy Smorodinsky, Tom Tirer, and Ofir Lindenbaum.
\newblock {SUMO}: Subspace-aware moment-orthogonalization for accelerating
  memory-efficient {LLM} training.
\newblock \emph{arXiv preprint arXiv:2505.24749}, 2025.

\bibitem[Richt{\'a}rik et~al.(2021)Richt{\'a}rik, Sokolov, and
  Fatkhullin]{richtarik2021ef21}
Peter Richt{\'a}rik, Igor Sokolov, and Ilyas Fatkhullin.
\newblock {EF21}: A new, simpler, theoretically better, and practically faster
  error feedback.
\newblock \emph{Advances in Neural Information Processing Systems},
  34:\penalty0 4384--4396, 2021.

\bibitem[Richt{\'a}rik et~al.(2022)Richt{\'a}rik, Sokolov, Gasanov, Fatkhullin,
  Li, and Gorbunov]{richtarik20223pc}
Peter Richt{\'a}rik, Igor Sokolov, Elnur Gasanov, Ilyas Fatkhullin, Zhize Li,
  and Eduard Gorbunov.
\newblock {3PC}: Three point compressors for communication-efficient
  distributed training and a better theory for lazy aggregation.
\newblock In \emph{International Conference on Machine Learning}, pages
  18596--18648. PMLR, 2022.

\bibitem[Sadiev et~al.(2022)Sadiev, Malinovsky, Gorbunov, Sokolov, Khaled,
  Burlachenko, and Richt{\'a}rik]{sadiev2022federated}
Abdurakhmon Sadiev, Grigory Malinovsky, Eduard Gorbunov, Igor Sokolov, Ahmed
  Khaled, Konstantin Burlachenko, and Peter Richt{\'a}rik.
\newblock Federated optimization algorithms with random reshuffling and
  gradient compression.
\newblock \emph{arXiv preprint arXiv:2206.07021}, 2022.

\bibitem[Sadiev et~al.(2024)Sadiev, Malinovsky, Gorbunov, Sokolov, Khaled,
  Burlachenko, and Richt{\'a}rik]{sadiev2024dont}
Abdurakhmon Sadiev, Grigory Malinovsky, Eduard Gorbunov, Igor Sokolov, Ahmed
  Khaled, Konstantin~Pavlovich Burlachenko, and Peter Richt{\'a}rik.
\newblock Don't compress gradients in random reshuffling: Compress gradient
  differences.
\newblock In \emph{The Thirty-eighth Annual Conference on Neural Information
  Processing Systems}, 2024.
\newblock URL \url{https://openreview.net/forum?id=CzPtBzgfae}.

\bibitem[Sadiev et~al.(2025)Sadiev, Demidovich, Sokolov, Malinovsky, Khirirat,
  and Richt{\'a}rik]{sadiev2025improved}
Abdurakhmon Sadiev, Yury Demidovich, Igor Sokolov, Grigory Malinovsky, Sarit
  Khirirat, and Peter Richt{\'a}rik.
\newblock Improved convergence in parameter-agnostic error feedback through
  momentum.
\newblock \emph{arXiv preprint arXiv:2511.14501}, 2025.

\bibitem[Sadiev et~al.(2026)Sadiev, Maranjyan, Ilin, and
  Richt{\'a}rik]{sadiev2026ringmaster}
Abdurakhmon Sadiev, Artavazd Maranjyan, Ivan Ilin, and Peter Richt{\'a}rik.
\newblock {R}ingmaster {LMO}: Asynchronous linear minimization oracle momentum
  method.
\newblock \emph{arXiv preprint arXiv:2605.18174}, 2026.

\bibitem[Seide et~al.(2014)Seide, Fu, Droppo, Li, and Yu]{Seide2014}
Frank Seide, Hao Fu, Jasha Droppo, Gang Li, and Dong Yu.
\newblock 1-bit stochastic gradient descent and its application to
  data-parallel distributed training of speech {DNN}s.
\newblock In \emph{Fifteenth Annual Conference of the International Speech
  Communication Association}, 2014.

\bibitem[Shah et~al.(2025)Shah, Polloreno, Stratos, Monk, Chaluvaraju, Hojel,
  Ma, Thomas, Tanwer, Shah, et~al.]{shah2025practical}
Ishaan Shah, Anthony~M Polloreno, Karl Stratos, Philip Monk, Adarsh
  Chaluvaraju, Andrew Hojel, Andrew Ma, Anil Thomas, Ashish Tanwer, Darsh~J
  Shah, et~al.
\newblock Practical efficiency of {M}uon for pretraining.
\newblock \emph{arXiv preprint arXiv:2505.02222}, 2025.

\bibitem[Shen et~al.(2025)Shen, Huang, Huang, Shen, and
  Zhang]{shen2025convergence}
Wei Shen, Ruichuan Huang, Minhui Huang, Cong Shen, and Jiawei Zhang.
\newblock On the convergence analysis of {M}uon.
\newblock \emph{arXiv preprint arXiv:2505.23737}, 2025.

\bibitem[Sokolov(2022)]{sokolov2022non}
Igor Sokolov.
\newblock Non-convex stochastic optimization with biased gradient estimators.
\newblock {MS} thesis, King Abdullah University of Science and Technology,
  2022.
\newblock URL \url{https://doi.org/10.25781/KAUST-5EAM7}.

\bibitem[Sokolov and
  Richtárik(2024)]{sokolov2024marinapsuperiorperformancenonsmooth}
Igor Sokolov and Peter Richtárik.
\newblock {MARINA}-p: Superior performance in non-smooth federated optimization
  with adaptive stepsizes, 2024.
\newblock URL \url{https://arxiv.org/abs/2412.17082}.

\bibitem[Sokolov et~al.(2025)Sokolov, Sadiev, Demidovich, Al-Qahtani, and
  Richt{\'a}rik]{sokolov2025bernoulli}
Igor Sokolov, Abdurakhmon Sadiev, Yury Demidovich, Fawaz~S Al-Qahtani, and
  Peter Richt{\'a}rik.
\newblock {Bernoulli-LoRA}: A theoretical framework for randomized low-rank
  adaptation.
\newblock \emph{arXiv preprint arXiv:2508.03820}, 2025.

\bibitem[Sokolov et~al.(2026)Sokolov, Condat, and
  Richt{\'a}rik]{sokolov2026silage}
Igor Sokolov, Laurent Condat, and Peter Richt{\'a}rik.
\newblock {SILAGE}: Memory-efficient, full-gradient-free nonconvex optimization
  for nested finite sums.
\newblock \emph{arXiv preprint arXiv:2606.15832}, 2026.

\bibitem[Stich and Karimireddy(2019)]{stich2019error}
Sebastian~U Stich and Sai~Praneeth Karimireddy.
\newblock The error-feedback framework: {B}etter rates for {SGD} with delayed
  gradients and compressed communication.
\newblock \emph{arXiv preprint arXiv:1909.05350}, 2019.

\bibitem[Stich et~al.(2018)Stich, Cordonnier, and Jaggi]{stich2018sparsified}
Sebastian~U Stich, Jean-Baptiste Cordonnier, and Martin Jaggi.
\newblock Sparsified {SGD} with memory.
\newblock \emph{Advances in neural information processing systems}, 31, 2018.

\bibitem[Stonyakin et~al.(2023)Stonyakin, Kuruzov, and
  Polyak]{stonyakin2023stopping}
Fedor Stonyakin, Ilya Kuruzov, and Boris Polyak.
\newblock Stopping rules for gradient methods for non-convex problems with
  additive noise in gradient.
\newblock \emph{Journal of Optimization Theory and Applications}, 198\penalty0
  (2):\penalty0 531--551, 2023.

\bibitem[Sun et~al.(2023)Sun, Wang, Li, and Wang]{sun2023momentum}
Tao Sun, Qingsong Wang, Dongsheng Li, and Bao Wang.
\newblock Momentum ensures convergence of sign{SGD} under weaker assumptions.
\newblock In \emph{International Conference on Machine Learning}, pages
  33077--33099. PMLR, 2023.

\bibitem[Takezawa et~al.(2025)Takezawa, Koloskova, Jiang, and
  Stich]{takezawa2025fedmuon}
Yuki Takezawa, Anastasia Koloskova, Xiaowen Jiang, and Sebastian~U Stich.
\newblock {FedMuon}: Federated learning with bias-corrected {LMO}-based
  optimization.
\newblock \emph{arXiv preprint arXiv:2509.26337}, 2025.

\bibitem[Tseng and Bertsekas(1991)]{tseng1991relaxation}
Paul Tseng and Dimitri~P Bertsekas.
\newblock Relaxation methods for problems with strictly convex costs and linear
  constraints.
\newblock \emph{Mathematics of operations research}, 16\penalty0 (3):\penalty0
  462--481, 1991.

\bibitem[Tseng and Yun(2009)]{tseng2009block}
Paul Tseng and Sangwoon Yun.
\newblock Block-coordinate gradient descent method for linearly constrained
  nonsmooth separable optimization.
\newblock \emph{Journal of optimization theory and applications}, 140\penalty0
  (3):\penalty0 513--535, 2009.

\bibitem[Tyurin(2024)]{tyurin2024toward}
Alexander Tyurin.
\newblock Toward a unified theory of gradient descent under generalized
  smoothness.
\newblock \emph{arXiv preprint arXiv:2412.11773}, 2024.

\bibitem[Tyurin and Richt{\'a}rik(2023)]{tyurin2023optimal}
Alexander Tyurin and Peter Richt{\'a}rik.
\newblock Optimal time complexities of parallel stochastic optimization methods
  under a fixed computation model.
\newblock \emph{Advances in Neural Information Processing Systems},
  36:\penalty0 16515--16577, 2023.

\bibitem[Vogels et~al.(2019)Vogels, Karimireddy, and Jaggi]{vogels2019powersgd}
Thijs Vogels, Sai~Praneeth Karimireddy, and Martin Jaggi.
\newblock {PowerSGD}: Practical low-rank gradient compression for distributed
  optimization.
\newblock \emph{Advances in Neural Information Processing Systems}, 32, 2019.

\bibitem[Wang et~al.(2018)Wang, Sievert, Liu, Charles, Papailiopoulos, and
  Wright]{wang2018atomo}
Hongyi Wang, Scott Sievert, Shengchao Liu, Zachary Charles, Dimitris
  Papailiopoulos, and Stephen Wright.
\newblock {ATOMO}: Communication-efficient learning via atomic sparsification.
\newblock \emph{Advances in neural information processing systems}, 31, 2018.

\bibitem[Wangni et~al.(2018)Wangni, Wang, Liu, and Zhang]{wangni2018gradient}
Jianqiao Wangni, Jialei Wang, Ji~Liu, and Tong Zhang.
\newblock Gradient sparsification for communication-efficient distributed
  optimization.
\newblock \emph{Advances in Neural Information Processing Systems}, 31, 2018.

\bibitem[Xie et~al.(2025)Xie, Wang, Wu, and Li]{xie2025tale}
Shuo Xie, Tianhao Wang, Beining Wu, and Zhiyuan Li.
\newblock A tale of two geometries: Adaptive optimizers and non-{E}uclidean
  descent.
\newblock \emph{arXiv preprint arXiv:2511.20584}, 2025.

\bibitem[Yi et~al.(2024)Yi, Kharisov, Sokolov, and Richt{\'a}rik]{yi2024cohort}
Kai Yi, Timur Kharisov, Igor Sokolov, and Peter Richt{\'a}rik.
\newblock Cohort squeeze: Beyond a single communication round per cohort in
  cross-device federated learning.
\newblock \emph{arXiv preprint arXiv:2406.01115}, 2024.

\bibitem[Yuan and Li(2022)]{yuan2022convergence}
Xiaotong Yuan and Ping Li.
\newblock On convergence of {FedProx}: Local dissimilarity invariant bounds,
  non-smoothness and beyond.
\newblock \emph{Advances in Neural Information Processing Systems},
  35:\penalty0 10752--10765, 2022.

\bibitem[Zhang and Gao(2025)]{zhang2025provable}
Xinwen Zhang and Hongchang Gao.
\newblock On provable benefits of {Muon} in federated learning.
\newblock \emph{arXiv preprint arXiv:2510.03866}, 2025.

\end{thebibliography}
\bibliographystyle{plainnat}

\newpage
\appendix
\onecolumn

\tableofcontents

\clearpage

\section{Notation}\label{app:notation}
We collect here the notation used throughout the paper; the objects introduced are developed further in \Cref{sec:geometry_native_minibatch}.

\textbf{Spaces and norms.} Our analysis takes place in a finite-dimensional normed space $(E, \norm{\cdot})$ with dual space $(E^*, \snorm{\cdot})$. We use the canonical dual pairing $\inner{y}{x}\eqdef y(x)$ for $y\in E^*$ and $x\in E$, and define the dual norm by
$\snorm{y} \eqdef \sup_{\norm{x}\le 1}\inner{y}{x}$, $y\in E^*$.
We refer to $\norm{\cdot}$ as the \emph{update norm}: it determines $\snorm{\cdot}$ and, for a given gradient and positive stepsize, the set of admissible steepest-descent steps. The fixed sharp selection described below specifies a concrete step from this set.
Gradients and stochastic gradients are elements of $E^*$.
Our analysis uses the primal norm $\norm{\cdot}$ for parameter displacements and its induced dual norm $\snorm{\cdot}$ for gradient smoothness, oracle noise, and stationarity.
We also use auxiliary norms in the proofs and Euclidean or Frobenius norms when comparing oracle models.
We write $\kappa$ for a regularity constant of $(E^*, \snorm{\cdot})$, and $\spnorm{\cdot}$ and $\kappa_+$ for a corresponding auxiliary norm and its smoothness constant (\Cref{def:kappa_smooth_regular}).

\textbf{The sharp operator.} Let $\phi(x)\eqdef\frac12\sqnorm{x}$ for $x\in E$, with Fenchel conjugate $\phi^*(s)\eqdef\frac12\sqsnorm{s}$ for $s\in E^*$. The subdifferential $\partial\phi^*(s)\subseteq E$ may contain more than one element. A \emph{sharp selection} assigns to each $s\in E^*$ an element $s^\sharp\in\partial\phi^*(s)$. For the stochastic algorithm, we fix a Borel-measurable selection $s\mapsto s^\sharp$ as in \Cref{subsec:non_eu_update} and call this map the \emph{sharp operator}. Every such selected vector satisfies $\sqnorm{s^\sharp}=\sqsnorm{s}$ and $\inner{s}{s^\sharp}=\sqsnorm{s}$. In the Euclidean case $s^\sharp=s$.
In the quadratic case, $\sqnorm{x}=\inner{Ax}{x}$ for a symmetric positive-definite linear map $A:E\to E^*$, so $\nabla\phi(x)=Ax$, $\nabla^2\phi=A$, and $s^\sharp=\nabla\phi^*(s)=A^{-1}s$.
Here the sharp operator is the fixed linear map $A^{-1}$.
For a general norm, the squared-norm functions $\phi$ and $\phi^*$ need not be differentiable everywhere, and the sharp operator need not admit such a linear representation.
The subdifferential formulation remains valid without differentiability assumptions: $\partial\phi^*$ is positively homogeneous and may contain multiple vectors at a given point.
Basic properties of the sharp operator are collected in \Cref{lemma:sharp_flat_properties}, and \Cref{subsec:non_eu_update} derives the resulting update together with its steepest-descent and LMO forms.

\textbf{Vector geometries.} For vector-valued parameters we write $\ell_p^d$ for $\R^d$ equipped with the $\ell_p$ norm, $\norm{x}_p\eqdef\bigl(\sum_{i=1}^d|x_i|^p\bigr)^{1/p}$ for $1\le p<\infty$ and $\norm{x}_\infty\eqdef\max_{1\le i\le d}|x_i|$, with $\ell_2$ the Euclidean norm. The $\ell_p$ and $\ell_q$ norms are dual to one another whenever $\nfr{1}{p}+\nfr{1}{q}=1$, so an $\ell_p$ update norm induces an $\ell_q$ dual norm; in particular $\ell_1$ and $\ell_\infty$ form a dual pair, which is the source of the coordinate-type and sign-type updates in \Cref{subsec:non_eu_update}.

\textbf{Matrix geometries.} When the parameter space contains matrix-valued blocks, as in modern neural networks, non-Euclidean update norms are naturally expressed through Schatten norms.
For $\bX\in\R^{m\times n}$ with $m, n\ge1$, set $r\eqdef\min\{m, n\}$ and write $\sigma(\bX)=(\sigma_1(\bX), \dots, \sigma_r(\bX))\in\R^r$ for the full vector of singular values, ordered as $\sigma_1(\bX)\ge\dots\ge\sigma_r(\bX)\ge0$.
Zero singular values are included, so $\sigma(0)=0\in\R^r$.
For $1\le p<\infty$, the Schatten-$p$ norm is $\norm{\bX}_{S_p}\eqdef\norm{\sigma(\bX)}_p=\bigl(\sum_{i=1}^r\sigma_i(\bX)^p\bigr)^{1/p}$; for $p=\infty$, we define $\norm{\bX}_{S_\infty}\eqdef\norm{\sigma(\bX)}_\infty=\sigma_1(\bX)$.
Its special cases are the nuclear norm $\nucnorm{\bX}=\norm{\bX}_{S_1}$, the Frobenius norm $\fnorm{\bX}=\norm{\bX}_{S_2}$, and the spectral norm $\opnorm{\bX}=\norm{\bX}_{S_\infty}$. The spectral and nuclear norms are dual to one another.
The dimension-dependent bounds use $r=\min\{m, n\}$, whereas the actual rank $\mathrm{rank}(\bX)$ is the number of positive singular values. In compact-SVD arguments, $r$ is explicitly redefined locally as the actual matrix rank.

\textbf{Accuracy and complexity.} We write $\Delta\eqdef f(x^0)-\finf$ for the initial functional gap. A point $x\in E$ is \emph{$\eps$-stationary} if $\sqsnorm{\nabla f(x)}\le\eps$; stationarity is thus measured in the dual norm induced by the chosen update norm. Our light-tail guarantees are of high-probability type: for a failure probability $\delta\in(0, 1)$ they assert that, with probability at least $1-\delta$, at least one iterate among $x^0, \dots, x^{T-1}$ is $\eps$-stationary. By \emph{iteration complexity} we mean the number $T$ of algorithmic iterations required, by \emph{oracle complexity} the number of stochastic gradient evaluations, and by \emph{time complexity} the wall-clock time. For serial minibatch \algname{SGD}, the oracle count is $N_*=BT$, with $B$ evaluations per iteration; $N$ denotes the number of workers in the parallel setting. In wall-clock bounds, $S\in[N]$ denotes the number of fastest workers considered in the minimization. We write $[N]\eqdef\{1, \dots, N\}$, and $\widetilde{\cO}(\cdot)$ for $\cO(\cdot)$ up to factors logarithmic in $T$ and $\nfr{1}{\delta}$.

Throughout the convergence and complexity analysis, we take a positive smoothness bound $L>0$, an accuracy $\eps>0$, and consider the nontrivial case $\Delta>0$. If $\Delta=0$, then $x^0$ is a global minimizer and hence $\nabla f(x^0)=0$, so it can be returned without any iterations. The simplified oracle and wall-clock complexity bounds are stated in the accuracy regime $0<\eps\le L\Delta$.

\textbf{Notation reference.} The following table collects the notation used in the main text and the principal constructions in the appendices. The last column points to the definition or the relevant discussion. Symbols reused locally are listed with their scope; temporary variables in individual calculations retain their local definitions.

\begingroup
\setlength{\tabcolsep}{4pt}
\renewcommand{\arraystretch}{1.15}
\begin{longtable}{@{}>{\raggedright\arraybackslash}p{0.21\textwidth}>{\raggedright\arraybackslash}p{0.55\textwidth}>{\raggedright\arraybackslash}p{0.20\textwidth}@{}}
\caption{Notation reference. Definitions with a stated local scope do not replace the conventions used elsewhere.}\label{tab:notation_reference}\\
\toprule
\textbf{Symbol} & \textbf{Meaning} & \textbf{Reference / scope} \\
\midrule
\endfirsthead
\multicolumn{3}{@{}l}{\textit{Notation reference (continued)}}\\
\toprule
\textbf{Symbol} & \textbf{Meaning} & \textbf{Reference / scope} \\
\midrule
\endhead
\midrule
\multicolumn{3}{r@{}}{\textit{Continued on the next page}}\\
\endfoot
\bottomrule
\endlastfoot

\multicolumn{3}{@{}l}{\textbf{Spaces and norms}}\\*
$E$, $E^*$ & Parameter space and its dual; $1\le\dim(E)<\infty$. Gradients belong to $E^*$. & \Cref{app:notation} \\
$E^{**}$ & Bidual space, canonically identified with $E$ in finite dimensions. & \Cref{thm:bidual_norm_equals_primal} \\
$x$, $s$ & Generic primal vector $x\in E$ and dual vector $s\in E^*$. & \Cref{app:notation} \\
$\inner{s}{x}$ & Dual pairing: the value of the linear functional $s$ at $x$. & \Cref{app:notation} \\
$\norm{\cdot}$ & Primal norm, called the update norm. & \Cref{subsec:non_eu_update} \\
$\snorm{\cdot}$ & Induced dual norm, $\snorm{s}=\sup_{\norm{x}\le1}\inner{s}{x}$. & \Cref{lemma:dual_norm_is_a_norm} \\
$\norm{\cdot}_2$, $\fnorm{\cdot}$ & Euclidean and Frobenius reference norms; in the oracle comparison, the subscript $\opf$ also denotes the Euclidean norm for vectors. & \Cref{sec:frobenius_light_tail_oracle} \\
$\mathcal C$, $\cB(y, t)$ & A constraint set and the closed update-norm ball $\{x\in E:\norm{x-y}\le t\}$. & \Cref{subsec:lmo_form} \\
$\mathbb B$, $\mathbb{B}_*$ & Primal and dual unit balls, respectively. The notation $B$ also denotes a convex body locally in the ellipsoid proof. & \Cref{lemma:fenchel_of_V}; \Cref{lem:fenchel_of_norm} \\
$\|\cdot\|_a$, $\|\cdot\|_b$ & Two auxiliary norms with neutral labels $a, b$; $\|\cdot\|_{a, *}$ and $\|\cdot\|_{b, *}$ are their dual norms. & \Cref{lemma:duality_of_norm_equivalence} \\

\midrule
\multicolumn{3}{@{}l}{\textbf{Standard notation from convex analysis}}\\*
$\dom f$, $\dom\partial f$ & Effective domain $\{x:f(x)<+\infty\}$ and the set of points where the subdifferential is nonempty. In convex-analysis statements, $f$ may be extended-valued. & \eqref{eq:subgradient_def} \\
$\operatorname{int}(C)$, $\ri(C)$ & Interior in the ambient space and relative interior in the affine hull of $C$. & Existence of subgradients in the auxiliary results \\
$\partial f(x)$ & Convex subdifferential: the set of $s$ satisfying $f(y)\ge f(x)+\inner{s}{y-x}$ for every $y$, at a finite-valued point $x$. & \eqref{eq:subgradient_def} \\
$f^*$ & Fenchel conjugate, $f^*(s)=\sup_x\{\inner{s}{x}-f(x)\}$. The star on a function denotes conjugacy, whereas the star on a norm denotes the dual norm. & \Cref{thm:fenchel_young_inequality} \\
$\iota_C$ & Extended-valued indicator: zero on $C$ and $+\infty$ outside $C$. & \Cref{lem:fenchel_of_norm} \\
$\partial\norm{x}$, $\partial\snorm{s}$ & Subdifferentials of the norm functions evaluated at $x$ and $s$, respectively; the short notation does not differentiate a scalar value. & \Cref{lem:subdifferential_of_the_norm} \\
$\phi$, $\phi^*$ & Squared-norm functions $\phi(x)=\frac12\sqnorm{x}$ and $\phi^*(s)=\frac12\sqsnorm{s}$, which are Fenchel conjugates. & \Cref{lemma:subdiff_phi_from_notes} \\
$\argmin$, $\argmax$ & Sets of minimizers and maximizers; they need not be singletons. & \Cref{subsec:lmo_form} \\
$\operatorname{conv}(C)$, $\partial C$ & Convex hull and boundary of a set $C$. The boundary notation is distinct from the subdifferential of a function. & \Cref{thm:john_ellipsoid} \\

\midrule
\multicolumn{3}{@{}l}{\textbf{Objective, oracle, and accuracy}}\\*
$f$, $f_i$ & Global objective and worker objectives; in the finite-sum formulation, $f=N^{-1}\sum_{i=1}^N f_i$. Under statistical homogeneity all $f_i$ coincide. & \eqref{eq:main_task_sum} \\
$f(x;\xi)$, $f_i(x;\xi_i)$ & Sample losses in the expectation formulation, which is a special case of the general oracle model. & \eqref{eq:local_objective} and \eqref{eq:main_task} \\
$\mathbb S_i$, $\cD_i$, $\cD$ & Worker sample space, worker sampling distribution, and common distribution used in the homogeneous setting. & Problem setup in \Cref{sec:intro} \\
$\nabla f(x)$ & Exact derivative of $f$ at $x$, viewed as an element of $E^*$. & \Cref{subsec:stationarity_oracle} \\
$\nf{x;\xi}$ & Response of the jointly measurable stochastic oracle. It is an unbiased estimate of $\nabla f(x)$, but need not be the gradient of a sample loss. & \Cref{subsec:stationarity_oracle} \\
$\xi$, $\xi_j^k$ & Fresh oracle sample; the indexed version is sample $j$ of iteration $k$. Each new minibatch is independent of the preceding history. & \Cref{subsec:non_eu_update} \\
$L$ & Positive smoothness bound satisfying $\snorm{\nabla f(x)-\nabla f(y)}\le L\norm{x-y}$. & \Cref{as:non_euclidean_smoothness} \\
$\finf$ & A finite lower bound on $f$; it need not be attained. & \Cref{as:lower_bound} \\
$\Delta$ & Initial functional gap, $f(x^0)-\finf$. The nontrivial complexity regime assumes $\Delta>0$. & \Cref{app:notation} \\
$\sigma$, $\sigma^2$ & Positive dual-norm light-tail scale and its square: $\Exp{\exp(\sqsnorm{\nf{x;\xi}-\nabla f(x)}/\sigma^2)}\le e$. It also bounds the noise second moment. & \Cref{as:non_euclidean_light_tail} \\
$G$ & Uniform bound on the dual norm of the oracle error in the bounded-error example; distinct from a matrix gradient $\bG$. & \Cref{subsec:stationarity_oracle} \\
$\eps$ & Squared-gradient stationarity tolerance: $\sqsnorm{\nabla f(x)}\le\eps$. Simplified complexity bounds use $0<\eps\le L\Delta$. & \Cref{app:notation} \\
$\delta$ & Failure probability, $0<\delta<1$; the main guarantees hold with probability at least $1-\delta$. & \Cref{thm:stochastic_L_smooth_final} \\
$\hat x$ & An iterate selected uniformly from the prescribed finite sequence, used for the companion expectation criterion. & \Cref{app:light_tail_discussion} \\
$L_{\opf}$, $\eps_{\opf}$ & Smoothness constant and squared-gradient accuracy measured in the Euclidean/Frobenius geometry in the literature comparison. & \Cref{sec:related_work} \\
$\sigma_{\opf}^2$ & Reference-norm noise scale: a variance bound in the bounded-variance literature discussion, and an exponential-moment scale in the light-tail oracle comparison. & \Cref{sec:related_work}; \Cref{as:stoch_frobenius_light_tail_alt} \\

\midrule
\multicolumn{3}{@{}l}{\textbf{Updates and iteration counts}}\\*
$x^0$, $x^k$ & Initial point and iterate at index $k$; $\bX^k$ denotes the corresponding matrix iterate. & \eqref{eq:non-eu-one-step-final} \\
$g^k$, $\bG^k$ & Minibatch gradient $g^k=B^{-1}\sum_{j=1}^B\nf{x^k;\xi_j^k}$ and its matrix counterpart. & \eqref{eq:mini_gk_def} \\
$s^\sharp$ & Fixed Borel-measurable selection from $\partial\phi^*(s)$. It satisfies $\norm{s^\sharp}=\snorm{s}$ and $\inner{s}{s^\sharp}=\sqsnorm{s}$. & \Cref{lemma:sharp_flat_properties} \\
$\gamma_k$ & Positive stepsize; the main convergence and time bounds use $\gamma_k\equiv1/(2L)$. & \Cref{thm:stochastic_L_smooth_final} \\
$h^k$, $h_k$ & Step displacement, selected as $-\gamma_k(g^k)^\sharp$ to agree with the fixed sharp rule. Both index conventions occur in the derivation. & \Cref{app:steepest_derivation} \\
$t_k$ & Radius $\gamma_k\snorm{g^k}$ of the equivalent LMO ball; not a wall-clock time. & \eqref{eq:lmo_form} \\
$\lmo_{\mathcal C}(s)$ & Set-valued linear minimization oracle, $\argmin_{x\in\mathcal C}\inner{s}{x}$. & \Cref{subsec:lmo_form} \\
$B$ & Minibatch size, a positive integer; independent of the number of workers. & \Cref{thm:stochastic_L_smooth_final} \\
$T$ & Prescribed iteration count; in the main bounds, $T=\lceil128L\Delta/(15\eps)\rceil$. & \Cref{thm:stochastic_L_smooth_final} \\
$N_*$ & Serial minibatch oracle count $BT$; written $TB_*$ when explicitly comparing native and converted batch choices. & \Cref{cor:complexity_dual_native_light_tail} \\
$\mathcal T_{\mathrm{SGD}}^{\opf}(B)$ & Euclidean/Frobenius iteration-complexity bound for minibatch size $B$; the corresponding oracle bound is $B\mathcal T_{\mathrm{SGD}}^{\opf}(B)$. & \Cref{sec:related_work} \\
$\mathcal O$, $\Omega$, $\Theta$, $o$ & Asymptotic upper, lower, matching-order, and strictly smaller-order notation, respectively. & Complexity and dimension comparisons \\
$\widetilde{\mathcal O}$ & Upper-bound notation suppressing factors logarithmic in $T$ and $1/\delta$. & \Cref{app:notation} \\
$I(s)$, $e_i$ & Maximizing-coordinate set $I(s)=\{i:|s_i|=\|s\|_\infty\}$ and coordinate unit vector. & \Cref{app:sharp_l1_coordinate} \\
$\sign$ & Componentwise sign function, with $\sign(0)=0$ for the canonical sign selection. & \Cref{app:sharp_linf_sign} \\
$A$ & In the quadratic-norm example, a symmetric positive-definite map $E\to E^*$ with $\sqnorm{x}=\inner{Ax}{x}$ and $s^\sharp=A^{-1}s$. & \Cref{app:notation} \\

\midrule
\multicolumn{3}{@{}l}{\textbf{Regularity and norm comparisons}}\\*
$V$, $\|\cdot\|_V$ & Generic finite-dimensional normed space in the regularity results; usually specialized to $V=E^*$. Distinct from the potential $V(s)$ below. & \Cref{def:kappa_smooth_regular} \\
$\kappa_V$, $\kappa_{V, +}$ & A regularity constant of $V$ and a smoothness constant of its auxiliary norm, with $1\le\kappa_{V, +}\le\kappa_V$. The same constant notation is used when discussing smoothness alone. & \Cref{def:kappa_smooth_regular} \\
$\|\cdot\|_{V, +}$ & Auxiliary norm with $\|u\|_V^2\le\|u\|_{V, +}^2\le(\kappa_V/\kappa_{V, +})\|u\|_V^2$. & \eqref{eq:reg_kappa_def_gen} \\
$\|\cdot\|_{V, +, *}$ & Norm dual to $\|\cdot\|_{V, +}$, used in smoothness and conjugacy characterizations. & \Cref{prop:kappa_smooth_characterizations} \\
$\kappa$, $\kappa_+$ & Abbreviations for $\kappa_{E^*}$ and $\kappa_{E^*, +}$ in the stochastic analysis. They are admissible constants, not necessarily the smallest possible ones. & \Cref{rem:regularity_dual_convention} \\
$\spnorm{\cdot}$ & Auxiliary norm on $E^*$, distinct from the update norm. The concentration proof chooses a $\kappa$-smooth proxy with squared-norm distortion at most $2$. & \eqref{eq:reg_kappa_def}; \eqref{eq:reg_to_smooth_equiv_dual} \\
$\|\cdot\|_+$, $\phi_+$ & Local shorthand for an auxiliary norm and its half-squared function, $\phi_+(u)=\frac12\|u\|_+^2$. & \Cref{lem:second_moment_kappa_regular} \\
$C$ & In the regularity comparison, squared-norm distortion with $\kappa=\kappa_+C$. In the two-norm equivalence lemma it instead bounds the norms themselves. & \Cref{sec:regularity_vs_norm_equivalence}; \Cref{lemma:duality_of_norm_equivalence} \\
$\alpha$, $\beta$ & Positive equivalence constants satisfying $\alpha\snorm{s}\le\fnorm{s}\le\beta\snorm{s}$. Noise conversion uses $\alpha^{-2}$; the Euclidean proxy gives $\kappa=(\beta/\alpha)^2$. & \eqref{eq:norm_equiv_2_light_tail_alt} \\
$\rho$ & Finite auxiliary-norm exponent in the regularity constructions; for $p\ge2$, the optimization uses $2\le\rho\le p$ and $\rho<\infty$, also when $p=\infty$. & \Cref{prop:lp_regular_vs_equiv}; \Cref{prop:schatten_regular_vs_equiv} \\
$\rho_*$, $\hat\rho$ & Tuned auxiliary exponent and the threshold used to express it. The vector and Schatten choices differ; their closed forms and dimension ranges are stated separately. & \Cref{prop:lp_regular_vs_equiv}; \Cref{prop:schatten_regular_vs_equiv} \\
$\kappa_p(d)$ & Constructed regularity parameter for $\ell_p^d$, equal to $d^{2/p-1}$ for $p<2$ and obtained by minimizing $(\rho-1)d^{2/\rho-2/p}$ for $p\ge2$. & \Cref{example:2} \\
$\kappa_p(m, n)$ & Constructed Schatten regularity parameter using $r=\min\{m, n\}$; the uniform smoothness bound used in the construction is $\max\{2, \rho-1\}$. & \Cref{example:3} \\
$\kappa_{\ell_p}(d)$, $\kappa_{S_p}(r)$ & Explicit admissible parameters chosen for the oracle comparisons. They need not coincide when $d=r$; the latter uses the exact value $1$ at $p=2$. & \eqref{eq:kappa_lp_alt} and \eqref{eq:kappa_schatten_alt} \\
$\rho$ (related work) & Conversion constant $\sup_{z\ne0}\snorm{z}/\|z\|_2$ in the discussion of \algname{Ringmaster LMO}; unrelated to the auxiliary exponent. & \Cref{app:related_time_complexity} \\

\midrule
\multicolumn{3}{@{}l}{\textbf{Vector and matrix geometries}}\\*
$d$, $m$, $n$ & Vector dimension and numbers of matrix rows and columns. In general-space statements, $d$ may denote $\dim(V)$. & \Cref{app:notation}; \Cref{example:1} \\
$p$, $q$ & Norm exponents; when paired as dual exponents, $1/p+1/q=1$, with $1/\infty=0$. In oracle comparisons, $p$ indexes the dual norm. & \Cref{app:notation} \\
$\ell_p^d$, $\|\cdot\|_p$ & $\R^d$ equipped with the norm $(\sum_i|x_i|^p)^{1/p}$ for finite $p$, and $\|x\|_\infty=\max_i|x_i|$. & \Cref{app:notation} \\
$r$ (dimension) & Matrix dimension parameter $\min\{m, n\}$ in Schatten norm comparisons and regularity bounds; not the actual rank. & \Cref{app:notation} \\
$r$ (compact SVD) & Locally, the actual rank $\operatorname{rank}(\bG)$, counting only positive singular values. & \Cref{app:orth_derivation} \\
$\sigma_i(\bX)$, $\sigma(\bX)$ & Singular values in nonincreasing order and their full vector, including zeros. This use of $\sigma$ is distinct from the noise scale. & \Cref{app:notation} \\
$\|\cdot\|_{S_p}$ & Schatten-$p$ norm, $\|\bX\|_{S_p}=\|\sigma(\bX)\|_p$. & \Cref{app:notation} \\
$\opnorm{\cdot}$, $\nucnorm{\cdot}$ & Spectral norm $S_\infty$ and nuclear norm $S_1$, a dual pair. The Frobenius norm is $S_2$. & \Cref{app:notation} \\
$\inner{\bG}{\bX}_F$ & Frobenius pairing $\operatorname{tr}(\bG^\top\bX)$, identifying the matrix space with its dual. & \Cref{lem:subdiff_nuc_norm} \\
$\bU_r$, $\Sigma_r$, $\bV_r$ & Compact-SVD factors, $\bG=\bU_r\Sigma_r\bV_r^\top$, retaining only positive singular values; the columns of $\bU_r$ and $\bV_r$ are orthonormal. & \Cref{app:orth_derivation} \\
$\bW$ & Residual matrix satisfying $\bU_r^\top\bW=0$, $\bW\bV_r=0$, and $\opnorm{\bW}\le1$ in the nuclear-subgradient and spectral-LMO formulas. & \Cref{lem:subdiff_nuc_norm}; \Cref{lem:lmo_spectral_norm} \\
$M^\dagger$, $M^{\dagger/2}$ & Moore--Penrose pseudoinverse and, for positive semidefinite $M$, the positive semidefinite square root of $M^\dagger$. & \Cref{subseq:equiv_orth} \\
$\orth(\bG)$, $\bO^k$ & Canonical polar map $\orth(\bG)=\bU_r\bV_r^\top$, with $\orth(0)=0$; $\bO^k=\orth(\bG^k)$. & \eqref{eq:orth_def-final} \\
$(\bG^k)^\sharp$ & Canonical spectral-norm sharp selection $\nucnorm{\bG^k}\orth(\bG^k)$; distinct from the polar factor alone. & \eqref{eq:orth_update_main-final} \\
$\eta_k$ & State-dependent stepsize $\gamma_k\nucnorm{\bG^k}$ in the orthogonalized update; distinct from a noise vector $\eta^j$. & \Cref{app:orth_derivation} \\
$I_d$, $I_r$ & Identity matrices in the specified dimensions, distinct from the maximizing-coordinate set $I(s)$ and random coordinate index $I$. & \Cref{thm:john_ellipsoid}; \Cref{lem:subdiff_nuc_norm} \\
$\operatorname{tr}$, $\det$, $\operatorname{rank}$ & Matrix trace, determinant, and rank. & \Cref{thm:john_ellipsoid}; \Cref{app:orth_derivation} \\
$E_{ii}$ & Rectangular coordinate matrix with entry $1$ at $(i, i)$ and zeros elsewhere. & \Cref{prop:tightness_schatten_bounds} \\

\midrule
\multicolumn{3}{@{}l}{\textbf{Noise models and statistical coefficients}}\\*
$\zeta$, $Z$ & Additive, zero-mean oracle perturbation; its distribution and norm scale depend on the stated model. & \Cref{subsec:stationarity_oracle}; \Cref{sec:frobenius_light_tail_oracle} \\
$\nu$ & Positive amplitude of the Rademacher or Gaussian perturbations. & \Cref{def:rademacher_oracle_alt}; \Cref{def:gaussian_oracle_alt} \\
$\chi_i$, $\chi$ & Independent Rademacher signs and their vector; in the one-sparse model, $\chi$ denotes a single sign. & \Cref{def:rademacher_oracle_alt}; \Cref{rem:one_sparse_noise_alt} \\
$\omega_i$, $\omega$ & Independent standard Gaussian coordinates and their vector in the Gaussian oracle model. In regularity lower-bound proofs, similar symbols locally denote signs instead. & \Cref{def:gaussian_oracle_alt}; \Cref{prop:tightness_lp_bounds} \\
$\Diag_{m, n}(u)$ & Rectangular diagonal embedding: entry $u_i$ at $(i, i)$ for $i\le\min\{m, n\}$ and zero elsewhere. & \eqref{eq:rectangular_diagonal_embedding_alt} \\
$D$ & Number of independent noise coordinates: $D=d$ for dense vector noise and $D=r$ for diagonal matrix noise. & \Cref{sec:frobenius_light_tail_oracle} \\
$\vartheta_p(D)$ & Gaussian shape factor, equal to $1$ for $1\le p\le2$ and $\min\{p, \ln(eD)\}$ for $p>2$. & \eqref{eq:gaussian_shape_factor_alt} \\
$m_p$, $X_p$, $H_p(D)$ & In the Gaussian proof for $p>2$, $m_p=\Exp{\|\omega\|_p}$, $X_p=\|\omega\|_p-m_p$, and $H_p(D)=\vartheta_p(D)D^{2/p}$. & \Cref{lem:gaussian_light_tail_scale_alt}, proof \\
$\alpha_p(d)$, $\alpha_{S_p}(r)$ & Sharp one-sided conversion constants for Euclidean-to-$\ell_p$ and Frobenius-to-Schatten-$p$ norms. & \eqref{eq:alpha_lp_alt} and \eqref{eq:alpha_schatten_alt} \\
$B_*$, $B_{\opf}$ & Batch sizes prescribed by the native and converted analyses, respectively, for the same update rule and iteration count. & \eqref{eq:native_batch_alt} and \eqref{eq:frobenius_batch_alt} \\
$N_{\opf}$ & Converted-analysis oracle count $TB_{\opf}$. & \eqref{eq:complexity_frobenius_alt} \\
$\mathsf C_*$, $\mathsf C_{\opf}$ & Statistical coefficients $\kappa\sigma^2$ and $\alpha^{-2}\sigma_{\opf}^2$. Superscripts $\mathrm{Rad}$ and $\mathrm{Gau}$ distinguish the two perturbation models. & \eqref{eq:statistical_coefficients_alt} \\
$I$ (one-sparse noise) & Random coordinate index in $\zeta=\nu\chi e_I$, independent of the Rademacher sign. Distinct from the set $I(s)$. & \Cref{rem:one_sparse_noise_alt} \\

\midrule
\multicolumn{3}{@{}l}{\textbf{Ellipsoid construction for the universal regularity bound}}\\*
$B$, $\mathcal B_2$ & Centrally symmetric convex body and Euclidean unit ball. Here $B$ is a set, not a minibatch size. & \Cref{thm:john_ellipsoid} \\
$\cE_{\mathrm{in}}$ & Maximal-volume ellipsoid contained in $B$, with $B\subseteq\sqrt d\,\cE_{\mathrm{in}}$. & \Cref{thm:john_ellipsoid} \\
$S_+^d$, $S_{++}^d$ & Cones of real symmetric positive semidefinite and positive definite matrices, respectively. & \Cref{thm:john_ellipsoid}, proof \\
$c$, $Q$, $Q^*$ & Center and positive-definite shape matrix in $c+Q\mathcal B_2$, and the maximizing centered shape matrix $Q^*$. & \Cref{thm:john_ellipsoid}, proof \\
$\mathcal Q$ & Feasible shape matrices $\{Q\in S_+^d:Q\mathcal B_2\subseteq B\}$. & \Cref{thm:john_ellipsoid}, proof \\
$T$ (ellipsoid proof) & Normalizing linear map $(Q^*)^{-1}$, distinct from the iteration count. & \Cref{thm:john_ellipsoid}, proof \\
$\widetilde B$, $\widetilde B^\circ$ & Normalized body $T(B)$ and its polar, $\{y:\langle y, x\rangle\le1\text{ for all }x\in\widetilde B\}$. & \Cref{thm:john_ellipsoid}, proof \\
$g(Q)$, $\mu$, $\mathcal L$ & Convex constraint function, nonnegative multiplier, and Lagrangian in the finite-dimensional shape optimization; unrelated to stochastic gradient responses. & \Cref{thm:john_ellipsoid}, proof \\
$y_i$, $\lambda_i$ & Unit contact points and positive weights with $I_d=\sum_i\lambda_i y_i y_i^\top$ and $\sum_i\lambda_i=d$. & \eqref{eq:kkt_identity_john_detailed} \\

\midrule
\multicolumn{3}{@{}l}{\textbf{Probability and concentration proofs}}\\*
$\Omega$, $\cF$, $\Prob$ & Sample space, sigma-algebra of events, and probability measure. The symbol $\Omega$ also denotes asymptotic lower bounds elsewhere. & \Cref{def:measurable_space} \\
$\Exp{\cdot}$, $\ExpSub{\xi}{\cdot}$ & Expectation, with a subscript specifying the sampled variable when useful; conditioning on a history is written with a vertical bar. & \Cref{as:non_euclidean_light_tail} \\
$\sigma(\xi)$ & Sigma-algebra generated by a random variable; unrelated to the light-tail scale or singular values. & \Cref{def:F_measurable_rv} and the following remark \\
$\cF_t$, $\cF_k$ & Filtration or history available before the next sample or iteration; the Rennala proof includes pending worker states in this history. & \Cref{def:filtration}; \Cref{sec:non_eu_updaten} \\
$\eta^j$, $\eta_j^k$ & Centered noise vectors; at an iterate, $\eta_j^k=\nf{x^k;\xi_j^k}-\nabla f(x^k)$. & \Cref{lemma:non_euclidean_large_deviations}; \Cref{sub:proof_of_the_theorem_stochastic_l_smooth_final} \\
$\bar\eta$, $S_j$ & Average $B^{-1}\sum_{j=1}^B\eta^j$ and partial sum $\sum_{t=1}^j\eta^t$. In martingale preliminaries, $S_t=\sum_{i=1}^t\xi_i$. & \Cref{lem:frobenius_to_dual_conc_alt}; \Cref{app:proof_key_concentration} \\
$\xi_t$ (martingales) & A vector-valued martingale difference in the auxiliary results; distinct from the oracle's sample variable. & \Cref{def:mds} \\
$\lambda$ & Nonnegative tail parameter in the concentration bound $e^{-\lambda^2/3}$; for the finite-horizon union bound, $\lambda=\sqrt{3\log(T/\delta)}$. & \Cref{lemma:non_euclidean_large_deviations} \\
$\sigma_+^2$ & Auxiliary-norm light-tail scale $2\sigma^2$ in the concentration proof. & \eqref{eq:light_tail_plus_norm} \\
$\widehat\phi$ & Half-squared primal norm restricted to the unit ball, and $+\infty$ outside it. & \Cref{lemma:fenchel_of_V} \\
$V(s)$ & Potential $(\widehat\phi)^*(s)$: $\frac12\snorm{s}^2$ when $\snorm{s}\le1$, and $\snorm{s}-\frac12$ otherwise. The concentration proof uses the same construction with $\spnorm{\cdot}$. & \Cref{lemma:fenchel_of_V}; \Cref{app:proof_key_concentration} \\
$\beta$, $V_\beta(s)$ & Positive potential-scaling parameter and $V_\beta(s)=\beta V(s/\beta)$. This $\beta$ is unrelated to the norm-equivalence constant. & \Cref{lemma:V_beta_descent} \\
$\psi_j$, $\Psi_B$ & Scalar increments $V_\beta(S_j)-V_\beta(S_{j-1})$ and their sum, which telescopes to $V_\beta(S_B)$. & \Cref{app:proof_key_concentration} \\
$\mu_j$, $\nu_j$ & Conditional mean upper bounds and scalar exponential-moment scales. The application sets $\mu_j=\kappa\sigma_+^2/(2\beta)$ and $\nu_j=\sigma_+$. & \Cref{lemma:scalar_concentration} \\
$\psi(x)$ & Locally in the vector smoothness proof, $\|x\|_\rho^2=2\phi_+(x)$; distinct from the scalar random increments $\psi_j$. & \Cref{example:2}, proof \\
$\eps_i$, $u_i$, $U_i$ & Rademacher signs, deterministic vector directions, and matrix directions in the regularity lower-bound constructions. Indexed $\eps_i$ are not stationarity tolerances. & \Cref{prop:tightness_lp_bounds}; \Cref{prop:tightness_schatten_bounds} \\
$n$, $N$ (lower bounds) & Local lengths of sign-sequence constructions in the vector and matrix proofs, respectively; distinct from matrix dimensions and the worker count. & \Cref{prop:tightness_lp_bounds}; \Cref{prop:tightness_schatten_bounds} \\

\midrule
\multicolumn{3}{@{}l}{\textbf{Parallel computation and collection-time proof}}\\*
$N$, $[N]$ & Number of workers and the index set $\{1, \ldots, N\}$. & \Cref{sec:rennala_nsgd} \\
$\tau_i$, $\tau_{\max}$ & Fixed time for worker $i$ to compute one stochastic gradient, ordered as $0<\tau_1\le\cdots\le\tau_N$; $\tau_{\max}=\max_i\tau_i$. & \Cref{sec:related_work}; \Cref{sec:rennala_nsgd} \\
$S$ & Number of fastest workers considered in the time-bound minimization, $S\in[N]$; not a batch size. & \Cref{thm:main_time_complexity} \\
$T_{\mathrm{total}}$ & Wall-clock time to complete the prescribed $T$ iterations under the fixed computation model. & \eqref{eq:time_exact_bound} \\
$g$, $k'$ & Gradient response and the index of its query point in a worker message. The server accepts it only when $k'=k$. & \Cref{alg:gluon_server}; \Cref{alg:gluon_worker} \\
$b$ & Number of gradients accepted so far in the current minibatch. & \Cref{alg:gluon_server} \\
$\delta_k$ & Gradient delay in the related-work description of classical asynchronous updates; distinct from failure probability $\delta$. & \Cref{sec:related_work} \\
$B_i$ & Sufficient target quota for worker $i$ in the collection-time proof, not necessarily its actual accepted count. Quotas are relaxed to reals and then rounded up. & \Cref{sec:non_eu_updaten} \\
$t_i$ & Sufficient time $\tau_i(1+B_i)$ when $B_i>0$, and zero when $B_i=0$. & \Cref{sec:non_eu_updaten} \\
$t'$, $t'(S)$ & Relaxed batch-collection time and the candidate $t'(S)=(B+S)/(\sum_{i=1}^S1/\tau_i)$. & \Cref{sec:non_eu_updaten} \\
$S^\star$ & Smallest minimizer of $t'(S)$ over $S\in[N]$, used only in the runtime proof. & \Cref{sec:non_eu_updaten} \\
$A_S$, $\bar r_S$ & Aggregate rate $A_S=\sum_{i=1}^S1/\tau_i$ and average rate $\bar r_S=A_S/S$ of the fastest $S$ workers. Its reciprocal is their harmonic-mean computation time. & \Cref{sec:non_eu_updaten} \\

\end{longtable}
\endgroup

\clearpage

\section{Limitations}
\label{sec:limitations}

\textbf{Noise assumptions.}
Our high-probability guarantees rely on a dual-norm light-tail condition, which is stronger than bounded variance. Under bounded variance, the counterpart discussed in \Cref{app:light_tail_discussion} gives an expectation guarantee, while the high-probability analysis requires stronger control of the noise.

\textbf{Dependence on geometry and noise.}
The benefits of the native-norm formulation also depend on the problem: the regularity parameter $\kappa$ can be as large as $\dim(E)$, and whether our bounds improve on Euclidean/Frobenius conversions depends jointly on the geometry and the noise model.

\textbf{Statistical homogeneity.}
In the parallel setting, statistical homogeneity allows gradients from different workers to form an unbiased minibatch at the current iterate. When local distributions differ, accepting the first available fresh gradients can instead favor faster workers and bias the estimate of the global gradient. Extending our guarantees to this setting therefore requires revisiting minibatch collection and its time analysis.

\textbf{Computation and communication model.}
The current analysis also assumes fixed worker computation times and instantaneous communication and server processing, leaving arbitrary delays and communication costs outside its scope.

\textbf{Cost of the sharp operator.}
This paper does not account for the cost of computing the update direction. The computation model charges the workers only for stochastic gradients and treats the work of the server, including the evaluation of the sharp operator, as free. For a non-Euclidean update norm this cost need not be negligible: for the spectral norm, the sharp operator requires an orthogonalization of $\bG^k$ (\Cref{subsec:non_eu_update}). Accordingly, \eqref{eq:time_simplified_bound} should be read as the time spent on computing stochastic gradients. Our guarantees assume an exact sharp selection, so numerical approximations require additional error control. Reducing the cost of the sharp operator is an important problem in its own right, which lies outside the scope of this work; existing approaches are reviewed in \Cref{app:sharp_operator_cost}.

\textbf{Experimental validation.}
The present contribution is theoretical; assessing its practical benefits and computational tradeoffs through experiments remains future work.

\clearpage

\section{Extended Related Work}
\label{app:extended_related_work}
\subsection{Non-Euclidean Optimization}
\label{app:non_euclidean_related_work}
\citet{kovalevborodich2025} develop a unified analysis of non-Euclidean \algname{SGD} with momentum and weight decay under structured smoothness and gradient-noise assumptions, including variants with extrapolation and momentum variance reduction.
Their Assumption~2 bounds the noise second moment in a weighted Euclidean norm, whereas our light-tail assumption is formulated directly in the dual norm induced by the update geometry.
In the convex setting, they compare the leading complexity terms for \algname{signSGD} and \algname{Muon} with those of normalized \algname{SGD}.
Their comparisons identify favorable regimes involving dense solutions together with strongly nonuniform coordinatewise smoothness and noise scales, or solutions of high effective rank together with approximately low-rank matrices controlling curvature and gradient noise.
Although they provide explicit comparisons in terms of structural parameters, they do not construct specific objective–oracle pairs realizing the favorable regimes.
Moreover, the conditions yielding an improvement may be restrictive when the smoothness and noise parameters have many components or singular values of comparable magnitude.

Concurrent work by \citet{xie2025tale} studies non-Euclidean noise through an \emph{adaptive variance} model, whereas we use a light-tail condition in the dual norm induced by the update geometry. They prove that Normalized Steepest Descent (\algname{NSD}) with momentum admits a \emph{dimension-free} convergence rate for nonconvex optimization, in the sense that it avoids the norm-distortion factor that arises when controlling noise in a Euclidean reference norm and then converting to the \emph{target dual norm}.

\subsection{Time Complexity in Heterogeneous Parallel \texorpdfstring{\algname[]{SGD}}{SGD}}
\label{app:related_time_complexity}
In the asynchronous update described in \Cref{sec:related_work}, the gradient applied at iteration $k$ is evaluated at the earlier iterate $x^{k-\delta_k}$. It therefore need not be an unbiased estimator of $\nabla f(x^k)$; the analysis must account for this bias, as in frameworks for biased gradient estimators~\citep{demidovich2023guide}.

The harmonic-mean wall-clock bounds cited in \Cref{sec:related_work} rely on suitable stepsize or filtering rules. The results cover \algname{ASGD} with delay-adaptive stepsizes and filtering variants such as \algname{Picky SGD}, which skips updates when the query point is too far from the current iterate~\citep{mishchenko2022asynchronous, koloskova2022sharper, cohen2021asynchronous}.

\citet{sadiev2026ringmaster} propose \algname{Ringmaster LMO}, an asynchronous LMO-based momentum method for unconstrained nonconvex stochastic optimization, and translate its iteration complexity into wall-clock time under heterogeneous worker speeds, recovering the optimal time-complexity scaling of \citet{tyurin2023optimal} in the standard smooth Euclidean case. This is the method closest to ours, and the two are best read as addressing different questions.
The key difference lies in the \emph{noise model}, which is the subject of the present paper: they assume bounded variance in the Euclidean norm and transfer it to the update geometry through the norm-equivalence constant $\rho \eqdef \sup_{z\neq 0}\nfr{\snorm{z}}{\|z\|_2}$, so that the product $\rho\sigma$ enters their complexity bounds---precisely the conversion that \Cref{as:non_euclidean_light_tail} is designed to avoid by measuring noise directly in the induced dual norm $\snorm{\cdot}$.

\subsection{Federated Learning: Challenges and Algorithmic Remedies}
\label{app:fl_challenges}

Problem \eqref{eq:main_task_sum} is the standard formulation of federated learning (FL), in which a server coordinates a large number of clients that keep their data local~\citep{konevcny2016federated, mcmahan2017communication, kairouz2021advances}.

This setting presents several distinct challenges. \emph{Statistical heterogeneity} means that the local distributions $\cD_i$ differ across clients, while \emph{device} or \emph{system heterogeneity} refers to differences in compute speed, availability, and reliability. With \emph{partial participation}, only a sampled cohort is active in each round. \emph{Privacy} motivates keeping data local, while the \emph{communication bottleneck} arises because transmitting model updates is often much more expensive than computing them~\citep{sokolov2022non}.

The same nested averaging structure also appears in centralized settings: the objective averages over groups, and each group averages over its own samples. Examples include pooled data silos, out-of-core learning, and deliberate stratification. There, the aim is to exploit this structure to reduce memory use and oracle calls, with complexity depending on similarity both across groups and within each group~\citep{sokolov2026silage}.

The challenge closest to the agenda of the present paper is how computation is organized in a server-based data-parallel setting, where workers repeatedly exchange gradients or model updates with a coordinating server. Two widely studied algorithmic levers aim to reduce the communication cost of these exchanges.

\textbf{Communication compression.} The first lever is to reduce the size of each transmitted message by quantizing, sparsifying, or otherwise compressing the update while controlling the resulting error. Unbiased compressors~\citep{alistarh2017qsgd, wangni2018gradient, horvath2022natural} simplify the analysis: conditional on the vectors being compressed, independent compression randomness across workers ensures that the Euclidean second moment of the average compression error equals $1/N$ times the average of the individual error second moments.

Biased compression requires additional care. Naive compression of local gradients can lead to exponential divergence in heterogeneous distributed settings, as shown by the Top-$1$ counterexample of \citet[Example~1]{beznosikov2023biased}. Error feedback is a standard way to obtain convergence guarantees with biased compressors~\citep{Seide2014, stich2018sparsified, alistarh2018convergence, karimireddy2019error, stich2019error}. For smooth nonconvex objectives, \algname{EF21} matches the $\cO(1/T)$ iteration dependence of gradient descent, with compression-dependent constants and without bounded-gradient assumptions~\citep[Theorem~2]{richtarik2021ef21}. Subsequent developments include practical variants~\citep{fatkhullin2021ef21} and normalized methods that combine momentum with time-varying stepsizes requiring no tuning from problem constants~\citep{sadiev2025improved}. The framework of \emph{three-point compressors} connects error feedback with lazy aggregation~\citep{richtarik20223pc}.

Other work studies which messages to compress and how to exploit their structure. This includes server-to-worker compression~\citep{EF21-P, sokolov2024marinapsuperiorperformancenonsmooth} and low-rank or structured compression of transmitted gradients~\citep{vogels2019powersgd, wang2018atomo}. Compression can also be combined with parameter-efficient fine-tuning, where the update itself is restricted to low rank~\citep{hu2021loralowrankadaptationlarge}, with quantization or error feedback applied on top~\citep{sokolov2025bernoulli}.

Compression also interacts with without-replacement sampling. Its variance can mask the benefits of random reshuffling in \algname{Q-RR}, whereas \algname{DIANA-RR} improves the accuracy dependence in the strongly convex setting by compressing differences between gradients and learned shifts~\citep{sadiev2022federated, sadiev2024dont}. Lower bounds characterize the limits of communication compression~\citep{huang2022lower}.

\textbf{Local steps.} The second lever is to communicate less often, letting each client take several local update steps between synchronizations, as in \algname{FedAvg}~\citep{mcmahan2017communication}. The obstacle is client drift: local progress is made towards \emph{local} optima, and the effects of the resulting bias must be controlled~\citep{karimireddy2020scaffold, yuan2022convergence, koloskova2020unified}. Subsequent work combines local steps with probabilistic synchronization~\citep{malinovsky2022variance}, with random reshuffling~\citep{demidovich2024methodslocalstepsrandom}, and with compression itself~\citep{sadiev2022federated}. A complementary question is how much can be extracted from a single cohort: instead of the customary single communication round per sampled cohort, one may run several proximal-point rounds inside a cohort and thereby cut the total communication cost substantially~\citep{yi2024cohort}.

Our work addresses a complementary question: how \emph{computational} heterogeneity interacts with the geometry of the update. We study the wall-clock time of a fresh-minibatch protocol when workers differ in speed and updates follow steepest-descent directions in a non-Euclidean norm. Our setting is \emph{statistically homogeneous} (\Cref{sec:intro}), which removes heterogeneity-induced client drift, although local stochastic trajectories can still differ.

Our time model counts stochastic-gradient computation and treats communication in both directions and server-side processing, including evaluation of the sharp update, as instantaneous. We therefore do not analyze communication compression or local-step methods here. Combining these approaches with our treatment of worker speeds and update geometry is a natural direction for future work.

\subsection{Computational Cost of the Sharp Operator}
\label{app:sharp_operator_cost}

The two approaches discussed above reduce communication costs by changing the size or frequency of messages exchanged between workers and the server. A separate computational cost arises from evaluating the non-Euclidean update direction itself. As shown in \Cref{subsec:non_eu_update}, for the spectral update norm the canonical sharp selection is $\bG^\sharp=\nucnorm{\bG}\orth(\bG)$, where $\orth(\bG)$ is the polar factor. A direct implementation uses a compact rank-$r$ singular value decomposition (SVD), where $r=\operatorname{rank}(\bG)$, retaining only the nonzero singular values and their corresponding singular vectors. The complete orthogonal factors of a full SVD are unnecessary. This SVD-based implementation can be computationally expensive at the scale of modern models, motivating alternative algorithms for computing the polar factor. The analysis in this paper assumes exact access to the chosen sharp selection; extending its guarantees to numerical approximations requires separate control of the approximation error.

Several lines of work aim to reduce this cost. The original \Muon algorithm replaces the SVD by the computationally cheaper \algname{Newton--Schulz} iteration. Subsequent work has proposed efficient polar-decomposition subroutines~\citep{lau2025polargrad, amsel2025polar}, acceleration of the \algname{Newton--Schulz} iteration with Chebyshev-type polynomials~\citep{grishina2025accelerating}, and SVD-based orthogonalization of momentum within an adaptively updated low-dimensional subspace~\citep{refael2025sumo}. A different approach, \Dion, replaces the full-rank step by a rank-$r$ distributed orthonormalization built on power iteration and randomized Cholesky QR~\citep{ahn2025dion}.


\clearpage

\section{Additional Discussion of \texorpdfstring{\Cref{as:non_euclidean_light_tail}}{Assumption 3}}
\label{app:light_tail_discussion}

This section gives examples satisfying \Cref{as:non_euclidean_light_tail}, explains the choice of norm and tail condition, and compares the resulting guarantees with those under bounded variance.

\subsection{Examples Satisfying the Assumption}
\label{app:light_tail_examples}
\textbf{Additive perturbations.} \Cref{as:non_euclidean_light_tail} is satisfied, for instance, by the classical perturbed oracle $\nf{x;\xi}=\nabla f(x)+Z$ with $Z$ zero-mean and sub-Gaussian in the induced dual norm, that is, $\Exp{\exp\rb{\frac{\sqsnorm{Z}}{\sigma^2}}}\le\exp(1)$. More generally, our framework applies to the additively perturbed and inexact-gradient models studied in the literature~\citep{ajalloeian2020convergence, gasnikov2017universal, stonyakin2023stopping, devolder2014first} only when their errors have zero mean and satisfy the dual-norm light-tail condition \eqref{eq:dual_light_tail} with a common $\sigma>0$ for all $x\in E$.

\textbf{Bounded errors.} An unbiased oracle with almost surely bounded error, $\snorm{\nf{x;\xi}-\nabla f(x)}\le G$ for every $x\in E$ with a common $G\ge0$, also satisfies \Cref{as:non_euclidean_light_tail}. If $G>0$, this pointwise bound is sufficient for the light-tail condition with $\sigma=G$, since
$\ExpSub{\xi}{\exp\rb{\frac{\sqsnorm{\nf{x;\xi}-\nabla f(x)}}{G^2}}}\le\exp(1)$.
If $G=0$, then for each fixed $x\in E$ the oracle error vanishes almost surely. Hence the expectation in \eqref{eq:dual_light_tail} equals $1$ for any $\sigma>0$.

\subsection{Noise in the Induced Dual Norm}
\label{app:light_tail_dual_norm}
Stationarity is measured in $\snorm{\cdot}$, and the update direction $(\gk)^\sharp$ is produced from $\gk$ through the dual norm, so $\snorm{\cdot}$ is the norm in which the estimation error actually enters the descent inequality. A reference Euclidean/Frobenius light-tail assumption also suffices: first concentrate the minibatch error in that norm, then use $\snorm{z}\le\alpha^{-1}\fnorm{z}$ to transfer the error bound. This yields the statistical coefficient $\mathsf C_{\opf}=\alpha^{-2}\sigma_{\opf}^2$ in \eqref{eq:complexity_dual_native_cor_F}. The native analysis instead uses $\mathsf C_*=\kappa\sigma^2$; which coefficient is smaller depends on the geometry and the noise model (\Cref{sec:frobenius_light_tail_oracle}).

\subsection{Exponential Moments and High-Probability Guarantees}
\label{app:light_tail_high_probability}
For a prescribed finite horizon $T\ge1$, our analysis requires
$\snorm{\gk-\nfxk}\le\nfr{\sqrt{\eps}}{4}$ to hold \emph{simultaneously} for $0\le k<T$ with probability at least $1-\delta$.
We apply the concentration bound conditionally at each iterate with failure probability at most $\delta/T$, then take a union bound over the $T$ iterations.
The resulting minibatch size contains $\log(T/\delta)=\log T+\log(1/\delta)$: the union bound contributes the horizon dependence $\log T$, while the concentration tail supplies the failure-probability dependence $\log(1/\delta)$.

\subsection{Comparison with Bounded-Variance Oracles}
\label{app:light_tail_bounded_variance}
For comparison, replace the exponential-moment condition by the dual-norm variance bound
$\ExpSub{\xi}{\sqsnorm{\nf{x;\xi}-\nabla f(x)}}\le\sigma^2$ for all $x\in E$.
Retain unbiasedness, fresh i.i.d.\ samples independent of the past at each iteration, and the smoothness and lower-bound assumptions \Cref{as:non_euclidean_smoothness} and \Cref{as:lower_bound}. As above, $\kappa$ denotes a regularity constant of the dual space $(E^*, \snorm{\cdot})$.
At any fixed query point, \Cref{lem:second_moment_kappa_regular} gives
$\Exp{\sqsnorm{\frac1B\sum_{j=1}^B\eta_j}}\le\nfr{\kappa\sigma^2}{B}$ for the centered minibatch noise vectors $\eta_j$; fresh sampling gives the same bound conditionally on the history at each iterate.

With $\gamma_k\equiv\frac{1}{2L}$ and suitable choices of $T$ and $B$, the corresponding analysis guarantees
$\Exp{\sqsnorm{\nabla f(\hat x)}}\le\eps$, where $\hat x=x^R$ and $R$ is uniform on $\{0, \dots, T-1\}$ independently of the run.
For $L>0$, $\Delta>0$, and $0<\eps\le L\Delta$, the resulting oracle complexity is
$N_*=BT=\cO(\nfr{L\Delta}{\eps}+\nfr{\kappa L\Delta\sigma^2}{\eps^2})$.
This gives an expectation counterpart to our light-tail guarantees; the present paper focuses on high-probability bounds.

\clearpage
\section{Useful Auxiliary Results}
This section collects definitions and auxiliary results used throughout the proofs. We begin with convex analysis and properties of primal and dual norms, then describe squared norms and their conjugates. We next give a generalized Young inequality and compare Schatten norms. Finally, we introduce the probabilistic definitions needed for vector-valued martingales and prove a second-moment bound for martingale sums in $\kappa$-regular spaces.

Let $E$ be a finite-dimensional vector space, and let $E^*$ be its dual. For $s \in E^*$, we write $\langle s, x\rangle$ to denote the action of $s$ on $x$.

\subsection{Foundations of Convex Analysis}

To analyze non-smooth optimization objectives and generalized updates, we require the standard definition of the subgradient.
\begin{definition}[Subgradient]
Let $f: E \to \R \cup \{+\infty\}$ be proper. A vector $s \in E^*$ is a subgradient of $f$ at a point $x \in E$ if
\begin{align}
    \label{eq:subgradient_def}
    f(y) \ge f(x) + \lin{s, y-x} \quad \forall y \in E.
\end{align}
\end{definition}

The collection of all subgradients forms the subdifferential.
\begin{definition}[Subdifferential]
Let $f: E \to \R \cup \{+\infty\}$ be proper. The set of all subgradients of $f$ at $x$ is denoted $\partial f(x)$, and called the \textbf{subdifferential} of $f$ at $x$. We can think of the subdifferential as a set-valued operator $x \mapsto \partial f(x)$. If $\partial f(x) \neq \emptyset$, we say that $f$ is \textbf{subdifferentiable} at $x$. The set of all such points $x$ is denoted $\dom \partial f$.
\end{definition}

To ensure our algorithm is well-defined, we rely on the fact that the subdifferential is non-empty in the relative interior of the domain.
\begin{theorem}[Existence of subgradients]
Let $E$ be a finite-dimensional real vector space and let $f: E \to \R \cup \{+\infty\}$ be a proper convex function.
\begin{enumerate}[label=(\roman*)]
    \item For every $x \in \ri(\dom f)$,\footnote{The relative interior of a set $S$, denoted $\ri(S)$, is the interior of the set when viewed within the smallest affine subspace that contains it.} the subdifferential $\partial f(x)$ is non-empty, closed, and convex.
    \item If $x \in \operatorname{int}(\dom f)$, where the interior is taken in $E$, then $\partial f(x)$ is also compact.
\end{enumerate}
\end{theorem}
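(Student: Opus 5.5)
The plan is to dispose of closedness and convexity directly from the definition, prove nonemptiness with a supporting-hyperplane argument after reducing to the affine hull of $\dom f$, and then get compactness from a local upper bound on $f$. The step I expect to be the main obstacle is nonemptiness in (i).

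\textbf{Closedness and convexity.} Let $x\in\dom f$, so $f(x)$ is finite. By \eqref{eq:subgradient_def},
\begin{align*}
\partial f(x)=\bigcap_{y\in\dom f}\{s\in E^*:\ \lin{s,y-x}\le f(y)-f(x)\}.
\end{align*}
Points $y\notin\dom f$ impose no constraint, because then $f(y)=+\infty$. Each set in the intersection is a closed half-space of $E^*$, so $\partial f(x)$ is closed and convex. This also shows that the only real content of (i) is nonemptiness.

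\textbf{Nonemptiness.} I would first reduce to the full-dimensional case.
\begin{itemize}
\item Translate so that $x=0$. Let $L\eqdef\operatorname{aff}(\dom f)$, which is then a linear subspace of $E$, and let $g\eqdef f|_L$.
\item By hypothesis, $0$ lies in the interior of $\dom g$ relative to $L$.
\item Choose a simplex in $L$ with vertices $v_0,\dots,v_m\in\dom g$ that contains a relative neighbourhood $U$ of $0$. By convexity (Jensen), $g\le M\eqdef\max_i g(v_i)<\infty$ on $U$.
\item Hence $\operatorname{epi} g\subseteq L\times\R$ contains the open set $U\times(M,\infty)$, so it has nonempty interior.
\end{itemize}
The point $(0,g(0))$ lies on the boundary of the convex set $\operatorname{epi}g$. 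The finite-dimensional supporting hyperplane theorem then gives $(a,\beta)\ne0$, with $a\in L^*$ and $\beta\in\R$, such that
\begin{align*}
\lin{a,y}+\beta t\ \ge\ \beta g(0)\qquad\text{for all }(y,t)\in\operatorname{epi}g.
\end{align*}

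The crux is showing $\beta>0$.
\begin{itemize}
\item Letting $t\to\infty$ gives $\beta\ge0$.
\item If $\beta=0$, then $\lin{a,y}\ge0$ for all $y\in U$. Since $U$ is a neighbourhood of $0$ in $L$, this forces $a=0$ on $L$, contradicting $(a,\beta)\ne0$.
\end{itemize}
So $\beta>0$. Setting $s_L\eqdef -a/\beta$ and taking $t=g(y)$ gives $g(y)\ge g(0)+\lin{s_L,y}$ for all $y\in L$.

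Finally, extend $s_L$ to a linear functional $s\in E^*$, for instance by zero on a complement of $L$. For $y\in L$ the inequality is the one just proved. For $y\notin L$ we have $y\notin\dom f$, so $f(y)=+\infty$ and the inequality is trivial. Hence $s\in\partial f(x)$.

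\textbf{Compactness.} If $x\in\operatorname{int}(\dom f)$, the simplex argument above, now run in $E$ itself, gives $r>0$ and $M$ with $f\le M$ on the ball $\{x+h:\norm{h}\le r\}$. Take any $s\in\partial f(x)$ and any $h$ with $\norm{h}\le1$. Then
\begin{align*}
r\lin{s,h}\le f(x+rh)-f(x)\le M-f(x).
\end{align*}
Taking the supremum over $h$ gives $\snorm{s}\le (M-f(x))/r$. So $\partial f(x)$ is bounded, and combined with closedness from the first step it is compact in the finite-dimensional space $E^*$.
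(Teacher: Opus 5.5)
The paper does not prove this theorem. It records it in the auxiliary section as a standard fact of finite-dimensional convex analysis (essentially Rockafellar's Theorem~23.4) and moves straight on, so there is no in-paper argument to compare with. Judged on its own, your proof is correct and follows the standard textbook route. Closedness and convexity come from writing $\partial f(x)$ as an intersection of closed convex sets. Jensen on a simplex with vertices in the domain gives a local upper bound, and a supporting hyperplane to the epigraph gives a subgradient. The estimate $\|s\|_*\le (M-f(x))/r$ gives boundedness, hence compactness.

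The one load-bearing choice is that you separate inside $L\times\R$ rather than $E\times\R$. In $E\times\R$ the supporting hyperplane could be vertical, with a normal annihilating $L$, and your exclusion of $\beta=0$ would fail. Restricting to the affine hull is exactly what makes that step work, and the argument covers the degenerate case where the domain is the single point $\{x\}$ without change.

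Two cosmetic remarks:
\begin{itemize}
\item The constraint indexed by $y=x$ is all of $E^*$ rather than a half-space; it is still closed and convex.
\item The finite-dimensional supporting hyperplane theorem does not require $\operatorname{epi} g$ to have nonempty interior, so that observation is harmless but unnecessary. What you actually use to rule out $\beta=0$ is that the relative neighbourhood $U$ lies in the domain of $g$.
\end{itemize}
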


Note that inequality \eqref{eq:subgradient_def} can be rearranged into
\begin{align*}
    f(x) - \lin{s, x} \le f(y) - \lin{s, y} \quad \forall y \in E,
\end{align*}
which can be interpreted as follows: a vector $s$ is a subgradient of $f$ at $x$ if and only if $x$ minimizes the function $y \mapsto f(y) - \lin{s, y}$. We have arrived at the following result:

\begin{lemma}
Let $f: E \to \R \cup \{+\infty\}$ be proper and consider a point $x \in \dom f$. Then
\begin{align}
    s \in \partial f(x) \quad \Leftrightarrow \quad x \in \argmin_{y \in E} \{f(y) - \lin{s, y}\}.
\end{align}
\end{lemma}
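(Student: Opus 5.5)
This is the standard optimality characterization of subgradients, so the plan is to unfold the definition \eqref{eq:subgradient_def} directly and show that each side of the equivalence is a rewriting of the other.

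\emph{Forward direction.} Fix $x\in\dom f$ and assume $s\in\partial f(x)$. By \eqref{eq:subgradient_def}, for every $y\in E$ we have $f(y)\ge f(x)+\lin{s, y-x}$. Since $\lin{s, y-x}=\lin{s, y}-\lin{s, x}$ by linearity of $s$, subtracting $\lin{s, y}$ from both sides gives
$f(y)-\lin{s, y}\ge f(x)-\lin{s, x}$.
This holds for all $y$, which is exactly the claim that $x$ minimizes $y\mapsto f(y)-\lin{s, y}$.

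\emph{Converse.} The same chain of equalities is reversible. If $x\in\argmin_{y}\{f(y)-\lin{s, y}\}$, then $f(x)-\lin{s, x}\le f(y)-\lin{s, y}$ for every $y$, and adding $\lin{s, y}$ to both sides recovers \eqref{eq:subgradient_def}.

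The only point that needs care is arithmetic in $\R\cup\{+\infty\}$. Because $x\in\dom f$, the quantity $f(x)-\lin{s, x}$ is a finite real number. For $y\notin\dom f$, both inequalities read $+\infty\ge\text{(finite)}$, so they hold trivially and adding or subtracting the finite quantity $\lin{s, y}$ is legitimate. Properness of $f$ guarantees that $f$ never takes the value $-\infty$, so no undefined expressions such as $\infty-\infty$ arise.

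There is no substantive obstacle here: the result is a one-line rearrangement, and the only thing to check is this extended-value bookkeeping.
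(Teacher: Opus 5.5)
Your proposal is correct and follows the paper's own argument, which likewise just rearranges the subgradient inequality \eqref{eq:subgradient_def} into $f(x)-\lin{s, x}\le f(y)-\lin{s, y}$ for all $y\in E$. Your extended-value bookkeeping (finiteness of $f(x)$, the trivial case $y\notin\dom f$) is a small addition the paper leaves implicit.
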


We note that the set of minimizers of a function is identical to the set of maximizers of the negative of the function. Applying this simple principle, we observe that
\begin{align}
    \argmin_{y \in E} \{f(y) - \lin{s, y}\} = \argmax_{y \in E} \{\lin{s, y} - f(y)\}.
\end{align}

We also recall the fundamental relationship between a function and its convex conjugate, which allows us to invert the subdifferential mapping.
\begin{theorem}[Fenchel-Moreau-Rockafellar]
\label{thm:fenchel_moreau_rockafellar}
Let $f: E \to \R \cup \{+\infty\}$ be proper, closed and convex. Then
\begin{align*}
    s \in \partial f(x) \quad \Leftrightarrow \quad x \in \partial f^*(s).
\end{align*}
\end{theorem}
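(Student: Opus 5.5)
The plan is to reduce the equivalence to the equality case of the Fenchel--Young inequality, applied once to $f$ and once to $f^*$. The two applications are linked by the biconjugation identity $f^{**}=f$, which is the only place the hypotheses \emph{proper, closed, convex} are used. Throughout, $E^{**}$ is identified with $E$, which is legitimate in finite dimensions.

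\textbf{Step 1 (Young inequality and its equality case for $f$).} From the definition $f^*(s)=\sup_y\{\inner{s}{y}-f(y)\}$ we get $f(x)+f^*(s)\ge\inner{s}{x}$ for every $x\in\dom f$ and every $s$. Equality holds exactly when $x$ attains the supremum, that is, when $x\in\argmax_y\{\inner{s}{y}-f(y)\}$. By the preceding lemma together with the $\argmin/\argmax$ remark, this is equivalent to $s\in\partial f(x)$. Hence
\[
s\in\partial f(x)\iff f(x)+f^*(s)=\inner{s}{x}.
\]

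\textbf{Step 2 (the same statement for $f^*$).} The function $f^*$ is a supremum of affine functions of $s$, so it is convex and closed. It is also proper: it never takes the value $-\infty$ because $f$ is proper, and it is finite somewhere once we know $f$ has an affine minorant (established in Step 3). Applying Step 1 to $f^*$ gives, for $s\in\dom f^*$,
\[
x\in\partial f^*(s)\iff f^*(s)+f^{**}(x)=\inner{s}{x}.
\]
Both sides of the theorem force $x\in\dom f$ and $s\in\dom f^*$, so no domain issues arise.

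\textbf{Step 3 (biconjugation, the main obstacle).} It remains to show $f^{**}=f$. The inequality $f^{**}\le f$ follows directly from Young's inequality. For the reverse, suppose $f^{**}(x_0)<\alpha<f(x_0)$ for some $x_0$ and some real $\alpha$. Then $(x_0,\alpha)\notin\operatorname{epi}f$, and $\operatorname{epi}f$ is a nonempty closed convex set. Strict separation yields $(s,\beta)\neq 0$ with
\[
\inner{s}{x}+\beta t<\inner{s}{x_0}+\beta\alpha\quad\text{for all }(x,t)\in\operatorname{epi}f .
\]
Since $t$ can be taken arbitrarily large, $\beta\le0$.

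If $\beta<0$, we may rescale to $\beta=-1$. The separation then reads $\inner{s}{x}-f(x)<\inner{s}{x_0}-\alpha$ for all $x$, so $f^*(s)\le\inner{s}{x_0}-\alpha$. This gives $f^{**}(x_0)\ge\inner{s}{x_0}-f^*(s)\ge\alpha$, a contradiction.

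The delicate case is the vertical hyperplane $\beta=0$, which can occur only when $x_0\notin\dom f$. First apply the nonvertical argument at a point of $\dom f$; this produces an affine minorant $\inner{s_0}{\cdot}-c\le f$, which also shows $\dom f^*\neq\emptyset$. Then add a large multiple of the vertical separating functional to this minorant. The result is a family of affine minorants of $f$ whose values at $x_0$ exceed $\alpha$. Each affine minorant is bounded above by $f^{**}$, so $f^{**}(x_0)>\alpha$, again a contradiction.

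\textbf{Conclusion.} Substituting $f^{**}=f$ into the characterization of Step 2 makes its right-hand side identical to that of Step 1. Therefore $s\in\partial f(x)\iff x\in\partial f^*(s)$.
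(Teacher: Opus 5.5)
The paper gives no proof of this statement. It is quoted as a classical fact of convex analysis, and its only explicit use is in \Cref{app:steepest_derivation}, to pass from $-\gamma_k g^k\in\partial\phi(h_k)$ to $h_k\in\partial\phi^*(-\gamma_k g^k)$. Your argument is the standard textbook proof: the equality case of Fenchel--Young for $f$ and for $f^*$, glued together by $f^{**}=f$, which you obtain by separating a point from the closed epigraph. It is correct, up to one step that needs a more careful statement.

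\textbf{The vertical case.} When $\beta=0$, the pointwise inequality you display, $\inner{s}{x}<\inner{s}{x_0}$ for $x\in\dom f$, is not enough. The affine function $\inner{s}{\cdot}-\inner{s}{x_0}$ vanishes at $x_0$, so adding multiples of it to $\inner{s_0}{\cdot}-c$ leaves the value at $x_0$ unchanged. A vertical functional with no margin really can occur when $\dom f$ is not closed: take $f(x)=1/x$ on $(0,\infty)$ and $+\infty$ elsewhere, with $x_0=0$ and $s=-1$. What strict separation of a point from a closed convex set actually provides is a margin, $\gamma\eqdef\sup_{x\in\dom f}\inner{s}{x}<\inner{s}{x_0}$. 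With it, $\inner{s_0}{\cdot}-c+\lambda(\inner{s}{\cdot}-\gamma)$ is an affine minorant of $f$ for every $\lambda\ge0$. Its value at $x_0$ tends to $+\infty$, so $f^{**}(x_0)=+\infty$, which gives the contradiction.

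\textbf{Domains in Step~2.} Your closing remark in Step~2, that the right-hand side forces $x\in\dom f$, only becomes true after Step~3. Before biconjugation is established, it gives only $x\in\dom f^{**}$.

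\textbf{What your route buys.} You prove the theorem in exactly the generality stated, and the separation step is the only nontrivial ingredient. The paper's application needs less. \Cref{lemma:subdiff_phi_from_notes} describes $\partial\phi(x)$ and $\partial\phi^*(s)$ by the same pair of conditions, $\inner{s}{x}=\snorm{s}\norm{x}$ and $\snorm{s}=\norm{x}$. So for $\phi=\frac12\sqnorm{\cdot}$, the equivalence $s\in\partial\phi(x)\iff x\in\partial\phi^*(s)$ follows at once. There, biconjugation is replaced by the explicit computation of $\phi^*$ and the identity $\norm{\cdot}_{**}=\norm{\cdot}$.
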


\subsection{Properties of the Primal and Dual Norms}

Our analysis depends heavily on the interplay between the primal norm $\norm{\cdot}$ and the dual norm $\snorm{\cdot}$. We first verify that the dual norm satisfies the standard axioms.
\begin{lemma}[The Dual Norm is a Norm]
\label{lemma:dual_norm_is_a_norm}
Let $(E, \norm{\cdot})$ be a finite-dimensional normed space. The dual norm $\snorm{\cdot}$, defined for any $s \in E^*$ as
\begin{align*}
    \snorm{s} \eqdef \sup_{\norm{x} \le 1} \inner{s}{x},
\end{align*}
satisfies the three axioms of a norm: definiteness, absolute homogeneity, and the triangle inequality.
\end{lemma}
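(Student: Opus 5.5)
We verify the three norm axioms directly from the definition $\snorm{s}=\sup_{\norm{x}\le1}\inner{s}{x}$. The first preliminary step is finiteness and nonnegativity. In finite dimensions every linear functional is continuous and the unit ball $\{x:\norm{x}\le1\}$ is compact, so the supremum is finite (indeed attained). Since $x=0$ is feasible, $\snorm{s}\ge0$. Because the ball is symmetric ($x$ feasible iff $-x$ feasible), we also have $\snorm{s}=\sup_{\norm{x}\le1}|\inner{s}{x}|$, which we use below.

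\emph{Absolute homogeneity.} For $c\in\R$, linearity gives $\inner{cs}{x}=c\inner{s}{x}$. Taking the supremum of the absolute-value form over the unit ball then yields $\snorm{cs}=|c|\snorm{s}$.

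\emph{Triangle inequality.} For $s_1,s_2\in E^*$ and any feasible $x$, we have $\inner{s_1+s_2}{x}=\inner{s_1}{x}+\inner{s_2}{x}\le\snorm{s_1}+\snorm{s_2}$. Taking the supremum over $x$ gives $\snorm{s_1+s_2}\le\snorm{s_1}+\snorm{s_2}$.

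\emph{Definiteness.} This is the only step needing care. Suppose $\snorm{s}=0$ and take any $x\neq0$. Homogeneity of the primal norm gives $x/\norm{x}$ in the unit ball, so $|\inner{s}{x}|\le\snorm{s}\norm{x}=0$. Hence $s$ vanishes on all of $E$, i.e., $s=0$ in $E^*$. Conversely, $\snorm{0}=0$ trivially.

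The main point to state explicitly is the finiteness of the supremum. It follows from compactness of the unit ball, since all norms on a finite-dimensional space are equivalent. Alternatively, one can bound $|\inner{s}{x}|$ by comparison with a Euclidean norm after fixing a basis.
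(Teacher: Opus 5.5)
Your proposal is correct and follows essentially the same route as the paper: finiteness from compactness of the unit ball, nonnegativity via $x=0$, the triangle inequality from subadditivity of the supremum, and definiteness by normalizing a nonzero vector to $x/\norm{x}$. The only difference is cosmetic: you use the symmetry of the unit ball up front to write $\snorm{s}=\sup_{\norm{x}\le1}|\inner{s}{x}|$, which replaces the paper's sign case split for homogeneity and its proof by contradiction for definiteness.
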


\begin{proof}
We first show that $\snorm{s}$ is finite for every $s \in E^*$.
Since $E$ is finite-dimensional, its closed unit ball is compact,
and every linear functional on $E$ is continuous. Therefore,
the supremum defining the dual norm is attained and finite:
\begin{align*}
    \snorm{s}
    = \sup_{\norm{x}\le 1}\inner{s}{x}
    = \max_{\norm{x}\le 1}\inner{s}{x}
    < +\infty.
\end{align*}
We now verify the three norm axioms.

\begin{proofof}{Property (i): Definiteness}
We must show that $\snorm{s} \ge 0$, with equality if and only if $s=0$.
\begin{itemize}
    \item \textbf{Non-negativity:} 
The supremum is taken over the unit ball $\{x \in E \mid \norm{x} \le 1\}$, which contains the origin $0_E$. At this point, the dual pairing evaluates to $\inner{s}{0_E}=0$. Since the supremum over a set must be greater than or equal to the value of the objective at any point in that set, we conclude that $\snorm{s} \ge 0$.
    \item \textbf{Positive Definiteness:} 
First, if $s=0$, its dual norm is clearly zero: $\snorm{0} = \sup_{\norm{x}\le 1} \inner{0}{x} = 0$.

For the converse, assume that $\snorm{s} = 0$. By definition of the supremum, this implies that $\inner{s}{x} \le 0$ for all $x$ in the unit ball $\{x \in E \mid \norm{x} \le 1\}$. We will now show that this can only be true if $s=0$.

Assume for contradiction that $s \neq 0$. As a non-zero linear functional, $s$ cannot be zero for all inputs, so there must exist some vector $y \in E$ such that $\inner{s}{y} \neq 0$. Let's construct a specific vector $x_0$ in the unit ball. If $\inner{s}{y} > 0$, we can simply set $x_0 = \nfr{y}{\norm{y}}$. If $\inner{s}{y} < 0$, we can set $x_0 = -\nfr{y}{\norm{y}}$. In either case, we have a vector $x_0$ such that $\norm{x_0}=1$ and, crucially, $\inner{s}{x_0} = \nfr{|\inner{s}{y}|}{\norm{y}} > 0$.

The existence of such an $x_0$ contradicts our initial deduction that $\inner{s}{x} \le 0$ for all vectors in the unit ball. Therefore, the initial assumption that $s \neq 0$ must be false. We conclude that if $\snorm{s}=0$, then $s=0$.

\end{itemize}
\end{proofof}

\begin{proofof}{Property (ii): Absolute Homogeneity}
We need to show that $\snorm{ts} = |t|\snorm{s}$ for any scalar $t \in \R$.
\begin{align*}
    \snorm{ts} = \sup_{\norm{x} \le 1} \inner{ts}{x} = \sup_{\norm{x} \le 1} t\inner{s}{x}.
\end{align*}
We consider three cases for the scalar $t$:
\begin{itemize}
    \item If $t > 0$, we can pull the positive constant out of the supremum:
    \begin{align*}
        \snorm{ts} = t \sup_{\norm{x} \le 1} \inner{s}{x} = t\snorm{s} = |t|\snorm{s}.
    \end{align*}
    \item If $t < 0$, let $t = -|t|$. Then:
    \begin{align*}
        \snorm{ts} = \sup_{\norm{x} \le 1} (-|t|\inner{s}{x}) = |t| \sup_{\norm{x} \le 1} (-\inner{s}{x}).
    \end{align*}
    Let $y = -x$. The set $\{y \in E \mid \norm{y} \le 1\}$ is the same as $\{x \in E \mid \norm{x} \le 1\}$ because $\norm{-x} = \norm{x}$. Therefore, we can change the variable in the supremum:
    \begin{align*}
        \sup_{\norm{x} \le 1} (-\inner{s}{x}) = \sup_{\norm{y} \le 1} \inner{s}{y} = \snorm{s}.
    \end{align*}
    Combining these gives $\snorm{ts} = |t|\snorm{s}$.
    \item If $t = 0$, we have $\snorm{0 \cdot s} = \snorm{0} = 0$, and $|0|\snorm{s} = 0$.
\end{itemize}
The identity holds for all $t \in \R$.
\end{proofof}

\begin{proofof}{Property (iii): Triangle Inequality}
We need to show that $\snorm{s_1 + s_2} \le \snorm{s_1} + \snorm{s_2}$. The proof follows directly from the definition of the dual norm and the properties of the supremum.
\begin{align*}
    \snorm{s_1 + s_2} &= \sup_{\norm{x} \le 1} \inner{s_1 + s_2}{x} \\
    &= \sup_{\norm{x} \le 1} \left( \inner{s_1}{x} + \inner{s_2}{x} \right) \quad (\text{by linearity of the dual pairing}) \\
    &\le \sup_{\norm{x} \le 1} \inner{s_1}{x} + \sup_{\norm{x} \le 1} \inner{s_2}{x} \quad (\text{by subadditivity of supremum}) \\
    &= \snorm{s_1} + \snorm{s_2}.
\end{align*}
This completes the proof.
\end{proofof}
\end{proof}

To relate geometric properties (like regularity) between primal and dual spaces, we need to understand how norm equivalence translates to the dual space.

\begin{lemma}[Duality of Norm Equivalence]
\label{lemma:duality_of_norm_equivalence}
Let $\norm{\cdot}_a$ and $\norm{\cdot}_b$ be two norms on a finite-dimensional real vector space $E$.
The subscripts $a$ and $b$ are neutral labels for the two auxiliary norms considered in this lemma.
Let $\norm{\cdot}_{a, *}$ and $\norm{\cdot}_{b, *}$ denote their respective dual norms on $E^*$.
If
\begin{align*}
    \norm{x}_a \le \norm{x}_b \le C \norm{x}_a \quad \forall x \in E,
\end{align*}
for some constant $C \ge 1$, then their dual norms satisfy the reverse equivalence inequality:
\begin{align*}
    \frac{1}{C}\norm{s}_{a, *} \le \norm{s}_{b, *} \le \norm{s}_{a, *} \quad \forall s \in E^*.
\end{align*}
\end{lemma}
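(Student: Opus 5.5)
The plan is to work directly from the definition of the dual norm as a supremum over unit balls and compare the two unit balls. Write $\mathbb{B}_a \eqdef \{x\in E:\norm{x}_a\le 1\}$ and $\mathbb{B}_b \eqdef \{x\in E:\norm{x}_b\le 1\}$. The hypothesis $\norm{x}_a\le\norm{x}_b$ gives $\mathbb{B}_b\subseteq\mathbb{B}_a$. The hypothesis $\norm{x}_b\le C\norm{x}_a$ gives $\frac1C\mathbb{B}_a\subseteq\mathbb{B}_b$: if $\norm{x}_a\le 1/C$, then $\norm{x}_b\le C\norm{x}_a\le 1$. By \Cref{lemma:dual_norm_is_a_norm}, each dual norm is finite, being a maximum of a continuous linear functional over a compact ball, so all the quantities below are well defined.

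For the upper bound, fix $s\in E^*$. Since $\mathbb{B}_b\subseteq\mathbb{B}_a$, the supremum over the smaller set is at most the supremum over the larger one:
\begin{align*}
\norm{s}_{b,*}=\sup_{x\in\mathbb{B}_b}\inner{s}{x}\le\sup_{x\in\mathbb{B}_a}\inner{s}{x}=\norm{s}_{a,*}.
\end{align*}

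For the lower bound, take any $x\in\mathbb{B}_a$. Then $x/C\in\mathbb{B}_b$, so by linearity of the pairing,
\begin{align*}
\inner{s}{x}=C\inner{s}{x/C}\le C\norm{s}_{b,*}.
\end{align*}
Taking the supremum over $x\in\mathbb{B}_a$ yields $\norm{s}_{a,*}\le C\norm{s}_{b,*}$, which is the claimed inequality $\frac1C\norm{s}_{a,*}\le\norm{s}_{b,*}$.

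There is no substantive obstacle. The only point needing care is to apply the scaling in the correct direction: rescale the $a$-ball into the $b$-ball, and use positivity of $C$ when pulling the factor $C$ out of the pairing. The assumption $C\ge 1$ is only needed for the hypothesis to be consistent; the argument itself uses only $C>0$.
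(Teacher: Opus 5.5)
Your proposal is correct and follows essentially the same approach as the paper: both compare the unit balls, using $\mathbb{B}_b\subseteq\mathbb{B}_a$ for the upper bound and $\mathbb{B}_a\subseteq C\,\mathbb{B}_b$ (your rescaling $x\mapsto x/C$) for the lower bound. Your remark that the argument only needs $C>0$ is also accurate.
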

\begin{proof}
The proof relies on the definition of the dual norm as the supremum of the dual pairing over the unit ball of the primal norm. Let $B_a = \{x \in E \mid \norm{x}_a \le 1\}$ and $B_b = \{x \in E \mid \norm{x}_b \le 1\}$ be the unit balls of the respective primal norms.

\textbf{Step 1: Proving $\norm{s}_{b, *} \le \norm{s}_{a, *}$.}
The first part of the primal norm inequality is $\norm{x}_a \le \norm{x}_b$. This inequality has a direct geometric interpretation for the unit balls. If a point $y$ is in $B_b$, then $\norm{y}_b \le 1$. From the inequality, this implies that $\norm{y}_a \le 1$, which means $y$ must also be in $B_a$. Therefore, we have the set inclusion:
\begin{align*}
    B_b \subseteq B_a.
\end{align*}
Now we use the definition of the dual norms. For any $s \in E^*$:
\begin{align*}
    \norm{s}_{b, *} = \sup_{x \in B_b} \inner{s}{x}.
\end{align*}
Since we are taking the supremum of the same linear function $\inner{s}{x}$ over a smaller set ($B_b$) compared to the set $B_a$, its value cannot be larger:
\begin{align*}
    \sup_{x \in B_b} \inner{s}{x} \le \sup_{x \in B_a} \inner{s}{x}.
\end{align*}
By definition, the right-hand side is $\norm{s}_{a, *}$. Thus, we have shown:
\begin{align*}
    \norm{s}_{b, *} \le \norm{s}_{a, *}.
\end{align*}

\textbf{Step 2: Proving $\frac{1}{C}\norm{s}_{a, *} \le \norm{s}_{b, *}$.}
The second part of the primal norm inequality is $\norm{x}_b \le C \norm{x}_a$. This can be rewritten as $\frac{1}{C}\norm{x}_b \le \norm{x}_a$.
This implies a relationship between the unit balls in the opposite direction. If a point $y$ is in $B_a$, then $\norm{y}_a \le 1$. From the inequality, we have $\norm{y}_b \le C \cdot \norm{y}_a \le C \cdot 1 = C$. This means that any point in $B_a$ is also in the ball of radius $C$ for the $\norm{\cdot}_b$ norm. This gives the set inclusion:
\begin{align*}
    B_a \subseteq C \cdot B_b = \{x \in E \mid \norm{x}_b \le C \}.
\end{align*}
Now we consider the definition of the dual norm $\norm{s}_{a, *}$:
\begin{align*}
    \norm{s}_{a, *} = \sup_{x \in B_a} \inner{s}{x}.
\end{align*}
Since $B_a$ is a subset of $C \cdot B_b$, the supremum over $B_a$ cannot be larger than the supremum over $C \cdot B_b$:
\begin{align*}
    \sup_{x \in B_a} \inner{s}{x} \le \sup_{x \in C \cdot B_b} \inner{s}{x}.
\end{align*}
The supremum of a linear functional over a scaled ball is the scaling factor times the supremum over the unscaled ball:
\begin{align*}
    \sup_{x \in C \cdot B_b} \inner{s}{x} = \sup_{y \in B_b} \inner{s}{Cy} = C \sup_{y \in B_b} \inner{s}{y} = C \norm{s}_{b, *}.
\end{align*}
Combining these steps, we have shown:
\begin{align*}
    \norm{s}_{a, *} \le C \norm{s}_{b, *}.
\end{align*}
Since $C \ge 1$, we can divide by $C$ to get the final inequality:
\begin{align*}
    \frac{1}{C}\norm{s}_{a, *} \le \norm{s}_{b, *}.
\end{align*}
Putting the two parts together completes the proof of the dual norm equivalence.
\end{proof}

We often require the existence of a dual vector that "certifies" the norm of a primal vector. This is guaranteed by the Hahn-Banach theorem.

\begin{lemma}[Consequence of Hahn-Banach theorem]
\label{lem:hahn_banach_consequence}
Let $(E, \norm{\cdot})$ be a nonzero finite-dimensional real normed space.
For every $x \in E$, there exists $s \in E^*$ such that
\begin{align*}
    \inner{s}{x} = \norm{x}
    \quad\text{and}\quad
    \snorm{s} = 1.
\end{align*}
\end{lemma}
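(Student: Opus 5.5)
The statement to prove is \Cref{lem:hahn_banach_consequence}: for every $x\in E$ there is a norming functional $s\in E^*$ with $\inner{s}{x}=\norm{x}$ and $\snorm{s}=1$.

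I would split into the cases $x=0$ and $x\neq0$.

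\textbf{Case $x=0$.} Since $E$ is nonzero, $E^*$ is nonzero as well. Pick any nonzero $s_0\in E^*$. By \Cref{lemma:dual_norm_is_a_norm}, $\snorm{s_0}>0$, so $s=s_0/\snorm{s_0}$ has dual norm $1$. It also satisfies $\inner{s}{0}=0=\norm{0}$.

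\textbf{Case $x\neq0$.} I would avoid invoking the analytic Hahn--Banach extension theorem and instead use a finite-dimensional separation argument. Let $\mathbb B=\{y:\norm{y}\le1\}$, and set $u=x/\norm{x}$, which lies on the boundary of $\mathbb B$.

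First, $\mathbb B$ is a closed convex set with nonempty interior in $E$. It is closed because all norms on a finite-dimensional space are equivalent and hence continuous. Its interior contains the open unit ball.

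Second, the points $(1+1/n)u$ lie outside $\mathbb B$ and converge to $u$, so $u\notin\operatorname{int}\mathbb B$.

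Third, the supporting hyperplane theorem in finite dimensions then gives a nonzero $s_0\in E^*$ with $\inner{s_0}{y}\le\inner{s_0}{u}$ for all $y\in\mathbb B$. Taking the supremum over $y$ yields $\snorm{s_0}=\inner{s_0}{u}$.

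Fourth, $\snorm{s_0}>0$ because $s_0\neq0$, again by \Cref{lemma:dual_norm_is_a_norm}. Normalize to $s=s_0/\snorm{s_0}$, so that $\snorm{s}=1$ and $\inner{s}{u}=1$. Multiplying by $\norm{x}$ gives $\inner{s}{x}=\norm{x}$.

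As a cross-check, one could instead use the one-dimensional functional $t x\mapsto t\norm{x}$ on $\operatorname{span}\{x\}$. It is dominated by $\norm{\cdot}$, and the classical Hahn--Banach extension dominated by the sublinear functional $\norm{\cdot}$ gives $\snorm{s}\le1$. Evaluating at $u$ gives equality. Either route is standard.

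The only step needing care is the existence of the supporting functional at a boundary point. This is the one genuinely nontrivial ingredient, and I would cite it, either as the supporting hyperplane theorem or as Hahn--Banach, rather than reprove it.
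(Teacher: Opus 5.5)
Your proof is correct. The paper gives no argument for this lemma; it only states it as a standard consequence of the Hahn--Banach theorem. The intended route is therefore essentially your cross-check: extend $tx\mapsto t\norm{x}$ from $\operatorname{span}\{x\}$ under the sublinear majorant $\norm{\cdot}$, which gives $\snorm{s}\le1$ and $\inner{s}{x/\norm{x}}=1$.

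Your main argument instead applies the supporting hyperplane theorem at the boundary point $x/\norm{x}$ of the unit ball, which is the geometric form of the same result. What this buys is self-containment. In finite dimensions the supporting hyperplane theorem follows from Euclidean projection onto the closed convex set $\mathbb{B}$, applied to your exterior points $(1+1/n)x/\norm{x}$, plus a compactness argument on the normalized separating directions. The extension argument is shorter and is the one that carries over to infinite dimensions.

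You also settle two points that a bare appeal to Hahn--Banach glosses over. First, the case $x=0$: extending the zero functional from $\{0\}$ need not give dual norm one, and this is exactly where the hypothesis $E\neq\{0\}$ is used. The paper itself only applies the lemma with $x\neq0$, in \Cref{thm:bidual_norm_equals_primal} and in the nonemptiness of $\partial\phi$. Second, your appeal to \Cref{lemma:dual_norm_is_a_norm} is not circular, since its proof of definiteness does not rely on norming functionals.
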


For completeness, we also state the reflexivity property for finite-dimensional normed spaces.

\begin{theorem}[Bidual norm equals primal norm]
\label{thm:bidual_norm_equals_primal}
Let $\norm{\cdot}_{**}$ be the dual of the dual norm $\snorm{\cdot}$, called the bidual norm. Then
\begin{align*}
    \norm{\cdot}_{**} = \norm{\cdot}.
\end{align*}
\end{theorem}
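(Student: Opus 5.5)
The claim is that in finite dimensions the bidual norm coincides with the primal norm: $\norm{x}_{**}=\norm{x}$ for every $x\in E$, where $E^{**}$ is identified with $E$ through the canonical evaluation map. By definition, $\norm{x}_{**}=\sup_{\snorm{s}\le1}\inner{s}{x}$. I would establish the two inequalities separately.

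\textbf{Upper bound, $\norm{x}_{**}\le\norm{x}$.} This direction is elementary. The definition of the dual norm gives the generalized Cauchy--Schwarz inequality $\inner{s}{x}\le\snorm{s}\norm{x}$. For $x\neq0$, apply the definition of $\snorm{s}$ to the unit vector $x/\norm{x}$; for $x=0$ the inequality is trivial. Taking the supremum over $\snorm{s}\le1$ then yields $\norm{x}_{**}\le\norm{x}$.

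\textbf{Lower bound, $\norm{x}_{**}\ge\norm{x}$.} Here I would invoke \Cref{lem:hahn_banach_consequence}, which supplies, for the given $x$, some $s\in E^*$ with $\snorm{s}=1$ and $\inner{s}{x}=\norm{x}$. This $s$ is feasible in the supremum defining $\norm{x}_{**}$, so $\norm{x}_{**}\ge\norm{x}$. Combining the two bounds proves the theorem.

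\textbf{The identification step.} The only point that needs care is the identification $E^{**}\cong E$. The evaluation map $x\mapsto(s\mapsto\inner{s}{x})$ is linear. It is injective because a nonzero $x$ is detected by the functional supplied by \Cref{lem:hahn_banach_consequence}. Since $\dim E^{**}=\dim E<\infty$, an injective linear map between these spaces is also surjective. Every bidual element is therefore of the form $\inner{\cdot}{x}$, so the norm identity above covers all of $E^{**}$.

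The main obstacle is the Hahn--Banach input, which is taken from the earlier lemma. If a self-contained argument were wanted, I would construct the norming functional by a separating-hyperplane argument: separate $x/\norm{x}$ from the interior of the compact convex unit ball. That would be the only nontrivial ingredient.
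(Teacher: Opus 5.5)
Your proposal is correct and follows the paper's proof: the duality inequality gives $\|x\|_{**}\le\|x\|$, and the norming functional from \Cref{lem:hahn_banach_consequence} gives the reverse bound. Your explicit check that the evaluation map $E\to E^{**}$ is a linear isomorphism is a detail the paper takes for granted, and it does no harm to include it.
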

\begin{proof}
The proof consists of two parts, showing the inequality in both directions.
\begin{enumerate}[label=(\roman*)]
    \item We first show that $\norm{x}_{**} \le \norm{x}$. Using the definition of the bidual norm and applying the duality inequality, we get
    \begin{align*}
        \norm{x}_{**} = \sup_{\snorm{s} \le 1} \lin{s, x} \le \sup_{\snorm{s} \le 1} \snorm{s}\norm{x} = \norm{x}.
    \end{align*}

    \item We will now prove that $\norm{x}_{**} \ge \norm{x}$. This holds trivially for $x=0$. So, let us consider $x \neq 0$. The definition of the bidual norm is
    \begin{align*}
        \norm{x}_{**} = \sup_{\snorm{s} \le 1} \lin{s, x}.
    \end{align*}
    Using \Cref{lem:hahn_banach_consequence}, we know that there exists $s' \in E^*$ such that $\snorm{s'} = 1$ and $\lin{s', x} = \norm{x}$. Since this specific $s'$ is in the set over which the supremum is taken, the supremum must be at least the value obtained at $s'$. Therefore,
    \begin{align*}
        \norm{x}_{**} = \sup_{\snorm{s} \le 1} \lin{s, x} \ge \lin{s', x} = \norm{x}.
    \end{align*}
\end{enumerate}
Combining both inequalities proves the theorem.
\end{proof}

The following two lemmas characterize the subdifferentials of the norm function itself. These are building blocks for deriving the subdifferentials of squared norms.

\begin{lemma}[Norm of the Subgradient of a Norm]
\label{lemma:norm_of_subgradient}
Let $(E, \norm{\cdot})$ be a finite-dimensional real normed space, and let $\snorm{\cdot}$ be its dual norm.
For every nonzero $s \in E^*$ and every $x \in \partial\snorm{s}$,\footnote{We write $\partial\snorm{s}$ for $\partial(\snorm{\cdot})(s)$, the subdifferential of the function $u \mapsto \snorm{u}$ evaluated at $s$. Similarly, $\partial\norm{x}$ denotes $\partial(\norm{\cdot})(x)$.}
we have $\norm{x}=1$.
\end{lemma}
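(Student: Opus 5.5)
The plan is to use the characterization of subgradients of a norm as a support function. Since $\snorm{\cdot}$ is the support function of the primal unit ball $\mathbb B=\{x:\norm{x}\le 1\}$, i.e. $\snorm{s}=\sup_{x\in\mathbb B}\inner{s}{x}$, it is positively homogeneous and convex. I will show two inequalities, $\norm{x}\le 1$ and $\norm{x}\ge 1$, for any $x\in\partial\snorm{s}$ with $s\ne 0$. Here $E^{**}$ is identified with $E$, as in \Cref{thm:bidual_norm_equals_primal}.

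\textbf{Step 1: extract two consequences of the subgradient inequality.} Let $x\in\partial\snorm{s}$, so that
\begin{align*}
\snorm{u}\ge\snorm{s}+\inner{u-s}{x}\quad\forall u\in E^*.
\end{align*}
Plugging in $u=0$ gives $\inner{s}{x}\ge\snorm{s}$. Plugging in $u=2s$ and using homogeneity gives $2\snorm{s}\ge\snorm{s}+\inner{s}{x}$, i.e. $\inner{s}{x}\le\snorm{s}$. Hence $\inner{s}{x}=\snorm{s}$. Substituting this back into the subgradient inequality leaves $\inner{u}{x}\le\snorm{u}$ for all $u\in E^*$.

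\textbf{Step 2: upper bound.} Taking the supremum over $\snorm{u}\le1$ in $\inner{u}{x}\le\snorm{u}$ gives $\norm{x}_{**}\le1$. By \Cref{thm:bidual_norm_equals_primal}, $\norm{x}_{**}=\norm{x}$, so $\norm{x}\le 1$.

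\textbf{Step 3: lower bound.} By the duality inequality $\inner{s}{x}\le\snorm{s}\norm{x}$ (immediate from the definition of the dual norm after scaling $x$), we get
\begin{align*}
\snorm{s}=\inner{s}{x}\le\snorm{s}\norm{x}.
\end{align*}
Since $s\ne0$, $\snorm{s}>0$ by \Cref{lemma:dual_norm_is_a_norm}. Dividing by $\snorm{s}$ yields $\norm{x}\ge1$, which completes the argument.

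\textbf{Main point of care.} The only delicate step is Step 2. It requires recognizing that the inequality $\inner{u}{x}\le\snorm{u}$ for all $u$ bounds the \emph{bidual} norm of $x$, and then invoking reflexivity through \Cref{thm:bidual_norm_equals_primal}. The remaining steps are routine.
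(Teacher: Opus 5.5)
Your proof is correct and follows the paper's argument. The upper bound comes from showing $\inner{u}{x}\le\snorm{u}$ for all $u\in E^*$ and invoking the bidual identification, and the lower bound comes from testing the subgradient inequality at $u=0$ and applying the duality inequality; the only cosmetic difference is that the paper obtains $\inner{u}{x}\le\snorm{u}$ by testing at $s+h$ and using the triangle inequality, whereas you first pin down $\inner{s}{x}=\snorm{s}$ via $u=0$ and $u=2s$ and then substitute.
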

\begin{proof}
The proof relies on two fundamental inequalities that characterize the subdifferential of a convex function.

Let $s \in E^*$ be a non-zero vector. By definition, a vector $x \in E$ is a subgradient of the dual norm $\snorm{\cdot}$ at $s$ if the following inequality holds for all $z \in E^*$:
\begin{align}
    \label{eq:subgradient_inequality}
    \snorm{z} \ge \snorm{s} + \inner{x}{z-s}.
\end{align}

\textbf{Step 1: Proving $\norm{x} \le 1$.}
We can choose a specific value for $z$ in the subgradient inequality. Let us choose $z = s+h$ for any $h \in E^*$. The inequality becomes $\snorm{s+h} \ge \snorm{s} + \inner{x}{h}$. By the triangle inequality, $\snorm{s+h} \le \snorm{s} + \snorm{h}$. This gives:
\begin{align*}
    \snorm{s} + \snorm{h} \ge \snorm{s} + \inner{x}{h} \implies \inner{x}{h} \le \snorm{h}.
\end{align*}
This inequality must hold for all $h \in E^*$. By the definition of the primal norm as the dual of the dual norm, we have:
\begin{align*}
    \norm{x} = \sup_{h \in E^*, \snorm{h} \le 1} \inner{x}{h}.
\end{align*}
Since $\inner{x}{h} \le \snorm{h}$, this implies that for any $h$ with $\snorm{h} \le 1$, we have $\inner{x}{h} \le 1$. Therefore, the supremum must be less than or equal to 1, which proves $\norm{x} \le 1$.

\textbf{Step 2: Proving $\norm{x} \ge 1$.}
Now we choose a different value for $z$ in the subgradient inequality. Let us choose $z = 0$. Since $\snorm{0}=0$, substituting into \eqref{eq:subgradient_inequality} gives:
\begin{align*}
    0 \ge \snorm{s} + \inner{x}{0-s} \\
    0 \ge \snorm{s} - \inner{x}{s}.
\end{align*}
Rearranging this gives the inequality:
\begin{align*}
    \inner{s}{x} \ge \snorm{s}.
\end{align*}
Now, by the duality inequality, we know that $\inner{s}{x} \le \snorm{s}\norm{x}$. Combining these two results:
\begin{align*}
    \snorm{s} \le \inner{s}{x} \le \snorm{s}\norm{x}.
\end{align*}
This chain of inequalities implies $\snorm{s} \le \snorm{s}\norm{x}$. Since we assumed $s$ is a non-zero vector, its norm $\snorm{s}$ is strictly positive. We can therefore divide both sides by $\snorm{s}$ without changing the direction of the inequality:
\begin{align*}
    1 \le \norm{x}.
\end{align*}
Combining the results from Step 1 ($\norm{x} \le 1$) and Step 2 ($\norm{x} \ge 1$), we conclude that we must have exactly $\norm{x}=1$.
\end{proof}

\begin{lemma}[Subdifferential of the norm]
\label{lem:subdifferential_of_the_norm}
The subdifferentials of $\norm{\cdot}$ and $\snorm{\cdot}$ are given by
\begin{align}
    \partial\norm{x} =
    \begin{cases}
        \{s \in E^* \mid \snorm{s} \le 1\} & \text{if } x = 0 \\
        \{s \in E^* \mid \inner{s}{x} = \norm{x} \text{ and } \snorm{s} = 1\} & \text{if } x \neq 0
    \end{cases}, \label{eq:subdiff_norm_primal} \\
    \partial\snorm{s} =
    \begin{cases}
        \{x \in E \mid \norm{x} \le 1\} & \text{if } s = 0 \\
        \{x \in E \mid \inner{s}{x} = \snorm{s} \text{ and } \norm{x} = 1\} & \text{if } s \neq 0
    \end{cases}. \label{eq:subdiff_norm_dual}
\end{align}
\end{lemma}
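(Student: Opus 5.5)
By the symmetry between $(E,\norm{\cdot})$ and $(E^*,\snorm{\cdot})$ (\Cref{thm:bidual_norm_equals_primal}), it suffices to prove the formula for $\partial\norm{x}$; the formula for $\partial\snorm{s}$ then follows by applying the same argument to the normed space $(E^*,\snorm{\cdot})$, whose dual norm is $\norm{\cdot}$. We therefore fix $x\in E$ and characterize all $s\in E^*$ satisfying the subgradient inequality
\begin{align*}
    \norm{y}\ge\norm{x}+\inner{s}{y-x}\quad\forall y\in E .
\end{align*}

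\textbf{Case $x=0$.} Here the inequality reads $\norm{y}\ge\inner{s}{y}$ for all $y$. Taking the supremum over $\norm{y}\le1$ shows that this holds if and only if $\snorm{s}\le1$. The direction ``$\snorm{s}\le 1$ implies the inequality'' is just the duality bound $\inner{s}{y}\le\snorm{s}\norm{y}$.

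\textbf{Case $x\neq0$, necessity.} Suppose $s$ satisfies the subgradient inequality.
\begin{itemize}
    \item Substituting $y=x+h$ and using the triangle inequality gives $\inner{s}{h}\le\norm{h}$ for all $h$, hence $\snorm{s}\le1$. This mirrors Step~1 of \Cref{lemma:norm_of_subgradient}.
    \item Substituting $y=0$ gives $\inner{s}{x}\ge\norm{x}$. Combined with $\inner{s}{x}\le\snorm{s}\norm{x}\le\norm{x}$, this forces both $\inner{s}{x}=\norm{x}$ and, after dividing by $\norm{x}>0$, $\snorm{s}=1$.
\end{itemize}

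\textbf{Case $x\neq0$, sufficiency.} Conversely, suppose $\snorm{s}=1$ and $\inner{s}{x}=\norm{x}$. Then
\begin{align*}
    \norm{x}+\inner{s}{y-x}=\inner{s}{y}\le\norm{y},
\end{align*}
so $s$ is a subgradient.

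The argument is elementary throughout. The only point requiring care is the final transfer to the dual formula. There, the roles are swapped: $x\in E\cong E^{**}$ acts on $s$, and the dual of $\snorm{\cdot}$ must be identified with $\norm{\cdot}$ via \Cref{thm:bidual_norm_equals_primal}, so that the condition ``$\norm{x}\le1$'' is exactly the dual-unit-ball condition. Alternatively, one can simply repeat the three steps verbatim with the roles of the two norms interchanged, using the bidual identity for the supremum step, as already done in \Cref{lemma:norm_of_subgradient}.
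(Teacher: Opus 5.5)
Your proof is correct. Like the paper, you reduce the dual formula to the primal one via \Cref{thm:bidual_norm_equals_primal}; the difference is in how you obtain the primal formula.

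The paper rewrites the subgradient inequality as $x\in\argmax_{y\in E}\{\inner{s}{y}-\norm{y}\}$. It then imports the value of this supremum from \Cref{lem:fenchel_of_norm}: $0$ if $\snorm{s}\le1$ and $+\infty$ otherwise. This gives in one stroke that $s\in\partial\norm{x}$ if and only if $\snorm{s}\le1$ and $\inner{s}{x}=\norm{x}$. The case split then uses the duality inequality only to upgrade $\snorm{s}\le1$ to $\snorm{s}=1$ when $x\neq0$.

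You instead test the subgradient inequality at $y=x+h$ and $y=0$ for necessity, in the style of \Cref{lemma:norm_of_subgradient}. You check sufficiency separately with the duality bound.

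Your route is more elementary and self-contained. It needs neither the conjugate of the norm nor the scaling argument behind it, which the paper invokes by forward reference. The cost is that you prove the two directions separately. The paper's route is more conceptual: it identifies $\partial\norm{x}$ with the set of $s$ at which $x$ attains the supremum defining $(\norm{\cdot})^*(s)=\iota_{\mathbb{B}_*}(s)$. This Fenchel--Young-type mechanism is the same one the paper reuses for $\phi$ and $\phi^*$ in \Cref{lemma:subdiff_phi_from_notes}.
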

\begin{proof}
We prove \eqref{eq:subdiff_norm_primal}. The dual formula follows by applying the primal formula to $E^*$ and identifying the bidual norm with the primal norm using \Cref{thm:bidual_norm_equals_primal}. By definition,
\begin{align}
    s \in \partial \norm{x} \quad &\Leftrightarrow \quad \norm{y} \ge \norm{x} + \inner{s}{y-x} \quad \forall y \in E \nonumber \\
    &\Leftrightarrow \quad \inner{s}{x} - \norm{x} \ge \inner{s}{y} - \norm{y} \quad \forall y \in E \nonumber \\
    &\Leftrightarrow \quad x \in \argmax_{y \in E} \{\inner{s}{y} - \norm{y}\}. \label{eq:argmax_char_subdiff_norm}
\end{align}
We know from the proof of Lemma \ref{lem:fenchel_of_norm} that
\begin{align*}
    \sup_{y \in E} \{\inner{s}{y} - \norm{y}\} = 
    \begin{cases}
        +\infty & \text{when } \snorm{s} > 1 \\
        0 & \text{when } \snorm{s} \le 1
    \end{cases}.
\end{align*}
Hence, for $x$ to be a maximizer, we must have that the maximum is finite (so $\snorm{s} \le 1$) and that the value $\inner{s}{x} - \norm{x}$ is equal to this maximum value of 0. Therefore,
\begin{align*}
    x \in \argmax_{y \in E} \{\inner{s}{y} - \norm{y}\} \quad \Leftrightarrow \quad \inner{s}{x} = \norm{x} \quad \text{and} \quad \snorm{s} \le 1.
\end{align*}
We now consider two cases for $x$.

If $x=0$, the condition $\inner{s}{x} = \norm{x}$ is automatically satisfied. Thus, the characterization becomes:
\begin{align}
    0 \in \argmax_{y \in E} \{\inner{s}{y} - \norm{y}\} \quad \Leftrightarrow \quad \snorm{s} \le 1. \label{eq:subdiff_norm_case_x_zero}
\end{align}
If $x \neq 0$, by the duality inequality, it is impossible for $\inner{s}{x} = \norm{x}$ to hold unless $\snorm{s} \ge 1$. Combining this with the condition $\snorm{s} \le 1$, we must have $\snorm{s} = 1$. Hence,
\begin{align}
    x \in \argmax_{y \in E} \{\inner{s}{y} - \norm{y}\} \quad \Leftrightarrow \quad \inner{s}{x} = \norm{x} \quad \text{and} \quad \snorm{s} = 1. \label{eq:subdiff_norm_case_x_nonzero}
\end{align}
It only remains to combine \eqref{eq:argmax_char_subdiff_norm} with \eqref{eq:subdiff_norm_case_x_zero} and \eqref{eq:subdiff_norm_case_x_nonzero} to establish the result.
\end{proof}

It is also useful to identify the convex conjugate of a norm, which corresponds to the indicator function of the dual unit ball.

\begin{lemma}[Fenchel conjugate of the norm]
\label{lem:fenchel_of_norm}
The Fenchel conjugate of the norm $\norm{\cdot}$ is the indicator function of the unit ball with respect to the dual norm. That is,
\begin{align*}
    (\norm{\cdot})^*(s) = \iota_{\mathbb{B}_*}(s) :=
    \begin{cases}
        0 & \text{if } \snorm{s} \le 1 \\
        +\infty & \text{if } \snorm{s} > 1
    \end{cases},
\end{align*}
where $\mathbb{B}_* := \{s \in E^* \mid \snorm{s} \le 1\}$.
\end{lemma}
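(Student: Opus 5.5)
We compute $(\norm{\cdot})^*(s)=\sup_{x\in E}\{\inner{s}{x}-\norm{x}\}$ directly and split into the two cases $\snorm{s}\le 1$ and $\snorm{s}>1$. The only tools needed are the duality inequality $\inner{s}{x}\le\snorm{s}\norm{x}$, which follows from the definition of $\snorm{\cdot}$ by homogeneity, and the fact from the proof of \Cref{lemma:dual_norm_is_a_norm} that the supremum defining $\snorm{s}$ is attained on the compact unit ball.

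\textbf{Case $\snorm{s}\le1$.} For every $x\in E$ the duality inequality gives $\inner{s}{x}-\norm{x}\le(\snorm{s}-1)\norm{x}\le 0$, so the supremum is at most $0$. Taking $x=0$ shows it is exactly $0$.

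\textbf{Case $\snorm{s}>1$.} Let $x_0$ with $\norm{x_0}\le1$ attain $\inner{s}{x_0}=\snorm{s}$. Since $\snorm{s}>0$, we have $x_0\neq0$. Moreover $\norm{x_0}=1$: if $\norm{x_0}<1$, then $x_0/\norm{x_0}$ would lie in the unit ball and give a value $\snorm{s}/\norm{x_0}>\snorm{s}$, a contradiction. Now evaluate along the ray $x=tx_0$ with $t>0$:
\begin{align*}
\inner{s}{tx_0}-\norm{tx_0}=t(\snorm{s}-1)\to+\infty \quad (t\to\infty).
\end{align*}
Hence the conjugate equals $+\infty$.

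Combining the two cases yields $\iota_{\mathbb{B}_*}$. There is no real obstacle here. The only point needing care is showing that the maximizer has unit norm, or that some unit-norm $x$ gives $\inner{s}{x}>1$, and compactness of the unit ball in finite dimension handles this. If one prefers to avoid attainment, it is enough to take any $x$ with $\norm{x}\le1$ and $\inner{s}{x}>1$, which exists by the definition of the supremum. Then $\inner{s}{x}-\norm{x}\ge\inner{s}{x}-1>0$, and scaling by $t$ again sends the expression to $+\infty$.
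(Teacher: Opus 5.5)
Your proof is correct and follows the paper's argument. For $\snorm{s}\le1$ you use the same duality-inequality bound with $x=0$; for $\snorm{s}>1$ you use a ray $tx$ along which $\inner{s}{tx}-\norm{tx}$ grows linearly. The paper's proof is exactly your fallback (any $x$ with $\norm{x}\le1$ and $\inner{s}{x}>1$), so the attainment and unit-norm step is valid but unnecessary.
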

\begin{proof}
We consider two cases based on the value of $\snorm{s}$.

\textbf{Case 1: $\snorm{s} \le 1$}. By the definition of the Fenchel conjugate and the duality inequality, we have:
\begin{align*}
    (\norm{\cdot})^*(s) &= \sup_{y \in E} \{\inner{s}{y} - \norm{y}\} \\
    &\le \sup_{y \in E} \{\snorm{s}\norm{y} - \norm{y}\} \\
    &\le \sup_{y \in E} \{\norm{y} - \norm{y}\} = 0.
\end{align*}
On the other hand, by choosing $y=0$, we see that the expression $\inner{s}{y} - \norm{y}$ evaluates to $0$. Since the supremum must be at least this value, we have $(\norm{\cdot})^*(s) \ge 0$. Combining these, we conclude that $(\norm{\cdot})^*(s) = 0$.

\textbf{Case 2: $\snorm{s} > 1$}. Since $\snorm{s} = \sup_{\norm{x} \le 1} \inner{s}{x}$, there must exist some $x \in E$ such that $\norm{x} \le 1$ and $\inner{s}{x} > 1$. For such an $x$, it follows that $\inner{s}{x} - \norm{x} > 0$. Therefore, we can lower-bound the supremum:
\begin{align*}
    (\norm{\cdot})^*(s) &= \sup_{y \in E} \{\inner{s}{y} - \norm{y}\} \\
    &\ge \sup_{t > 0} \{\inner{s}{tx} - \norm{tx}\} \\
    &= \sup_{t > 0} \{t(\inner{s}{x} - \norm{x})\}.
\end{align*}
Since the term $(\inner{s}{x} - \norm{x})$ is a positive constant, the supremum over $t>0$ is $+\infty$.
\end{proof}

\subsection{Properties of Squared Norms and their Conjugates}\label{subsec:sqnorm_conj}

In this section, we establish fundamental properties relating the squared norm function $\phi(x) \eqdef \frac{1}{2}\sqnorm{x}$ and its Fenchel conjugate $\phi^*(s) \eqdef \frac{1}{2}\sqsnorm{s}$ to their respective subgradients. These identities are central to the analysis of non-Euclidean methods and will be used extensively in our proofs.

First, we state the classic inequality bounding the dual pairing by the sum of a function and its conjugate.

\begin{theorem}[Fenchel-Young inequality]\label{thm:fenchel_young_inequality}
Let $f: E \to \R \cup \{+\infty\}$ be proper, and define its Fenchel conjugate by
\begin{align*}
    f^*(s) \eqdef \sup_{y \in E}\{\inner{s}{y} - f(y)\},
    \qquad s \in E^*.
\end{align*}
Then, for any $x \in \dom f$ and $s \in E^*$,
\begin{align}\label{eq:fenchel_young_inequality}
    \lin{s, x} \le f^*(s) + f(x).
\end{align}
Moreover,
\begin{align}\label{eq:fenchel_young_equality}
    s \in \partial f(x) \quad \Leftrightarrow \quad \lin{s, x} = f^*(s) + f(x).
\end{align}
\end{theorem}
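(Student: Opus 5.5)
Both claims follow directly from the definition of $f^*$ as a supremum together with the definition of the subgradient in \eqref{eq:subgradient_def}; no further structure of $f$ (convexity or closedness) is needed.

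\textbf{The inequality \eqref{eq:fenchel_young_inequality}.} Fix $x\in\dom f$ and $s\in E^*$. Since $f(x)<+\infty$ and $f$ is proper, the quantity $\inner{s}{x}-f(x)$ is a real number appearing among the terms of the supremum defining $f^*(s)$. Hence $f^*(s)\ge \inner{s}{x}-f(x)$. In particular $f^*(s)>-\infty$, so adding the finite number $f(x)$ to both sides is legitimate. The case $f^*(s)=+\infty$ is harmless, since the inequality is then trivial. This yields \eqref{eq:fenchel_young_inequality}.

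\textbf{The equivalence \eqref{eq:fenchel_young_equality}.} Still with $x\in\dom f$, I will use the rearranged form of the subgradient inequality noted earlier: $s\in\partial f(x)$ if and only if
\[
  \inner{s}{x}-f(x)\ \ge\ \inner{s}{y}-f(y) \quad \text{for all } y\in E,
\]
that is, if and only if $x$ attains the supremum defining $f^*(s)$. For $y\notin\dom f$ the right-hand side equals $-\infty$, so these $y$ impose no condition. Taking the supremum over $y$, this condition is equivalent to $\inner{s}{x}-f(x)\ge f^*(s)$. Combining this with the reverse inequality from the first part gives the equality $\inner{s}{x}=f^*(s)+f(x)$. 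Conversely, if the equality holds, then for every $y$ we have $\inner{s}{x}-f(x)=f^*(s)\ge \inner{s}{y}-f(y)$, which is exactly the subgradient inequality.

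The only delicate point is the extended-real arithmetic. I will state explicitly that $x\in\dom f$ makes $f(x)$ finite, and that properness ensures $f^*(s)>-\infty$. With these two facts, every rearrangement above is a valid manipulation of inequalities in $\R\cup\{+\infty\}$.
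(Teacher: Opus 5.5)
Your proof is correct and complete, including the extended-real bookkeeping for $y\notin\dom f$ and for the case $f^*(s)=+\infty$. The paper states this theorem without proof as a classical fact, but your argument is the one it sets up just before: there it rearranges the subgradient inequality to show that $s\in\partial f(x)$ iff $x$ maximizes $\inner{s}{y}-f(y)$ over $y\in E$, which is exactly attainment of the supremum defining $f^*(s)$.
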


We next characterize the subdifferentials of the squared primal and dual norms. These are generally set-valued; the sharp operator selects an element of $\partial\phi^*(s)$ for each $s\in E^*$, where $\phi^*(s)=\frac12\sqsnorm{s}$.

\begin{lemma}[Subdifferential of $\phi$ and $\phi^*$]
\label{lemma:subdiff_phi_from_notes}
The subdifferentials of $\phi(x) \eqdef \frac{1}{2}\sqnorm{x}$ and $\phi^*(s) \eqdef \frac{1}{2}\sqsnorm{s}$ are given by:
\begin{enumerate}[label=(\roman*)]
    \item $\partial\phi(x) = \{s \in E^* \mid \inner{s}{x} = \snorm{s}\norm{x} \text{ and } \snorm{s} = \norm{x}\}$.
    \item $\partial\phi^*(s) = \{x \in E \mid \inner{s}{x} = \snorm{s}\norm{x} \text{ and } \norm{x} = \snorm{s}\}$.
\end{enumerate}
\end{lemma}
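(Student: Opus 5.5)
The plan is to reduce both parts to the equality case of the Fenchel--Young inequality (\Cref{thm:fenchel_young_inequality}), combined with the duality inequality $\inner{s}{x}\le\snorm{s}\norm{x}$ and the elementary AM--GM bound $ab\le\frac12(a^2+b^2)$. We prove (i) first and then transfer it to (ii) by conjugate duality.

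\textbf{Step 1: the conjugate of $\phi$.} Although the notation section asserts $\phi^*(s)=\frac12\sqsnorm{s}$, we verify it. For fixed $s\in E^*$, write $y=tu$ with $t\ge0$ and $\norm{u}=1$. Then
\begin{align*}
\sup_{y\in E}\Bigl\{\inner{s}{y}-\tfrac12\sqnorm{y}\Bigr\}=\sup_{t\ge0}\Bigl\{t\sup_{\norm{u}=1}\inner{s}{u}-\tfrac12t^2\Bigr\}=\sup_{t\ge0}\Bigl\{t\snorm{s}-\tfrac12t^2\Bigr\}=\tfrac12\sqsnorm{s}.
\end{align*}
Here the inner supremum over the unit sphere equals $\snorm{s}$; when $s\neq0$ the maximizer of $\inner{s}{\cdot}$ over the unit ball lies on the sphere, and the case $s=0$ is trivial. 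Also, $\phi$ is finite, convex and continuous, hence proper and closed.

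\textbf{Step 2: part (i).} By \eqref{eq:fenchel_young_equality}, $s\in\partial\phi(x)$ if and only if $\inner{s}{x}=\frac12\sqnorm{x}+\frac12\sqsnorm{s}$. We always have the chain
\begin{align*}
\inner{s}{x}\le\snorm{s}\norm{x}\le\tfrac12\sqnorm{x}+\tfrac12\sqsnorm{s}.
\end{align*}
Hence Fenchel--Young equality holds exactly when both inequalities are tight. The first is tight iff $\inner{s}{x}=\snorm{s}\norm{x}$. The second (AM--GM) is tight iff $\snorm{s}=\norm{x}$. This is precisely the set in (i), and the converse direction is just reading the chain backwards.

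\textbf{Step 3: part (ii).} Since $\phi$ is proper, closed and convex, \Cref{thm:fenchel_moreau_rockafellar} gives $x\in\partial\phi^*(s)\iff s\in\partial\phi(x)$. By (i), the latter holds iff $\inner{s}{x}=\snorm{s}\norm{x}$ and $\norm{x}=\snorm{s}$, which is the set in (ii).

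Alternatively, one can apply Step 2 directly to the normed space $(E^*,\snorm{\cdot})$ and identify its dual norm with $\norm{\cdot}$ via \Cref{thm:bidual_norm_equals_primal}.

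The only mildly delicate point is Step 1, namely confirming that the conjugate is exactly $\frac12\sqsnorm{\cdot}$, including the degenerate case $s=0$. The rest is an equality-case bookkeeping argument.
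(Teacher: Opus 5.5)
Your proposal is correct and follows the paper's proof essentially step for step: you make the same verification that $\frac12\sqsnorm{\cdot}$ is the Fenchel conjugate of $\phi$, and you use the same Fenchel--Young equality argument, in which both the duality inequality and the AM--GM bound are forced to be tight. The only variation is in (ii). Your primary route inverts via \Cref{thm:fenchel_moreau_rockafellar} and reuses (i), whereas the paper reapplies (i) on $(E^*,\snorm{\cdot})$ with the bidual identification of \Cref{thm:bidual_norm_equals_primal}; that is the alternative you also mention, and both are valid.
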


\begin{proof}
We first verify that $\phi^*$ is the Fenchel conjugate of $\phi$.
By the duality inequality, for every $s \in E^*$,
\begin{align*}
    \sup_{x \in E}\left\{\inner{s}{x} - \frac12\sqnorm{x}\right\}
    &\le \sup_{r\ge0}\left\{r\snorm{s} - \frac12 r^2\right\}
    = \frac12\sqsnorm{s}.
\end{align*}
If $s=0$, this upper bound is attained at $x=0$.
If $s\neq0$, compactness of the unit ball and continuity of the linear functional give a maximizer $u$ with $\inner{s}{u}=\snorm{s}>0$.
This maximizer satisfies $\norm{u}=1$, since otherwise scaling it to have norm one would increase $\inner{s}{u}$.
Taking $x=\snorm{s}u$, we obtain
\begin{align*}
    \inner{s}{x} - \frac12\sqnorm{x}
    = \sqsnorm{s} - \frac12\sqsnorm{s}
    = \frac12\sqsnorm{s}.
\end{align*}
Thus the supremum equals $\phi^*(s)=\frac12\sqsnorm{s}$ for every $s \in E^*$.
We now prove the two subdifferential characterizations.

\begin{enumerate}[label=(\roman*)]
    \item We first characterize the subdifferential $\partial\phi(x)$. By the Fenchel-Young equality (\Cref{thm:fenchel_young_inequality}), an element $s \in E^*$ is in $\partial\phi(x)$ if and only if the equality holds in the Fenchel-Young inequality:
    \begin{align*}
        \inner{s}{x} = \phi(x) + \phi^*(s).
    \end{align*}
    Substituting the definitions of $\phi$ and $\phi^*$ gives the equivalent condition:
    \begin{align}
        \label{eq:subdiff_phi_condition}
        \inner{s}{x} = \frac{1}{2}\sqnorm{x} + \frac{1}{2}\sqsnorm{s}.
    \end{align}
    We now establish a general chain of inequalities. By the duality inequality, we have:
    \begin{align*}
        \inner{s}{x} \le \snorm{s}\norm{x}.
    \end{align*}
    Furthermore, the standard Young's inequality for scalars, $ab \le \frac{1}{2}a^2 + \frac{1}{2}b^2$, applied with $a=\snorm{s}$ and $b=\norm{x}$, yields:
    \begin{align*}
        \snorm{s}\norm{x} \le \frac{1}{2}\sqsnorm{s} + \frac{1}{2}\sqnorm{x}.
    \end{align*}
    Combining these two results gives the chain:
    \begin{align}
        \label{eq:subdiff_phi_ineq_chain}
        \inner{s}{x} \le \snorm{s}\norm{x} \le \frac{1}{2}\sqsnorm{s} + \frac{1}{2}\sqnorm{x}.
    \end{align}
    For $s$ to be an element of $\partial\phi(x)$, the condition in \eqref{eq:subdiff_phi_condition} must hold, which means the first and last terms of the chain in \eqref{eq:subdiff_phi_ineq_chain} must be equal. This can only occur if both inequalities in the chain hold with equality.
    \begin{itemize}
        \item The first inequality becomes an equality, $\inner{s}{x} = \snorm{s}\norm{x}$, which is the equality condition in the duality inequality.
        \item The second inequality becomes an equality, $\snorm{s}\norm{x} = \frac{1}{2}\sqsnorm{s} + \frac{1}{2}\sqnorm{x}$. Rearranging gives $\sqsnorm{s} - 2\snorm{s}\norm{x} + \sqnorm{x} = 0$, which is equivalent to $(\snorm{s} - \norm{x})^2 = 0$. This holds if and only if $\snorm{s} = \norm{x}$.
    \end{itemize}
    Therefore, $s \in \partial\phi(x)$ if and only if both conditions are satisfied, which proves the first statement.

    \item The proof for $\partial\phi^*(s)$ follows from a symmetric argument. We apply the result from part (i) to the function $\phi^*(s)$ defined on the space $E^*$. The dual of $E^*$ is the bidual space $E^{**}$, and the corresponding dual norm is the bidual norm, $\norm{\cdot}_{**}$.
    
    Applying the characterization from (i) to $\phi^*$ (with the roles of the spaces and norms interchanged) gives that $x \in \partial\phi^*(s)$ if and only if:
    \begin{align*}
        x \in E^{**}, \quad \inner{x}{s} = \norm{x}_{**}\snorm{s}, \quad \text{and} \quad \norm{x}_{**} = \snorm{s}.
    \end{align*}
    As stated in Theorem \ref{thm:bidual_norm_equals_primal}, for a finite-dimensional space, there is a canonical isomorphism between $E$ and $E^{**}$, and the bidual norm is equal to the primal norm, i.e., $\norm{\cdot}_{**} = \norm{\cdot}$. We can thus identify $x \in E^{**}$ with $x \in E$ and replace the bidual norm with the primal norm. This substitution yields the desired characterization for $\partial\phi^*(s)$:
    \begin{align*}
        \partial\phi^*(s) = \{x \in E \mid \inner{s}{x} = \snorm{s}\norm{x} \text{ and } \norm{x} = \snorm{s}\}.
    \end{align*}
\end{enumerate}
This completes the proof.
\end{proof}

Nonemptiness follows directly from \Cref{lem:hahn_banach_consequence}.
For $x\neq0$, choose $s_0\in E^*$ with $\snorm{s_0}=1$ and $\inner{s_0}{x}=\norm{x}$; then $s=\norm{x}s_0$ satisfies the conditions in part (i).
For $x=0$, take $s=0$.
The same argument on $E^*$ establishes nonemptiness of $\partial\phi^*(s)$.

For $s\neq0$, every $x\in\partial\phi^*(s)$ has $\norm{x}=\snorm{s}>0$.
Writing $x=\snorm{s}u$ gives
\begin{align}
    \partial\phi^*(s)
    &= \{x \in E \mid \inner{s}{x} = \sqsnorm{s}, \ \norm{x} = \snorm{s}\} \nonumber \\
    &= \snorm{s}\cdot\{u \in E \mid \norm{u}=1, \ \inner{s}{u}=\snorm{s}\} \nonumber \\
    &= \snorm{s}\cdot\partial\snorm{s}, \label{eq:subdiff_phi_star_vs_eta_star}
\end{align}
where the last equality follows from \Cref{lem:subdifferential_of_the_norm}.
At $s=0$, the characterization above gives $\partial\phi^*(0)=\{0\}$, and $\partial\snorm{0}$ is the primal unit ball, so \eqref{eq:subdiff_phi_star_vs_eta_star} also holds at zero.

\begin{lemma}[Subdifferential of the Squared Dual Norm]
\label{lemma:subdiff_squared_norm}
Let $\phi^*(s) \eqdef \frac{1}{2}\sqsnorm{s}$. For any $s \in E^*$, its subdifferential is given by
\begin{align*}
    \partial \phi^*(s) = \snorm{s} \cdot \partial \snorm{s},
\end{align*}
where $\partial \snorm{s}$ is the subdifferential of the dual norm at $s$. If the dual norm is differentiable at $s$, then this relationship simplifies to $\nabla \phi^*(s) = \snorm{s} \cdot \nabla \snorm{s}$.
\end{lemma}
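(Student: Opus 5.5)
The identity follows from the computation displayed just before the lemma, so the proof will mostly repackage \eqref{eq:subdiff_phi_star_vs_eta_star}. I will split into the cases $s\neq0$ and $s=0$.

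\textbf{Case $s\neq0$.} I start from \Cref{lemma:subdiff_phi_from_notes}(ii), which gives
\[
\partial\phi^*(s)=\{x\in E:\inner{s}{x}=\snorm{s}\norm{x},\ \norm{x}=\snorm{s}\}.
\]
Every such $x$ is nonzero, so I can write $x=\snorm{s}u$ with $\norm{u}=1$ and $\inner{s}{u}=\snorm{s}$. By \eqref{eq:subdiff_norm_dual} in \Cref{lem:subdifferential_of_the_norm}, this set of unit vectors $u$ is exactly $\partial\snorm{s}$.

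For the converse inclusion, I take $u\in\partial\snorm{s}$ and check that $x=\snorm{s}u$ satisfies both conditions. Indeed, $\norm{x}=\snorm{s}$, and
\[
\inner{s}{x}=\snorm{s}\inner{s}{u}=\snorm{s}^2=\snorm{s}\norm{x}.
\]

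\textbf{Case $s=0$.} Here \Cref{lemma:subdiff_phi_from_notes}(ii) forces $\norm{x}=0$, so $\partial\phi^*(0)=\{0\}$. On the other side, $\partial\snorm{0}$ is the primal unit ball, which is nonempty, and multiplying it by $\snorm{0}=0$ gives $\{0\}$. The only care needed is to interpret the scalar multiple of a set elementwise, so that $0\cdot K=\{0\}$ for any nonempty $K$.

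\textbf{Differentiable case.} If $\snorm{\cdot}$ is differentiable at $s$, then $\partial\snorm{s}=\{\nabla\snorm{s}\}$, since the subdifferential of a convex function at a point of differentiability is the singleton gradient. Hence $\partial\phi^*(s)$ is a singleton, and $\phi^*$ is differentiable there with $\nabla\phi^*(s)=\snorm{s}\nabla\snorm{s}$. Differentiability of $\phi^*$ here follows from convexity together with the singleton subdifferential at an interior point. It can also be read off directly from the chain rule applied to $t\mapsto t^2/2$ composed with the norm.

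\textbf{Potential obstacle.} The only delicate point is the degenerate case $s=0$. Note that the norm itself is never differentiable at $0$ (its subdifferential there is the whole unit ball), so the gradient formula is only claimed for $s\neq0$. Otherwise the argument is routine bookkeeping.
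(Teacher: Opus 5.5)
Your proposal is correct and follows essentially the paper's route. For $s\neq0$ you normalize the elements of $\partial\phi^*(s)$ from \Cref{lemma:subdiff_phi_from_notes}(ii) against the characterization of $\partial\snorm{s}$ in \Cref{lem:subdifferential_of_the_norm}, which is the computation \eqref{eq:subdiff_phi_star_vs_eta_star} the paper cites; you check $s=0$ separately via $\partial\phi^*(0)=\{0\}=0\cdot\partial\snorm{0}$; and you obtain the gradient formula from the chain rule, as the paper does (your observation that differentiability of the norm at $s$ already forces $s\neq0$ is correct).
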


\begin{proof}
For $s\neq0$, the identity follows from \eqref{eq:subdiff_phi_star_vs_eta_star}, obtained by normalizing the subgradients characterized in \Cref{lemma:subdiff_phi_from_notes}.

At $s=0$, \Cref{lemma:subdiff_phi_from_notes} gives $\partial\phi^*(0)=\{0\}$, while \Cref{lem:subdifferential_of_the_norm} gives
\begin{align*}
    \partial\snorm{0}=\{x\in E:\norm{x}\le1\}.
\end{align*}
Consequently,
\begin{align*}
    \partial\phi^*(0)=\{0\}=\snorm{0}\cdot\partial\snorm{0}.
\end{align*}
Thus the subdifferential identity holds for every $s\in E^*$.

If the dual norm is differentiable at $s$, the ordinary chain rule applied to $\phi^*(s)=\frac12\sqsnorm{s}$ yields
\begin{align*}
    \nabla\phi^*(s)=\snorm{s}\cdot\nabla\snorm{s}.
\end{align*}
\end{proof}

These subdifferential mappings satisfy useful scaling properties.

\begin{lemma}[Homogeneity of $\partial\phi$ and $\partial\phi^*$]
\label{lem:homogeneity_subdifferential}
The subdifferentials of $\phi$ and $\phi^*$ satisfy the relations
\begin{align*}
    \partial\phi(tx) &= t\partial\phi(x), \quad \forall x \in E, t \in \R, \\
    \partial\phi^*(ts) &= t\partial\phi^*(s), \quad \forall s \in E^*, t \in \R.
\end{align*}
In particular, $\partial\phi(0) = \{0\}$ and $\partial\phi^*(0) = \{0\}$.
\end{lemma}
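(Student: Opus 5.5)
The statement to prove is \Cref{lem:homogeneity_subdifferential}. I would derive it directly from the explicit characterizations in \Cref{lemma:subdiff_phi_from_notes} and treat the cases $t>0$, $t<0$ and $t=0$ separately. I prove the claim for $\phi$ first. The claim for $\phi^*$ then follows by the same argument on $E^*$, using that the bidual norm is the primal norm (\Cref{thm:bidual_norm_equals_primal}). This is the same symmetry device already used in part (ii) of \Cref{lemma:subdiff_phi_from_notes}.

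\textbf{The case $t=0$.} I first show $\partial\phi(0)=\{0\}$. By part (i) of \Cref{lemma:subdiff_phi_from_notes}, any $s\in\partial\phi(0)$ satisfies $\snorm{s}=\norm{0}=0$, so $s=0$. Conversely, $s=0$ satisfies both defining conditions trivially. Hence $\partial\phi(0)=\{0\}=0\cdot\partial\phi(x)$ for every $x$. This last equality uses that $\partial\phi(x)$ is nonempty, which was established right after \Cref{lemma:subdiff_phi_from_notes}. Without nonemptiness, $0\cdot\emptyset$ would be empty, so this is the only subtle point in the argument.

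\textbf{The case $t\neq0$.} I show $\partial\phi(tx)\supseteq t\partial\phi(x)$. Take $s\in\partial\phi(x)$, so that $\inner{s}{x}=\snorm{s}\norm{x}$ and $\snorm{s}=\norm{x}$. Then
\[
\inner{ts}{tx}=t^2\inner{s}{x}=t^2\snorm{s}\norm{x}=\snorm{ts}\norm{tx},
\]
by absolute homogeneity of both norms (\Cref{lemma:dual_norm_is_a_norm}). Likewise $\snorm{ts}=|t|\snorm{s}=|t|\norm{x}=\norm{tx}$. Hence $ts\in\partial\phi(tx)$.

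\textbf{The reverse inclusion.} I apply the inclusion just proved with $tx$ in place of $x$ and $1/t$ in place of $t$. This gives $\tfrac1t\partial\phi(tx)\subseteq\partial\phi(x)$, that is, $\partial\phi(tx)\subseteq t\partial\phi(x)$. Combined with the previous step, this yields equality for all $t\ne0$.

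Repeating the argument verbatim with the roles of $(E,\norm{\cdot})$ and $(E^*,\snorm{\cdot})$ exchanged gives the claim for $\partial\phi^*$, including $\partial\phi^*(0)=\{0\}$.
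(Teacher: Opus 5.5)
The paper states this lemma without a proof, so there is no argument of its own to compare against. Your proof is correct and complete.

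Reducing to the explicit description in \Cref{lemma:subdiff_phi_from_notes} makes the case $t=0$ immediate, since any $s\in\partial\phi(0)$ has $\snorm{s}=\norm{0}=0$. You are also right that $0\cdot\partial\phi(x)=\{0\}$ needs the nonemptiness established just after that lemma. For $t\neq0$, both the two-inclusion argument and the treatment of negative $t$ via absolute homogeneity of both norms are sound. Negative $t$ is exactly the case the paper uses when it factors $-\gamma_k$ out of $\partial\phi^*(-\gamma_k g^k)$.

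An equally short alternative for $t\neq0$ works directly from the subgradient inequality and $\phi(ty)=t^2\phi(y)$. If $\phi(y)\ge\phi(x)+\inner{s}{y-x}$ for all $y$, multiplying by $t^2$ gives $\phi(ty)\ge\phi(tx)+\inner{ts}{ty-tx}$, and $ty$ ranges over all of $E$. This route avoids the characterization but still needs a separate argument at $t=0$. Your route reuses machinery the paper has already built.
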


Finally, we derive key identities involving the sharp operator $(\cdot)^\sharp$, which are used directly in the derivation of the descent lemma.

\begin{lemma}[Properties of the Sharp Operators]
\label{lemma:sharp_flat_properties}
Let $x \in E$ and $s \in E^*$. The following four identities hold for any element selected from the respective subdifferentials:
\begin{enumerate}[label=(\roman*)]
    \item For any $u \in \partial\phi^*(s)$, we have $\inner{s}{u} = \sqsnorm{s}$.
    \item For any $u \in \partial\phi^*(s)$, we have $\norm{u} = \snorm{s}$.
    \item For any $v \in \partial\phi(x)$, we have $\inner{v}{x} = \sqnorm{x}$.
    \item For any $v \in \partial\phi(x)$, we have $\snorm{v} = \norm{x}$.
\end{enumerate}
\end{lemma}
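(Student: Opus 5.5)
The statement to prove is \Cref{lemma:sharp_flat_properties}. All four identities come directly from the characterization of the subdifferentials in \Cref{lemma:subdiff_phi_from_notes}, and that lemma is the only tool needed. Parts (i) and (ii) concern $\partial\phi^*(s)$; parts (iii) and (iv) concern $\partial\phi(x)$. I will prove the first pair and obtain the second by the symmetric argument.

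For (ii), take $u\in\partial\phi^*(s)$. By \Cref{lemma:subdiff_phi_from_notes}(ii), $u$ satisfies both $\inner{s}{u}=\snorm{s}\norm{u}$ and $\norm{u}=\snorm{s}$. The second condition is exactly (ii).

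For (i), substitute $\norm{u}=\snorm{s}$ into the first condition. This gives
\begin{align*}
\inner{s}{u}=\snorm{s}\norm{u}=\snorm{s}\cdot\snorm{s}=\sqsnorm{s}.
\end{align*}
The case $s=0$ needs no separate treatment: by \Cref{lem:homogeneity_subdifferential}, $\partial\phi^*(0)=\{0\}$, so both identities read $0=0$. They also follow from the characterization itself.

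For (iii) and (iv), take $v\in\partial\phi(x)$ and use \Cref{lemma:subdiff_phi_from_notes}(i). It gives $\snorm{v}=\norm{x}$, which is (iv). Substituting this into $\inner{v}{x}=\snorm{v}\norm{x}$ yields $\inner{v}{x}=\sqnorm{x}$, which is (iii).

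There is no genuine obstacle. The only point needing care is that \Cref{lemma:subdiff_phi_from_notes} must describe the \emph{full} subdifferential, so that the identities hold for every selected element and not just a particular one. The paper established exactly this via the Fenchel--Young equality case. The argument relies on $\phi^*$ being the conjugate of $\phi$ and on the bidual identification from \Cref{thm:bidual_norm_equals_primal}, both proved earlier.
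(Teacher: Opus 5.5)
Your proposal is correct and follows the paper's proof: both read off $\norm{u}=\snorm{s}$ and $\inner{s}{u}=\snorm{s}\norm{u}=\sqsnorm{s}$ from \Cref{lemma:subdiff_phi_from_notes}(ii), and obtain (iii)--(iv) the same way from part (i). You are also right that the case $s=0$ needs no separate treatment, because the characterization already covers it.
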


\begin{proof}
Let $u \in \partial\phi^*(s)$ and $v \in \partial\phi(x)$ be arbitrary.
By \Cref{lemma:subdiff_phi_from_notes}(ii),
\begin{align*}
    \norm{u} = \snorm{s},
    \qquad
    \inner{s}{u} = \snorm{s}\norm{u} = \sqsnorm{s},
\end{align*}
which proves (i) and (ii).
Similarly, \Cref{lemma:subdiff_phi_from_notes}(i) gives
\begin{align*}
    \snorm{v} = \norm{x},
    \qquad
    \inner{v}{x} = \snorm{v}\norm{x} = \sqnorm{x},
\end{align*}
which proves (iii) and (iv).
\end{proof}

\subsection{Inequalities and Specific Geometries}

To bound dual pairings in our convergence analysis, we require the following generalized version of Young's inequality.

\begin{lemma}[Generalized Young inequality]
Let $\|\cdot\|$ be any norm on $\R^d$ with dual norm
$\|u\|_*=\sup_{\|x\|\le 1}\langle u, x\rangle$. Then for all $u, v\in\R^d$
and every ${s}>0$,
\[
\langle u, v\rangle \;\le\; \|u\|_*\,\|v\|
\;\le\; \frac{1}{2{s}}\,\|u\|_*^2 + \frac{{s}}{2}\,\|v\|^2 .
\]
\end{lemma}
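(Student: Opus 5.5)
The first inequality $\langle u, v\rangle \le \|u\|_*\|v\|$ comes straight from the definition of the dual norm, and the second from the scalar AM--GM inequality.

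For the first inequality, if $v=0$ both sides vanish, so assume $v\neq 0$. Set $x \eqdef v/\|v\|$, so that $\|x\|=1$. By definition of $\|\cdot\|_*$ as a supremum over the unit ball,
$\langle u, x\rangle \le \|u\|_*$. Multiplying by $\|v\|>0$ and using linearity of the pairing gives $\langle u, v\rangle \le \|u\|_*\|v\|$.

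For the second inequality, put $a\eqdef \|u\|_*/\sqrt{s}$ and $b\eqdef \sqrt{s}\,\|v\|$; both are real numbers because $s>0$. Then $ab = \|u\|_*\|v\|$, and expanding $(a-b)^2\ge 0$ gives $ab\le \frac12 a^2+\frac12 b^2 = \frac{1}{2s}\|u\|_*^2+\frac{s}{2}\|v\|^2$. Chaining this with the first inequality proves the lemma.

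No step is a real obstacle. The only points needing care are the degenerate case $v=0$ and the fact that $s>0$ is what makes the rescaling by $\sqrt{s}$ legitimate.
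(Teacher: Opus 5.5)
Your proof is correct and follows the paper's argument: you get the first inequality by normalizing $v$ and applying the supremum definition of $\|u\|_*$, and the second by expanding a nonnegative square. The only cosmetic difference is that you rescale to $a=\|u\|_*/\sqrt{s}$, $b=\sqrt{s}\,\|v\|$ and use $(a-b)^2\ge 0$, whereas the paper expands $\bigl(\|u\|_*/\sqrt{2s}-\sqrt{s/2}\,\|v\|\bigr)^2\ge 0$ directly, which is the same computation.
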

\begin{proof}
\textbf{(i) Duality (H\"older) inequality.}
If $v=0$ the claim is trivial. Otherwise, set $w:=\nfr{v}{\|v\|}$, so $\|w\|=1$.
By the definition of the dual norm,
\[
\langle u, w\rangle \le \sup_{\|x\|\le 1}\langle u, x\rangle = \|u\|_* .
\]
Multiplying by $\|v\|$ gives $\langle u, v\rangle = \|v\|\langle u, w\rangle
\le \|u\|_*\,\|v\|$.

\textbf{(ii) Young’s inequality with a parameter.}
For any $a, b\in\R$ and any ${s}>0$,
\[
0\le \Bigl(\frac{a}{\sqrt{2{s}}}-\sqrt{\frac{{s}}{2}}\,b\Bigr)^2
= \frac{a^2}{2{s}} + \frac{{s}}{2}b^2 - ab,
\]
hence $ab \le \frac{a^2}{2{s}} + \frac{{s}}{2}b^2$.
Applying this with $a=\|u\|_*$ and $b=\|v\|$ yields
\[
\|u\|_*\,\|v\| \le \frac{1}{2{s}}\,\|u\|_*^2 + \frac{{s}}{2}\,\|v\|^2 .
\]
Combining (i) and (ii) proves the statement.
\end{proof}

For our examples involving matrix optimization, we utilize the comparison inequalities between different Schatten norms.

\begin{lemma}[Schatten norm comparison]\label{lem:schatten_norm_comparison}
Let $X\in\R^{m\times n}$ and $r\eqdef\min\{m, n\}$. For any $1\le \rho\le p\le\infty$,
\begin{align*}
    \norm{X}_{S_p} \le \norm{X}_{S_{\rho}} \le r^{\frac{1}{\rho}-\frac{1}{p}}\norm{X}_{S_p}.
\end{align*}
\end{lemma}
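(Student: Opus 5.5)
The plan is to reduce both inequalities to the monotonicity of the finite $\ell_\rho$ norms on the singular value vector. Since $\norm{X}_{S_p}=\norm{\sigma(X)}_p$ and $\norm{X}_{S_\rho}=\norm{\sigma(X)}_\rho$ for the same vector $\sigma(X)\in\R^r$ (zero singular values included, as in \Cref{app:notation}), it suffices to prove, for every $u\in\R^r$ and $1\le\rho\le p\le\infty$,
\begin{align*}
    \norm{u}_p\le\norm{u}_\rho\le r^{\frac1\rho-\frac1p}\norm{u}_p .
\end{align*}
No matrix-specific structure is needed beyond this identification, and both inequalities are trivial for $u=0$.

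For the left inequality, I will normalize. If $u\neq0$, set $v=u/\norm{u}_\rho$, so that $\norm{v}_\rho=1$ and hence $|v_i|\le1$ for every $i$. When $p<\infty$, this gives $|v_i|^p\le|v_i|^\rho$, and summing yields $\norm{v}_p^p\le\norm{v}_\rho^\rho=1$. Therefore $\norm{v}_p\le1$, and homogeneity gives $\norm{u}_p\le\norm{u}_\rho$. When $p=\infty$, the claim is simply $\max_i|u_i|\le\norm{u}_\rho$, which is immediate because each summand is bounded by the whole sum.

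For the right inequality, I will use H\"older's inequality. When $p<\infty$ and $\rho<p$, apply H\"older to $\sum_i |u_i|^\rho\cdot1$ with exponents $p/\rho$ and its conjugate $p/(p-\rho)$. This gives
\begin{align*}
    \sum_i|u_i|^\rho\le\Bigl(\sum_i|u_i|^p\Bigr)^{\rho/p} r^{1-\rho/p}.
\end{align*}
Taking $1/\rho$ powers produces the factor $r^{1/\rho-1/p}$. When $p=\infty$, we bound $\sum_i|u_i|^\rho\le r\norm{u}_\infty^\rho$ directly, and when $\rho=p$ there is nothing to prove. The only point requiring care is the bookkeeping of the edge cases $p=\infty$ and $\rho=p$, which I will treat separately as indicated; there is no substantive obstacle. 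Finally, substituting $u=\sigma(X)$ completes the argument.
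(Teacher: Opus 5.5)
Your proposal is correct and follows the paper's route. Like the paper, you identify $\norm{X}_{S_q}$ with $\norm{\sigma(X)}_q$ on $\R^r$ and apply the $\ell_\rho$--$\ell_p$ comparison; the paper cites that vector inequality, while you prove it by normalization and H\"older, and your edge-case handling is sound (note only that $\rho=\infty$ forces $p=\infty$, so it falls under your trivial case $\rho=p$).
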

\begin{proof}
Let $\sigma(X)\in\R_+^{r}$ be the vector of singular values of $X$. By definition, $\norm{X}_{S_q}=\norm{\sigma(X)}_q$ for every $q\in[1, \infty]$. Applying the standard $\ell_p$ comparison inequality in $\R^{r}$ to $\sigma(X)$ yields the claim.
\end{proof}

\subsection{Probabilistic Preliminaries for Martingale Bounds}
\label{sub:prob_prelim_martingales}

To rigorously analyze the behavior of stochastic gradients in non-Euclidean norms, we need tools from the theory of vector-valued martingales. We begin by recalling the necessary probabilistic definitions.

\begin{definition}[Measurable space]
\label{def:measurable_space}
A \textbf{measurable space} is a pair $(S, \cS)$, where $S$ is a set and $\cS$ is a
$\sigma$-algebra of subsets of $S$, i.e., a collection $\cS\subseteq 2^S$ such that:
\begin{enumerate}[label=(\roman*)]
    \item $S\in \cS$;
    \item if $A\in\cS$, then its complement $S\setminus A\in\cS$;
    \item if $A_1, A_2, \dots\in\cS$, then the countable union $\bigcup_{n=1}^\infty A_n\in\cS$.
\end{enumerate}
Elements of $\cS$ are called \textbf{measurable sets}.
\end{definition}

\begin{definition}[$\cF$-measurable random variable]
\label{def:F_measurable_rv}
Let $(\Omega, \cF, \bb P)$ be a probability space and let $(S, \cS)$ be a measurable space.
A map $\xi:\Omega\to S$ is called \textbf{$\cF$-measurable} (or simply \textbf{measurable})
if for every measurable set $A\in\cS$, the preimage
\[
\xi^{-1}(A)\eqdef \{\omega\in\Omega:\ \xi(\omega)\in A\}
\]
belongs to $\cF$, i.e.,
\[
\forall A\in\cS:\quad \xi^{-1}(A)\in\cF.
\]
In particular, when $S=\R$ and $\cS=\cB(\R)$ is the Borel $\sigma$-algebra,
this becomes:
\[
\forall B\in\cB(\R):\quad \{\omega\in\Omega:\ \xi(\omega)\in B\}\in\cF.
\]
\end{definition}

\begin{remark}[Generated $\sigma$-algebra]
Let $\xi:(\Omega, \cF)\to(S, \cS)$ be a measurable map. The $\sigma$-algebra generated by $\xi$ is
\[
\sigma(\xi)\eqdef \{\xi^{-1}(A):\ A\in\cS\}.
\]
It is the smallest $\sigma$-algebra $\cG\subseteq\cF$ such that $\xi$ is $\cG$-measurable.
\end{remark}

\begin{definition}[Sub-$\sigma$-algebra]
Let $(\Omega, \cF, \bb P)$ be a probability space. A collection of sets $\cG$ is called a
\textbf{sub-$\sigma$-algebra of $\cF$} if:
\begin{enumerate}[label=(\roman*)]
    \item $\cG \subseteq \cF$ (i.e., $\forall A\subseteq\Omega:\ A\in\cG \Rightarrow A\in\cF$),
    \item $\cG$ is a $\sigma$-algebra on $\Omega$, i.e.,
    $\Omega\in\cG$, $A\in\cG \Rightarrow A^c\in\cG$, and
    $A_1, A_2, \dots\in\cG \Rightarrow \bigcup_{n=1}^\infty A_n\in\cG$.
\end{enumerate}
\end{definition}

\begin{definition}[Filtration]
\label{def:filtration}
Let $(\Omega, \cF, \bb P)$ be a probability space and let $T$ be a totally ordered index set
(e.g., $T=\N_0$ or $T=[0, \infty)$). A family of $\sigma$-algebras $\{\cF_t\}_{t\in T}$ is called a
\textbf{filtration} if:
\begin{enumerate}[label=(\roman*)]
    \item $\cF_t \subseteq \cF$ for every $t\in T$ (each $\cF_t$ is a sub-$\sigma$-algebra of $\cF$);
    \item for all $s, t\in T$ with $s\le t$, one has $\cF_s \subseteq \cF_t$.
\end{enumerate}
The inclusion $\cF_s\subseteq\cF_t$ is an inclusion of \emph{collections of sets}:
it means $\forall A\subseteq\Omega$, $A\in\cF_s \Rightarrow A\in\cF_t$.

\medskip
\noindent\textbf{Generated filtrations (common construction).}
Given random variables $\{X_u\}_{u\in T}$ on $(\Omega, \cF, \bb P)$, one can define the
\textbf{filtration generated by $(X_u)$} by
\begin{align*}
    \cF_t \eqdef \sigma\big(\{X_u:\; u\le t\}\big),
\end{align*}
i.e., $\cF_t$ is the smallest $\sigma$-algebra with respect to which all $X_u$ for $u\le t$
are measurable. In particular, if $s\le t$ then $\{X_u:u\le s\}\subseteq \{X_u:u\le t\}$, and therefore
$\cF_s=\sigma(\{X_u:u\le s\})\subseteq\sigma(\{X_u:u\le t\})=\cF_t$.

We interpret $\cF_t$ as the information available up to time $t$.
\end{definition}

\begin{definition}[Martingale-difference sequence]
\label{def:mds}
Let $(\Omega, \cF, \bb P)$ be a probability space, let $\{\cF_t\}_{t\ge 0}$ be a filtration ($\cF_{t-1}\subseteq \cF_t\subseteq \cF$), and let $(V, \norm{\cdot}_V)$ be a finite-dimensional normed space. A sequence of $V$-valued random variables $\{\xi_t\}_{t\ge 1}$ is called an \textbf{$\{\cF_t\}$-martingale-difference sequence} if for every $t\ge 1$:
\begin{enumerate}[label=(\roman*)]
    \item $\xi_t$ is $\cF_t$-measurable,
    \item $\Exp{\norm{\xi_t}_V} < \infty$ (integrability),
    \item $\Exp{\xi_t \mid \cF_{t-1}} = 0$ almost surely.
\end{enumerate}
In this case, the partial sums $S_t \eqdef \sum_{i=1}^t \xi_i$ form a $V$-valued martingale with respect to $\{\cF_t\}$.
\end{definition}

The next lemma converts $\kappa$-regularity into a second-moment bound for vector-valued martingale sums.
Intuitively, $\kappa$ controls how large $\E\|S_T\|^2$ can be relative to the accumulated $\sum_{t=1}^T \E\|\xi_t\|^2$.

\begin{lemma}[Second-moment bound from $\kappa_V$-regularity]
\label{lem:second_moment_kappa_regular}
Assume that the normed space $(V, \|\cdot\|_V)$ is $\kappa_V$-regular in the sense of Definition~\ref{def:kappa_smooth_regular}.
Let $\{\xi_t\}_{t=1}^T$ be a $V$-valued $\{\cF_t\}$-martingale-difference sequence such that
\begin{align*}
    \Exp{\|\xi_t\|_V^2} < \infty,
    \qquad t=1, \dots, T.
\end{align*}
Then, for
\begin{align*}
    S_T \eqdef \sum_{t=1}^T \xi_t,
\end{align*}
one has
\begin{align}
\label{eq:second_moment_kappa_regular}
    \Exp{\|S_T\|_V^2} \le \kappa_V \sum_{t=1}^T \Exp{\|\xi_t\|_V^2}.
\end{align}
\end{lemma}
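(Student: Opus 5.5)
The plan is to run the usual smoothness-based argument for martingale second moments, applied to the auxiliary norm $\|\cdot\|_{V,+}$ and not to $\|\cdot\|_V$ itself. By Definition~\ref{def:kappa_smooth_regular}(ii) there is a norm $\|\cdot\|_{V,+}$ that is $\kappa_{V,+}$-smooth and satisfies the compatibility chain \eqref{eq:reg_kappa_def_gen}. Write $\phi_+(u)\eqdef\frac12\|u\|_{V,+}^2$. This function is continuously differentiable, and for all $u,v\in V$ it satisfies
\[
\phi_+(u+v)\le\phi_+(u)+\langle\nabla\phi_+(u),v\rangle+\frac{\kappa_{V,+}}{2}\|v\|_{V,+}^2 .
\]

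First, apply this inequality with $u=S_{t-1}$ and $v=\xi_t$, then take conditional expectation given $\cF_{t-1}$. The vector $\nabla\phi_+(S_{t-1})$ is $\cF_{t-1}$-measurable, since it is a continuous function of the $\cF_{t-1}$-measurable $S_{t-1}$. Since $\Exp{\xi_t\mid\cF_{t-1}}=0$, the cross term has zero conditional mean. For the cross term to be integrable, I will use the bound $\|\nabla\phi_+(u)\|_{V,+,*}=\|u\|_{V,+}$, which follows from the smooth case of Lemma~\ref{lemma:sharp_flat_properties}(iv) applied to $\phi_+$. Together with Cauchy--Schwarz, this shows the cross term is integrable because all second moments are finite: $\|S_{t-1}\|_{V,+}$ is square-integrable by norm equivalence and the triangle inequality.

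Taking full expectations then gives
\[
\Exp{\phi_+(S_t)}\le\Exp{\phi_+(S_{t-1})}+\frac{\kappa_{V,+}}{2}\Exp{\|\xi_t\|_{V,+}^2}.
\]
Telescoping from $S_0=0$ yields $\Exp{\|S_T\|_{V,+}^2}\le\kappa_{V,+}\sum_{t=1}^T\Exp{\|\xi_t\|_{V,+}^2}$.

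Finally, convert both sides back to $\|\cdot\|_V$ using \eqref{eq:reg_kappa_def_gen}. On the left, $\|S_T\|_V^2\le\|S_T\|_{V,+}^2$. On the right, $\|\xi_t\|_{V,+}^2\le\frac{\kappa_V}{\kappa_{V,+}}\|\xi_t\|_V^2$. The factor $\kappa_{V,+}$ cancels, and what remains is exactly \eqref{eq:second_moment_kappa_regular}.

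The main obstacle is the integrability and measurability bookkeeping needed to drop the cross term. This requires the dual-norm bound on $\nabla\phi_+$ and the finiteness of second moments, and it will be checked along the way. The algebra itself is routine.
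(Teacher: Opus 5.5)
Your proposal is correct and follows the paper's proof essentially step for step. Both use the $\kappa_{V,+}$-smooth auxiliary norm inequality at $S_{t-1}$ with increment $\xi_t$, and both make the cross term integrable via $\|\nabla\phi_+(u)\|_{V,+,*}=\|u\|_{V,+}$ (Lemma~\ref{lemma:sharp_flat_properties}(iv)), Cauchy--Schwarz and square-integrability of $S_{t-1}$; the argument then telescopes from $S_0=0$ and converts back through \eqref{eq:reg_kappa_def_gen}, where $\kappa_{V,+}$ cancels.
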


\begin{proof}
To simplify notation throughout this proof, we write
\begin{align*}
    \|x\| \eqdef \|x\|_V,
    \qquad
    \|x\|_+ \eqdef \|x\|_{V, +},
    \qquad
    \kappa \eqdef \kappa_V,
    \qquad
    \kappa_+ \eqdef \kappa_{V, +}.
\end{align*}

By $\kappa$-regularity, there exist $\kappa_+\in[1, \kappa]$ and a $\kappa_+$-smooth auxiliary norm $\|\cdot\|_+$ on $V$ such that
\begin{align}
\label{eq:plus_equivalence}
    \|x\|^2 \le \|x\|_+^2 \le \frac{\kappa}{\kappa_+}\|x\|^2,
    \qquad \forall x\in V.
\end{align}
Define the auxiliary function
\begin{align*}
    \phi_+(x) \eqdef \frac12\|x\|_+^2.
\end{align*}
Since $(V, \|\cdot\|_+)$ is $\kappa_+$-smooth, for all $x, h\in V$ we have
\begin{align}
\label{eq:smooth_ineq_phi_plus}
    \phi_+(x+h)
    \le
    \phi_+(x) + \langle \nabla \phi_+(x), h\rangle + \frac{\kappa_+}{2}\|h\|_+^2.
\end{align}

For each $t=0, 1, \dots, T$, let
\begin{align*}
    S_t \eqdef \sum_{i=1}^t \xi_i,
    \qquad
    S_0 \eqdef 0.
\end{align*}
Applying \eqref{eq:smooth_ineq_phi_plus} with $x=S_{t-1}$ and $h=\xi_t$, we get
\begin{align*}
    \phi_+(S_t)
    \le
    \phi_+(S_{t-1})
    + \langle \nabla \phi_+(S_{t-1}), \xi_t\rangle
    + \frac{\kappa_+}{2}\|\xi_t\|_+^2.
\end{align*}
Before taking conditional expectations, we verify integrability.
By \eqref{eq:plus_equivalence}, $\Exp{\|\xi_i\|_+^2}<\infty$ for every $i$.
The triangle inequality and the scalar Cauchy-Schwarz inequality give
\begin{align*}
    \Exp{\|S_t\|_+^2}
    \le t\sum_{i=1}^t\Exp{\|\xi_i\|_+^2}
    <\infty,
    \qquad t=1, \dots, T,
\end{align*}
and $S_0=0$.
Let $\|\cdot\|_{+, *}$ denote the dual norm of $\|\cdot\|_+$.
By \Cref{lemma:sharp_flat_properties}(iv), applied to $\phi_+$,
\begin{align*}
    \|\nabla\phi_+(z)\|_{+, *}=\|z\|_+,
    \qquad z\in V.
\end{align*}
Consequently, the duality inequality and Cauchy-Schwarz yield
\begin{align*}
    \Exp{\left|\langle\nabla\phi_+(S_{t-1}), \xi_t\rangle\right|}
    &\le \Exp{\|S_{t-1}\|_+\|\xi_t\|_+} \\
    &\le \left(\Exp{\|S_{t-1}\|_+^2}\right)^{1/2}
           \left(\Exp{\|\xi_t\|_+^2}\right)^{1/2}
    <\infty.
\end{align*}
Thus all terms in the preceding smoothness inequality are integrable.
Since $\nabla\phi_+$ is continuous and $S_{t-1}$ is $\cF_{t-1}$-measurable, $\nabla\phi_+(S_{t-1})$ is also $\cF_{t-1}$-measurable.
We may therefore take conditional expectations and use $\Exp{\xi_t\mid\cF_{t-1}}=0$ a.s. to obtain
\begin{align*}
    \Exp{\langle \nabla \phi_+(S_{t-1}), \xi_t\rangle \mid \cF_{t-1}}
    &=
    \left\langle \nabla \phi_+(S_{t-1}), \Exp{\xi_t\mid \cF_{t-1}}\right\rangle \\
    &= 0
    \qquad \text{a.s.}
\end{align*}
Therefore,
\begin{align*}
    \Exp{\phi_+(S_t)\mid \cF_{t-1}}
    \le
    \phi_+(S_{t-1})
    + \frac{\kappa_+}{2}\Exp{\|\xi_t\|_+^2\mid \cF_{t-1}}.
\end{align*}
Taking full expectation and summing over $t=1, \dots, T$ yields
\begin{align}
\label{eq:telescoping_phi_plus}
    \Exp{\phi_+(S_T)}
    \le
    \frac{\kappa_+}{2}\sum_{t=1}^T \Exp{\|\xi_t\|_+^2}.
\end{align}
Since $\phi_+(x)=\frac12\|x\|_+^2$, \eqref{eq:telescoping_phi_plus} is equivalent to
\begin{align}
\label{eq:second_moment_plus}
    \Exp{\|S_T\|_+^2}
    \le
    \kappa_+\sum_{t=1}^T \Exp{\|\xi_t\|_+^2}.
\end{align}

Finally, using \eqref{eq:plus_equivalence} and then \eqref{eq:second_moment_plus}, we obtain
\begin{align*}
    \Exp{\|S_T\|^2}
    &\le
    \Exp{\|S_T\|_+^2} \\
    &\le
    \kappa_+\sum_{t=1}^T \Exp{\|\xi_t\|_+^2} \\
    &\le
    \kappa_+\sum_{t=1}^T \Exp{\frac{\kappa}{\kappa_+}\|\xi_t\|^2} \\
    &=
    \kappa \sum_{t=1}^T \Exp{\|\xi_t\|^2},
\end{align*}
which is exactly \eqref{eq:second_moment_kappa_regular}.
\end{proof}


\clearpage
\section{Sharp Operator for \texorpdfstring{$\ell_1$}{l1} and \texorpdfstring{$\ell_\infty$}{linf} Update Norms}\label{app:sharp_l1_linf}

In this section, we specialize the abstract definition of the sharp operator
$s^\sharp \in \partial \phi^*(s)$, where $\phi^*(s)\eqdef \frac12\snorm{s}^2$,
to the nonsmooth update norms $\onenorm{\cdot}$ and $\infnorm{\cdot}$ on $\R^d$.
Throughout, we identify $E=\R^d$ and $E^*=\R^d$ via the standard pairing
$\inner{s}{x}=\sum_{i=1}^d s_i x_i$, and we write $\{e_i\}_{i=1}^d$ for the canonical basis of $\R^d$.

\subsection{\texorpdfstring{$\ell_1$}{l1} Update Norm: Coordinate-Type Steps}\label{app:sharp_l1_coordinate}

Assume the update norm is $\norm{\cdot}=\onenorm{\cdot}$, so the induced dual norm is $\snorm{\cdot}=\infnorm{\cdot}$.
Then the squared dual-norm potential is $\phi^*(s)=\frac12\infnorm{s}^2$.
By \Cref{lemma:subdiff_squared_norm}, for every $s\in\R^d$,
\begin{align}\label{eq:subdiff_phi_linf_basic}
    \partial \phi^*(s) = \infnorm{s} \cdot \partial \infnorm{s} .
\end{align}

An explicit description of $\partial\infnorm{s}$ (and hence of $\partial\phi^*(s)$) follows from a standard result in convex analysis; for completeness, we state it below.

\begin{proposition}[Subdifferential of $\|\cdot\|_\infty$]\label{prop:subdiff_linf_explicit}
For $s\in\R^d$, define the active set
\begin{align*}
I(s)\eqdef \left\{ i\in[d] : |s_i|=\infnorm{s} \right\}.
\end{align*}
If $s=\mathbf 0$, then $\partial\infnorm{s}=\{x\in\R^d:\onenorm{x}\le 1\}$.
If $s\neq \mathbf 0$, then
\begin{align}\label{eq:subdiff_linf_explicit}
    \partial\infnorm{s} &= \operatorname{conv}\left\{\sign(s_i)\,e_i:\ i\in I(s)\right\} \nonumber \\
    &= \left\{ \sum_{i\in I(s)} \alpha_i \sign(s_i)\, e_i : \alpha_i\ge 0, \ \sum_{i\in I(s)}\alpha_i=1 \right\}.
\end{align}
\end{proposition}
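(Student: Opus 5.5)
We will derive both cases from \Cref{lem:subdifferential_of_the_norm}, applied with $E=\R^d$, $\norm{\cdot}=\onenorm{\cdot}$ and $\snorm{\cdot}=\infnorm{\cdot}$. By duality of $\ell_1$ and $\ell_\infty$ under the standard pairing, \eqref{eq:subdiff_norm_dual} gives $\partial\infnorm{\mathbf 0}=\{x:\onenorm{x}\le1\}$, which settles the case $s=\mathbf 0$ directly. For $s\neq\mathbf 0$ the same formula gives
\[
\partial\infnorm{s}=\{x\in\R^d:\ \onenorm{x}=1,\ \inner{s}{x}=\infnorm{s}\},
\]
so it remains to show that this set equals $\operatorname{conv}\{\sign(s_i)e_i: i\in I(s)\}$.

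For the inclusion ``$\supseteq$'', take $x=\sum_{i\in I(s)}\alpha_i\sign(s_i)e_i$ with $\alpha_i\ge0$ and $\sum_i\alpha_i=1$. Since the supports are disjoint, $\onenorm{x}=\sum_i\alpha_i=1$. Moreover, $\inner{s}{x}=\sum_{i\in I(s)}\alpha_i|s_i|=\infnorm{s}$. Hence $x$ lies in the right-hand set.

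For the inclusion ``$\subseteq$'', take $x$ with $\onenorm{x}=1$ and $\inner{s}{x}=\infnorm{s}$. Consider the chain
\[
\infnorm{s}=\sum_i s_ix_i\le\sum_i|s_i||x_i|\le\infnorm{s}\sum_i|x_i|=\infnorm{s}.
\]
Both inequalities must therefore be equalities. Equality in the second step, together with $\infnorm{s}>0$, forces $|x_i|(\infnorm{s}-|s_i|)=0$ for every $i$, so $x_i=0$ whenever $i\notin I(s)$. For $i\in I(s)$ we have $s_i\neq0$, and equality in the first step, i.e.\ $s_ix_i=|s_i||x_i|$, forces $x_i=\sign(s_i)|x_i|$. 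Setting $\alpha_i\eqdef|x_i|$ then gives weights with $\alpha_i\ge0$ and $\sum_{i\in I(s)}\alpha_i=\onenorm{x}=1$, and $x=\sum_{i\in I(s)}\alpha_i\sign(s_i)e_i$. This is exactly the convex-combination form, and that form is by definition the convex hull $\operatorname{conv}\{\sign(s_i)e_i: i\in I(s)\}$.

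The only step needing care is the equality analysis in the ``$\subseteq$'' direction. The chain must be split into termwise nonnegative slacks, namely $|s_i||x_i|-s_ix_i\ge0$ and $|x_i|(\infnorm{s}-|s_i|)\ge0$. Because their sum is zero, each slack vanishes individually, and this is what yields the support and sign constraints; everything else is routine.
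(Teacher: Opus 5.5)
Your proof is correct, and it takes a different route from the paper. The paper gives no argument of its own: it cites \citet[Example~3.52]{beck2017first}, where the formula comes from the max rule of subdifferential calculus. There one writes $\|s\|_\infty=\max_{i\in[d]}|s_i|$ as a pointwise maximum of convex functions and takes the convex hull of the subdifferentials of the active pieces. For $s\neq\mathbf 0$ these pieces are $\{\sign(s_i)e_i\}$, $i\in I(s)$. The case $s=\mathbf 0$ is the general fact that a norm's subdifferential at the origin is the dual unit ball.

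You instead reduce both cases to Lemma~\ref{lem:subdifferential_of_the_norm}, which the paper already proves. You then identify the set $\{x:\|x\|_1=1,\ \langle s,x\rangle=\|s\|_\infty\}$ by analysing the equality case of the $\ell_1$--$\ell_\infty$ H\"older inequality. Splitting the total slack into the termwise nonnegative pieces $|s_i||x_i|-s_ix_i$ and $|x_i|(\|s\|_\infty-|s_i|)$ correctly yields both the support and the sign constraints.

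What each route buys: yours is self-contained within the paper's own toolkit and fully elementary, and it shows directly why the support is confined to $I(s)$ and the signs are aligned. The max-rule route is shorter once the general calculus theorem is available, and it extends verbatim to other pointwise maxima of convex functions, such as other polyhedral norms.

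One detail you use only implicitly in the ``$\supseteq$'' step: for $i\in I(s)$ one has $|s_i|=\|s\|_\infty>0$, so $\sign(s_i)=\pm1$. This is what gives $\|x\|_1=\sum_i\alpha_i$.
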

\begin{proof}
See \citet[Example~3.52]{beck2017first}.
\end{proof}

\begin{corollary}[Sharp operator under $\|\cdot\|_1$]\label{cor:sharp_l1_coordinate}
Assume $\norm{\cdot}=\onenorm{\cdot}$ and $\snorm{\cdot}=\infnorm{\cdot}$.
For $s\in\R^d$ with active set $I(s)$ as above,
\begin{align}\label{eq:sharp_l1_coordinate_set}
    \partial\phi^*(s) = \infnorm{s}\,\partial\infnorm{s} = \infnorm{s}\cdot \operatorname{conv}\left\{\sign(s_i)\,e_i:\ i\in I(s)\right\}.
\end{align}
In particular, if $s\neq \mathbf 0$, then every selection $s^\sharp\in\partial\phi^*(s)$ satisfies:
\begin{enumerate}[label=(\roman*)]
    \item $s^\sharp_i=0$ for all $i\notin I(s)$ (support contained in the maximizers of $|s_i|$),
    \item $\onenorm{s^\sharp}=\infnorm{s}$,
    \item $s^\sharp_i\,s_i\ge 0$ for all $i$ (sign alignment on the active set).
\end{enumerate}
Moreover, any extreme-point selection has the coordinate form
\begin{align}\label{eq:sharp_l1_coordinate_extreme}
    s^\sharp=\infnorm{s}\,\sign(s_{i^\star})\,e_{i^\star}, \qquad i^\star\in I(s),
\end{align}
and yields a single-coordinate update when substituted into \eqref{eq:non-eu-one-step-final} (with $s=\gk$).
\end{corollary}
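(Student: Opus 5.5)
The statement is \Cref{cor:sharp_l1_coordinate}. My plan is to get it from two earlier results: \Cref{lemma:subdiff_squared_norm}, which gives $\partial\phi^*(s)=\snorm{s}\,\partial\snorm{s}$, and the explicit formula for $\partial\infnorm{s}$ in \Cref{prop:subdiff_linf_explicit}.

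\textbf{Set identity.} Specializing \Cref{lemma:subdiff_squared_norm} to $\snorm{\cdot}=\infnorm{\cdot}$ gives \eqref{eq:subdiff_phi_linf_basic}. For $s\neq\mathbf 0$, substituting \eqref{eq:subdiff_linf_explicit} yields \eqref{eq:sharp_l1_coordinate_set}. For $s=\mathbf 0$, both sides equal $\{0\}$: the left side by \Cref{lem:homogeneity_subdifferential}, and the right side because it is the scalar $0$ times a nonempty set. Here $\operatorname{conv}$ of an empty family does not arise, since only the first equality is needed when $s=\mathbf 0$.

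\textbf{Properties (i)--(iii).} Fix $s\neq\mathbf 0$ and any $s^\sharp\in\partial\phi^*(s)$. By \eqref{eq:sharp_l1_coordinate_set} we can write
\[
s^\sharp=\infnorm{s}\sum_{i\in I(s)}\alpha_i\sign(s_i)e_i
\]
with convex weights $\alpha_i$. Then:
\begin{itemize}
\item[(i)] Coordinates outside $I(s)$ vanish, because only the $e_i$ with $i\in I(s)$ appear in the sum.
\item[(ii)] Since $|\sign(s_i)|=1$ on $I(s)$ (there $|s_i|=\infnorm{s}>0$), we get $\onenorm{s^\sharp}=\infnorm{s}\sum_i\alpha_i=\infnorm{s}$. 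This also follows directly from \Cref{lemma:sharp_flat_properties}(ii).
\item[(iii)] On $I(s)$ we have $s^\sharp_i s_i=\infnorm{s}\alpha_i|s_i|\ge0$, and off $I(s)$ the product is $0$.
\end{itemize}

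\textbf{Extreme points and the update.} The set in \eqref{eq:sharp_l1_coordinate_set} is a positive multiple of the convex hull of finitely many points $\sign(s_i)e_i$, $i\in I(s)$. These points are distinct and pairwise affinely independent, since they are supported on distinct coordinates. So each one is an extreme point of the hull, and a convex hull of finitely many points has no other extreme points. Scaling by $\infnorm{s}>0$ preserves extreme points, which gives \eqref{eq:sharp_l1_coordinate_extreme}. Substituting this into \eqref{eq:non-eu-one-step-final} changes only coordinate $i^\star$ of $x^k$.

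\textbf{Main obstacle.} There is no real difficulty here. The only care needed is the extreme-point claim: I will justify that each $\sign(s_i)e_i$ cannot be written as a nontrivial convex combination of the other points. This holds because the other points have zero in coordinate $i$, and on $I(s)$ the sign entries are nonzero.
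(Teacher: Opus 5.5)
Your proposal is correct and follows the paper's argument. The set identity comes from \Cref{lemma:subdiff_squared_norm} together with \Cref{prop:subdiff_linf_explicit}, properties (i)--(iii) are read off the convex-hull representation, and the extreme points are identified as the linearly independent vertices $\|s\|_\infty\sign(s_i)e_i$, $i\in I(s)$. One small correction: the statement asserts both equalities for every $s$, including $s=\mathbf 0$, so your remark that only the first is needed there is inaccurate. It costs nothing, though: $I(\mathbf 0)=[d]$ and $\sign(0)=0$ give $\operatorname{conv}\{\sign(0)e_i\}=\{0\}$, so the right-hand expression is also $\{0\}$.
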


\begin{proof}
The identity \eqref{eq:sharp_l1_coordinate_set} follows from \eqref{eq:subdiff_phi_linf_basic} and \Cref{prop:subdiff_linf_explicit}.
Properties (i)--(iii) follow immediately from the convex-hull representation in \eqref{eq:sharp_l1_coordinate_set}.
Finally, the extreme points of the simplex $\{\alpha\ge 0:\sum_{i\in I(s)}\alpha_i=1\}$ correspond to placing all mass on one index $i^\star\in I(s)$,
which gives \eqref{eq:sharp_l1_coordinate_extreme}.
\end{proof}

\subsection{\texorpdfstring{$\ell_\infty$}{linf} Update Norm: Sign-Type Steps}\label{app:sharp_linf_sign}

Assume the update norm is $\norm{\cdot}=\infnorm{\cdot}$, so the induced dual norm is $\snorm{\cdot}=\onenorm{\cdot}$.
Then $\phi^*(s)=\frac12\onenorm{s}^2$ and \Cref{lemma:subdiff_squared_norm} yields
\begin{align}\label{eq:subdiff_phi_l1_basic}
    \partial\phi^*(s)=\onenorm{s}\cdot \partial\onenorm{s}.
\end{align}

Similarly, one can characterize $\partial\onenorm{s}$ explicitly.

\begin{proposition}[Subdifferential of $\|\cdot\|_1$]
\leavevmode\phantomsection 
\label{prop:subdiff_l1_explicit}
Let $s\in\R^d$. If $s=\mathbf 0$, then $\partial\onenorm{s}=\{x\in\R^d:\infnorm{x}\le 1\}$.
If $s\neq \mathbf 0$, then
\begin{align}\label{eq:subdiff_l1_explicit}
    \partial\onenorm{s} = \left\{x\in\R^d: x_i=\sign(s_i)\ \text{for all } i \text{ with } s_i\neq 0, \ \ |x_i|\le 1\ \text{for all } i \text{ with } s_i=0 \right\}.
\end{align}
Equivalently, $\partial\onenorm{s}$ consists of the componentwise sign vectors of $s$, with an arbitrary choice in $[-1, 1]$ on coordinates where $s_i=0$.
\end{proposition}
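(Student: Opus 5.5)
The plan is to argue directly from the definition of the subdifferential. I would first settle the case $s=\mathbf 0$. There $\partial\onenorm{\mathbf 0}$ is the unit ball of the dual norm of $\onenorm{\cdot}$, namely $\{x:\infnorm{x}\le1\}$, by \Cref{lem:subdifferential_of_the_norm} applied with primal norm $\onenorm{\cdot}$, whose dual is $\infnorm{\cdot}$. For $s\neq\mathbf 0$, the same lemma gives
\[
\partial\onenorm{s}=\{x:\infnorm{x}=1,\ \inner{s}{x}=\onenorm{s}\}.
\]
It therefore suffices to show that this set coincides with the right-hand side of \eqref{eq:subdiff_l1_explicit}.

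For the inclusion ``$\subseteq$'', take $x$ with $\infnorm{x}\le1$ and $\sum_i s_ix_i=\sum_i|s_i|$. Then $\sum_i(|s_i|-s_ix_i)=0$. Each term is nonnegative, since $s_ix_i\le|s_i||x_i|\le|s_i|$, so every term vanishes. For each $i$ with $s_i\neq0$, the equality $s_ix_i=|s_i|$ together with $|x_i|\le1$ forces $x_i=\sign(s_i)$. Coordinates with $s_i=0$ are constrained only by $|x_i|\le1$.

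For the inclusion ``$\supseteq$'', take $x$ of the stated form. Then $\infnorm{x}\le1$, with equality because $s\neq\mathbf 0$ guarantees at least one coordinate $x_i=\pm1$. Moreover $\inner{s}{x}=\sum_{s_i\neq0}|s_i|=\onenorm{s}$. The equivalent description in terms of sign vectors is just a restatement.

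The only point needing care is using the norm condition $\infnorm{x}=1$ versus $\le1$. This is harmless here, since the pairing equality already forces the active coordinates to have modulus one. Alternatively, the result can be cited from \citet[Example~3.41]{beck2017first} for separability of $\onenorm{\cdot}$, or proved coordinatewise via the sum rule $\partial\sum_i|s_i|=\prod_i\partial|s_i|$.
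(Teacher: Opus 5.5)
Your proof is correct, but it takes a different route from the paper. The paper gives no derivation of its own; it cites \citet[Example~3.41]{beck2017first}, where the formula comes from separability, $\|s\|_1=\sum_i|s_i|$, via the subdifferential sum rule and the scalar formula for $\partial|t|$. You mention this as your fallback at the end.

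Your main argument instead reduces everything to the paper's own \Cref{lem:subdifferential_of_the_norm} together with the duality $(\ell_1^d)^*=\ell_\infty^d$. You then close with the observation that $\sum_i(|s_i|-s_ix_i)=0$ with nonnegative terms forces $x_i=\sign(s_i)$ on the support of $s$. This mirrors how the paper treats the nuclear norm in \Cref{lem:subdiff_nuc_norm}: start from \eqref{eq:subdiff_norm_dual}, then force termwise equality in the pairing.

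What each approach buys:
\begin{itemize}
\item Your route keeps the appendix self-contained and needs no subdifferential calculus, only the dual-norm characterization the paper already proves.
\item The sum-rule route needs the Moreau--Rockafellar sum rule. That rule is harmless here, since every summand is a finite convex function on $\R^d$.
\item In exchange, the sum-rule route extends mechanically to any separable convex function, and it displays the product structure $\prod_i\partial|s_i|$ directly.
\end{itemize}
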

\begin{proof}
See \citet[Example~3.41]{beck2017first}.
\end{proof}

\begin{corollary}[Sharp operator under $\|\cdot\|_\infty$]\label{cor:sharp_linf_sign}
Assume $\norm{\cdot}=\infnorm{\cdot}$ and $\snorm{\cdot}=\onenorm{\cdot}$.
Then for any $s\in\R^d$,
\begin{align}\label{eq:sharp_linf_sign_set}
    \partial\phi^*(s)=\onenorm{s}\cdot\partial\onenorm{s}.
\end{align}
In particular, if $s\neq \mathbf 0$ then any selection $s^\sharp\in\partial\phi^*(s)$ satisfies
\begin{align}\label{eq:sharp_linf_sign_form}
    s^\sharp = \onenorm{s} \cdot v, \qquad v_i=\sign(s_i)\ \text{for } s_i\neq 0, \ \ |v_i|\le 1\ \text{for } s_i=0.
\end{align}
A canonical selection is $v=\sign(s)$, where $\sign$ is applied componentwise with $\sign(0)=0$. This yields the sign-type update
\begin{align*}
x^{k+1} = x^k - \gamma_k \onenorm{\gk}\,\sign(\gk),
\end{align*}
closely related to \algname{signSGD}-style directions~\citep{bernstein2018signsgd,sun2023momentum}.
\end{corollary}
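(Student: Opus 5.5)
The statement to prove is Corollary~\ref{cor:sharp_linf_sign}. It is a direct specialization of the general identity for the squared dual norm, so the plan is a short chain of already-established facts.

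\textbf{Step 1: the general identity.} By \Cref{lemma:subdiff_squared_norm}, applied with $\snorm{\cdot}=\onenorm{\cdot}$ (the dual of the $\ell_\infty$ update norm under the standard pairing), we get $\partial\phi^*(s)=\onenorm{s}\cdot\partial\onenorm{s}$ for every $s\in\R^d$. This is exactly \eqref{eq:sharp_linf_sign_set}, and it holds including at $s=0$, where both sides equal $\{0\}$.

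\textbf{Step 2: the form of every selection.} Take $s\neq\mathbf 0$ and insert the explicit description of $\partial\onenorm{s}$ from \Cref{prop:subdiff_l1_explicit}. Then every $s^\sharp\in\partial\phi^*(s)$ can be written as $\onenorm{s}\,v$ with $v\in\partial\onenorm{s}$. Concretely, $v_i=\sign(s_i)$ whenever $s_i\neq0$, and $|v_i|\le1$ whenever $s_i=0$. This gives \eqref{eq:sharp_linf_sign_form}.

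\textbf{Step 3: the canonical selection is admissible.} The vector $v=\sign(s)$, with the convention $\sign(0)=0$, satisfies these constraints, since $0\in[-1,1]$. Hence $\onenorm{s}\sign(s)$ belongs to $\partial\phi^*(s)$. At $s=0$ the same formula returns $0$, which is the unique element of $\partial\phi^*(0)$, so the selection is valid for all $s$.

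\textbf{Step 4: measurability and the update.} The map $s\mapsto\onenorm{s}\sign(s)$ is a product of a continuous function and a coordinatewise Borel function, so it is Borel measurable. It is therefore an admissible fixed sharp selection in the sense of \Cref{subsec:non_eu_update}. Substituting $s=\gk$ into \eqref{eq:non-eu-one-step-final} gives the stated sign-type update.

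\textbf{Possible obstacle.} The only delicate point is the treatment of zero coordinates. One must check that choosing $0$ there stays inside the subdifferential, so that the selection is a genuine sharp map and not merely a scaled sign vector. Step~3 settles this directly from \Cref{prop:subdiff_l1_explicit}.
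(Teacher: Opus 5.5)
Your proposal is correct and matches the paper's proof, which also combines the identity $\partial\phi^*(s)=\onenorm{s}\cdot\partial\onenorm{s}$ from \Cref{lemma:subdiff_squared_norm} with the explicit description of $\partial\onenorm{s}$ in \Cref{prop:subdiff_l1_explicit}. Your additional checks are valid details that the paper leaves implicit: the case $s=0$, the admissibility of $\sign(0)=0$, and the Borel measurability of $s\mapsto\onenorm{s}\sign(s)$.
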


\begin{proof}
This follows immediately from \eqref{eq:subdiff_phi_l1_basic} and \Cref{prop:subdiff_l1_explicit}.
\end{proof}

\clearpage
\section{Derivation of the Non-Euclidean \texorpdfstring{\algname[]{SGD}}{SGD} Update}
\label{app:non_euclidean_sgd_derivation}
In this section, we provide a mathematical justification for the non-Euclidean \algname{SGD} update rule used in the main text:
\begin{align}
    \label{eq:app_update_rule}
    \xkpo = \xk - \gamma_k \rb{\gk}^{\sharp}.
\end{align}
We show that the steepest-descent and Linear Minimization Oracle (LMO) formulations characterize the same set of admissible steps, from which the fixed sharp selection determines the update.

\subsection{Derivation from Steepest Descent}\label{app:steepest_derivation}
The steepest descent method generates the next iterate $\xkpo = \xk + h_k$ by minimizing a linearized model regularized by the squared norm. In minibatch \SGD, the linear term uses the stochastic gradient $\gk$, and the stepsize $\gamma_k$ sets the regularization weight $1/(2\gamma_k)$. The step $h_k$ is a solution to the following optimization problem:
\begin{align}
    \label{eq:app_steepest_descent_problem}
    h_k \in \argmin_{h \in E} \left\{ \inner{\gk}{h} + \frac{1}{2\gamma_k}\sqnorm{h} \right\},
\end{align}
where $\gamma_k > 0$ and $\norm{\cdot}$ is an arbitrary norm on the space $E$. The objective function in \eqref{eq:app_steepest_descent_problem} is convex, and its first-order optimality condition states that the zero vector must be in the subdifferential of the objective at $h_k$. Letting $\phi(h) \eqdef \frac{1}{2}\sqnorm{h}$, this condition is:
\begin{align*}
    0 &\in \gk + \frac{1}{\gamma_k} \partial\phi(h_k).
\end{align*}
Rearranging this expression, we get:
\begin{align*}
    -\gamma_k \gk \in \partial\phi(h_k).
\end{align*}
This is a condition on the subdifferential of the primal squared norm $\phi$. We can invert this relationship using the Fenchel-Moreau-Rockafellar theorem (Theorem \ref{thm:fenchel_moreau_rockafellar}), which states that $s \in \partial\phi(x) \Leftrightarrow x \in \partial\phi^*(s)$. Applying this gives:
\begin{align*}
    h_k \in \partial\phi^*(-\gamma_k \gk).
\end{align*}
Using the homogeneity of the subdifferential of the squared norm (Lemma \ref{lem:homogeneity_subdifferential}), which states that $\partial\phi^*(ts) = t\partial\phi^*(s)$ for all $t \in \R$, we can factor out the stepsize $\gamma_k$:
\begin{align*}
    h_k \in -\gamma_k \partial\phi^*(\gk).
\end{align*}
All preceding conditions are equivalent, so the full set of minimizers is $-\gamma_k\partial\phi^*(\gk)$. For the fixed sharp selection from the main text, choose $h_k=-\gamma_k(\gk)^\sharp$; this is a minimizer because $(\gk)^\sharp\in\partial\phi^*(\gk)$. Substituting this choice into $\xkpo=\xk+h_k$ gives the specified non-Euclidean \algname{SGD} update:
\begin{align*}
    \xkpo = \xk - \gamma_k (\gk)^{\sharp}.
\end{align*}

\subsection{Linear Minimization Oracle (LMO) Formulation}\label{subsec:lmo_form}
We now show that a Linear Minimization Oracle (LMO) gives the same set of admissible next iterates as the steepest-descent formulation. An LMO is an operator which, for a given set $\mathcal{C} \subset E$ and a vector $s \in E^*$, solves the linear problem:
\begin{align*}
    \lmo_{\mathcal{C}}(s) \eqdef \argmin_{x \in \mathcal{C}} \inner{s}{x}.
\end{align*}
Our derivation will rely on two key properties, which we first establish as lemmas.

\begin{lemma}[Characterization of the Sharp Operator via LMO]
\label{lem:app_sharp_as_lmo}
For any non-zero $s \in E^*$, the subdifferential $\partial\phi^*(s)$ can be expressed in terms of an LMO over the unit ball $\cB(0, 1) = \{x \in E \mid \norm{x} \le 1\}$ as follows:
\begin{align*}
    \partial\phi^*(s) = -\snorm{s} \cdot \lmo_{\cB(0, 1)}(s).
\end{align*}
\end{lemma}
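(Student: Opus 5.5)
The identity follows by combining the characterization of $\partial\phi^*(s)$ from \Cref{lemma:subdiff_squared_norm} (equivalently \eqref{eq:subdiff_phi_star_vs_eta_star}) with an explicit description of $\lmo_{\cB(0,1)}(s)$. For $s\neq0$, \Cref{lemma:subdiff_squared_norm} and \Cref{lem:subdifferential_of_the_norm} give
\begin{align*}
    \partial\phi^*(s)=\snorm{s}\cdot\{u\in E:\norm{u}=1,\ \inner{s}{u}=\snorm{s}\}.
\end{align*}
It therefore suffices to show
\begin{align*}
    \lmo_{\cB(0,1)}(s)=\{v\in E:\norm{v}=1,\ \inner{s}{v}=-\snorm{s}\}.
\end{align*}
Once this holds, the substitution $u=-v$ is a bijection between the two sets, and multiplying by $-\snorm{s}$ yields the claim.

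For the LMO description, I will first compute the optimal value. By definition of the dual norm and symmetry of the unit ball ($x\mapsto -x$ maps $\cB(0,1)$ onto itself),
\begin{align*}
    \min_{\norm{x}\le1}\inner{s}{x}=-\max_{\norm{x}\le1}\inner{s}{x}=-\snorm{s}.
\end{align*}
The minimum is attained because the ball is compact and the linear functional is continuous, as in the proof of \Cref{lemma:dual_norm_is_a_norm}. Hence $v$ lies in the LMO if and only if $\norm{v}\le1$ and $\inner{s}{v}=-\snorm{s}$.

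The only point needing care is upgrading $\norm{v}\le1$ to $\norm{v}=1$. By the duality inequality,
\begin{align*}
    \snorm{s}=|\inner{s}{v}|\le\snorm{s}\norm{v},
\end{align*}
and $\snorm{s}>0$ because $s\neq0$, so $\norm{v}\ge1$. This is where the hypothesis $s\neq0$ is used. It is genuinely necessary: at $s=0$ the LMO is the whole unit ball while $\partial\phi^*(0)=\{0\}$, although the right-hand side $-0\cdot\cB(0,1)=\{0\}$ would still agree there. Conversely, any $v$ with $\norm{v}=1$ and $\inner{s}{v}=-\snorm{s}$ is feasible and attains the minimum, which completes the set equality.
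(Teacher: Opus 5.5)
Your proof is correct and follows the paper's route. Both arguments combine $\partial\phi^*(s)=\snorm{s}\,\partial\snorm{s}$ with the description of $\partial\snorm{s}$ as the maximizers of $\inner{s}{\cdot}$ over the unit ball, then flip the sign using the symmetry of the ball. You are more careful than the paper in checking that these maximizers (equivalently, the LMO minimizers) automatically have unit norm when $s\neq0$; as you observe, the identity itself also holds at $s=0$, so the hypothesis is needed only for that intermediate description, not for the statement.
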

\begin{proof}
From \Cref{eq:subdiff_phi_star_vs_eta_star} and the variational characterization of the subdifferential of the norm from \Cref{lem:subdifferential_of_the_norm}, we have for any $s \neq 0$:
\begin{align*}
    \partial\phi^*(s) &= \snorm{s} \cdot \partial\snorm{s} \\
    &= \snorm{s} \cdot \argmax_{x \in E} \{\inner{s}{x} \mid \norm{x} \le 1\} \\
    &= -\snorm{s} \cdot \argmin_{x \in E} \{\inner{s}{x} \mid \norm{x} \le 1\} \\
    &\equiv -\snorm{s} \cdot \lmo_{\cB(0, 1)}(s).
\end{align*}
This completes the proof.
\end{proof}

\begin{lemma}[Translation Property of the LMO]
\label{lem:app_lmo_translation}
Let $\cB(y, t) = \{x \in E \mid \norm{x-y} \le t\}$ be a ball of radius $t>0$ centered at $y \in E$. The LMO over this ball can be expressed in terms of the LMO over the unit ball $\cB(0, 1)$ as:
\begin{align*}
    \lmo_{\cB(y, t)}(s) = y + t \cdot \lmo_{\cB(0, 1)}(s).
\end{align*}
\end{lemma}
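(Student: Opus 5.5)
The identity $\lmo_{\cB(y, t)}(s) = y + t\cdot\lmo_{\cB(0, 1)}(s)$ is an equality of sets. I will prove it with the affine change of variables $x = y + tu$, which maps $\cB(0, 1)$ bijectively onto $\cB(y, t)$ whenever $t>0$.

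First, I check the bijection. If $\norm{u}\le 1$, then $\norm{(y+tu)-y} = t\norm{u}\le t$. Conversely, for $x\in\cB(y, t)$, set $u = (x-y)/t$. Then $\norm{u} = \norm{x-y}/t \le 1$, using absolute homogeneity of the norm and $t>0$. So $u\mapsto y+tu$ is a bijection from $\cB(0, 1)$ onto $\cB(y, t)$ with inverse $x\mapsto (x-y)/t$.

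Second, I transport the objective. By linearity of the pairing, $\inner{s}{y+tu} = \inner{s}{y} + t\inner{s}{u}$. The term $\inner{s}{y}$ is constant, and $t>0$ is a positive factor, so this is a strictly increasing affine transformation of $u\mapsto\inner{s}{u}$. Hence $u^\star$ minimizes $\inner{s}{u}$ over $\cB(0, 1)$ if and only if $y+tu^\star$ minimizes $\inner{s}{x}$ over $\cB(y, t)$.

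Both inclusions now follow. If $x^\star\in\lmo_{\cB(y, t)}(s)$, then $u^\star = (x^\star-y)/t\in\lmo_{\cB(0, 1)}(s)$, so $x^\star\in y+t\cdot\lmo_{\cB(0, 1)}(s)$. The reverse direction is the same argument read backwards. Nonemptiness is not needed, since this is an identity of possibly empty sets. In any case, both sets are nonempty by compactness of the finite-dimensional unit ball and continuity of the linear functional.

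The only point that needs care is the sign of $t$. The hypothesis $t>0$ is exactly what keeps minimizers as minimizers; a negative scaling would swap $\argmin$ with $\argmax$. Beyond that there is no real obstacle, since the argument is a direct reparametrization.
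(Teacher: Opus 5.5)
Your proof is correct and is essentially the paper's argument. The paper does the same reparametrization in two stages: first the translation $x' = x - y$, then the scaling $x' = t x''$, using $t>0$ to drop the constant $\inner{s}{y}$ and the positive factor. You compose these into the single affine bijection $u \mapsto y + tu$, and you check explicitly that it maps $\cB(0, 1)$ onto $\cB(y, t)$, a step the paper leaves implicit.
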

\begin{proof}
The proof follows from a change of variables. The LMO problem is to find a minimizer $x^*$ for $\inner{s}{x}$ over the set $\cB(y, t)$.
\begin{align*}
    \lmo_{\cB(y, t)}(s) &= \argmin_{x \in E: \norm{x-y}\le t} \inner{s}{x} \\
    &= y + \argmin_{x' \in E: \norm{x'}\le t} \inner{s}{y+x'} \quad (\text{letting } x' = x-y) \\
    &= y + \argmin_{x' \in E: \norm{x'}\le t} \inner{s}{x'} \\
    &= y + t \cdot \argmin_{x'' \in E: \norm{x''}\le 1} \inner{s}{tx''} \quad (\text{letting } x' = tx'') \\
    &= y + t \cdot \argmin_{x'' \in E: \norm{x''}\le 1} \inner{s}{x''} \\
    &= y + t \cdot \lmo_{\cB(0, 1)}(s).
\end{align*}
This completes the derivation.
\end{proof}

For $\gk\neq0$, define $t_k\eqdef\gamma_k\snorm{\gk}>0$. Applying \Cref{lem:app_lmo_translation} and then \Cref{lem:app_sharp_as_lmo} gives the equality of solution sets:
\begin{align*}
    \lmo_{\cB(\xk, t_k)}(\gk)
    &= \xk+t_k\cdot\lmo_{\cB(0, 1)}(\gk) \\
    &= \xk-\gamma_k\partial\phi^*(\gk).
\end{align*}
The right-hand side is exactly the set of next iterates obtained from all steepest-descent minimizers. Thus the LMO formulation is
\begin{align}
    \xkpo \in \lmo_{\cB(\xk, t_k)}(\gk), \quad \text{ where } \quad t_k = \gamma_k \snorm{\gk}.
\end{align}
To reproduce the algorithm with the fixed sharp selection, choose the LMO minimizer $\xkpo=\xk-\gamma_k(\gk)^\sharp$. The equality of solution sets ensures that this choice is valid; an arbitrary different minimizer may produce a different admissible step. The radius of the ball, $t_k$, depends on the stepsize and the dual norm of the stochastic gradient.

\noindent
We note that if $\gk = 0$, then $t_k = 0$ and both formulations reduce to the identity update $\xkpo = \xk$: in this case the ball $\cB(\xk, 0)$ is the singleton $\{\xk\}$, and the LMO formulation still coincides with the steepest-descent solution.

\clearpage
\section{Derivation of the Orthogonalized Gradient Update}
\label{app:orth_derivation}

In this section, we derive the spectral-norm update \eqref{eq:non-eu-one-step-final} for the canonical sharp selection and characterize the full family of updates allowed by the LMO formulation.

\subsection{The Non-Euclidean Update with the Spectral Norm}

The non-Euclidean \algname{SGD} update is given by
\begin{align*}
    \bX^{k+1} = \bX^k - \gamma_k (\mathbf{G}^k)^{\sharp},
\end{align*}
where $(\mathbf{G}^k)^{\sharp} \in \partial\phi^*(\mathbf{G}^k)$. In our setting, the primal space is the space of matrices equipped with the spectral norm, $\norm{\cdot} = \opnorm{\cdot}$. The corresponding dual norm is the nuclear norm, $\snorm{\cdot} = \nucnorm{\cdot}$. The function $\phi^*$ is therefore the squared nuclear norm:
\begin{align*}
    \phi^*(\mathbf{G}) = \frac{1}{2}\nucnorm{\mathbf{G}}^2.
\end{align*}
To characterize the sharp operator $(\mathbf{G}^k)^{\sharp}$, we must find the subdifferential of $\phi^*$. By \Cref{lemma:subdiff_squared_norm}, we have $\partial\phi^*(\mathbf{G}) = \nucnorm{\mathbf{G}} \cdot \partial\nucnorm{\mathbf{G}}$. We therefore first characterize the subdifferential of the nuclear norm.


\begin{lemma}[Subdifferential of the Nuclear Norm]
\label{lem:subdiff_nuc_norm}
Let $\mathbf{G} \in \R^{m \times n}$ have thin singular value decomposition $\mathbf{G} = \mathbf{U}_r \Sigma_r \mathbf{V}_r^\top$, where $r = \mathrm{rank}(\mathbf{G})$, $\mathbf{U}_r \in \R^{m \times r}$, $\mathbf{V}_r \in \R^{n \times r}$, and $\Sigma_r = \mathrm{diag}(\sigma_1, \dots, \sigma_r) \in \R^{r \times r}$ with $\sigma_i > 0$. The subdifferential of the nuclear norm $\nucnorm{\cdot}$ at $\mathbf{G}$ is given by
\begin{align*}
    \partial\nucnorm{\mathbf{G}} = \left\{ \mathbf{U}_r \mathbf{V}_r^\top + \mathbf{W} \mid \mathbf{U}_r^\top\mathbf{W} = 0, \mathbf{W}\mathbf{V}_r=0, \text{ and } \opnorm{\mathbf{W}} \le 1 \right\}.
\end{align*}
A canonical and valid selection for an element of the subdifferential is the partial isometry $\mathbf{U}_r \mathbf{V}_r^\top$.
\end{lemma}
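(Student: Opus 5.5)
The plan is to obtain $\partial\nucnorm{\bG}$ from the general characterization of the subdifferential of a norm in \Cref{lem:subdifferential_of_the_norm}. We apply it on the space $\R^{m\times n}$ with the Frobenius pairing, where the nuclear norm plays the role of $\snorm{\cdot}$ and the spectral norm that of its dual. The first step is to record, or cite as standard, the duality $\nucnorm{\bG}=\max_{\opnorm{\bX}\le1}\inner{\bG}{\bX}_F$. Given this duality, the case $\bG=0$ is exactly the spectral unit ball, which agrees with the formula since then $r=0$ and $\mathbf{U}_r\mathbf{V}_r^\top=0$. For $\bG\neq0$, \Cref{lem:subdifferential_of_the_norm} reduces the claim to identifying
\[
\{\bX:\opnorm{\bX}=1,\ \operatorname{tr}(\bG^\top\bX)=\nucnorm{\bG}\}.
\]

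\emph{Inclusion ``$\supseteq$''.} Take $\bX=\mathbf{U}_r\mathbf{V}_r^\top+\mathbf{W}$ with $\mathbf{U}_r^\top\mathbf{W}=0$, $\mathbf{W}\mathbf{V}_r=0$, and $\opnorm{\mathbf{W}}\le1$. The trace identity follows from
\[
\operatorname{tr}(\mathbf{V}_r\Sigma_r\mathbf{U}_r^\top\mathbf{U}_r\mathbf{V}_r^\top)=\operatorname{tr}\Sigma_r,
\qquad
\operatorname{tr}(\mathbf{V}_r\Sigma_r\mathbf{U}_r^\top\mathbf{W})=0.
\]
For the norm, the two summands act on orthogonal row spaces and orthogonal column spaces. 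Their spectral norm is therefore the maximum of the two individual spectral norms, which equals $1$.

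\emph{Inclusion ``$\subseteq$'' (main step).} Let $\opnorm{\bX}\le1$ and $\sum_i\sigma_i\,u_i^\top\bX v_i=\sum_i\sigma_i$. Since $|u_i^\top\bX v_i|\le1$ and each $\sigma_i>0$, every term must satisfy $u_i^\top\bX v_i=1$. Combining this with $\|\bX v_i\|_2\le1$ and $\|u_i\|_2=1$, equality in Cauchy--Schwarz forces $\bX v_i=u_i$. The same argument applied to $\bX^\top$ gives $\bX^\top u_i=v_i$. Now set $\mathbf{W}\eqdef\bX-\mathbf{U}_r\mathbf{V}_r^\top$. Then $\mathbf{W}\mathbf{V}_r=\mathbf{U}_r-\mathbf{U}_r=0$, and similarly $\mathbf{U}_r^\top\mathbf{W}=0$. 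To bound $\opnorm{\mathbf{W}}$, note that $\mathbf{W}=(I-\mathbf{U}_r\mathbf{U}_r^\top)\bX(I-\mathbf{V}_r\mathbf{V}_r^\top)$, which is a product of $\bX$ with two orthogonal projections, so $\opnorm{\mathbf{W}}\le1$. The delicate part of this step is the equality-case reasoning that upgrades $u_i^\top\bX v_i=1$ to the exact relations $\bX v_i=u_i$ and $\bX^\top u_i=v_i$.

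Finally, the canonical selection corresponds to $\mathbf{W}=0$. Combined with \Cref{lemma:subdiff_squared_norm}, this gives $\bG^\sharp=\nucnorm{\bG}\,\mathbf{U}_r\mathbf{V}_r^\top$.
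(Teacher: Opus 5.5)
Your proposal is correct and follows essentially the same route as the paper. Both arguments reduce to \Cref{lem:subdifferential_of_the_norm}, force $u_i^\top\bX v_i=1$ termwise from $\sigma_i>0$, and upgrade this via the Cauchy--Schwarz equality case to $\bX\bV_r=\bU_r$ and $\bX^\top\bU_r=\bV_r$. Both then bound $\opnorm{\mathbf{W}}$ by writing $\mathbf{W}=(\mathbf I_m-\bU_r\bU_r^\top)\bX(\mathbf I_n-\bV_r\bV_r^\top)$.

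The only difference is in the reverse inclusion. You bound $\opnorm{\bU_r\bV_r^\top+\mathbf{W}}$ using the fact that summands with orthogonal row and column spaces have spectral norm equal to the maximum of their norms. The paper instead checks $\|\bS z\|_2\le\|z\|_2$ directly by splitting $z=\bV_r a+b$. The two arguments are equivalent.
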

\begin{proof}
We will derive this result by applying the characterization of the subdifferential of the dual norm from \Cref{lem:subdifferential_of_the_norm}.

We apply the general norm characterization with the following identification:
\begin{itemize}
    \item The primal norm is the spectral norm: $\norm{\cdot} \equiv \opnorm{\cdot}$.
    \item The corresponding dual norm is the nuclear norm: $\snorm{\cdot} \equiv \nucnorm{\cdot}$.
\end{itemize}
We consider the two cases from \eqref{eq:subdiff_norm_dual}.

\textbf{Case 1: $\bG = 0$}.
According to \eqref{eq:subdiff_norm_dual}, the subdifferential of the dual norm at the origin is the unit ball of the primal norm. In our case, this is the unit ball of the spectral norm:
\begin{align*}
    \partial\nucnorm{0} = \{\bS \in \R^{m \times n} \mid \opnorm{\bS} \le 1\}.
\end{align*}
This matches the general formula: when $\bG=0$, we have $r=0$, and the empty factors $\bU_0\in\R^{m\times0}$ and $\bV_0\in\R^{n\times0}$ satisfy $\bU_0\bV_0^\top=0$. The constraints $\bU_0^\top\bW=0$ and $\bW\bV_0=0$ are then vacuous, so the formula reduces to $\{\bW\in\R^{m\times n}\mid\opnorm{\bW}\le1\}$.

\textbf{Case 2: $\bG \neq 0$}.
According to \eqref{eq:subdiff_norm_dual}, an element $\bS \in \R^{m \times n}$ belongs to the subdifferential $\partial\nucnorm{\bG}$ if and only if it satisfies two conditions:
\begin{enumerate}
    \item $\inner{\bS}{\bG} = \nucnorm{\bG}$
    \item $\opnorm{\bS} = 1$
\end{enumerate}
Let $u_i$ and $v_i$ be the columns of the fixed thin SVD factors $\bU_r$ and $\bV_r$. Suppose first that $\bS\in\partial\nucnorm{\bG}$. The two conditions above give
\begin{align*}
    \sum_{i=1}^r \sigma_i u_i^\top\bS v_i
    = \inner{\bS}{\bG}_{\mathrm F}
    = \nucnorm{\bG}
    = \sum_{i=1}^r\sigma_i.
\end{align*}
Each $u_i^\top\bS v_i\le\|\bS v_i\|_2\le1$, and all $\sigma_i$ are positive, so $u_i^\top\bS v_i=1$ for every $i\le r$. Since $\opnorm{\bS^\top}=\opnorm{\bS}=1$, equality in the Euclidean Cauchy--Schwarz inequality, applied to $\bS v_i$ and to $\bS^\top u_i$, yields
\begin{align*}
    \bS\bV_r=\bU_r,
    \qquad
    \bS^\top\bU_r=\bV_r.
\end{align*}
Thus $\bW\eqdef\bS-\bU_r\bV_r^\top$ satisfies $\bU_r^\top\bW=0$ and $\bW\bV_r=0$. Moreover,
\begin{align*}
    \bW=(\mathbf I_m-\bU_r\bU_r^\top)\bS(\mathbf I_n-\bV_r\bV_r^\top),
\end{align*}
so $\opnorm{\bW}\le\opnorm{\bS}=1$, because both orthogonal projections have spectral norm at most one. This proves that every subgradient has the stated form.

Conversely, let $\bW$ satisfy the three conditions in the lemma and set $\bS=\bU_r\bV_r^\top+\bW$. For any $z\in\R^n$, write $z=\bV_r a+b$ with $\bV_r^\top b=0$. Then $\bS z=\bU_r a+\bW b$, and the two terms are orthogonal because $\bU_r^\top\bW=0$. Hence
\begin{align*}
    \|\bS z\|_2^2
    =\|a\|_2^2+\|\bW b\|_2^2
    \le\|a\|_2^2+\|b\|_2^2
    =\|z\|_2^2.
\end{align*}
Therefore $\opnorm{\bS}\le1$. Since $r\ge1$ and $\bS v_i=u_i$ for every $i\le r$, we also have $\opnorm{\bS}\ge1$ and
\begin{align*}
    \inner{\bS}{\bG}_{\mathrm F}
    =\sum_{i=1}^r\sigma_i u_i^\top\bS v_i
    =\sum_{i=1}^r\sigma_i
    =\nucnorm{\bG}.
\end{align*}
Both conditions from \eqref{eq:subdiff_norm_dual} hold, so $\bS\in\partial\nucnorm{\bG}$. This proves the reverse inclusion.

Choosing $\bW=0$ yields the canonical element $\bS=\bU_r\bV_r^\top$.
\end{proof}

With this result, we can characterize the sharp operator at iteration $k$. Let $\mathbf{G}^k$ have thin singular value decomposition
\[
    \mathbf{G}^k = \mathbf{U}_r^k \Sigma_r^k (\mathbf{V}_r^k)^\top,
\]
where $r = \mathrm{rank}(\mathbf{G}^k)$. By Lemma~\ref{lem:subdiff_nuc_norm}, a canonical choice of subgradient of the nuclear norm is the partial isometry
\[
    \mathbf{O}^k \eqdef \mathbf{U}_r^k (\mathbf{V}_r^k)^\top \in \partial\nucnorm{\mathbf{G}^k}.
\]
Using the chain rule for subdifferentials, we have
\begin{align*}
    (\mathbf{G}^k)^{\sharp} \in \partial\phi^*(\mathbf{G}^k)
    = \nucnorm{\mathbf{G}^k} \cdot \partial\nucnorm{\mathbf{G}^k},
\end{align*}
and we select
\begin{align*}
    (\mathbf{G}^k)^{\sharp} \eqdef \nucnorm{\mathbf{G}^k} \cdot \mathbf{O}^k.
\end{align*}
Substituting this back into the \algname{SGD} update rule gives
\begin{align*}
    \bX^{k+1} = \bX^k - \gamma_k \nucnorm{\mathbf{G}^k} \mathbf{O}^k.
\end{align*}
For $\mathbf{G}^k\neq 0$, the direction $\mathbf{O}^k$ has spectral norm one. If we define a state-dependent stepsize $\eta_k \eqdef \gamma_k \nucnorm{\mathbf{G}^k}$, the update becomes
\begin{align*}
    \bX^{k+1} = \bX^k - \eta_k \mathbf{O}^k.
\end{align*}
This is the canonical orthogonalized update used in \Cref{subsec:non_eu_update}.

\subsection{Equivalence of Orthogonalization Definitions}\label{subseq:equiv_orth}

We now prove that the partial isometry $\mathbf{U}_r\mathbf{V}_r^\top$ obtained from the thin SVD agrees with the explicit formula given in \eqref{eq:orth_def-final}.

\begin{lemma}
\label{lem:orth_partial_isometry}
Let $\mathbf{G} \in \R^{m \times n}$ have thin singular value decomposition $\mathbf{G} = \mathbf{U}_r \Sigma_r \mathbf{V}_r^\top$, where $r = \mathrm{rank}(\mathbf{G})$, $\mathbf{U}_r \in \R^{m \times r}$, $\mathbf{V}_r \in \R^{n \times r}$, and $\Sigma_r = \mathrm{diag}(\sigma_1, \dots, \sigma_r) \in \R^{r \times r}$ with $\sigma_i > 0$. Then the orthogonalization function yields the partial isometry:
\begin{align*}
    \operatorname{orth}(\mathbf{G}) = (\mathbf{G}\mathbf{G}^\top)^{\dagger/2} \mathbf{G} = \mathbf{U}_r \mathbf{V}_r^\top.
\end{align*}
\end{lemma}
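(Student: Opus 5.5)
The plan is a direct computation from the thin SVD. Start with $\mathbf{G}\mathbf{G}^\top = \mathbf{U}_r \Sigma_r \mathbf{V}_r^\top \mathbf{V}_r \Sigma_r \mathbf{U}_r^\top = \mathbf{U}_r \Sigma_r^2 \mathbf{U}_r^\top$, using $\mathbf{V}_r^\top\mathbf{V}_r = \mathbf{I}_r$.

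Next, identify the Moore--Penrose pseudoinverse. I claim
\begin{align*}
(\mathbf{G}\mathbf{G}^\top)^\dagger = \mathbf{U}_r \Sigma_r^{-2} \mathbf{U}_r^\top .
\end{align*}
To verify this, I will check the four Penrose conditions, using $\mathbf{U}_r^\top\mathbf{U}_r=\mathbf{I}_r$:
\begin{itemize}
\item $M X M = M$;
\item $X M X = X$;
\item $MX = \mathbf{U}_r\mathbf{U}_r^\top$ and $XM = \mathbf{U}_r\mathbf{U}_r^\top$, both symmetric.
\end{itemize}
Here $M = \mathbf{G}\mathbf{G}^\top$ and $X$ is the candidate.

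Then I take the positive semidefinite square root of this PSD matrix: $(\mathbf{G}\mathbf{G}^\top)^{\dagger/2} = \mathbf{U}_r \Sigma_r^{-1} \mathbf{U}_r^\top$. This candidate is symmetric PSD and squares to $\mathbf{U}_r\Sigma_r^{-2}\mathbf{U}_r^\top$. Uniqueness of the PSD square root then gives the identification.

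Finally, multiply:
\begin{align*}
(\mathbf{G}\mathbf{G}^\top)^{\dagger/2}\mathbf{G} = \mathbf{U}_r\Sigma_r^{-1}\mathbf{U}_r^\top\mathbf{U}_r\Sigma_r\mathbf{V}_r^\top = \mathbf{U}_r\mathbf{V}_r^\top .
\end{align*}

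The case $\mathbf{G}=0$ (so $r=0$) is handled separately. There both sides are zero: $0^\dagger=0$, and the empty factors give $\mathbf{U}_0\mathbf{V}_0^\top=0$, matching the convention $\orth(0)=0$.

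The only mildly delicate step is justifying the pseudoinverse and square-root identifications, specifically that the thin (not full) SVD factors suffice. The Penrose conditions and PSD-root uniqueness handle this cleanly, with no reliance on a full eigendecomposition.
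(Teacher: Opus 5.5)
Your proposal is correct and follows the paper's proof essentially step for step: the same thin-SVD computation of $\mathbf{G}\mathbf{G}^\top$, its pseudoinverse $\mathbf{U}_r\Sigma_r^{-2}\mathbf{U}_r^\top$, the positive semidefinite root $\mathbf{U}_r\Sigma_r^{-1}\mathbf{U}_r^\top$, the final multiplication, and a separate treatment of $\mathbf{G}=0$. The only difference is that you explicitly check the four Penrose conditions and invoke uniqueness of the PSD square root, two identifications the paper states without verification.
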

\begin{proof}
If $\mathbf{G}=0$, both sides are zero, with the empty product $\mathbf{U}_0\mathbf{V}_0^\top$ interpreted as the zero matrix. Assume henceforth that $\mathbf{G}\neq 0$.
Using the thin SVD, we have
\begin{align*}
    \mathbf{G}\mathbf{G}^\top
    &= (\mathbf{U}_r \Sigma_r \mathbf{V}_r^\top)(\mathbf{U}_r \Sigma_r \mathbf{V}_r^\top)^\top \\
    &= (\mathbf{U}_r \Sigma_r \mathbf{V}_r^\top)(\mathbf{V}_r \Sigma_r^\top \mathbf{U}_r^\top)
     = \mathbf{U}_r \Sigma_r^2 \mathbf{U}_r^\top,
\end{align*}
where $\Sigma_r^2 = \mathrm{diag}(\sigma_1^2, \dots, \sigma_r^2)$. The Moore--Penrose pseudoinverse is then
\begin{align*}
    (\mathbf{G}\mathbf{G}^\top)^\dagger
    = \mathbf{U}_r \Sigma_r^{-2} \mathbf{U}_r^\top,
\end{align*}
where $\Sigma_r^{-2} = \mathrm{diag}(\sigma_1^{-2}, \dots, \sigma_r^{-2})$. Since $(\mathbf{G}\mathbf{G}^\top)^\dagger$ is positive semidefinite, its unique positive semidefinite square root is
\begin{align*}
    (\mathbf{G}\mathbf{G}^\top)^{\dagger/2}
    = \mathbf{U}_r \Sigma_r^{-1} \mathbf{U}_r^\top,
\end{align*}
with $\Sigma_r^{-1} = \mathrm{diag}(\sigma_1^{-1}, \dots, \sigma_r^{-1})$.

Multiplying by $\mathbf{G}$ gives
\begin{align*}
    \operatorname{orth}(\mathbf{G})
    &= (\mathbf{G}\mathbf{G}^\top)^{\dagger/2} \mathbf{G} \\
    &= \mathbf{U}_r \Sigma_r^{-1} \mathbf{U}_r^\top (\mathbf{U}_r \Sigma_r \mathbf{V}_r^\top) \\
    &= \mathbf{U}_r \Sigma_r^{-1} (\mathbf{U}_r^\top \mathbf{U}_r) \Sigma_r \mathbf{V}_r^\top \\
    &= \mathbf{U}_r (\Sigma_r^{-1} \Sigma_r) \mathbf{V}_r^\top
     = \mathbf{U}_r \mathbf{V}_r^\top,
\end{align*}
since $\Sigma_r^{-1} \Sigma_r = \mathbf{I}_r$. This proves the claim.
\end{proof}

\subsection{LMO and the Canonical Orthogonalized Update for the Spectral Norm}

We characterize all solutions of the spectral-norm LMO and recover the canonical orthogonalized update by selecting the negative partial isometry.

The LMO formulation of the update is:
\begin{align*}
    \bX^{k+1} \in \lmo_{\cB(\bX^k, t_k)}(\bG^k),
\end{align*}
where the geometry is defined by the spectral norm, $\norm{\cdot} = \opnorm{\cdot}$. Consequently, the dual norm is the nuclear norm, $\snorm{\cdot} = \nucnorm{\cdot}$, and the radius of the ball is $t_k = \gamma_k \nucnorm{\bG^k}$ with $\gamma_k>0$. We first consider $\bG^k\neq 0$, so that $t_k>0$. The LMO problem is to solve:
\begin{align*}
    \bX^{k+1} \in \argmin_{\bX \in \R^{m \times n}} \left\{ \inner{\bG^k}{\bX}_{\mathrm{F}} \quad \text{s.t.} \quad \opnorm{\bX - \bX^k} \le t_k \right\}.
\end{align*}
Using the translation property of the LMO (\Cref{lem:app_lmo_translation}), we can simplify this problem:
\begin{align*}
    \bX^{k+1} \in \bX^k + t_k \cdot \lmo_{B(0, 1)}(\bG^k),
\end{align*}
where $B(0, 1)$ is the unit ball of the spectral norm, i.e., $B(0, 1) = \{\bX \in \R^{m \times n} \mid \opnorm{\bX} \le 1\}$. The following lemma describes the full set of minimizers over this ball.

\begin{lemma}[LMO over the Spectral Norm Ball]
\label{lem:lmo_spectral_norm}
Let $\bG \in \R^{m \times n}$ be non-zero with thin singular value decomposition $\bG = \bU_r\Sigma_r\bV_r^\top$, where $r=\operatorname{rank}(\bG)$ and $\Sigma_r=\operatorname{diag}(\sigma_1, \dots, \sigma_r)$ has positive diagonal entries. Then
\begin{align*}
    \lmo_{B(0, 1)}(\bG)
    &= \argmin_{\opnorm{\bX} \le 1} \inner{\bG}{\bX}_{\mathrm{F}} \\
    &= \left\{-\bU_r\bV_r^\top+\bW :
        \bU_r^\top\bW=0, \quad \bW\bV_r=0, \quad \opnorm{\bW}\le 1\right\},
\end{align*}
where $\bW\in\R^{m\times n}$. In particular, $-\bU_r\bV_r^\top=-\orth(\bG)$ is a canonical minimizer. The minimizer is unique if and only if $r=\min\{m, n\}$. For $\bG=0$, the LMO solution set is the entire ball $B(0, 1)$.
\end{lemma}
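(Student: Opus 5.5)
The proof reduces the LMO over the spectral ball to the nuclear-norm subdifferential already characterized in \Cref{lem:subdiff_nuc_norm}. For $\bG\neq0$, \Cref{lem:subdifferential_of_the_norm} (the $s\neq0$ case of \eqref{eq:subdiff_norm_dual}), with primal norm $\opnorm{\cdot}$ and dual norm $\nucnorm{\cdot}$, identifies $\partial\nucnorm{\bG}$ with the set of maximizers of $\inner{\bG}{\bX}_{\mathrm F}$ over $\opnorm{\bX}\le1$. This identification uses that the maximum value equals $\nucnorm{\bG}$ and is attained only on the spectral unit sphere. Hence
\begin{align*}
\lmo_{B(0,1)}(\bG)=\argmin_{\opnorm{\bX}\le1}\inner{\bG}{\bX}_{\mathrm F}=-\argmax_{\opnorm{\bX}\le1}\inner{\bG}{\bX}_{\mathrm F}=-\partial\nucnorm{\bG},
\end{align*}
which is exactly the computation in \Cref{lem:app_sharp_as_lmo} up to the positive factor $\snorm{s}$.

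Substituting the description from \Cref{lem:subdiff_nuc_norm} gives $\{-\bU_r\bV_r^\top-\bW\}$ with $\bW$ ranging over the admissible set. That set is symmetric under $\bW\mapsto-\bW$, since all three constraints are invariant, so it can be rewritten as $\{-\bU_r\bV_r^\top+\bW\}$. Choosing $\bW=0$ gives the canonical minimizer, and \Cref{lem:orth_partial_isometry} identifies $\bU_r\bV_r^\top$ with $\orth(\bG)$.

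For uniqueness, the solution set is a singleton if and only if the only admissible $\bW$ is $0$. The constraints $\bU_r^\top\bW=0$ and $\bW\bV_r=0$ say $\bW=P_U^\perp\bW P_V^\perp$, where $P_U^\perp=\mathbf I_m-\bU_r\bU_r^\top$ and $P_V^\perp=\mathbf I_n-\bV_r\bV_r^\top$. If $r=\min\{m,n\}$, one of these projections vanishes, because the corresponding complement has dimension $m-r=0$ or $n-r=0$. Then $\bW=0$ is forced and the minimizer is unique. If $r<\min\{m,n\}$, pick unit vectors $u\perp\operatorname{range}(\bU_r)$ in $\R^m$ and $v\perp\operatorname{range}(\bV_r)$ in $\R^n$; both exist because $r<m$ and $r<n$. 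Then $\bW=uv^\top$ is admissible with $\opnorm{\bW}=1$ and $\bW\neq0$, so there is a second minimizer.

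For $\bG=0$ the objective is identically zero, so every point of $B(0,1)$ is a minimizer. The only step needing care is the equality $\argmax=\partial\nucnorm{\bG}$: points with $\opnorm{\bX}<1$ cannot attain $\nucnorm{\bG}>0$, since by duality $\inner{\bG}{\bX}_{\mathrm F}\le\opnorm{\bX}\nucnorm{\bG}<\nucnorm{\bG}$. So the norm-one condition in \eqref{eq:subdiff_norm_dual} matches the maximizer set exactly.
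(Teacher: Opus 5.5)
Your proof is correct, but it takes a different route from the paper's. The paper proves the lemma from scratch. It bounds $\inner{\bG}{\bX}_{\mathrm F}=\sum_{i=1}^r\sigma_i u_i^\top\bX v_i\ge-\nucnorm{\bG}$ on $\opnorm{\bX}\le1$. It then uses equality in Cauchy--Schwarz to force $\bX\bV_r=-\bU_r$ and $\bX^\top\bU_r=-\bV_r$ for every minimizer, and sets $\bW=(\mathbf I_m-\bU_r\bU_r^\top)\bX(\mathbf I_n-\bV_r\bV_r^\top)$. For the converse it checks $\opnorm{-\bU_r\bV_r^\top+\bW}\le1$ via the orthogonal splitting $z=\bV_r a+b$. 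In effect it repeats, with signs flipped, the argument already used to prove \Cref{lem:subdiff_nuc_norm}.

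You instead reduce to \Cref{lem:subdiff_nuc_norm} through the identity $\lmo_{B(0,1)}(\bG)=-\partial\nucnorm{\bG}$ for $\bG\neq0$, which is \Cref{lem:app_sharp_as_lmo} with the factor $\nucnorm{\bG}$ removed. You then absorb the sign using the invariance of the admissible set under $\bW\mapsto-\bW$.

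Your version is shorter and avoids duplicating the Cauchy--Schwarz computation. It also makes explicit the link between the spectral LMO and the sharp operator, which the paper invokes only after the lemma. The paper's version is self-contained and does not need the identification of $\nucnorm{\cdot}$ as the dual of $\opnorm{\cdot}$ under the Frobenius pairing, because it computes the optimal value $-\sum_i\sigma_i$ directly. Your route inherits that identification through \Cref{lem:subdifferential_of_the_norm}.

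Your uniqueness argument and the case $\bG=0$ coincide with the paper's.
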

\begin{proof}
Let $u_i$ and $v_i$ be the columns of $\bU_r$ and $\bV_r$. For every feasible $\bX$, the bound $\opnorm{\bX}\le 1$ gives $u_i^\top\bX v_i\ge -1$. Hence
\begin{align*}
    \inner{\bG}{\bX}_{\mathrm{F}}
    = \sum_{i=1}^r \sigma_i u_i^\top\bX v_i
    \ge -\sum_{i=1}^r\sigma_i
    = -\nucnorm{\bG}.
\end{align*}
The matrix $\bX_0=-\bU_r\bV_r^\top$ is feasible and attains this bound.

If $\bX$ is any minimizer, positivity of all $\sigma_i$ implies $u_i^\top\bX v_i=-1$ for every $i\le r$. Since $u_i$ and $v_i$ are unit vectors and $\opnorm{\bX}=\opnorm{\bX^\top}\le 1$, equality in the Euclidean Cauchy--Schwarz inequality gives
\begin{align*}
    \bX\bV_r=-\bU_r,
    \qquad
    \bX^\top\bU_r=-\bV_r.
\end{align*}
Therefore $\bW\eqdef\bX+\bU_r\bV_r^\top$ satisfies $\bU_r^\top\bW=0$ and $\bW\bV_r=0$. Moreover,
\begin{align*}
    \bW=(\mathbf I_m-\bU_r\bU_r^\top)\bX(\mathbf I_n-\bV_r\bV_r^\top),
\end{align*}
so $\opnorm{\bW}\le\opnorm{\bX}\le1$ because both orthogonal projections have spectral norm at most one.

Conversely, let $\bW$ satisfy the three stated conditions and set $\bX=-\bU_r\bV_r^\top+\bW$. For any $z=\bV_r a+b$ with $\bV_r^\top b=0$, the vectors $\bU_r a$ and $\bW b$ are orthogonal, so
\begin{align*}
    \|\bX z\|_2^2
    =\|-\bU_r a+\bW b\|_2^2
    =\|a\|_2^2+\|\bW b\|_2^2
    \le\|a\|_2^2+\|b\|_2^2
    =\|z\|_2^2.
\end{align*}
Thus $\opnorm{\bX}\le1$, and $\bX v_i=-u_i$ implies $\inner{\bG}{\bX}_{\mathrm F}=-\nucnorm{\bG}$. This proves the full characterization. The identity $\bU_r\bV_r^\top=\orth(\bG)$ follows from \Cref{lem:orth_partial_isometry}.

If $r=\min\{m, n\}$, one of the two orthogonal complements is trivial, forcing $\bW=0$. If $r<\min\{m, n\}$, choose unit vectors $u$ and $v$ with $\bU_r^\top u=0$ and $\bV_r^\top v=0$. Then $\bW=uv^\top\neq0$ is admissible, so the minimizer is not unique. Finally, if $\bG=0$, the objective is identically zero on $B(0, 1)$.
\end{proof}

For $\bG^k\neq0$, \Cref{lem:lmo_spectral_norm} shows that the general LMO update has the form
\begin{align*}
    \bX^{k+1}=\bX^k-t_k\orth(\bG^k)+t_k\bW^k,
\end{align*}
where $r=\operatorname{rank}(\bG^k)$ and $\bW^k$ satisfies
\begin{align*}
    (\bU_r^k)^\top\bW^k=0,
    \qquad \bW^k\bV_r^k=0,
    \qquad \opnorm{\bW^k}\le1.
\end{align*}
By \Cref{lem:app_sharp_as_lmo}, this family is exactly the family of sharp updates, with corresponding selections
\begin{align*}
    (\bG^k)^\sharp=\nucnorm{\bG^k}\bigl(\orth(\bG^k)-\bW^k\bigr).
\end{align*}
Selecting $\bW^k=0$ gives the canonical update
\begin{align*}
    \bX^{k+1}
    &=\bX^k-t_k\orth(\bG^k) \\
    &=\bX^k-\gamma_k\nucnorm{\bG^k}\orth(\bG^k),
\end{align*}
which agrees with the canonical sharp selection used in the steepest-descent derivation and in \eqref{eq:orth_update_main-final}. With the state-dependent stepsize $\eta_k\eqdef\gamma_k\nucnorm{\bG^k}$ and direction $\bO^k\eqdef\orth(\bG^k)$, it takes the form $\bX^{k+1}=\bX^k-\eta_k\bO^k$. Other admissible LMO selections can produce different updates when $\operatorname{rank}(\bG^k)<\min\{m, n\}$.

\noindent
When $\bG^k=0$, we have $t_k=0$, $\partial\phi^*(0)=\{0\}$, and $\orth(0)=0$ by \eqref{eq:orth_def-final}. The translated ball is then $\cB(\bX^k, 0)=\{\bX^k\}$, so both the sharp and LMO formulations reduce to the identity update $\bX^{k+1}=\bX^k$.

\clearpage
\section{Regularity vs. Norm Equivalence: Why Not Just Use Euclidean Smoothness?}
\label{sec:regularity_vs_norm_equivalence}

To connect non-Euclidean geometries with the standard Euclidean setting, existing analyses
(e.g.,~\citep{kovalev2025understanding,gruntkowska2025error})
often proceed via \emph{norm equivalence}. Concretely, when the geometry is expressed in terms of
a dual norm $\snorm{\cdot}$, one compares it to a Euclidean reference norm (Frobenius for matrices,
$\ell_2$ for vectors). In the matrix case, there exist constants $\alpha, \beta>0$ (typically
dimension- and/or rank-dependent) such that for all $X\in\R^{m\times n}$,
\begin{align}\label{eq:norm_equiv_lit}
    \alpha\snorm{X} \le \fnorm{X} \le \beta\snorm{X}.
\end{align}
Analogously, in the vector case there exist $\alpha, \beta>0$ such that for all $x\in\R^{d}$,
\begin{align}\label{eq:norm_equiv_vec}
    \alpha\snorm{x} \le \norm{x}_2 \le \beta\snorm{x}.
\end{align}
These inequalities allow one to translate smoothness and noise assumptions stated in Euclidean/Frobenius norms
into the target geometry. However, since $\alpha$ and $\beta$ typically scale with the ambient dimension (or matrix
dimension/rank), such conversions can introduce large (often polynomial) inflation factors and thus obscure the
true benefits of the chosen geometry.

For an $E^*$-valued noise vector $\zeta$ with finite Euclidean second moment, only the left inequality in \eqref{eq:norm_equiv_vec} is needed to obtain
$\Exp{\sqsnorm{\zeta}}\le\alpha^{-2}\Exp{\norm{\zeta}_2^2}$.
Thus the noise second-moment conversion uses $\alpha^{-2}$; the Frobenius comparison in \eqref{eq:norm_equiv_lit} gives the same factor for matrices.

\medskip
\noindent
The $\kappa$-regularity framework allows an auxiliary norm that need not be Euclidean. As in \Cref{def:kappa_smooth_regular}, this norm is $\kappa_{V, +}$-smooth, meaning that its squared-norm function $\frac12\|\cdot\|_{V, +}^2$ is continuously differentiable and satisfies the stated smoothness inequality. It also satisfies the comparison \eqref{eq:reg_kappa_def_gen}:
\begin{align*}
    \|u\|_V^2
    \le \|u\|_{V, +}^2
    \le C_V\|u\|_V^2,
    \qquad
    C_V=\frac{\kappa_V}{\kappa_{V, +}},
    \qquad u\in V,
\end{align*}
where $1\le\kappa_{V, +}\le\kappa_V$.
Smoothness supports the analysis in the auxiliary norm, while the comparison transfers the resulting bounds to the original norm. Choosing the auxiliary norm therefore requires balancing its smoothness parameter $\kappa_{V, +}$ against the factor $C_V$ comparing the squared norms. To obtain a small regularity parameter, we minimize their product $\kappa_V=\kappa_{V, +}C_V$, rather than either factor alone. The two propositions below illustrate this choice for $\ell_p$ and Schatten-$p$ spaces.

\paragraph{Which space carries the norm.}
As in \Cref{def:kappa_smooth_regular}, \Cref{example:1}, \Cref{example:2} and \Cref{example:3}, the results of this section are
stated for a \emph{generic} finite-dimensional normed space $(V, \|\cdot\|_V)$, with constant of regularity
$\kappa_V$, auxiliary norm $\|\cdot\|_{V, +}$ and smoothness parameter $\kappa_{V, +}$. Following
\Cref{rem:regularity_dual_convention}, we apply them with $V=E^*$ and $\|\cdot\|_V=\snorm{\cdot}$, in which case
$\kappa_V$, $\|\cdot\|_{V, +}$ and $\kappa_{V, +}$ become $\kappa$, $\spnorm{\cdot}$ and $\kappa_+$, and the generic
compatibility chain \eqref{eq:reg_kappa_def_gen} specializes to \eqref{eq:reg_kappa_def}. It is therefore the
regularity constant of the \emph{dual} space that enters our bounds, since that is where stochastic gradients live and where the
noise is measured; an $\ell_p$ or Schatten-$p$ norm appearing below is a norm on $V=E^*$, that is, the norm
\emph{induced} on $E^*$ by the update norm on $E$.

\paragraph{The Euclidean proxy and the role of the auxiliary norm.}
Constructing a Euclidean proxy uses both sides of \eqref{eq:norm_equiv_vec}: the scaled norm
$\spnorm{\cdot}\eqdef\alpha^{-1}\norm{\cdot}_2$ is $1$-smooth and satisfies
$\sqsnorm{s}\le\sqspnorm{s}\le(\beta/\alpha)^2\sqsnorm{s}$.
It therefore gives the admissible regularity parameter $\kappa=(\beta/\alpha)^2$, distinct from the noise-conversion factor $\alpha^{-2}$. The same construction applies to the Frobenius norm.
Allowing other auxiliary norms can reduce the product $\kappa=\kappa_+C$, where $C=\kappa/\kappa_+$ is the squared-norm comparison factor in \eqref{eq:reg_kappa_def}.
The auxiliary norm is used only in the analysis and enters the guarantee through $\kappa$; the update \eqref{eq:non-eu-one-step-final} is determined by the update norm $\norm{\cdot}$ and the fixed sharp selection, and never evaluates $\spnorm{\cdot}$.

\paragraph{What this means for oracle complexity.}
When these constructions are applied to $V=E^*$, their effect on oracle complexity depends on the full statistical
coefficients for the chosen noise model. Suppressing logarithmic factors, the relevant comparison is between
$\mathsf C_*=\kappa\sigma^2$ for the native analysis and $\mathsf C_{\opf}=\alpha^{-2}\sigma_{\opf}^2$ for the
Euclidean/Frobenius conversion, as in \eqref{eq:statistical_coefficients_alt}. Here $\sigma$ and $\sigma_{\opf}$
measure the noise of the same oracle in the induced dual norm and the Euclidean/Frobenius norm, respectively.
A smaller $\kappa$ alone does not guarantee a sharper oracle-complexity bound; any improvement depends on these
full coefficients, as illustrated by the noise models in \Cref{sec:frobenius_light_tail_oracle}.
With this distinction in place, we compare the auxiliary norms for vectors and matrices below, before stating and proving the corresponding propositions.

\subsection{\texorpdfstring{$\ell_p^d$ Spaces}{ell-p Spaces in Dimension d}}
\label{subsec:regularity_vs_equiv_lp}

We first specialize the preceding discussion to $(E^*, \snorm{\cdot})=(\R^d, \norm{\cdot}_p)$. Whether a non-Euclidean auxiliary norm improves the regularity bound depends on $p$.

\textbf{The regime $1\le p<2$.}
A scaled Euclidean auxiliary norm is already optimal: taking
$\spnorm{\cdot}=d^{\frac{1}{p}-\frac{1}{2}}\norm{\cdot}_2$ reduces \eqref{eq:reg_kappa_def} to the classical $\ell_p$--$\ell_2$ equivalence,
$\norm{s}_2^2\le\norm{s}_p^2\le d^{\frac{2}{p}-1}\norm{s}_2^2$,
and gives $\kappa=d^{\frac{2}{p}-1}$ for every $d\ge1$.
This parameter cannot be improved by choosing another auxiliary norm; see \Cref{rem:optimality_euclidean_aux_p_lt_2} and case~1 of \Cref{prop:lp_regular_vs_equiv} below.

\textbf{The regime $2\le p\le\infty$.}
Here we may use $\spnorm{\cdot}=\norm{\cdot}_\rho$ with $2\le\rho\le p$ and $\rho<\infty$.
Its smoothness constant $\kappa_+=\rho-1$ and squared-norm comparison factor $d^{2/\rho-2/p}$ give the regularity bound
$\kappa(\rho)=(\rho-1)d^{2/\rho-2/p}$.
The minimizing choice is
$\rho_*\eqdef2$ for $1\le d<8$ and
$\rho_*\eqdef\min\bigl\{p, \,\ln d+\sqrt{(\ln d)^2-2\ln d}\,\bigr\}$ for $d\ge8$; in particular, $\rho_*$ remains finite when $p=\infty$.
For $1\le d<8$, it gives $\kappa=d^{1-2/p}\le7$.%
\footnote{Indeed, for $\rho\ge2$, we have $(\rho-1)/\rho^2\le1/4$, so $\kappa'(\rho)=d^{2/\rho-2/p}\bigl[1-2(\ln d)(\rho-1)/\rho^2\bigr]\ge d^{2/\rho-2/p}(1-\tfrac12\ln d)>0$ when $1\le d<8$.}
For $d\ge8$, the comparison takes the following form:
\begin{center}
\small
\begin{tabular}{@{}c@{\hspace{3.5em}}c@{}}
\emph{Euclidean proxy:}\ \ $\spnorm{\cdot}=\norm{\cdot}_2$
& \emph{Tuned proxy:}\ \ $\spnorm{\cdot}=\norm{\cdot}_{\rho_*}$ \\[5pt]
$\sqsnorm{s}\le\norm{s}_2^2\le d^{1-\frac{2}{p}}\sqsnorm{s}$
& $\sqsnorm{s}\le\norm{s}_{\rho_*}^2\le\cO(1)\,\sqsnorm{s}$ \\[5pt]
$\kappa=d^{1-\frac{2}{p}}$
& $\kappa=\Theta\rb{\min\{p, \ln(ed)\}}$
\end{tabular}
\end{center}
Both choices of $\spnorm{\cdot}$ are $\kappa_+$-smooth and satisfy the required norm-comparison inequalities, so each establishes $\kappa$-regularity according to \Cref{def:kappa_smooth_regular}, with its corresponding values of $\kappa_+$ and $\kappa$.
Since $\rho=2$ is allowed in the minimization of $\kappa(\rho)$, the tuned regularity bound is no larger than the Euclidean bound $d^{1-2/p}$ and can be significantly smaller.
The constant comparison factor has a simple explanation: when $p\le\hat\rho(d)$, where $\hat\rho(d)=\ln d+\sqrt{(\ln d)^2-2\ln d}$, we have $\rho_*=p$ and the factor is $1$; when $p>\hat\rho(d)$, it is at most $e^2$.

\textbf{Interpretation.}
At $p=2$, both proxies coincide with $\norm{\cdot}_2$ and give $\kappa=1$.
For $2\le p\le\infty$, the tuned bound has the optimal order $\Theta(\min\{p, \ln(ed)\})$ in all positive dimensions: case~2 of \Cref{prop:lp_regular_vs_equiv} establishes the comparison for $d\ge8$, and \Cref{rem:uniform_regularity_orders} covers $1\le d<8$.
Thus fixed finite $p\ge2$ admits a bound independent of $d$, while at $p=\infty$, the dual of the $\ell_1$ update norm, tuning reduces the regularity parameter from $d$ to $\Theta(\ln(ed))$.
These are comparisons of regularity parameters; the oracle-complexity comparison also depends on the noise scales, as explained above.
The following proposition gives the formal statements and their proof in the generic notation $(V, \|\cdot\|_V)$.

\begin{proposition}[Regularity vs.\ Euclidean equivalence in $\ell_p^d$ spaces]
\label{prop:lp_regular_vs_equiv}
Fix $d\in\N$ and $p\in[1, \infty]$. Let $V\eqdef \R^d$ and equip it with the norm $\|\cdot\|_V\eqdef \|\cdot\|_p$.
Then the following hold.
\begin{enumerate}
    \item \textbf{Case $1\le p<2$: regularity recovers the classical $\ell_p$--$\ell_2$ equivalence.}
    Take the auxiliary norm
    \begin{align*}
        \|x\|_{V, +} \eqdef d^{\frac{1}{p}-\frac{1}{2}}\|x\|_2 .
    \end{align*}
    Then $(V, \|\cdot\|_{V, +})$ is $1$-smooth, so $\kappa_{V, +}=1$, and the compatibility chain
    \eqref{eq:reg_kappa_def_gen} holds with
    \begin{align*}
        \kappa_V \eqdef \kappa_p(d) = d^{\frac{2}{p}-1}.
    \end{align*}
    For this choice, \eqref{eq:reg_kappa_def_gen} is nothing but the standard $\ell_p$--$\ell_2$ norm equivalence:
    \begin{align}\label{eq:lp_equiv}
        \norm{x}_{2} \le \norm{x}_{p} \le d^{\frac{1}{p}-\frac{1}{2}}\norm{x}_{2}
        \qquad\Longleftrightarrow\qquad
        \norm{x}_{2}^{2} \le \norm{x}_{p}^{2} \le d^{\frac{2}{p}-1}\norm{x}_{2}^{2}.
    \end{align}
    Moreover, this value of $\kappa_V$ cannot be improved: the matching lower bound is
    \eqref{eq:lower_bound_lp_p_lt_2} in \Cref{prop:tightness_lp_bounds}.

    \item \textbf{Case $2\le p\le\infty$: comparison with the Euclidean proxy.}
    For any $\rho\in[2, p]$ with $\rho<\infty$, take the auxiliary norm $\|\cdot\|_{V, +}\eqdef \|\cdot\|_\rho$. Then
    $(V, \|\cdot\|_\rho)$ is $(\rho-1)$-smooth, so $\kappa_{V, +}=\rho-1$, and \eqref{eq:reg_kappa_def_gen} holds
    with compatibility factor and admissible regularity parameter
    \begin{align*}
        \frac{\kappa_V}{\kappa_{V, +}}=d^{\frac{2}{\rho}-\frac{2}{p}}
        \qquad\text{and}\qquad
        \kappa_V(\rho)=(\rho-1)\,d^{\frac{2}{\rho}-\frac{2}{p}},
    \end{align*}
    respectively. Optimizing over $\rho\in[2, p]$ with $\rho<\infty$ yields the best regularity parameter obtainable this way,
    \begin{align}\label{eq:kappa_p_opt_rho}
        \kappa_V \eqdef \kappa_p(d)=\min_{\substack{2\le \rho\le p\\ \rho<\infty}} (\rho-1)\,d^{\frac{2}{\rho}-\frac{2}{p}}.
    \end{align}
    When $p=\infty$, we use $1/p=0$ and optimize over $\rho\in[2, \infty)$.
    Since $\rho=2$ is admissible, $\kappa_p(d)\le d^{1-2/p}$, the parameter supplied by the Euclidean proxy.
    At $p=2$, both choices coincide with $\norm{\cdot}_2$ and give $\kappa_p(d)=1$.
    Assume in addition that $\ln d\ge 2$ (e.g., $d\ge 8$) and set
    \begin{align*}
        \hat{\rho}(d)\eqdef \ln d + \sqrt{(\ln d)^2-2\ln d},
        \qquad
        \rho_* \eqdef \min\{p, \hat{\rho}(d)\} .
    \end{align*}
    The Euclidean choice $\rho=2$ produces the classical compatibility factor $d^{1-\frac{2}{p}}$, whereas the
    tuned choice $\rho=\rho_*$ produces a factor $d^{\frac{2}{\rho_*}-\frac{2}{p}}$ that is never larger and is in
    fact \emph{dimension-free} in both regimes $p\le\hat\rho(d)$ and $p>\hat\rho(d)$:
    \begin{align}\label{eq:lp_ge2_o1_equiv}
        \norm{x}_p^2 \le \norm{x}_{\rho_*}^2 \le \cO(1)\,\norm{x}_p^2
        \qquad\forall x\in\R^d .
    \end{align}
    Finally, the scaling $\kappa_p(d)=\Theta\bigl(\min\{p, \ln(ed)\}\bigr)$ is unavoidable for $d\ge 8$: the matching
    lower bound is \eqref{eq:lower_bound_lp_p_ge_2} in \Cref{prop:tightness_lp_bounds}.
\end{enumerate}
\end{proposition}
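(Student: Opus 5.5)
For Case~1, I will verify the two ingredients of \Cref{def:kappa_smooth_regular} directly. First, $\phi_+(x)=\frac12 d^{\frac2p-1}\norm{x}_2^2$ is a quadratic. Its smoothness inequality therefore holds with equality, with constant equal to the scaling factor measured in its own norm: $\phi_+(x+h)=\phi_+(x)+\inner{\nabla\phi_+(x)}{h}+\frac12\|h\|_{V,+}^2$. Hence $\kappa_{V,+}=1$. Second, the compatibility chain reduces to the classical comparison \eqref{eq:lp_equiv}. The left inequality $\norm{x}_2\le\norm{x}_p$ for $p\le2$ follows from monotonicity of $\ell_p$ norms. The right inequality $\norm{x}_p\le d^{1/p-1/2}\norm{x}_2$ follows from H\"older with exponents $2/p$ and $2/(2-p)$ applied to $\sum|x_i|^p\cdot1$. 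Optimality will simply be cited from \eqref{eq:lower_bound_lp_p_lt_2}.

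For Case~2, I fix $\rho\in[2,p]$ with $\rho<\infty$. The compatibility chain is again two $\ell_\rho$ versus $\ell_p$ comparisons. The left one, $\norm{x}_p\le\norm{x}_\rho$, follows from monotonicity. The right one, $\norm{x}_\rho\le d^{1/\rho-1/p}\norm{x}_p$, follows from H\"older, and for $p=\infty$ it is the trivial bound $\sum|x_i|^\rho\le d\norm{x}_\infty^\rho$. Squaring gives the factor $d^{2/\rho-2/p}$. The substantive ingredient is the $(\rho-1)$-smoothness of $\frac12\norm{\cdot}_\rho^2$. This is the classical 2-uniform smoothness of $\ell_\rho$, $\rho\ge2$. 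I will either cite it (e.g.\ \citet{juditsky2008large}) or prove it by computing the Hessian of $\psi(x)=\norm{x}_\rho^2$ away from coordinate hyperplanes. One gets $\frac12 h^\top\nabla^2\psi(x)h\le(\rho-1)\norm{x}_\rho^{2-\rho}\sum|x_i|^{\rho-2}h_i^2\le(\rho-1)\norm{h}_\rho^2$, where the last step is H\"older with exponents $\rho/(\rho-2)$ and $\rho/2$. Integrating along segments then gives the inequality. Continuity of $\nabla\phi_+$, together with a density argument, handles segments that meet the nondifferentiable set; for $\rho\ge2$ the gradient is in fact $C^1$ away from zero. The product $\kappa_V(\rho)=(\rho-1)d^{2/\rho-2/p}$ is then admissible, and minimizing over $\rho$ gives \eqref{eq:kappa_p_opt_rho}. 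The bound $\kappa_p(d)\le d^{1-2/p}$ comes from the choice $\rho=2$, and the case $p=2$ is immediate.

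For the dimension-free factor \eqref{eq:lp_ge2_o1_equiv} with $\ln d\ge2$, I split into two regimes. If $p\le\hat\rho(d)$, then $\rho_*=p$ and the factor is exactly $1$. If $p>\hat\rho(d)$, then $\rho_*=\hat\rho\ge\ln d$, so $d^{2/\rho_*-2/p}\le d^{2/\hat\rho}=e^{2\ln d/\hat\rho}\le e^2$. Here $\hat\rho$ is real precisely because $\ln d\ge2$, and it is the stationary point of $\rho\mapsto(\rho-1)d^{2/\rho}$ obtained from $\rho^2-2\rho\ln d+2\ln d=0$.

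For the order $\Theta(\min\{p,\ln(ed)\})$, the upper bound follows by evaluating $\kappa$ at $\rho_*$: we get $(\rho_*-1)\cdot O(1)\le O(\min\{p,2\ln d\})$. The lower bound is cited from \eqref{eq:lower_bound_lp_p_ge_2}. I expect the main obstacle to be the rigorous smoothness proof for $\norm{\cdot}_\rho^2$, because of differentiability issues at points with zero coordinates. If the Hessian route gets messy, I will fall back on citing the known $(\rho-1)$-smoothness.
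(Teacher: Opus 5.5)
Your proposal is correct and follows the paper's proof essentially step for step. For $p<2$ you use the $1$-smoothness of the scaled Euclidean norm together with the $\ell_p$--$\ell_2$ comparison. For $p\ge 2$ you use the Hessian-plus-H\"older proof of $(\rho-1)$-smoothness of $\|\cdot\|_\rho$, which is exactly Step~1 of the proof of \Cref{example:2}, together with the $\ell_\rho$--$\ell_p$ comparison. The same two subcases then give the factor $1$ or $e^2$, and the lower bounds are cited from \Cref{prop:tightness_lp_bounds}.

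Two small points are worth making explicit. For $p=\infty$, the minimum in \eqref{eq:kappa_p_opt_rho} is attained because $(\rho-1)d^{2/\rho}\to\infty$ as $\rho\to\infty$. For $\rho>2$, $\|\cdot\|_\rho^2$ is already $C^2$ away from the origin, so no density argument over coordinate hyperplanes is needed: a segment meets the origin at most once.
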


\begin{proof}
Both cases instantiate \Cref{def:kappa_smooth_regular} on $V=\R^d$; the constructions and the underlying
smoothness estimates are those of \Cref{example:2}, and the only new content is the explicit comparison with the
Euclidean conversion.

\smallskip
\noindent\emph{Case $1\le p<2$.}
The norm $\|\cdot\|_{V, +}$ is a scaled Euclidean norm and is therefore $1$-smooth, so $\kappa_{V, +}=1$.
The standard inequalities $\norm{x}_2\le \norm{x}_p \le d^{\frac{1}{p}-\frac{1}{2}}\norm{x}_2$
give $\norm{x}_p\le \|x\|_{V, +}$ and
$\|x\|_{V, +}^2=d^{\frac{2}{p}-1}\norm{x}_2^2\le d^{\frac{2}{p}-1}\norm{x}_p^2$,
which is \eqref{eq:reg_kappa_def_gen} with $\kappa_V=d^{\frac{2}{p}-1}$ and $\kappa_{V, +}=1$; squaring the same
two inequalities gives \eqref{eq:lp_equiv}. The lower bound is \eqref{eq:lower_bound_lp_p_lt_2}.

\smallskip
\noindent\emph{Case $2\le p\le\infty$.}
Fix $2\le\rho\le p$ with $\rho<\infty$. That $(\R^d, \norm{\cdot}_\rho)$ is $(\rho-1)$-smooth for $2\le\rho<\infty$ is shown in Step~1 of the proof of
\Cref{example:2}. The standard $\ell_p$--$\ell_\rho$ inequalities give
$\norm{x}_p\le \norm{x}_\rho \le d^{\frac{1}{\rho}-\frac{1}{p}}\norm{x}_p$, and squaring yields
\eqref{eq:reg_kappa_def_gen} with $\nfr{\kappa_V}{\kappa_{V, +}}=d^{\frac{2}{\rho}-\frac{2}{p}}$, hence
$\kappa_V(\rho)=(\rho-1)d^{\frac{2}{\rho}-\frac{2}{p}}$. Minimizing $\kappa_V(\rho)$ over $\rho\in[2, p]$ with $\rho<\infty$ gives
\eqref{eq:kappa_p_opt_rho}. When $p=\infty$, this continuous function satisfies $\kappa_V(\rho)=(\rho-1)d^{2/\rho}\to\infty$ as $\rho\to\infty$, so the minimum on $[2, \infty)$ is attained at a finite $\rho$.
The expression for $\hat{\rho}(d)$ and the bound $d^{2/\hat{\rho}(d)}\le e^2$ (valid
whenever $\ln d\ge 2$) are derived in Step~3 of the proof of \Cref{example:2}; in particular
$\hat\rho(d)\ge \ln d\ge 2$, so $2\le\rho_*<\infty$ even when $p=\infty$, and the tuned compatibility factor
$d^{\frac{2}{\rho_*}-\frac{2}{p}}$ is never larger than the Euclidean one, $d^{1-\frac{2}{p}}$.
It remains to prove \eqref{eq:lp_ge2_o1_equiv}, and we distinguish two subcases.

\smallskip
\noindent\emph{Subcase 1: $p\le \hat\rho(d)$.}
Then $\rho_*=p$, so $\norm{x}_{\rho_*}=\norm{x}_p$ and \eqref{eq:lp_ge2_o1_equiv} holds with constant $1$.

\smallskip
\noindent\emph{Subcase 2: $p> \hat\rho(d)$.}
Then $\rho_*=\hat\rho(d)$ and
\begin{align*}
    \norm{x}_{\rho_*}^2 \le d^{\frac{2}{\rho_*}-\frac{2}{p}}\norm{x}_p^2
    = d^{2/\hat\rho(d)}\,d^{-2/p}\norm{x}_p^2 \le d^{2/\hat\rho(d)}\norm{x}_p^2 \le e^2\norm{x}_p^2,
\end{align*}
which yields \eqref{eq:lp_ge2_o1_equiv} with $\cO(1)=e^2$. The lower bound is \eqref{eq:lower_bound_lp_p_ge_2}.
\end{proof}

\subsection{Spaces with Schatten-\texorpdfstring{$p$}{p} Norms on \texorpdfstring{$\R^{m\times n}$}{real m-by-n matrices}}
\label{subsec:regularity_vs_equiv_schatten}

For matrices, we compare the fixed Frobenius proxy with auxiliary Schatten norms. The two $p$ regimes follow the vector case, although the smoothness bounds for the auxiliary norms differ.

Let $V\eqdef \R^{m\times n}$ and denote $r\eqdef \min\{m, n\}$. For $p\in[1, \infty]$, the Schatten-$p$ norm is
$\norm{X}_{S_p}\eqdef \norm{\sigma(X)}_p$, where $\sigma(X)\in\R^r$ is the vector of singular values of $X$
(padded with zeros when $\mathrm{rank}(X)<r$). Since Schatten norms are $\ell_p$ norms on $\sigma(X)$,
the standard comparison inequalities (see \Cref{lem:schatten_norm_comparison}) give, for $1\le \rho\le p\le\infty$,
\begin{align}\label{eq:schatten_equiv_general}
    \norm{X}_{S_p} \le \norm{X}_{S_\rho} \le r^{\frac{1}{\rho}-\frac{1}{p}}\norm{X}_{S_p}.
\end{align}
(For $X\ne0$, one may replace $r$ by $\mathrm{rank}(X)$ in \eqref{eq:schatten_equiv_general} for an
instance-dependent bound, which can be tighter when $\mathrm{rank}(X)\ll r$; the zero case is immediate.)

\textbf{The regime $1\le p<2$.}
The scaled Frobenius auxiliary norm $\|\cdot\|_{V, +}=r^{\frac{1}{p}-\frac{1}{2}}\fnorm{\cdot}$ is $1$-smooth and gives
$\kappa_V=r^{2/p-1}$ for every $r\ge1$.
The matching lower bound shows that this parameter is optimal, so allowing non-Frobenius auxiliary norms cannot improve it.

\textbf{The regime $2\le p\le\infty$.}
The Frobenius proxy is $1$-smooth and gives the squared-norm comparison factor and regularity parameter $r^{1-2/p}$.
Alternatively, for $2\le\rho\le p$ with $\rho<\infty$, the auxiliary norm $\|\cdot\|_{S_\rho}$ has comparison factor $r^{2/\rho-2/p}$ and admits the smoothness bound $\max\{2, \rho-1\}$.
Their product is the regularity bound optimized in the proposition below.
For $r\ge8$, the minimizing exponent for this bound is
$\rho_* = \min\{p, \max\{3, \hat\rho(r)\}\}$, where
$\hat\rho(r)=\ln r+\sqrt{(\ln r)^2-2\ln r}$.
If $\rho_*=p$, the auxiliary and target norms coincide and the comparison factor is $1$; otherwise the factor is at most $e^2$.
Thus the tuned squared-norm comparison factor is no larger than the Frobenius factor and is bounded independently of $r$.

\textbf{Interpretation.}
The resulting regularity bound has the optimal order $\Theta(\min\{p, \ln(er)\})$ for $r\ge8$; \Cref{rem:uniform_regularity_orders} extends this order to $1\le r<8$.
For fixed finite $p\ge2$ this order is independent of $r$, while at $p=\infty$ it is logarithmic, compared with the Frobenius proxy's linear parameter $r$.
At $p=2$, both auxiliary norms are Frobenius and one can take $\kappa_V=1$: the uniform smoothness estimate $\max\{2, \rho-1\}$ is not sharp at $\rho=2$.
As in the vector case, these geometric improvements give a smaller oracle-complexity bound only when the full statistical coefficients favor the native analysis.
We now state the precise comparison and prove it.

\begin{proposition}[Regularity vs.\ Frobenius equivalence in Schatten classes]
\label{prop:schatten_regular_vs_equiv}
Fix $m, n\in\N$, let $r\eqdef \min\{m, n\}$ and let $p\in[1, \infty]$. Let $V\eqdef \R^{m\times n}$ and equip it with
the norm $\|\cdot\|_V\eqdef \|\cdot\|_{S_p}$. Then the following hold.
\begin{enumerate}
    \item \textbf{Case $1\le p<2$: regularity recovers the Schatten-$p$--Frobenius equivalence.}
    Take the scaled Frobenius auxiliary norm
    \begin{align*}
        \|X\|_{V, +} \eqdef r^{\frac{1}{p}-\frac{1}{2}}\fnorm{X},
    \end{align*}
    which is $1$-smooth, so $\kappa_{V, +}=1$. Then \eqref{eq:reg_kappa_def_gen} holds with
    \begin{align*}
        \kappa_V \eqdef \kappa_p(m, n) = r^{\frac{2}{p}-1},
    \end{align*}
    and it is equivalent to the standard Schatten-$p$--Frobenius norm equivalence:
    \begin{align}\label{eq:schatten_p_vs_fro}
        \fnorm{X} \le \norm{X}_{S_p} \le r^{\frac{1}{p}-\frac{1}{2}}\fnorm{X}
        \qquad\Longleftrightarrow\qquad
        \sqfnorm{X} \le \norm{X}_{S_p}^{2} \le r^{\frac{2}{p}-1}\sqfnorm{X}.
    \end{align}
    Moreover, this value of $\kappa_V$ cannot be improved: the matching lower bound is
    \eqref{eq:schatten_lower_p_lt_2} in \Cref{prop:tightness_schatten_bounds}.

    \item \textbf{Case $2\le p\le\infty$: comparison with the Frobenius proxy.}
    For any $\rho\in[2, p]$ with $\rho<\infty$, take the auxiliary norm $\|\cdot\|_{V, +}\eqdef \|\cdot\|_{S_\rho}$.
    Then \eqref{eq:schatten_equiv_general} yields the compatibility chain
    \begin{align}\label{eq:reg_schatten_ge2_chain}
        \norm{X}_{S_p}^{2} \le \norm{X}_{S_\rho}^{2} \le r^{\frac{2}{\rho}-\frac{2}{p}}\norm{X}_{S_p}^{2},
    \end{align}
    and $(V, \|\cdot\|_{S_\rho})$ is $\kappa_{V, +}$-smooth with $\kappa_{V, +}=\max\{2, \rho-1\}$ for $\rho\ge 2$
    (see \Cref{example:3} and the references therein), so that \eqref{eq:reg_kappa_def_gen} holds with
    \begin{align*}
        \kappa_V(\rho)=\max\{2, \rho-1\}\,r^{\frac{2}{\rho}-\frac{2}{p}}.
    \end{align*}
    Assume now that $r\ge8$, so $\ln r\ge2$. Under this condition, optimizing over $\rho$ yields a regularity
    parameter of order $\kappa_V=\cO\bigl(\min\{p, \ln(er)\}\bigr)$. More explicitly, set
    \begin{align*}
        \hat{\rho}(r)\eqdef \ln r + \sqrt{(\ln r)^2-2\ln r},
        \qquad
        \rho_* \eqdef \min\bigl\{p, \,\max\{3, \hat{\rho}(r)\}\bigr\} .
    \end{align*}
    The Frobenius choice $\rho=2$ produces the compatibility factor $r^{1-\frac{2}{p}}$, whereas the tuned choice
    $\rho=\rho_*$ produces a factor that is no larger and is bounded by an absolute constant independent of the dimension parameter $r=\min\{m, n\}$ in both regimes:
    \begin{align}\label{eq:schatten_ge2_o1_equiv}
        \norm{X}_{S_p}^{2} \le \norm{X}_{S_{\rho_*}}^{2} \le \cO(1)\,\norm{X}_{S_p}^{2}
        \qquad\forall X\in\R^{m\times n}.
    \end{align}
    At $p=2$, both proxies coincide with the Frobenius norm and the compatibility factor is $1$.
    Its exact smoothness constant is also $1$, so $\kappa_V=1$ is admissible in this case; the uniform bound
    $\max\{2, \rho-1\}$ used above is not sharp at $\rho=2$.
    Finally, the scaling $\kappa_p(m, n)=\Theta\bigl(\min\{p, \ln(er)\}\bigr)$ is unavoidable for $r\ge 8$: the
    matching lower bound is \eqref{eq:schatten_lower_p_ge_2} in \Cref{prop:tightness_schatten_bounds}.
\end{enumerate}
\end{proposition}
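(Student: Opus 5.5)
The proof will mirror that of \Cref{prop:lp_regular_vs_equiv}: both cases instantiate \Cref{def:kappa_smooth_regular} on $V=\R^{m\times n}$, with the constructions and smoothness estimates taken from \Cref{example:3}. The only new content is the explicit comparison with the Frobenius proxy and the dimension-free bound \eqref{eq:schatten_ge2_o1_equiv}. Throughout, Schatten norms are reduced to $\ell_p$ norms of the singular-value vector $\sigma(X)\in\R^r$ via \Cref{lem:schatten_norm_comparison}.

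\emph{Case $1\le p<2$.} The scaled Frobenius norm $r^{1/p-1/2}\fnorm{\cdot}$ is a multiple of a Euclidean norm on $\R^{mn}$. Its half-square is therefore a quadratic with the parallelogram identity, so it is $1$-smooth and $\kappa_{V,+}=1$. Applying \Cref{lem:schatten_norm_comparison} with $(\rho,p)\to(p,2)$ gives $\fnorm{X}\le\norm{X}_{S_p}\le r^{1/p-1/2}\fnorm{X}$. This yields $\norm{X}_{S_p}\le\|X\|_{V,+}$ and $\|X\|_{V,+}^2\le r^{2/p-1}\norm{X}_{S_p}^2$, which is \eqref{eq:reg_kappa_def_gen} with $\kappa_V=r^{2/p-1}$; squaring gives \eqref{eq:schatten_p_vs_fro}. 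Optimality is deferred to \eqref{eq:schatten_lower_p_lt_2}.

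\emph{Case $2\le p\le\infty$.} Fix $\rho\in[2,p]$ with $\rho<\infty$. The chain \eqref{eq:reg_schatten_ge2_chain} follows by squaring \eqref{eq:schatten_equiv_general}. The smoothness constant $\max\{2,\rho-1\}$ for $\|\cdot\|_{S_\rho}$ is quoted from \Cref{example:3}; it rests on the known uniform smoothness of Schatten classes, which I will cite rather than reprove. Multiplying the two factors gives $\kappa_V(\rho)$. Choosing $\rho=2$ recovers the Frobenius factor $r^{1-2/p}$. Since $\rho_*\ge2$, the tuned factor $r^{2/\rho_*-2/p}$ is never larger, because $r\ge1$ and $2/\rho_*\le1$.

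For \eqref{eq:schatten_ge2_o1_equiv}, I split into two subcases.
\begin{itemize}
\item If $\rho_*=p$, the two norms coincide and the constant is $1$.
\item Otherwise $\rho_*=\max\{3,\hat\rho(r)\}<p$. Here $r^{2/\rho_*-2/p}\le r^{2/\rho_*}\le r^{2/\hat\rho(r)}$, and Step~3 of \Cref{example:2}, applied with $d=r$ using $\ln r\ge2$, gives $r^{2/\hat\rho(r)}\le e^2$.
\end{itemize}
The order estimate $\kappa_V=\cO(\min\{p,\ln(er)\})$ then follows because $\max\{2,\rho_*-1\}\le\max\{2,\hat\rho(r)\}\le 2\ln r+2$ and $\rho_*\le p$. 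With the comparison factor at most $e^2$, this gives $\kappa_V\le e^2\max\{2,\min\{p,\hat\rho(r)\}\}=\cO(\min\{p,\ln(er)\})$; here I use $p\ge2$ and $\ln r\asymp\ln(er)$ for $r\ge8$.

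At $p=2$, $\rho=2$ is forced and the exact Frobenius smoothness constant $1$ gives $\kappa_V=1$. The lower bound is cited as \eqref{eq:schatten_lower_p_ge_2}.

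I expect the main obstacle to be the Schatten smoothness constant, which is the only non-elementary ingredient. I will take it from \Cref{example:3} rather than derive it. After that, the most delicate step is the bookkeeping with the $\max\{3,\cdot\}$ in $\rho_*$, namely ensuring that $r^{2/\rho_*}\le e^2$ still holds when $\rho_*=3>\hat\rho(r)$. This works because a larger $\rho$ only decreases $r^{2/\rho}$.
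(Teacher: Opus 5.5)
Your proposal is correct and follows essentially the same route as the paper. For $1\le p<2$ you use the scaled-Frobenius construction. For $p\ge2$ you combine the squared Schatten comparison with the cited $\max\{2,\rho-1\}$ smoothness constant, and you obtain the dimension-free factor from the same two subcases: $\rho_*=p$, and otherwise $\rho_*\ge\hat\rho(r)\ge\ln r$, so that $r^{2/\rho_*}\le e^2$. The only cosmetic difference is how you get the $\cO(\min\{p,\ln(er)\})$ order: you evaluate $\kappa_V(\rho_*)\le e^2\max\{2,\rho_*-1\}$ directly, whereas the paper defers to the same calculus as the $\ell_p$ case (choosing $\rho=p$ or $\rho=2\ln r$ in \Cref{example:3}); both are valid.
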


\begin{proof}
As in the vector case, both cases instantiate \Cref{def:kappa_smooth_regular} on $V=\R^{m\times n}$, and the
constructions and smoothness estimates are those of \Cref{example:3}.

\smallskip
\noindent\emph{Case $1\le p<2$.}
This is the Schatten analogue of case~1 of \Cref{prop:lp_regular_vs_equiv}. Since $\|\cdot\|_{V, +}$ is a scaled
Hilbert norm, it is $1$-smooth. Applying \eqref{eq:schatten_equiv_general} with smaller exponent $p$ and larger exponent $2$ yields
$\norm{X}_{S_2}=\fnorm{X}\le \norm{X}_{S_p}\le r^{\frac{1}{p}-\frac{1}{2}}\fnorm{X}$, i.e., \eqref{eq:schatten_p_vs_fro}.
The compatibility chain \eqref{eq:reg_kappa_def_gen} then follows by construction with $\kappa_V=r^{\frac{2}{p}-1}$
and $\kappa_{V, +}=1$. The lower bound is \eqref{eq:schatten_lower_p_lt_2}.

\smallskip
\noindent\emph{Case $2\le p\le\infty$.}
The compatibility chain \eqref{eq:reg_schatten_ge2_chain} is exactly \eqref{eq:schatten_equiv_general} squared.
The smoothness bound for Schatten-$\rho$ spaces with $\rho\ge 2$, and the resulting regularity parameter, are
stated and proved in \Cref{example:3} (see also~\citep{juditsky2008large}). For the remaining claims, assume $r\ge8$.
Under this condition, the optimization over $\rho$ is the same calculus as in the $\ell_p$ case, with $d$ replaced by $r$, and leads to
$\kappa_V=\cO(\min\{p, \ln(er)\})$. It remains to justify \eqref{eq:schatten_ge2_o1_equiv}, and we distinguish two
subcases.

\smallskip
\noindent\emph{Subcase 1: $\rho_*=p$.}
Then $\norm{X}_{S_{\rho_*}}=\norm{X}_{S_p}$ and \eqref{eq:schatten_ge2_o1_equiv} holds with constant $1$.

\smallskip
\noindent\emph{Subcase 2: $\rho_*=\max\{3, \hat\rho(r)\}$.}
Since $\ln r\ge 2$, we have $\hat\rho(r)\ge \ln r$ and therefore $\rho_*\ge \ln r$, whence
$r^{2/\rho_*}\le r^{2/\ln r}=e^2$. Combining this with \eqref{eq:reg_schatten_ge2_chain} gives
\begin{align*}
    \norm{X}_{S_{\rho_*}}^{2} \le r^{\frac{2}{\rho_*}-\frac{2}{p}}\norm{X}_{S_p}^{2}
    \le r^{2/\rho_*}\norm{X}_{S_p}^{2} \le e^2\,\norm{X}_{S_p}^{2},
\end{align*}
which yields \eqref{eq:schatten_ge2_o1_equiv} with $\cO(1)=e^2$. The lower bound is
\eqref{eq:schatten_lower_p_ge_2}.
\end{proof}

\clearpage
\section{A Direct Comparison of Euclidean/Frobenius and Geometry-Native Oracles}
\label{sec:frobenius_light_tail_oracle_alt}
\label{sec:frobenius_light_tail_oracle}

There are two ways to model stochastic-gradient noise in a non-Euclidean geometry. One assumes light tails in a fixed
Euclidean or Frobenius norm and converts the resulting bound into the dual norm induced by the update geometry; the
other states the light-tail condition directly in that dual norm. This section compares them. We keep the conversion
constant $\alpha^{-1}$, the regularity constant $\kappa$ and the two noise scales separate throughout, so that one can
see which part of each bound comes from the geometry and which part comes from the perturbation model.

\paragraph{Notation.}
The index $p$ always refers to the norm on the \emph{dual} space $E^*$, exactly as in
\Cref{subsec:regularity_vs_norm_equivalence_main} and \Cref{sec:regularity_vs_norm_equivalence}: we write
$(E^*, \snorm{\cdot})=(\R^d, \norm{\cdot}_p)$ in the vector case and
$(E^*, \snorm{\cdot})=(\R^{m\times n}, \norm{\cdot}_{S_p})$ with $r\eqdef\min\{m, n\}$ in the matrix case, and we adopt
the convention $d^{2/\infty}=r^{2/\infty}=1$.

\paragraph{Overview of the comparison.}
We compare the same oracle under the two norm-based analyses. The examples use fresh independent additive perturbations: independent Rademacher signs or standard Gaussian entries in the vector case, and their diagonal matrix counterparts (\Cref{def:rademacher_oracle_alt} and \Cref{def:gaussian_oracle_alt}).
Write $D=d$ for vectors and $D=r$ for these diagonal matrix models. Under the common assumptions stated below, the two oracle bounds have the same leading term; their statistical terms are governed by $\mathsf C_*=\kappa\sigma^2$ and $\mathsf C_{\opf}=\alpha^{-2}\sigma_{\opf}^2$.

For $1\le p<2$, the squared conversion factor $\alpha^{-2}$ and the regularity constant are both $D^{2/p-1}$. Their ratio therefore cancels, leaving the comparison between the noise scales. For these dense perturbations, the ratio of squared noise scales is $D^{2/p-1}$, which is also the ratio of the native and converted coefficients. At $p=1$, the converted and native coefficients consequently have orders $D^2$ and $D^3$, respectively.
At $p=2$, the norms coincide: choosing $\sigma_{\opf}=\sigma$, $\alpha=1$ and $\kappa=1$ makes the bounds agree.

For each fixed $2<p<\infty$, the displayed native coefficient is bounded by a constant depending on $p$ times $D^{2/p}\nu^2$, while the converted coefficient grows as $D\nu^2$. Since $D^{2/p}=o(D)$ as $D\to\infty$, the native bound has smaller asymptotic dimension dependence.
At $p=\infty$, its coefficients have orders $\ln(eD)\nu^2$ for Rademacher perturbations and $(\ln(eD))^2\nu^2$ for the chosen Gaussian scales, compared with $D\nu^2$ for conversion.
Thus the ratios $\mathsf C_{\opf}/\mathsf C_*$ have orders $D/\ln(eD)$ and $D/(\ln(eD))^2$, respectively. For finite dimensions, the native coefficient can still coincide with or exceed the converted one.

These outcomes depend on the noise model. For example, the one-sparse perturbations in \Cref{rem:one_sparse_noise_alt} give equal coefficients throughout $1\le p\le2$, so the larger native coefficient for dense noise below $p=2$ is not universal.
We therefore compare the full coefficients rather than infer an advantage from $p$ alone, and claim no uniform improvement over Euclidean/Frobenius conversion.
In the vector case, dual $p=\infty$ corresponds to an $\ell_1$ update norm, while dual $p=1$ corresponds to an $\ell_\infty$ update norm.
For matrices, dual $p=\infty$ corresponds to the nuclear update norm, whose dual is the spectral norm. The analogous improvement here concerns diagonal perturbations, not independent noise in all matrix entries (\Cref{rem:diagonal_model_scope_alt}).

We now specify the oracle models and common assumptions, derive the complexity comparison, and give the detailed vector and matrix results.

\begin{assumption}[Frobenius light-tail oracle]
\label{as:stoch_frobenius_light_tail_alt}
For every $x\in E$, the stochastic gradient is unbiased,
$\ExpSub{\xi}{\nf{x;\xi}}=\nabla f(x)$, and its error satisfies
\begin{align}
\label{eq:frobenius_light_tail_main}
    \ExpSub{\xi}{
        \exp\rb{
            \frac{
                \sqfnorm{\nf{x;\xi}-\nabla f(x)}
            }{
                \sigma_{\opf}^2
            }
        }
    }
    &\le
    \exp(1)
\end{align}
for some $\sigma_{\opf}>0$.
\end{assumption}

Since $E^*$ is finite-dimensional, there are constants $\alpha, \beta>0$ with
\begin{align}
\label{eq:norm_equiv_2_light_tail_alt}
    \alpha\snorm{s}
    \le
    \fnorm{s}
    \le
    \beta\snorm{s},
    \qquad
    \forall s\in E^*.
\end{align}
Only the left inequality is used below, in the form $\snorm{s}\le\alpha^{-1}\fnorm{s}$: it is what turns a Frobenius
bound into a bound in $\snorm{\cdot}$, and $\alpha^{-1}$ is the entire price of doing so.

The converted analysis first concentrates the minibatch error in the Euclidean/Frobenius norm, whose regularity constant is $1$, and then uses
$\snorm{\gk-\nfxk}\le\alpha^{-1}\fnorm{\gk-\nfxk}$.
Consequently, its statistical coefficient is $\alpha^{-2}\sigma_{\opf}^2$, with no additional factor of $\kappa$; \Cref{lem:frobenius_to_dual_conc_alt} gives the concentration bound used below.

\paragraph{Two perturbation models.}
For $u=(u_1, \ldots, u_r)\in\R^r$ we write $\Diag_{m, n}(u)$ for
the rectangular diagonal embedding, defined entrywise for $1\le i\le m$ and $1\le j\le n$ by
\begin{align}
\label{eq:rectangular_diagonal_embedding_alt}
    \bigl[\Diag_{m, n}(u)\bigr]_{ij}
    &\eqdef
    \begin{cases}
        u_i,
        & i=j\le r, \\
        0,
        & \text{otherwise}.
    \end{cases}
\end{align}
The comparison is carried out on two canonical oracles. 
\begin{definition}[Rademacher perturbed-gradient model]
\label{def:rademacher_oracle_alt}
The oracle is $\nf{x;\xi}=\nabla f(x)+\zeta(\xi)$ with $\Exp{\zeta(\xi)}=0$. In the vector case
$\zeta\eqdef\nu\chi$ with $\chi\eqdef(\chi_1, \ldots, \chi_d)$, and in the matrix case
$\zeta\eqdef\nu\Diag_{m, n}\rb{(\chi_i)_{i=1}^r}$, where $\nu>0$ and $\{\chi_i\}_{i\ge1}$ are
independent Rademacher variables, $\Prob\{\chi_i=1\}=\Prob\{\chi_i=-1\}=\frac12$. A fresh perturbation is drawn at every oracle call.
\end{definition}

\begin{definition}[Gaussian perturbed-gradient model]
\label{def:gaussian_oracle_alt}
The oracle is $\nf{x;\xi}=\nabla f(x)+\zeta(\xi)$ with $\Exp{\zeta(\xi)}=0$. In the vector case
$\zeta\eqdef\nu\omega$ with $\omega\eqdef(\omega_1, \ldots, \omega_d)$, and in the matrix case
$\zeta\eqdef\nu\Diag_{m, n}\rb{(\omega_i)_{i=1}^r}$, where $\nu>0$ and
$\{\omega_i\}_{i\ge1}$ are i.i.d.\ standard Gaussian variables, $\omega_i\sim\mathcal N(0, 1)$. A fresh perturbation is drawn at every oracle call.
\end{definition}

Both matrix models are diagonal, so the perturbation carries exactly $r$ independent coordinates rather than $mn$.
Write $D$ for that number of independent coordinates: $D=d$ in the vector model and $D=r$ in the matrix model. The
Gaussian model additionally needs the shape factor
\begin{align}
\label{eq:gaussian_shape_factor_alt}
    \vartheta_p(D)
    &\eqdef
    \begin{cases}
        1,
        & 1\le p\le2, \\[3pt]
        \min\{p, \ln(eD)\},
        & 2<p\le\infty,
    \end{cases}
\end{align}
again with $D^{2/\infty}\eqdef1$. With this notation the two analyses can be placed side by side before we turn to the
proofs.

\subsection{Common Assumptions and Complexity Bounds}
\label{subsec:frobenius_light_tail_summary_alt}

Throughout this comparison, suppose that \Cref{as:non_euclidean_smoothness} and \Cref{as:lower_bound} hold with $L>0$.
Let $\kappa$ denote a regularity constant of the dual space $(E^*, \snorm{\cdot})$. Fix an initial point $x^0$ with
$\Delta\eqdef f(x^0)-\finf>0$, an accuracy $0<\eps\le L\Delta$, and a failure probability $\delta\in(0, 1)$.
Both routes use the update \eqref{eq:non-eu-one-step-final} with the same update norm and fixed Borel-measurable sharp
selection. Each iteration draws fresh i.i.d.\ oracle samples independently of the preceding history.
The Frobenius route uses \Cref{as:stoch_frobenius_light_tail_alt}, whereas the geometry-native route uses
\Cref{as:non_euclidean_light_tail}. In both cases, choose
\begin{align*}
    \gamma_k\equiv\frac{1}{2L},
    \qquad
    T\eqdef\left\lceil\frac{128L\Delta}{15\eps}\right\rceil.
\end{align*}
The two analyses differ only in the minibatch size they prescribe. The Frobenius route pays the conversion constant,
\begin{align}
\label{eq:frobenius_batch_alt}
    B_{\opf}
    &\eqdef
    \max\left\{
        \left\lceil
            \frac{16\alpha^{-2}\sigma_{\opf}^2}{\eps}
            \rb{\sqrt2+\sqrt{6\log(T/\delta)}}^2
        \right\rceil,
        1
    \right\},
\end{align}
whereas the geometry-native route pays the regularity constant instead,
\begin{align}
\label{eq:native_batch_alt}
    B_*
    &\eqdef
    \max\left\{
        \left\lceil
            \frac{16\sigma^2}{\eps}
            \rb{\sqrt{2\kappa}+\sqrt{6\log(T/\delta)}}^2
        \right\rceil,
        1
    \right\}.
\end{align}

To justify these choices, condition on the history $\cF_k$ before iteration $k$, which fixes $x^k$.
Apply \Cref{lem:frobenius_to_dual_conc_alt} for the Frobenius route and
\Cref{lemma:non_euclidean_large_deviations} for the geometry-native route, with
$\lambda=\sqrt{3\log(T/\delta)}$. The corresponding batch size makes the concentration threshold at most
$\sqrt{\eps}/4$, so the conditional probability of exceeding this error is at most
$\exp(-\lambda^2/3)=\delta/T$. Taking expectations of these conditional probabilities and applying a union bound gives,
for either run,
\begin{align*}
    \Prob\cb{\exists k\in\{0, \ldots, T-1\}:\snorm{\gk-\nfxk}>\frac{\sqrt{\eps}}{4}}
    \le T\exp(-\lambda^2/3)=\delta.
\end{align*}
On the complementary event, the deterministic descent estimate in the proof of
\Cref{thm:stochastic_L_smooth_final} gives
\begin{align*}
    f(\xkpo)\le f(\xk)-\frac{15}{128L}\sqsnorm{\nfxk}
    \qquad\text{whenever }\sqsnorm{\nfxk}>\eps.
\end{align*}
If this strict gradient inequality held at every iteration, summing the descent inequalities along that trajectory would yield
\begin{align*}
    f(x^T)<f(x^0)-\frac{15T\eps}{128L}\le\finf,
\end{align*}
contradicting \Cref{as:lower_bound}. Thus each run satisfies
$\min_{0\le k<T}\sqsnorm{\nabla f(x^k)}\le\eps$ with probability at least $1-\delta$.

Suppressing logarithmic factors, each minibatch size contributes a single coefficient to the statistical term,
\begin{align}
\label{eq:statistical_coefficients_alt}
    \mathsf C_{\opf}
    &\eqdef
    \alpha^{-2}\sigma_{\opf}^2,
    &
    \mathsf C_*
    &\eqdef
    \kappa\sigma^2.
\end{align}
Since $\eps\le L\Delta$, the ceiling in $T$ gives $T=\cO(L\Delta/\eps)$. The batch sizes satisfy
$B_{\opf}=\widetilde{\cO}(1+\mathsf C_{\opf}/\eps)$ and
$B_*=\widetilde{\cO}(1+\mathsf C_*/\eps)$, where the suppressed factors are logarithmic in $T$ and $1/\delta$.
Multiplying by $T$, the total numbers of oracle calls $N_{\opf}\eqdef TB_{\opf}$ and $N_*\eqdef TB_*$ therefore satisfy
\begin{align}
\label{eq:complexity_frobenius_alt}
    N_{\opf}
    &=
    \widetilde{\cO}\rb{
        \frac{L\Delta}{\eps}
        +
        \frac{L\Delta}{\eps^2}\mathsf C_{\opf}
    }\\
\label{eq:complexity_dual_native_cor_F}
    &=\widetilde{\cO}\rb{\frac{L\Delta}{\eps}+\frac{L\Delta\sigma_{\opf}^2}{\eps^2\alpha^2}}
\end{align}
and
\begin{align}
\label{eq:complexity_native_alt}
    N_*
    &=
    \widetilde{\cO}\rb{
        \frac{L\Delta}{\eps}
        +
        \frac{L\Delta}{\eps^2}\mathsf C_*
    }.
\end{align}
The first term is identical in both, so the comparison reduces, up to the suppressed logarithmic factors, to $\mathsf C_{\opf}$ against $\mathsf C_*$.
These coefficients refer to the same oracle and update geometry, but use noise scales measured in different norms. We compare the coefficients rather than interpret their ratio as an exact ratio of oracle complexities, since both analyses provide upper bounds.

\paragraph{Side-by-side comparison of the noise terms.}
The tables below place the two statistical terms next to each other, separating the regimes $1\le p<2$ and
$2<p<\infty$ and displaying the boundary cases $p=1$, $p=2$ and $p=\infty$ in full, so that every dimension exponent
can be read off directly rather than inferred. Four conventions apply throughout. We take $\nu^2=\cO(1)$ to expose the
dependence on dimension; a general $\nu$ is restored by multiplying every displayed term by $\nu^2$. A formula indexed
by $\mathsf M\in\{\mathrm{Rad}, \mathrm{Gau}\}$ holds for both perturbation models, the numerical Gaussian factor $4$
being absorbed into $\widetilde{\cO}$; the two models are written separately only where they genuinely differ. The
symbols $\kappa_{\ell_p}(d)$ and $\kappa_{S_p}(r)$ are the admissible regularity parameters established in
\Cref{subsec:frobenius_light_tail_proofs_alt}. Finally, for every fixed finite $p$ the two perturbation models carry
the same dependence on dimension up to constants, and they part company only at $p=\infty$, where the admissible
Gaussian scale costs one additional logarithmic factor.

One exponent is worth unpacking before the tables, since it is the only one that is not read off a single quantity.
For $1\le p<2$ the regularity factor is $\kappa=D^{2/p-1}$. In the Rademacher model, the native noise scale is
$\sigma^2=D^{2/p}\nu^2$, so the exact coefficient is
\begin{align*}
    \mathsf C_*^{\mathrm{Rad}}
    &=
    \kappa\sigma^2\\
    &=
    D^{2/p-1}\cdot D^{2/p}\nu^2\\
    &=
    D^{4/p-1}\nu^2.
\end{align*}
In the Gaussian model, the chosen admissible scale $\sigma^2=4D^{2/p}\nu^2$ instead gives
\begin{align*}
    \mathsf C_*^{\mathrm{Gau}}
    &=\kappa\sigma^2
    =D^{2/p-1}\cdot4D^{2/p}\nu^2
    =4D^{4/p-1}\nu^2.
\end{align*}
Thus both models have the dimension exponent $\nfr{4}{p}-1$; only the absolute factor differs.

\newpage
\noindent\textbf{$\ell_p^d$ spaces.}
Read the two columns against each other row by row. The left column never improves on $d$ and degrades to $d^{2/p}$
below $p=2$. The native coefficient has a larger dimension exponent for $p<2$, and the two coefficients coincide at
$p=2$. For each fixed $2<p<\infty$, the native coefficient has a smaller asymptotic order as $d\to\infty$.
At $p=\infty$, its dependence is logarithmic for Rademacher noise and squared-logarithmic for Gaussian noise,
again asymptotically smaller than linear. These comparisons do not guarantee strict improvement in every finite dimension.

\noindent
\begin{minipage}[t]{0.475\textwidth}
\small
\noindent\textbf{Euclidean/Frobenius approach}

For $1\le p<2$,
\begin{align*}
    N_{\opf, \mathrm{noise}}^{\mathsf M}
    &=
    \widetilde{\cO}\rb{
        \frac{d^{2/p}L\Delta}{\eps^2}
    }.
\end{align*}
In particular, at $p=1$,
\begin{align*}
    \left.N_{\opf, \mathrm{noise}}^{\mathsf M}\right|_{p=1}
    &=
    \widetilde{\cO}\rb{
        \frac{d^2L\Delta}{\eps^2}
    }.
\end{align*}
For every $2\le p\le\infty$,
\begin{align*}
    N_{\opf, \mathrm{noise}}^{\mathsf M}
    &=
    \widetilde{\cO}\rb{
        \frac{dL\Delta}{\eps^2}
    }.
\end{align*}
\end{minipage}
\hfill
\begin{minipage}[t]{0.475\textwidth}
\small
\noindent\textbf{Geometry-native approach}

For $1\le p<2$,
\begin{align*}
    N_{*, \mathrm{noise}}^{\mathsf M}
    &=
    \widetilde{\cO}\rb{
        \frac{d^{4/p-1}L\Delta}{\eps^2}
    }.
\end{align*}
In particular, at $p=1$,
\begin{align*}
    \left.N_{*, \mathrm{noise}}^{\mathsf M}\right|_{p=1}
    &=
    \widetilde{\cO}\rb{
        \frac{d^3L\Delta}{\eps^2}
    }.
\end{align*}
At $p=2$,
\begin{align*}
    \left.N_{*, \mathrm{noise}}^{\mathsf M}\right|_{p=2}
    &=
    \widetilde{\cO}\rb{
        \frac{dL\Delta}{\eps^2}
    }.
\end{align*}
For $2<p<\infty$,
\begin{align*}
    N_{*, \mathrm{noise}}^{\mathrm{Rad}}
    &=
    \widetilde{\cO}\rb{
        \frac{\kappa_{\ell_p}(d)d^{2/p}L\Delta}{\eps^2}
    }, \\
    N_{*, \mathrm{noise}}^{\mathrm{Gau}}
    &=
    \widetilde{\cO}\rb{
        \frac{\kappa_{\ell_p}(d)\vartheta_p(d)d^{2/p}L\Delta}{\eps^2}
    }.
\end{align*}
At $p=\infty$, for every $d\ge1$,
\begin{align*}
    \left.N_{*, \mathrm{noise}}^{\mathrm{Rad}}\right|_{p=\infty}
    &=
    \widetilde{\cO}\rb{
        \frac{\ln(ed)L\Delta}{\eps^2}
    }, \\
    \left.N_{*, \mathrm{noise}}^{\mathrm{Gau}}\right|_{p=\infty}
    &=
    \widetilde{\cO}\rb{
        \frac{(\ln(ed))^2L\Delta}{\eps^2}
    }.
\end{align*}
\end{minipage}

\newpage
\noindent\textbf{Spaces with Schatten-$p$ norms.}
The diagonal perturbations in \Cref{def:rademacher_oracle_alt} and \Cref{def:gaussian_oracle_alt} allow the vector noise-scale
calculations to carry over with $d$ replaced by $r=\min\{m, n\}$. For $u\in\R^r$ and $\nu>0$, the singular values of
$\nu\Diag_{m, n}(u)$ are the numbers $\nu|u_i|$ arranged in nonincreasing order. Consequently,
\begin{align*}
    \norm{\nu\Diag_{m, n}(u)}_{S_p}=\nu\norm{u}_p,
    \qquad 1\le p\le\infty.
\end{align*}
The resulting statistical coefficients have analogous asymptotic dimension dependence, but use the separately defined Schatten regularity parameter $\kappa_{S_p}(r)$.
It need not equal $\kappa_{\ell_p}(d)$ even when $d=r$.

\noindent
\begin{minipage}[t]{0.475\textwidth}
\small
\noindent\textbf{Euclidean/Frobenius approach}

For $1\le p<2$,
\begin{align*}
    N_{\opf, \mathrm{noise}}^{\mathsf M}
    &=
    \widetilde{\cO}\rb{
        \frac{r^{2/p}L\Delta}{\eps^2}
    }.
\end{align*}
In particular, at $p=1$,
\begin{align*}
    \left.N_{\opf, \mathrm{noise}}^{\mathsf M}\right|_{p=1}
    &=
    \widetilde{\cO}\rb{
        \frac{r^2L\Delta}{\eps^2}
    }.
\end{align*}
For every $2\le p\le\infty$,
\begin{align*}
    N_{\opf, \mathrm{noise}}^{\mathsf M}
    &=
    \widetilde{\cO}\rb{
        \frac{rL\Delta}{\eps^2}
    }.
\end{align*}
\end{minipage}
\hfill
\begin{minipage}[t]{0.475\textwidth}
\small
\noindent\textbf{Geometry-native approach}

For $1\le p<2$,
\begin{align*}
    N_{*, \mathrm{noise}}^{\mathsf M}
    &=
    \widetilde{\cO}\rb{
        \frac{r^{4/p-1}L\Delta}{\eps^2}
    }.
\end{align*}
In particular, at $p=1$,
\begin{align*}
    \left.N_{*, \mathrm{noise}}^{\mathsf M}\right|_{p=1}
    &=
    \widetilde{\cO}\rb{
        \frac{r^3L\Delta}{\eps^2}
    }.
\end{align*}
At $p=2$,
\begin{align*}
    \left.N_{*, \mathrm{noise}}^{\mathsf M}\right|_{p=2}
    &=
    \widetilde{\cO}\rb{
        \frac{rL\Delta}{\eps^2}
    }.
\end{align*}
For $2<p<\infty$,
\begin{align*}
    N_{*, \mathrm{noise}}^{\mathrm{Rad}}
    &=
    \widetilde{\cO}\rb{
        \frac{\kappa_{S_p}(r)r^{2/p}L\Delta}{\eps^2}
    }, \\
    N_{*, \mathrm{noise}}^{\mathrm{Gau}}
    &=
    \widetilde{\cO}\rb{
        \frac{\kappa_{S_p}(r)\vartheta_p(r)r^{2/p}L\Delta}{\eps^2}
    }.
\end{align*}
At $p=\infty$, for every $r\ge1$,
\begin{align*}
    \left.N_{*, \mathrm{noise}}^{\mathrm{Rad}}\right|_{p=\infty}
    &=
    \widetilde{\cO}\rb{
        \frac{\ln(er)L\Delta}{\eps^2}
    }, \\
    \left.N_{*, \mathrm{noise}}^{\mathrm{Gau}}\right|_{p=\infty}
    &=
    \widetilde{\cO}\rb{
        \frac{(\ln(er))^2L\Delta}{\eps^2}
    }.
\end{align*}
\end{minipage}

\subsection{Detailed Results and Proofs}
\label{subsec:frobenius_light_tail_proofs_alt}

We first establish the Euclidean/Frobenius conversion bound used in the summary.

\begin{lemma}[Frobenius-to-dual concentration]
\label{lem:frobenius_to_dual_conc_alt}
Let $\{\eta^j\}_{j=1}^B$ be i.i.d. random elements of $E^*$ such that $\Exp{\eta^j}=0$ and
$\Exp{\exp\rb{\frac{\sqfnorm{\eta^j}}{\sigma_{\opf}^2}}}\le\exp(1)$.
Define $\bar\eta\eqdef B^{-1}\sum_{j=1}^B\eta^j$.
Then, for every $\lambda\ge0$,
\begin{align}
\label{eq:prob_concentration_alt}
    \Prob\cb{
        \snorm{\bar\eta}
        \ge
        \frac1\alpha
        \rb{\sqrt2+\sqrt2\lambda}
        \frac{\sigma_{\opf}}{\sqrt B}
    }
    &\le
    \exp\rb{-\lambda^2/3}.
\end{align}
\end{lemma}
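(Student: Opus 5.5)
The plan is to reduce the lemma to \Cref{lemma:non_euclidean_large_deviations} applied in the Euclidean/Frobenius geometry, and then transfer the bound to $\snorm{\cdot}$ with the left inequality of \eqref{eq:norm_equiv_2_light_tail_alt}.

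First, I would view $E^*$ with the norm $\fnorm{\cdot}$ as a normed space in its own right. This is a Hilbert norm, so $\phi(u)=\frac12\sqfnorm{u}$ is continuously differentiable and satisfies the smoothness inequality of \Cref{def:kappa_smooth_regular} with equality and constant $1$. Taking the auxiliary norm equal to $\fnorm{\cdot}$ itself, with $\kappa_+=1$, makes the compatibility chain \eqref{eq:reg_kappa_def_gen} trivial. Hence $(E^*,\fnorm{\cdot})$ is $1$-regular.

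Second, I would invoke the concentration argument of \Cref{lemma:non_euclidean_large_deviations} in this space, with $\kappa=1$ and noise scale $\sigma_{\opf}$. A point needs care here. That lemma is phrased for the dual norm $\snorm{\cdot}$ of the update norm. Its proof, however, only uses regularity of the normed space in which the vectors live together with the light-tail hypothesis in that norm, as the martingale argument of \citet{juditsky2008large} is stated for a generic regular space $V$. So I would either observe that the proof is norm-agnostic, or realize $\fnorm{\cdot}$ formally as the dual of the Euclidean norm on $E$; the Euclidean norm is self-dual, so this is a legitimate instance. The hypotheses are then exactly those of the present lemma: i.i.d. samples, zero mean, and $\Exp{\exp(\sqfnorm{\eta^j}/\sigma_{\opf}^2)}\le\exp(1)$. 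The conclusion is
\[
\Prob\cb{\fnorm{\bar\eta}\ge(\sqrt2+\sqrt2\lambda)\frac{\sigma_{\opf}}{\sqrt B}}\le\exp(-\lambda^2/3).
\]

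Third, I would use $\snorm{\bar\eta}\le\alpha^{-1}\fnorm{\bar\eta}$. This gives the event inclusion
\[
\cb{\snorm{\bar\eta}\ge\alpha^{-1}(\sqrt2+\sqrt2\lambda)\sigma_{\opf}/\sqrt B}\subseteq\cb{\fnorm{\bar\eta}\ge(\sqrt2+\sqrt2\lambda)\sigma_{\opf}/\sqrt B},
\]
and monotonicity of probability yields \eqref{eq:prob_concentration_alt}.

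The main obstacle is the second step. It requires justifying that the earlier lemma may be applied with the Frobenius norm in place of $\snorm{\cdot}$, that is, checking that its proof does not use any property specific to the update-norm duality beyond regularity of the ambient norm. The remaining steps are immediate.
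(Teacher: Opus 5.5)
Your proposal is correct and follows the paper's proof: you note that $(E^*,\fnorm{\cdot})$ is $1$-regular, apply \Cref{lemma:non_euclidean_large_deviations} in that space with $\kappa=1$, and transfer the bound to $\snorm{\cdot}$ through the event inclusion $\{\snorm{\bar\eta}\ge t\}\subseteq\{\fnorm{\bar\eta}\ge\alpha t\}$. The paper leaves implicit why that lemma may be applied with $\fnorm{\cdot}$ in place of $\snorm{\cdot}$, and your justification (realizing $\fnorm{\cdot}$ as the dual of a Hilbert norm on $E$) fills that in.
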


\begin{proof}
The Euclidean space $(E^*, \fnorm{\cdot})$ is $1$-regular. Applying
\Cref{lemma:non_euclidean_large_deviations} in this space gives
\begin{align*}
    \Prob\cb{
        \fnorm{\bar\eta}
        \ge
        \rb{\sqrt2+\sqrt2\lambda}
        \frac{\sigma_{\opf}}{\sqrt B}
    }
    &\le
    \exp\rb{-\lambda^2/3}.
\end{align*}
For any $t\ge0$, the left inequality in
\eqref{eq:norm_equiv_2_light_tail_alt} implies the event inclusion
\begin{align*}
    \cb{\snorm{\bar\eta}\ge t}
    &\subseteq
    \cb{\fnorm{\bar\eta}\ge\alpha t}.
\end{align*}
Taking
\begin{align*}
    t
    &\eqdef
    \frac1\alpha
    \rb{\sqrt2+\sqrt2\lambda}
    \frac{\sigma_{\opf}}{\sqrt B}
\end{align*}
proves \eqref{eq:prob_concentration_alt}.
\end{proof}

We next establish the Gaussian light-tail scale used for both norm families.

\begin{lemma}[Gaussian light tails in $\ell_p$]
\label{lem:gaussian_light_tail_scale_alt}
Let $\omega\sim\mathcal N(0, I_D)$ and $p\in[1, \infty]$. For every $\nu>0$,
\begin{align}
\label{eq:gaussian_light_tail_scale_alt}
    \Exp{
        \exp\rb{
            \frac{
                \norm{\nu\omega}_p^2
            }{
                4\vartheta_p(D)D^{2/p}\nu^2
            }
        }
    }
    &\le
    \exp(1).
\end{align}
\end{lemma}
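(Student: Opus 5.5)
The plan is to cancel $\nu$ at once, since both sides of \eqref{eq:gaussian_light_tail_scale_alt} are invariant under $\nu\mapsto t\nu$. It then suffices to show $\Exp{\exp(\norm{\omega}_p^2/(4H_p(D)))}\le e$, where $H_p(D)=\vartheta_p(D)D^{2/p}$. I will split into the two regimes in the definition \eqref{eq:gaussian_shape_factor_alt} of $\vartheta_p$.

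\emph{Case $1\le p\le 2$.} Here $\vartheta_p=1$, and the standard comparison $\norm{\omega}_p\le D^{1/p-1/2}\norm{\omega}_2$ gives $\norm{\omega}_p^2/(4D^{2/p})\le\norm{\omega}_2^2/(4D)$. Since $\norm{\omega}_2^2$ is $\chi^2_D$, its moment generating function is explicit:
\begin{align*}
\Exp{\exp(\norm{\omega}_2^2/(4D))}=\bigl(1-\tfrac{1}{2D}\bigr)^{-D/2}.
\end{align*}
On $[0,\tfrac12]$ the convex function $-\ln(1-x)$ lies below its chord, so $-\ln(1-x)\le 2\ln 2\cdot x$. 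Taking $x=1/(2D)$, the logarithm of the right-hand side is at most $\tfrac{D}{2}\cdot\tfrac{\ln 2}{D}=\tfrac{\ln2}{2}\le1$, which proves this case.

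\emph{Case $2<p\le\infty$.} Write $\norm{\omega}_p=m_p+X_p$ with $m_p=\Exp{\norm{\omega}_p}$. Because $p\ge2$, $\norm{\cdot}_p\le\norm{\cdot}_2$, so $\omega\mapsto\norm{\omega}_p$ is $1$-Lipschitz in the Euclidean norm. Gaussian concentration then gives $\Prob\{|X_p|\ge u\}\le 2e^{-u^2/2}$, and integrating this tail gives $\Exp{e^{\lambda X_p^2}}\le 1+\tfrac{2\lambda}{1/2-\lambda}$ for $\lambda<1/2$.

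Next I bound the mean by $m_p^2\le c\,H_p(D)$ with an absolute constant $c$, using whichever of two estimates is smaller:
\begin{enumerate}
\item Jensen gives $m_p\le(\Exp{\norm{\omega}_p^p})^{1/p}=D^{1/p}(\Exp{|\omega_1|^p})^{1/p}\le D^{1/p}\sqrt{p}$, using Gaussian moment bounds. This yields $m_p^2\le pD^{2/p}$.
\item $\norm{\omega}_p\le D^{1/p}\norm{\omega}_\infty$ together with $\Exp{\norm{\omega}_\infty}\le\sqrt{2\ln(2D)}$ yields $m_p^2\lesssim\ln(eD)D^{2/p}$.
\end{enumerate}
This covers $p=\infty$ as well, with $D^{2/\infty}=1$.

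Finally I combine the two pieces through $(a+b)^2\le(1+c')a^2+(1+1/c')b^2$, which gives
\begin{align*}
\Exp{\exp(\norm{\omega}_p^2/(4H))}\le \exp\bigl((1+c')m_p^2/(4H)\bigr)\,\Exp{\exp\bigl((1+1/c')X_p^2/(4H)\bigr)}.
\end{align*}
Here I also use $H\ge1$. This holds for $D\ge2$, since $\vartheta_p(D)\ge\min\{p,\ln(eD)\}\ge\min\{2,1+\ln 2\}>1$. The case $D=1$ can be checked directly.

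The main obstacle is the constants. The mean bound must be sharp enough that $m_p^2/(4H)$ stays well below $1$, leaving room for the fluctuation factor so that the product is at most $e$. I would first try to prove the refined mean bound $m_p^2\le H_p(D)$ and take $c'=1$. That gives a mean factor $e^{1/2}$ and a fluctuation factor computed with $\lambda=1/(2H)\le 1/2$. If the fluctuation factor is too large, I will replace the crude tail integration by the sharper Gaussian bound for Lipschitz functions, namely that $X_p$ is $1$-sub-Gaussian with variance proxy $1$, so $\Exp{e^{\lambda X_p^2}}\le(1-2\lambda)^{-1/2}$. Failing that, I will treat small $D$ separately by the $p\le2$-type chi-square computation.
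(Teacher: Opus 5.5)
Your reduction in $\nu$, the case $1\le p\le 2$, the case $D=1$, the Lipschitz concentration of $X_p$, and both mean estimates are correct and match the paper. (Your chord bound $-\ln(1-x)\le 2\ln 2\cdot x$ works as well as the paper's $-\ln(1-x)\le x/(1-x)$.) The gap is in the step that carries the lemma for $p>2$: combining the mean and the fluctuation within the explicit constants $4$ and $e$.

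\textbf{The refined mean bound is false.} The bound $m_p^2\le H_p(D)$ you intend to try fails when $p>\ln(eD)$. For $p=\infty$ one has $m_\infty^2=(\mathbb{E}\|\omega\|_\infty)^2\sim 2\ln D$, while $H_\infty(D)=1+\ln D$. In that regime your estimates give only $m_p^2\le\min\{p,2\ln(2D)\}D^{2/p}\le 2H_p(D)$, and the factor $2$ cannot be improved asymptotically. With this bound, the split with $c'=1$ gives a mean factor $\exp(2m_p^2/(4H))$ that is only bounded by $e$, and truly tends to $e$ as $D\to\infty$ at $p=\infty$. This happens before the fluctuation factor, which exceeds $1$, is multiplied in.

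\textbf{Retuning $c'$ fails at small $D$.} Take $D=2$ and $p=\infty$, so that $H=1+\ln 2$ and your bound gives $m_\infty^2\le 2\ln 4$. Even with the sharp estimate $\mathbb{E}e^{\lambda X_p^2}\le(1-2\lambda)^{-1/2}$, the exponent $(1+c')\frac{m_\infty^2}{4H}-\frac12\ln\bigl(1-\frac{1+1/c'}{2H}\bigr)$ is at least $1.26$ for every $c'>0$ at which it is finite.

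Neither of your listed fallbacks repairs the large-$D$ failure, since it comes from the mean term, not the fluctuation term. The underlying loss is that Young's inequality bounds the cross term $2m_pX_p$ pointwise and discards the fact that it has mean zero.

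\textbf{The missing idea is Gaussian linearization,} which is how the paper closes the argument. Take $\gamma\sim\mathcal N(0,1)$ independent of $\omega$ and set $H=H_p(D)$. Use the centered MGF bound $\mathbb{E}e^{sX_p}\le e^{s^2/2}$ from the Herbst argument, rather than only the tail bound. Then
\begin{align*}
\mathbb{E}\exp\Bigl(\frac{\|\omega\|_p^2}{4H}\Bigr)
=\mathbb{E}_{\gamma}\Bigl[e^{\gamma m_p/\sqrt{2H}}\,\mathbb{E}_{\omega}e^{\gamma X_p/\sqrt{2H}}\Bigr]
\le\mathbb{E}_{\gamma}\exp\Bigl(\frac{\gamma m_p}{\sqrt{2H}}+\frac{\gamma^2}{4H}\Bigr)
=\Bigl(1-\frac{1}{2H}\Bigr)^{-1/2}\exp\Bigl(\frac{m_p^2}{4H-2}\Bigr).
\end{align*}
The mean now enters only through $m_p^2/(4H-2)$, with no penalty on the fluctuation.
\begin{itemize}
\item For $2<p\le\ln(eD)$, use $m_p^2\le H$ and $H=pD^{2/p}>2$. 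The logarithm is then below $\frac12\ln\frac43+\frac13$.
\item For $p>\ln(eD)$ and $D\ge 2$, use $m_p^2\le 2H$ and $H\ge\ln(2e)>\frac32$. The logarithm is then below $\frac12\ln\frac32+\frac34$.
\end{itemize}
Both values are below $1$, uniformly in $D\ge2$.

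A repair inside your framework is possible: take $c'<1$ for large $D$, and use the $\chi^2$ comparison $\|\omega\|_p\le\|\omega\|_2$ for small $D$. However, this requires a numerically checked crossover between the two regimes, with thin margins, which the linearization avoids.
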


\begin{proof}
We first consider $1\le p\le2$. The norm comparison
\begin{align*}
    \norm{\omega}_p^2
    &\le
    D^{\frac2p-1}\norm{\omega}_2^2
\end{align*}
gives
\begin{align*}
    \Exp{
        \exp\rb{
            \frac{\norm{\nu\omega}_p^2}{4D^{2/p}\nu^2}
        }
    }
    &\le
    \Exp{
        \exp\rb{
            \frac{\norm{\omega}_2^2}{4D}
        }
    }.
\end{align*}
Since $\norm{\omega}_2^2$ has a chi-squared distribution with $D$ degrees of freedom,
\begin{align*}
    \Exp{
        \exp\rb{
            \frac{\norm{\omega}_2^2}{4D}
        }
    }
    &=
    \rb{1-\frac{1}{2D}}^{-D/2}.
\end{align*}
For $x\in[0, 1)$, the inequality $-\ln(1-x)\le \nfr{x}{1-x}$ yields
\begin{align*}
    -\frac D2
    \ln\rb{1-\frac{1}{2D}}
    &\le
    \frac14
    \frac1{1-\frac{1}{2D}}
    \le
    \frac12
    <
    1.
\end{align*}
Thus \eqref{eq:gaussian_light_tail_scale_alt} holds for $1\le p\le2$.

Now let $2<p\le\infty$ and set
\begin{align*}
    m_p
    &\eqdef
    \Exp{\norm{\omega}_p}.
\end{align*}
For finite $p$, Jensen's inequality and the Gaussian moment formula give
\begin{align*}
    m_p
    &\le
    \rb{\Exp{\norm{\omega}_p^p}}^{1/p}\\
    &=
    D^{1/p}\rb{\Exp{|\omega_1|^p}}^{1/p}\\
    &=
    D^{1/p}
    \rb{
        \frac{2^{p/2}\Gamma\rb{(p+1)/2}}{\sqrt\pi}
    }^{1/p}\\
    &\le
    D^{1/p}\sqrt p.
\end{align*}
The last inequality is the standard Gaussian moment bound
$\rb{\Exp{|\omega_1|^p}}^{1/p}\le\sqrt p$. A second bound follows from
\begin{align*}
    \norm{\omega}_p
    &\le
    D^{1/p}\norm{\omega}_\infty.
\end{align*}
For any $t>0$, we first have
\begin{align*}
    \Exp{\exp\rb{t\norm{\omega}_\infty}}
    &\le
    \sum_{i=1}^D\Exp{\exp\rb{t|\omega_i|}}\\
    &\le
    2D\exp\rb{t^2/2}.
\end{align*}
The first inequality uses
\begin{align*}
    \exp\rb{t\max_{1\le i\le D}|\omega_i|}
    &\le
    \sum_{i=1}^D\exp\rb{t|\omega_i|},
\end{align*}
and the second uses
\begin{align*}
    \Exp{\exp\rb{t|\omega_i|}}
    &\le
    \Exp{\exp\rb{t\omega_i}}
    +
    \Exp{\exp\rb{-t\omega_i}}\\
    &=
    2\exp\rb{t^2/2}.
\end{align*}
Consequently,
\begin{align*}
    \Exp{\norm{\omega}_\infty}
    &\le
    \frac1t
    \ln\Exp{\exp\rb{t\norm{\omega}_\infty}}\\
    &\le
    \frac{\ln(2D)}t+\frac t2.
\end{align*}
Taking $t=\sqrt{2\ln(2D)}$ gives
\begin{align*}
    \Exp{\norm{\omega}_\infty}
    &\le
    \sqrt{2\ln(2D)}.
\end{align*}
Combining the two estimates, including the second one at $p=\infty$, gives
\begin{align}
\label{eq:gaussian_lp_mean_alt}
    m_p^2
    &\le
    \min\{p, 2\ln(2D)\}D^{2/p}.
\end{align}

Set
\begin{align*}
    H_p(D)
    &\eqdef
    \min\{p, \ln(eD)\}D^{2/p}.
\end{align*}
If $D=1$, then $H_p(1)=1$ and $\norm{\omega}_p=|\omega_1|$ for every $p>2$. Therefore,
\begin{align*}
    \Exp{\exp\rb{\frac{\norm{\omega}_p^2}{4H_p(1)}}}
    &=
    \Exp{\exp\rb{\frac{\omega_1^2}{4}}}\\
    &=
    \sqrt2\\
    &<
    \exp(1).
\end{align*}
It remains to consider $D\ge2$.

For $p>2$, the map $z\mapsto\norm{z}_p$ is $1$-Lipschitz with respect to $\norm{\cdot}_2$, because
\begin{align*}
    \abs{\norm{u}_p-\norm{v}_p}
    &\le
    \norm{u-v}_p
    \le
    \norm{u-v}_2.
\end{align*}
The moment-generating-function form of Gaussian concentration, obtained from the Gaussian log-Sobolev inequality by
the Herbst argument, therefore gives, for
\begin{align*}
    X_p
    &\eqdef
    \norm{\omega}_p-m_p,
\end{align*}
the bound
\begin{align}
\label{eq:gaussian_lp_centered_mgf_alt}
    \Exp{\exp\rb{sX_p}}
    &\le
    \exp\rb{s^2/2},
    \qquad
    s\in\R.
\end{align}
Let $\gamma\sim\mathcal N(0, 1)$ be independent of $\omega$. Gaussian linearization and
\eqref{eq:gaussian_lp_centered_mgf_alt} give
\begin{align*}
    \Exp{\exp\rb{\frac{\norm{\omega}_p^2}{4H_p(D)}}}
    &=
    \bb E_{\omega, \gamma}\left[
    \exp\rb{
        \frac{\gamma\norm{\omega}_p}{\sqrt{2H_p(D)}}
    }\right]\\
    &=
    \bb E_{\gamma}\left[
    \exp\rb{
        \frac{\gamma m_p}{\sqrt{2H_p(D)}}
    }
    \bb E_{\omega}\left[
        \exp\rb{
            \frac{\gamma X_p}{\sqrt{2H_p(D)}}
        }
    \right]\right]\\
    &\le
    \bb E_{\gamma}\left[
    \exp\rb{
        \frac{\gamma m_p}{\sqrt{2H_p(D)}}
        +
        \frac{\gamma^2}{4H_p(D)}
    }\right]\\
    &=
    \rb{1-\frac1{2H_p(D)}}^{-1/2}
    \exp\rb{
        \frac{m_p^2}{4H_p(D)-2}
    }.
\end{align*}

Suppose first that $2<p\le\ln(eD)$. Then $H_p(D)=pD^{2/p}>2$, and
\eqref{eq:gaussian_lp_mean_alt} gives $m_p^2\le H_p(D)$. Hence,
\begin{align*}
    \ln\Exp{\exp\rb{\frac{\norm{\omega}_p^2}{4H_p(D)}}}
    &\le
    -\frac12\ln\rb{1-\frac1{2H_p(D)}}
    +
    \frac{m_p^2}{4H_p(D)-2}\\
    &<
    \frac12\ln\rb{\frac43}
    +
    \frac13\\
    &<
    1.
\end{align*}

Suppose next that $p>\ln(eD)$. Since $D\ge2$,
\begin{align*}
    \ln(2D)
    &\le
    \ln(eD).
\end{align*}
Therefore, \eqref{eq:gaussian_lp_mean_alt} gives $m_p^2\le2H_p(D)$. Moreover,
\begin{align*}
    H_p(D)
    &=
    \ln(eD)D^{2/p}\\
    &\ge
    \ln(2e)\\
    &>
    \frac32.
\end{align*}
It follows that
\begin{align*}
    \ln\Exp{\exp\rb{\frac{\norm{\omega}_p^2}{4H_p(D)}}}
    &\le
    -\frac12\ln\rb{1-\frac1{2H_p(D)}}
    +
    \frac{m_p^2}{4H_p(D)-2}\\
    &<
    \frac12\ln\rb{\frac32}
    +
    \frac34\\
    &<
    1.
\end{align*}
Rescaling $\omega$ by $\nu$ proves \eqref{eq:gaussian_light_tail_scale_alt} for $p>2$.
\end{proof}

\subsubsection{\texorpdfstring{$\ell_p^d$ Spaces}{ell-p Spaces in Dimension d}}
\label{subsec:frobenius_vs_dual_lp_alt}

Let $(E^*, \snorm{\cdot})=(\R^d, \norm{\cdot}_p)$ with $p\in[1, \infty]$. The sharp one-sided Euclidean-to-$\ell_p$
constant is
\begin{align}
\label{eq:alpha_lp_alt}
    \alpha_p(d)^{-2}
    &\eqdef
    \begin{cases}
        d^{\frac2p-1},
        & 1\le p<2, \\[3pt]
        1,
        & 2\le p\le\infty.
    \end{cases}
\end{align}
Indeed,
\begin{align*}
    \norm{z}_p
    &\le
    d^{\frac1p-\frac12}\norm{z}_2,
    && 1\le p<2,
\end{align*}
whereas
\begin{align*}
    \norm{z}_p
    &\le
    \norm{z}_2,
    && 2\le p\le\infty.
\end{align*}

Write $\kappa_{\ell_p}(d)$ for the following admissible regularity parameter of $(\R^d, \norm{\cdot}_p)$:
\begin{align}
\label{eq:kappa_lp_alt}
    \kappa_{\ell_p}(d)
    &\eqdef
    \begin{cases}
        d^{\frac2p-1},
        & 1\le p<2, \\[5pt]
        \displaystyle
        \min_{\substack{2\le\rho\le p\\ \rho<\infty}}
        (\rho-1)d^{\frac2\rho-\frac2p},
        & 2\le p\le\infty.
    \end{cases}
\end{align}
This is the parameter constructed in \Cref{example:2}. It satisfies
\begin{align}
\label{eq:kappa_lp_upper_alt}
    \kappa_{\ell_p}(d)
    &\le
    2e\min\{p, \ln(ed)\},
    \qquad
    2\le p\le\infty.
\end{align}
The explicit factor $2e$ follows by taking $\rho=p$ when $p\le2\ln(ed)$ and $\rho=2\ln(ed)$ when
$p>2\ln(ed)$. The use of $\ln(ed)$ keeps this exact bound valid also at $d=1$.

\begin{proposition}[$\ell_p^d$ geometry under Rademacher perturbations]
\label{prop:frobenius_vs_dual_lp_rademacher_alt}
Let the oracle be as in \Cref{def:rademacher_oracle_alt}. Then the smallest admissible light-tail scales are
\begin{align}
\label{eq:sigma_lp_rademacher_alt}
    \sigma_{\opf}^2
    &=
    d\nu^2,
    &
    \sigma^2
    &=
    d^{2/p}\nu^2.
\end{align}
With $\alpha^{-2}=\alpha_p(d)^{-2}$ and $\kappa=\kappa_{\ell_p}(d)$, the coefficients in
\eqref{eq:statistical_coefficients_alt} are
\begin{align}
\label{eq:coeff_lp_rademacher_alt}
    \mathsf C_{\opf}^{\mathrm{Rad}}
    &\eqdef
    \begin{cases}
        d^{2/p}\nu^2,
        & 1\le p<2, \\[3pt]
        d\nu^2,
        & 2\le p\le\infty,
    \end{cases}
    \\
    \mathsf C_*^{\mathrm{Rad}}
    &\eqdef
    \begin{cases}
        d^{\frac4p-1}\nu^2,
        & 1\le p<2, \\[3pt]
        \kappa_{\ell_p}(d)d^{2/p}\nu^2,
        & 2\le p\le\infty.
    \end{cases}
\end{align}
The coefficients agree at $p=2$. For $p<2$, the native coefficient contains one additional factor
$d^{2/p-1}$. For every fixed $p>2$,
\begin{align*}
    \mathsf C_*^{\mathrm{Rad}}
    &\le
    2e\min\{p, \ln(ed)\}d^{2/p}\nu^2,
\end{align*}
while $\mathsf C_{\opf}^{\mathrm{Rad}}=d\nu^2$. At $p=\infty$ and $d\ge1$, the displayed dependences are
$\cO(\ln(ed)\nu^2)$ and $d\nu^2$, respectively.
\end{proposition}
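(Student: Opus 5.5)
The key observation is that the Rademacher perturbation has a \emph{deterministic} norm in every $\ell_p$ geometry, so the light-tail condition can be solved exactly. I will first determine the smallest admissible scales \eqref{eq:sigma_lp_rademacher_alt}, and then obtain the coefficients by substituting into \eqref{eq:statistical_coefficients_alt}.

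\textbf{Step 1: the scales.} Since $|\chi_i|=1$ almost surely, the perturbation $\zeta=\nu\chi$ satisfies
\begin{align*}
    \norm{\zeta}_2^2=d\nu^2,
    \qquad
    \norm{\zeta}_p^2=d^{2/p}\nu^2
    \qquad\text{almost surely},
\end{align*}
with the convention $d^{2/\infty}=1$. Fix a norm, and suppose $\norm{\zeta}^2=c^2$ almost surely for a constant $c>0$. Then the exponential moment is deterministic:
\begin{align*}
    \Exp{\exp\rb{\norm{\zeta}^2/s^2}}=\exp\rb{c^2/s^2}.
\end{align*}
This is at most $\exp(1)$ if and only if $s^2\ge c^2$. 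Hence the smallest admissible scale is exactly $c^2$. Applying this with the Euclidean norm (for \Cref{as:stoch_frobenius_light_tail_alt}) and with $\norm{\cdot}_p$ (for \Cref{as:non_euclidean_light_tail}) gives \eqref{eq:sigma_lp_rademacher_alt}. The oracle is unbiased because $\Exp{\chi}=0$, and the error $\zeta$ does not depend on $x$, so both assumptions hold uniformly in $x\in E$.

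\textbf{Step 2: the coefficients.} Multiply the scales from Step 1 by $\alpha_p(d)^{-2}$ from \eqref{eq:alpha_lp_alt} and by $\kappa_{\ell_p}(d)$ from \eqref{eq:kappa_lp_alt}.
\begin{itemize}
    \item For $1\le p<2$, this gives $\mathsf C_{\opf}^{\mathrm{Rad}}=d^{2/p-1}\cdot d\nu^2=d^{2/p}\nu^2$ and $\mathsf C_*^{\mathrm{Rad}}=d^{2/p-1}\cdot d^{2/p}\nu^2=d^{4/p-1}\nu^2$. Their ratio is the extra factor $d^{2/p-1}$.
    \item For $2\le p\le\infty$, we have $\alpha_p(d)^{-2}=1$, so $\mathsf C_{\opf}^{\mathrm{Rad}}=d\nu^2$ and $\mathsf C_*^{\mathrm{Rad}}=\kappa_{\ell_p}(d)d^{2/p}\nu^2$.
    \item At $p=2$, the minimization in \eqref{eq:kappa_lp_alt} ranges only over $\rho=2$, which gives $\kappa_{\ell_2}(d)=1$. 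Both coefficients therefore equal $d\nu^2$.
\end{itemize}

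\textbf{Step 3: the bounds for $p>2$.} Insert \eqref{eq:kappa_lp_upper_alt} to get
\begin{align*}
    \mathsf C_*^{\mathrm{Rad}}\le 2e\min\{p,\ln(ed)\}d^{2/p}\nu^2.
\end{align*}
At $p=\infty$, where $d^{2/p}=1$, this reads $\mathsf C_*^{\mathrm{Rad}}\le 2e\ln(ed)\nu^2=\cO(\ln(ed)\nu^2)$, against $\mathsf C_{\opf}^{\mathrm{Rad}}=d\nu^2$.

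\textbf{Main point to check.} The only step needing care is \eqref{eq:kappa_lp_upper_alt} itself, which I take as given but would verify with the indicated choices of $\rho$.
\begin{itemize}
    \item If $p\le 2\ln(ed)$, take $\rho=p$. This gives $\kappa_{\ell_p}(d)\le p-1\le p$, which already has the required form.
    \item If $p>2\ln(ed)$, take $\rho=2\ln(ed)\ge 2$. This gives
    \begin{align*}
        (\rho-1)d^{2/\rho}\le 2\ln(ed)\,d^{1/\ln(ed)}\le 2e\ln(ed),
    \end{align*}
    using $d^{1/\ln(ed)}\le e$. Here $d^{-2/p}\le 1$ was dropped, and $p=\infty$ is allowed because $\rho$ stays finite.
\end{itemize}
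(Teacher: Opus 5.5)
Your proposal is correct and follows the paper's proof essentially step for step. The deterministic norms $\norm{\zeta}_2^2=d\nu^2$ and $\norm{\zeta}_p^2=d^{2/p}\nu^2$ make the light-tail condition equivalent to $s^2\ge c^2$, the coefficients follow by substitution, and the $p>2$ bounds come from \eqref{eq:kappa_lp_upper_alt}. Your extra verification of \eqref{eq:kappa_lp_upper_alt} is also sound. In the branch $p\le 2\ln(ed)$, just state explicitly that $p\le 2\min\{p,\ln(ed)\}$, since that inequality is what brings $p-1$ into the stated form.
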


\begin{proof}
Every coordinate of $\zeta=\nu\chi$ has absolute value $\nu$. Therefore,
\begin{align*}
    \norm{\zeta}_2^2
    &=
    \nu^2\sum_{i=1}^d\chi_i^2\\
    &=
    d\nu^2.
\end{align*}
For finite $p$,
\begin{align*}
    \norm{\zeta}_p^2
    &=
    \rb{\sum_{i=1}^d|\nu\chi_i|^p}^{2/p}\\
    &=
    \rb{d\nu^p}^{2/p}\\
    &=
    d^{2/p}\nu^2.
\end{align*}
At $p=\infty$,
\begin{align*}
    \norm{\zeta}_\infty^2
    &=
    \max_{1\le i\le d}|\nu\chi_i|^2\\
    &=
    \nu^2,
\end{align*}
which agrees with $d^{2/\infty}=1$.

Let $\widetilde\sigma_{\opf}>0$ and $\widetilde\sigma>0$ be arbitrary candidate light-tail scales for the Euclidean
and $\ell_p$ norms, respectively. Since both norms of $\zeta$ are deterministic,
\begin{align*}
    \Exp{\exp\rb{\frac{\norm{\zeta}_2^2}{\widetilde\sigma_{\opf}^2}}}
    &\le
    \exp(1)
    &\Longleftrightarrow
    \widetilde\sigma_{\opf}^2
    \ge
    d\nu^2,
    \\
    \Exp{\exp\rb{\frac{\norm{\zeta}_p^2}{\widetilde\sigma^2}}}
    &\le
    \exp(1)
    &\Longleftrightarrow
    \widetilde\sigma^2
    \ge
    d^{2/p}\nu^2.
\end{align*}
This proves \eqref{eq:sigma_lp_rademacher_alt} and shows that both scales are minimal.

For $1\le p<2$, substituting \eqref{eq:alpha_lp_alt}, \eqref{eq:kappa_lp_alt}, and
\eqref{eq:sigma_lp_rademacher_alt} gives
\begin{align*}
    \mathsf C_{\opf}^{\mathrm{Rad}}
    &=
    d^{\frac2p-1}\cdot d\nu^2
    =
    d^{2/p}\nu^2,
    \\
    \mathsf C_*^{\mathrm{Rad}}
    &=
    d^{\frac2p-1}\cdot d^{2/p}\nu^2
    =
    d^{\frac4p-1}\nu^2.
\end{align*}
For $2\le p\le\infty$, $\alpha_p(d)^{-2}=1$, so
\begin{align*}
    \mathsf C_{\opf}^{\mathrm{Rad}}
    &=
    d\nu^2,
    \\
    \mathsf C_*^{\mathrm{Rad}}
    &=
    \kappa_{\ell_p}(d)d^{2/p}\nu^2.
\end{align*}
The bound \eqref{eq:kappa_lp_upper_alt} gives the stated consequences.
\end{proof}

\begin{proposition}[$\ell_p^d$ geometry under Gaussian perturbations]
\label{prop:frobenius_vs_dual_lp_gaussian_alt}
Let the oracle be as in \Cref{def:gaussian_oracle_alt}. Using the explicit universal constant from
\Cref{lem:gaussian_light_tail_scale_alt}, the scales
\begin{align}
\label{eq:sigma_lp_gaussian_alt}
    \sigma_{\opf}^2
    &\eqdef
    4d\nu^2,
    &
    \sigma^2
    &\eqdef
    4\vartheta_p(d)d^{2/p}\nu^2
\end{align}
are admissible. The Frobenius scale is also of the smallest possible order: every admissible scale
$\widetilde\sigma_{\opf}$ satisfies $\widetilde\sigma_{\opf}^2\ge d\nu^2$.

For the choices in \eqref{eq:sigma_lp_gaussian_alt}, the coefficients are
\begin{align}
\label{eq:coeff_lp_gaussian_alt}
    \mathsf C_{\opf}^{\mathrm{Gau}}
    &\eqdef
    4
    \begin{cases}
        d^{2/p}\nu^2,
        & 1\le p<2, \\[3pt]
        d\nu^2,
        & 2\le p\le\infty,
    \end{cases}
    \\
    \mathsf C_*^{\mathrm{Gau}}
    &=
    4
    \begin{cases}
        d^{\frac4p-1}\nu^2,
        & 1\le p<2, \\[3pt]
        \kappa_{\ell_p}(d)\vartheta_p(d)d^{2/p}\nu^2,
        & 2\le p\le\infty.
    \end{cases}
\end{align}
At $p=2$, both coefficients are $4d\nu^2$. For every fixed $p>2$,
\begin{align*}
    \mathsf C_*^{\mathrm{Gau}}
    &\le
    8e
    \min\{p, \ln(ed)\}^2d^{2/p}\nu^2,
\end{align*}
whereas $\mathsf C_{\opf}^{\mathrm{Gau}}=4d\nu^2$. At $p=\infty$ and $d\ge1$, the displayed dependences are
$\cO((\ln(ed))^2\nu^2)$ and $d\nu^2$, respectively.
\end{proposition}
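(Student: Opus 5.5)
The plan mirrors the Rademacher case, \Cref{prop:frobenius_vs_dual_lp_rademacher_alt}, with the deterministic norm computation replaced by the Gaussian exponential-moment bound of \Cref{lem:gaussian_light_tail_scale_alt}. The proof splits into four steps: admissibility of the two scales, a lower bound for the Euclidean scale, assembly of the coefficients, and the asymptotic consequences.

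\emph{Admissibility.} Apply \Cref{lem:gaussian_light_tail_scale_alt} with $D=d$ and the given $p$. This shows directly that $\sigma^2=4\vartheta_p(d)d^{2/p}\nu^2$ satisfies \eqref{eq:dual_light_tail}. Applying the same lemma with $p=2$ gives $\vartheta_2(d)=1$ and $d^{2/2}=d$, so $\sigma_{\opf}^2=4d\nu^2$ satisfies \eqref{eq:frobenius_light_tail_main}.

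\emph{Lower bound on the Euclidean scale.} Suppose $\widetilde\sigma_{\opf}$ is any admissible scale. By Jensen's inequality applied to the convex function $\exp$,
\[
\exp\bigl(\Exp{\norm{\zeta}_2^2}/\widetilde\sigma_{\opf}^2\bigr)\le\Exp{\exp\bigl(\norm{\zeta}_2^2/\widetilde\sigma_{\opf}^2\bigr)}\le\exp(1).
\]
Since $\Exp{\norm{\zeta}_2^2}=d\nu^2$, this forces $\widetilde\sigma_{\opf}^2\ge d\nu^2$.

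\emph{Coefficients.} Substitute \eqref{eq:alpha_lp_alt}, \eqref{eq:kappa_lp_alt} and \eqref{eq:sigma_lp_gaussian_alt} into \eqref{eq:statistical_coefficients_alt}.
\begin{itemize}
\item For $1\le p<2$: $\alpha^{-2}=d^{2/p-1}$, $\kappa=d^{2/p-1}$ and $\vartheta_p=1$. This gives $\mathsf C_{\opf}=4d^{2/p}\nu^2$ and $\mathsf C_*=4d^{4/p-1}\nu^2$.
\item For $2\le p\le\infty$: $\alpha^{-2}=1$, so $\mathsf C_{\opf}=4d\nu^2$ and $\mathsf C_*=4\kappa_{\ell_p}(d)\vartheta_p(d)d^{2/p}\nu^2$.
\item At $p=2$: $\kappa_{\ell_2}(d)=1$ (take $\rho=2$) and $\vartheta_2=1$, so both coefficients equal $4d\nu^2$.
\end{itemize}

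\emph{Bound for $p>2$.} Combine \eqref{eq:kappa_lp_upper_alt}, namely $\kappa_{\ell_p}(d)\le2e\min\{p,\ln(ed)\}$, with $\vartheta_p(d)=\min\{p,\ln(ed)\}$. This gives
\[
\mathsf C_*^{\mathrm{Gau}}\le8e\min\{p,\ln(ed)\}^2d^{2/p}\nu^2 .
\]
At $p=\infty$ we have $d^{2/p}=1$ and $\min\{p,\ln(ed)\}=\ln(ed)$, so $\mathsf C_*^{\mathrm{Gau}}=\cO((\ln(ed))^2\nu^2)$, compared with $\mathsf C_{\opf}^{\mathrm{Gau}}=4d\nu^2$.

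I expect no real obstacle, because the heavy lifting is done by \Cref{lem:gaussian_light_tail_scale_alt}. The points needing care are bookkeeping: checking that the lemma's case split at $p=2$ matches the definition of $\vartheta_p$ in \eqref{eq:gaussian_shape_factor_alt}, and that the convention $d^{2/\infty}=1$ is applied consistently at $p=\infty$.
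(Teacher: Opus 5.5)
Your proposal is correct and follows the paper's proof essentially step for step. Admissibility of both scales comes from \Cref{lem:gaussian_light_tail_scale_alt} with $D=d$ (using $p=2$ for the Euclidean scale), and Jensen gives $\widetilde\sigma_{\opf}^2\ge d\nu^2$. The coefficients follow by direct substitution, and for $p>2$ you combine \eqref{eq:kappa_lp_upper_alt} with $\vartheta_p(d)=\min\{p,\ln(ed)\}$; your explicit check that $\kappa_{\ell_2}(d)=1$ at $p=2$ is a detail the paper leaves implicit.
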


\begin{proof}
The Frobenius claim follows from \Cref{lem:gaussian_light_tail_scale_alt} with $D=d$ and $p=2$. To see that its
order cannot be smaller, let $\widetilde\sigma_{\opf}$ be any admissible Frobenius scale. Jensen's inequality gives
\begin{align*}
    \exp(1)
    &\ge
    \Exp{\exp\rb{\frac{\norm{\nu\omega}_2^2}{\widetilde\sigma_{\opf}^2}}}\\
    &\ge
    \exp\rb{
        \frac{\Exp{\norm{\nu\omega}_2^2}}{\widetilde\sigma_{\opf}^2}
    }.
\end{align*}
Since
\begin{align*}
    \Exp{\norm{\nu\omega}_2^2}
    &=
    d\nu^2,
\end{align*}
we obtain
\begin{align*}
    \widetilde\sigma_{\opf}^2
    &\ge
    d\nu^2.
\end{align*}

The native scale follows from the same lemma with $D=d$ and the given value of $p$. It remains to substitute the two
scales into \eqref{eq:statistical_coefficients_alt}. If $1\le p<2$, then
\begin{align*}
    \mathsf C_{\opf}^{\mathrm{Gau}}
    &=
    d^{\frac2p-1}\cdot 4d\nu^2
    =
    4d^{2/p}\nu^2,
    \\
    \mathsf C_*^{\mathrm{Gau}}
    &=
    d^{\frac2p-1}\cdot 4d^{2/p}\nu^2
    =
    4d^{\frac4p-1}\nu^2.
\end{align*}
If $2\le p\le\infty$, then
\begin{align*}
    \mathsf C_{\opf}^{\mathrm{Gau}}
    &=
    4d\nu^2,
    \\
    \mathsf C_*^{\mathrm{Gau}}
    &=
    4\kappa_{\ell_p}(d)\vartheta_p(d)d^{2/p}\nu^2.
\end{align*}
Using \eqref{eq:kappa_lp_upper_alt} proves the final two statements.
\end{proof}

\begin{remark}[One-sparse perturbations]
\label{rem:one_sparse_noise_alt}
Fix $\nu>0$. Let $\chi$ be Rademacher and $I$ be uniform on $[D]$, independently of $\chi$, and let
$e_1, \ldots, e_D$ be the standard basis of $\R^D$. Define
\begin{align*}
    \zeta&\eqdef\nu\chi e_I &&\text{in the vector case, }D=d,\\
    \zeta&\eqdef\nu\Diag_{m, n}(\chi e_I) &&\text{in the matrix case, }D=r=\min\{m, n\}.
\end{align*}
The oracle returns $\nabla f(x)+\zeta$ with a fresh independent pair $(\chi, I)$ at every call.
In both cases, $\Exp{\zeta}=0$, and the Euclidean/Frobenius norm and the native $\ell_p$/Schatten-$p$ norm
of $\zeta$ equal $\nu$ almost surely for every $1\le p\le\infty$.
Thus the light-tail condition reduces to $\exp\rb{\frac{\nu^2}{\sigma^2}}\le e$, and the minimal squared light-tail scales
are $\sigma_{\opf}^2=\sigma^2=\nu^2$. For $1\le p\le2$, the chosen conversion and regularity factors both equal
$D^{2/p-1}$, so
\begin{align*}
    \mathsf C_{\opf}=\mathsf C_*=D^{2/p-1}\nu^2.
\end{align*}
Sparsity alone does not ensure that the coefficients coincide for other perturbation models.
\end{remark}

\subsubsection{Spaces with Schatten-\texorpdfstring{$p$}{p} Norms on \texorpdfstring{$\R^{m\times n}$}{real m-by-n matrices}}
\label{subsec:frobenius_vs_dual_schatten_alt}

Let $(E^*, \snorm{\cdot})=(\R^{m\times n}, \norm{\cdot}_{S_p})$, where $p\in[1, \infty]$ and
$r\eqdef\min\{m, n\}$. The Frobenius norm is $\fnorm{\cdot}=\norm{\cdot}_{S_2}$. Applying the vector norm comparison
to the singular-value vector gives
\begin{align}
\label{eq:alpha_schatten_alt}
    \alpha_{S_p}(r)^{-2}
    &\eqdef
    \begin{cases}
        r^{\frac2p-1},
        & 1\le p<2, \\[3pt]
        1,
        & 2\le p\le\infty.
    \end{cases}
\end{align}
Indeed,
\begin{align*}
    \norm{Z}_{S_p}
    &\le
    r^{\frac1p-\frac12}\fnorm{Z},
    && 1\le p<2,
\end{align*}
whereas
\begin{align*}
    \norm{Z}_{S_p}
    &\le
    \fnorm{Z},
    && 2\le p\le\infty.
\end{align*}

Choose an admissible regularity parameter $\kappa_{S_p}(r)$ as follows:
\begin{align}
\label{eq:kappa_schatten_alt}
    \kappa_{S_p}(r)
    &\eqdef
    \begin{cases}
        r^{\frac2p-1},
        & 1\le p<2, \\[3pt]
        1,
        & p=2, \\[5pt]
        \displaystyle
        \min_{\substack{2\le\rho\le p\\ \rho<\infty}}
        \max\{2, \rho-1\}r^{\frac2\rho-\frac2p},
        & 2<p\le\infty.
    \end{cases}
\end{align}
The $p=2$ line uses the exact Hilbert-space constant. The remaining lines follow from \Cref{example:3}. In particular,
\begin{align}
\label{eq:kappa_schatten_upper_alt}
    \kappa_{S_p}(r)
    &\le
    2e\min\{p, \ln(er)\},
    \qquad
    2\le p\le\infty.
\end{align}
As in the vector case, for $p>2$ take $\rho=p$ when $p\le2\ln(er)$ and $\rho=2\ln(er)$ otherwise. The factor
$2e$ is valid also at $r=1$.

\begin{proposition}[Schatten-$p$ geometry under diagonal Rademacher perturbations]
\label{prop:frobenius_vs_dual_schatten_rademacher_alt}
Let the oracle be as in \Cref{def:rademacher_oracle_alt}. Then the smallest admissible light-tail scales are
\begin{align}
\label{eq:sigma_schatten_rademacher_alt}
    \sigma_{\opf}^2
    &=
    r\nu^2,
    &
    \sigma^2
    &=
    r^{2/p}\nu^2.
\end{align}
With $\alpha^{-2}=\alpha_{S_p}(r)^{-2}$ and $\kappa=\kappa_{S_p}(r)$, the statistical coefficients are
\begin{align}
\label{eq:coeff_schatten_rademacher_alt}
    \mathsf C_{\opf}^{\mathrm{Rad}}
    &\eqdef
    \begin{cases}
        r^{2/p}\nu^2,
        & 1\le p<2, \\[3pt]
        r\nu^2,
        & 2\le p\le\infty,
    \end{cases}
    \\
    \mathsf C_*^{\mathrm{Rad}}
    &\eqdef
    \begin{cases}
        r^{\frac4p-1}\nu^2,
        & 1\le p<2, \\[3pt]
        \kappa_{S_p}(r)r^{2/p}\nu^2,
        & 2\le p\le\infty.
    \end{cases}
\end{align}
The coefficients agree at $p=2$. For $p<2$, the native coefficient contains one additional factor
$r^{2/p-1}$. For every fixed $p>2$,
\begin{align*}
    \mathsf C_*^{\mathrm{Rad}}
    &\le
    2e\min\{p, \ln(er)\}r^{2/p}\nu^2,
\end{align*}
whereas $\mathsf C_{\opf}^{\mathrm{Rad}}=r\nu^2$. At $p=\infty$ and $r\ge1$, the displayed dependences are
$\cO(\ln(er)\nu^2)$ and $r\nu^2$, respectively.
\end{proposition}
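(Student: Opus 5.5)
The plan is to mirror the proof of \Cref{prop:frobenius_vs_dual_lp_rademacher_alt}, reducing the matrix case to the vector case through the diagonal embedding. The perturbation is $\zeta=\nu\Diag_{m,n}(\chi)$ with $\chi=(\chi_1,\ldots,\chi_r)$. Its singular values are $\nu|\chi_i|=\nu$ for $i\le r$, so its full singular-value vector is $\nu\mathbf 1_r$. Hence, almost surely,
\begin{align*}
    \fnorm{\zeta}^2=\norm{\zeta}_{S_2}^2=r\nu^2,
    \qquad
    \norm{\zeta}_{S_p}^2=\norm{\nu\mathbf 1_r}_p^2=r^{2/p}\nu^2,
\end{align*}
with the convention $r^{2/\infty}=1$, so that $\norm{\zeta}_{S_\infty}^2=\nu^2$. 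Both norms are deterministic. For any candidate scale $\widetilde\sigma$, the light-tail condition therefore reads $\exp(c/\widetilde\sigma^2)\le e$, which is equivalent to $\widetilde\sigma^2\ge c$. This yields \eqref{eq:sigma_schatten_rademacher_alt} and shows that both scales are minimal. Unbiasedness holds because $\Exp{\chi_i}=0$ and fresh signs are drawn at every call.

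Next we substitute into \eqref{eq:statistical_coefficients_alt}, using $\alpha_{S_p}(r)^{-2}$ from \eqref{eq:alpha_schatten_alt} and $\kappa_{S_p}(r)$ from \eqref{eq:kappa_schatten_alt}.
\begin{itemize}
    \item For $1\le p<2$, both factors equal $r^{2/p-1}$. This gives $\mathsf C_{\opf}^{\mathrm{Rad}}=r^{2/p}\nu^2$ and $\mathsf C_*^{\mathrm{Rad}}=r^{4/p-1}\nu^2$, so the ratio is the extra factor $r^{2/p-1}$.
    \item For $2\le p\le\infty$, we have $\alpha^{-2}=1$, so $\mathsf C_{\opf}^{\mathrm{Rad}}=r\nu^2$ and $\mathsf C_*^{\mathrm{Rad}}=\kappa_{S_p}(r)r^{2/p}\nu^2$.
    \item At $p=2$, the $p=2$ line of \eqref{eq:kappa_schatten_alt} gives $\kappa_{S_2}(r)=1$, so both coefficients equal $r\nu^2$ and they agree.
\end{itemize}

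For $p>2$, the bound \eqref{eq:kappa_schatten_upper_alt} gives $\mathsf C_*^{\mathrm{Rad}}\le 2e\min\{p,\ln(er)\}r^{2/p}\nu^2$. At $p=\infty$ this specializes to $\cO(\ln(er)\nu^2)$, against $r\nu^2$ for the converted coefficient.

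The only step needing care is the justification of \eqref{eq:kappa_schatten_upper_alt} itself, including at small $r$. This step relies on the Schatten smoothness constant $\max\{2,\rho-1\}$ from \Cref{example:3}, which we take as given.
\begin{itemize}
    \item If $p\le 2\ln(er)$, choose $\rho=p$. Then $\kappa_{S_p}(r)\le\max\{2,p-1\}\le p$, which holds since $p>2$.
    \item Otherwise, choose $\rho=2\ln(er)$. This is admissible because $\rho\ge 2$ and $\rho<p$. Then $\kappa_{S_p}(r)\le\max\{2,\rho-1\}\,r^{1/\ln(er)}\le e\max\{2,2\ln(er)\}\le 2e\ln(er)$, using $r^{1/\ln(er)}\le e$ and $\ln(er)\ge1$.
\end{itemize}
Both bounds are at most $2e\min\{p,\ln(er)\}$, including at $r=1$.
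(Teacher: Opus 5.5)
Your proposal is correct and follows essentially the same route as the paper: you compute the almost surely constant Frobenius and Schatten-$p$ norms of $\nu\Diag_{m,n}(\chi)$ from its singular values $\nu$, read off the minimal scales from $\exp(c/\widetilde\sigma^2)\le e$, substitute $\alpha_{S_p}(r)^{-2}$ and $\kappa_{S_p}(r)$, and invoke \eqref{eq:kappa_schatten_upper_alt}. Your extra verification of that bound, with $\rho=p$ when $p\le2\ln(er)$ and $\rho=2\ln(er)$ otherwise, is exactly the one-line justification the paper gives just before the proposition, and it is valid down to $r=1$.
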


\begin{proof}
The matrix
\begin{align*}
    \zeta
    &=
    \nu\Diag_{m, n}\rb{(\chi_i)_{i=1}^r}
\end{align*}
has singular values
\begin{align*}
    \sigma_i(\zeta)
    &=
    \nu|\chi_i|
    =
    \nu,
    \qquad
    i=1, \ldots, r.
\end{align*}
Therefore,
\begin{align*}
    \fnorm{\zeta}^2
    &=
    \sum_{i=1}^r\sigma_i(\zeta)^2\\
    &=
    r\nu^2.
\end{align*}
For finite $p$,
\begin{align*}
    \norm{\zeta}_{S_p}^2
    &=
    \rb{\sum_{i=1}^r\sigma_i(\zeta)^p}^{2/p}\\
    &=
    \rb{r\nu^p}^{2/p}\\
    &=
    r^{2/p}\nu^2.
\end{align*}
At $p=\infty$,
\begin{align*}
    \norm{\zeta}_{S_\infty}^2
    &=
    \max_{1\le i\le r}\sigma_i(\zeta)^2\\
    &=
    \nu^2.
\end{align*}
Let $\widetilde\sigma_{\opf}>0$ and $\widetilde\sigma>0$ be arbitrary candidate light-tail scales for the Frobenius
and Schatten-$p$ norms, respectively. Since both norms of $\zeta$ are deterministic,
\begin{align*}
    \Exp{\exp\rb{\frac{\fnorm{\zeta}^2}{\widetilde\sigma_{\opf}^2}}}
    &\le
    \exp(1)
    \\
    &\Longleftrightarrow
    \exp\rb{\frac{r\nu^2}{\widetilde\sigma_{\opf}^2}}
    \le
    \exp(1)
    \\
    &\Longleftrightarrow
    \widetilde\sigma_{\opf}^2
    \ge
    r\nu^2,
\end{align*}
and, with the convention $r^{2/\infty}=1$,
\begin{align*}
    \Exp{\exp\rb{\frac{\norm{\zeta}_{S_p}^2}{\widetilde\sigma^2}}}
    &\le
    \exp(1)
    \\
    &\Longleftrightarrow
    \exp\rb{\frac{r^{2/p}\nu^2}{\widetilde\sigma^2}}
    \le
    \exp(1)
    \\
    &\Longleftrightarrow
    \widetilde\sigma^2
    \ge
    r^{2/p}\nu^2.
\end{align*}
Thus the choices $\sigma_{\opf}^2=r\nu^2$ and $\sigma^2=r^{2/p}\nu^2$ in
\eqref{eq:sigma_schatten_rademacher_alt} are admissible and are the smallest possible scales.

For $1\le p<2$, substitution gives
\begin{align*}
    \mathsf C_{\opf}^{\mathrm{Rad}}
    &=
    r^{\frac2p-1}\cdot r\nu^2
    =
    r^{2/p}\nu^2,
    \\
    \mathsf C_*^{\mathrm{Rad}}
    &=
    r^{\frac2p-1}\cdot r^{2/p}\nu^2
    =
    r^{\frac4p-1}\nu^2.
\end{align*}
For $2\le p\le\infty$, $\alpha_{S_p}(r)^{-2}=1$, and hence
\begin{align*}
    \mathsf C_{\opf}^{\mathrm{Rad}}
    &=
    r\nu^2,
    \\
    \mathsf C_*^{\mathrm{Rad}}
    &=
    \kappa_{S_p}(r)r^{2/p}\nu^2.
\end{align*}
The bound \eqref{eq:kappa_schatten_upper_alt} gives the stated consequences.
\end{proof}

\begin{proposition}[Schatten-$p$ geometry under diagonal Gaussian perturbations]
\label{prop:frobenius_vs_dual_schatten_gaussian_alt}
Let the oracle be as in \Cref{def:gaussian_oracle_alt}. Using the explicit universal constant from
\Cref{lem:gaussian_light_tail_scale_alt}, the scales
\begin{align}
\label{eq:sigma_schatten_gaussian_alt}
    \sigma_{\opf}^2
    &\eqdef
    4r\nu^2,
    &
    \sigma^2
    &\eqdef
    4\vartheta_p(r)r^{2/p}\nu^2
\end{align}
are admissible. Every admissible Frobenius scale $\widetilde\sigma_{\opf}$ satisfies
$\widetilde\sigma_{\opf}^2\ge r\nu^2$, so the displayed Frobenius scale has the smallest possible order.

For the choices in \eqref{eq:sigma_schatten_gaussian_alt}, the coefficients are
\begin{align}
\label{eq:coeff_schatten_gaussian_alt}
    \mathsf C_{\opf}^{\mathrm{Gau}}
    &\eqdef
    4
    \begin{cases}
        r^{2/p}\nu^2,
        & 1\le p<2, \\[3pt]
        r\nu^2,
        & 2\le p\le\infty,
    \end{cases}
    \\
    \mathsf C_*^{\mathrm{Gau}}
    &=
    4
    \begin{cases}
        r^{\frac4p-1}\nu^2,
        & 1\le p<2, \\[3pt]
        \kappa_{S_p}(r)\vartheta_p(r)r^{2/p}\nu^2,
        & 2\le p\le\infty.
    \end{cases}
\end{align}
At $p=2$, both coefficients are $4r\nu^2$. For every fixed $p>2$,
\begin{align*}
    \mathsf C_*^{\mathrm{Gau}}
    &\le
    8e
    \min\{p, \ln(er)\}^2r^{2/p}\nu^2,
\end{align*}
whereas $\mathsf C_{\opf}^{\mathrm{Gau}}=4r\nu^2$. At $p=\infty$ and $r\ge1$, the displayed dependences are
$\cO((\ln(er))^2\nu^2)$ and $r\nu^2$, respectively.
\end{proposition}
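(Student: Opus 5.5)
The argument is the matrix analogue of \Cref{prop:frobenius_vs_dual_lp_gaussian_alt}. The diagonal embedding reduces every norm computation to the vector case with $D=r$. For $u\in\R^r$, the singular values of $\nu\Diag_{m, n}(u)$ are the numbers $\nu|u_i|$. Hence, for $\omega\sim\mathcal N(0, I_r)$,
\[
\fnorm{\zeta}=\nu\norm{\omega}_2,
\qquad
\norm{\zeta}_{S_p}=\nu\norm{\omega}_p
\]
almost surely, for every $1\le p\le\infty$. Consequently, each exponential moment in \Cref{as:stoch_frobenius_light_tail_alt} and \Cref{as:non_euclidean_light_tail} equals the corresponding moment for the vector perturbation $\nu\omega\in\R^r$. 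I would first record this identity, noting that it holds pointwise in $\omega$, so no distributional argument is needed.

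\textbf{Admissibility.} Apply \Cref{lem:gaussian_light_tail_scale_alt} with $D=r$.
\begin{itemize}
\item For the Frobenius scale, take $p=2$. Then $\vartheta_2(r)=1$ and $r^{2/2}=r$, which gives $\sigma_{\opf}^2=4r\nu^2$.
\item For the native scale, use the given $p$. This gives $\sigma^2=4\vartheta_p(r)r^{2/p}\nu^2$, with the convention $r^{2/\infty}=1$.
\end{itemize}

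\textbf{Lower bound on the Frobenius scale.} Let $\widetilde\sigma_{\opf}$ be any admissible Frobenius scale. By Jensen's inequality,
\[
e\ge\Exp{\exp\rb{\fnorm{\zeta}^2/\widetilde\sigma_{\opf}^2}}\ge\exp\rb{\Exp{\fnorm{\zeta}^2}/\widetilde\sigma_{\opf}^2}.
\]
Since $\Exp{\fnorm{\zeta}^2}=\nu^2\Exp{\norm{\omega}_2^2}=r\nu^2$, it follows that $\widetilde\sigma_{\opf}^2\ge r\nu^2$.

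\textbf{Coefficients.} Substitute these scales into \eqref{eq:statistical_coefficients_alt}, using $\alpha^{-2}=\alpha_{S_p}(r)^{-2}$ from \eqref{eq:alpha_schatten_alt} and $\kappa=\kappa_{S_p}(r)$ from \eqref{eq:kappa_schatten_alt}.
\begin{itemize}
\item For $1\le p<2$, both $\alpha^{-2}$ and $\kappa$ equal $r^{2/p-1}$, and $\vartheta_p(r)=1$. This gives $4r^{2/p}\nu^2$ and $4r^{4/p-1}\nu^2$.
\item For $2\le p\le\infty$, $\alpha^{-2}=1$. This gives $4r\nu^2$ and $4\kappa_{S_p}(r)\vartheta_p(r)r^{2/p}\nu^2$.
\item At $p=2$, we have $\kappa_{S_2}(r)=1$ (the exact Hilbert line in \eqref{eq:kappa_schatten_alt}) and $\vartheta_2(r)=1$, so both coefficients equal $4r\nu^2$.
\end{itemize}

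\textbf{Explicit bound for $p>2$.} Combine \eqref{eq:kappa_schatten_upper_alt}, namely $\kappa_{S_p}(r)\le 2e\min\{p, \ln(er)\}$, with $\vartheta_p(r)=\min\{p, \ln(er)\}$. This yields $\mathsf C_*^{\mathrm{Gau}}\le 8e\min\{p, \ln(er)\}^2r^{2/p}\nu^2$. At $p=\infty$ the minimum equals $\ln(er)$ and $r^{2/\infty}=1$, which gives the $\cO((\ln(er))^2\nu^2)$ dependence, against $4r\nu^2$ for the converted coefficient.

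\textbf{Main obstacle.} The only nonroutine step is checking that the singular-value identity holds for rectangular $m\times n$ matrices with zero padding. I would argue directly: $\zeta\zeta^\top$ is diagonal with entries $\nu^2u_i^2$ padded by zeros, so the singular values are exactly the $\nu|u_i|$. Everything else is the inherited vector lemma plus the bound on $\kappa_{S_p}(r)$, which is already stated.
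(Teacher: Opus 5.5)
Your proposal is correct and follows the paper's proof essentially step for step. The diagonal embedding gives $\fnorm{\zeta}=\nu\norm{\omega}_2$ and $\norm{\zeta}_{S_p}=\nu\norm{\omega}_p$; the Gaussian light-tail lemma with $D=r$ (at $p=2$ and at the given $p$) yields admissibility; Jensen gives the Frobenius lower bound; and the coefficients follow by substitution together with $\kappa_{S_p}(r)\le 2e\min\{p,\ln(er)\}$, with your explicit use of $\kappa_{S_2}(r)=1$ at $p=2$ simply making visible what the paper leaves to substitution.
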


\begin{proof}
For
\begin{align*}
    \zeta
    &=
    \nu\Diag_{m, n}\rb{(\omega_i)_{i=1}^r},
\end{align*}
the singular values are the numbers $\nu|\omega_i|$, $i=1, \ldots, r$, arranged in nonincreasing order.
Since the corresponding vector norms are invariant under permutations, we obtain
\begin{align*}
    \fnorm{\zeta}
    &=
    \nu\norm{\omega}_2,
    \\
    \norm{\zeta}_{S_p}
    &=
    \nu\norm{\omega}_p,
\end{align*}
where $\omega=(\omega_1, \ldots, \omega_r)\sim\mathcal N(0, I_r)$.

Applying \Cref{lem:gaussian_light_tail_scale_alt} with $D=r$ and $p=2$ proves that
$4r\nu^2$ is an admissible Frobenius scale. If $\widetilde\sigma_{\opf}$ is any admissible Frobenius scale,
then Jensen's inequality gives
\begin{align*}
    \exp(1)
    &\ge
    \Exp{\exp\rb{\frac{\fnorm{\zeta}^2}{\widetilde\sigma_{\opf}^2}}}\\
    &\ge
    \exp\rb{
        \frac{\Exp{\fnorm{\zeta}^2}}{\widetilde\sigma_{\opf}^2}
    }.
\end{align*}
Since
\begin{align*}
    \Exp{\fnorm{\zeta}^2}
    &=
    r\nu^2,
\end{align*}
we obtain
\begin{align*}
    \widetilde\sigma_{\opf}^2
    &\ge
    r\nu^2.
\end{align*}

Applying the same lemma with $D=r$ and the stated value of $p$ proves that the native scale in
\eqref{eq:sigma_schatten_gaussian_alt} is admissible. For $1\le p<2$, substitution yields
\begin{align*}
    \mathsf C_{\opf}^{\mathrm{Gau}}
    &=
    r^{\frac2p-1}\cdot 4r\nu^2
    =
    4r^{2/p}\nu^2,
    \\
    \mathsf C_*^{\mathrm{Gau}}
    &=
    r^{\frac2p-1}\cdot 4r^{2/p}\nu^2
    =
    4r^{\frac4p-1}\nu^2.
\end{align*}
For $2\le p\le\infty$, $\alpha_{S_p}(r)^{-2}=1$, and hence
\begin{align*}
    \mathsf C_{\opf}^{\mathrm{Gau}}
    &=
    4r\nu^2,
    \\
    \mathsf C_*^{\mathrm{Gau}}
    &=
    4\kappa_{S_p}(r)\vartheta_p(r)r^{2/p}\nu^2.
\end{align*}
Using \eqref{eq:kappa_schatten_upper_alt} proves the final statements.
\end{proof}

\begin{remark}[Scope of the matrix perturbation models]
\label{rem:diagonal_model_scope_alt}
The two matrix perturbations are diagonal, with all $r$ singular-value coordinates active. They are not matrices with
independent noise in all $mn$ entries. The singular values of an entrywise random matrix have a different distribution,
so neither proposition applies to that model without a new calculation of both light-tail scales.
\end{remark}

\clearpage
\section{Fundamental Properties of Smooth and Regular Spaces}\label{sec:proofs_regularity}
This section collects general results for finite-dimensional smooth and regular normed spaces. We first state a lemma that provides a smooth auxiliary norm uniformly equivalent to a regular norm, and then recall equivalent characterizations of smoothness in terms of the squared norm and its conjugate. The $\ell_p^d$ and Schatten-$p$ examples, with matching upper and lower regularity bounds, are developed in the next section, \Cref{sec:reg_examples}.

The following lemma supplies the smooth auxiliary norm used in our proof of the concentration inequality (\Cref{app:proof_key_concentration}).

\begin{lemma}[{\citet[Lemma~3]{juditsky2008large}}]
\label{lemfact}
Let $(V, \|\cdot\|_V)$ be a finite-dimensional $\kappa_V$-regular space in the sense of Definition~\ref{def:kappa_smooth_regular}.
Then there exists a $\kappa_V$-smooth norm $\|\cdot\|_{V, +}$ on $V$ such that
\begin{align}\label{eq:reg_to_smooth_equiv_general}
    \|u\|_V^2 \le \|u\|_{V, +}^2 \le 2\|u\|_V^2, \quad \forall u\in V.
\end{align}
\end{lemma}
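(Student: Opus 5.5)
The plan is to work on the dual side, where $\kappa$-smoothness of a norm becomes strong convexity of its half-squared dual norm. From that side we build the new norm by mixing $\|\cdot\|_V$ with the given smooth auxiliary norm.

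By \Cref{def:kappa_smooth_regular} there is a $\kappa_{V,+}$-smooth norm $\|\cdot\|_{V,+}$ with $\|u\|_V^2\le\|u\|_{V,+}^2\le C\|u\|_V^2$, where $C\eqdef\kappa_V/\kappa_{V,+}\ge1$. Write $\|\cdot\|_{V,*}$ and $\|\cdot\|_{V,+,*}$ for the dual norms on $V^*$. \Cref{lemma:duality_of_norm_equivalence} gives
\[
C^{-1}\|y\|_{V,*}^2\le\|y\|_{V,+,*}^2\le\|y\|_{V,*}^2 .
\]
I will use the standard duality, recorded in the characterizations of smoothness recalled below (\Cref{prop:kappa_smooth_characterizations}): $\|\cdot\|$ is $\kappa$-smooth if and only if $\frac12\|\cdot\|_*^2$ is $(1/\kappa)$-strongly convex with respect to $\|\cdot\|_*$. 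Hence $\frac12\|\cdot\|_{V,+,*}^2$ is $(1/\kappa_{V,+})$-strongly convex with respect to $\|\cdot\|_{V,+,*}$.

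\textbf{Step 1 (construction).} For weights $a,b\ge0$ with $a+b=1$, define
\[
N_*(y)\eqdef\bigl(a\|y\|_{V,*}^2+b\|y\|_{V,+,*}^2\bigr)^{1/2}.
\]
This is a norm, being the Euclidean combination of two norms; the triangle inequality follows from Minkowski in $\R^2$. Let $N$ be its dual norm on $V$, and set $\|\cdot\|_{V,+}^{\rm new}\eqdef N$; by \Cref{thm:bidual_norm_equals_primal}, $N_*$ is the dual of $N$.

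\textbf{Step 2 (comparison).} The dual comparison above gives
\[
(a+b/C)\|y\|_{V,*}^2\le N_*(y)^2\le\|y\|_{V,*}^2 .
\]
Applying \Cref{lemma:duality_of_norm_equivalence} again yields $\|u\|_V^2\le N(u)^2\le(a+b/C)^{-1}\|u\|_V^2$.

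\textbf{Step 3 (strong convexity).} The function $\frac12N_*^2$ is the sum of the convex function $\frac a2\|\cdot\|_{V,*}^2$ and $b$ times a $(1/\kappa_{V,+})$-strongly convex function. It is therefore $(b/\kappa_{V,+})$-strongly convex with respect to $\|\cdot\|_{V,+,*}$. To convert this to $N_*$, use
\[
N_*(h)^2\le(aC+b)\|h\|_{V,+,*}^2 .
\]
Thus $\frac12N_*^2$ is $(1/\kappa_V)$-strongly convex with respect to $N_*$ provided $b/\kappa_{V,+}\ge(aC+b)/(\kappa_{V,+}C)$, that is, $b(C-1)\ge aC$.

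\textbf{Step 4 (choice of weights).} With $a+b=1$, take $b=C/(2C-1)$ and $a=(C-1)/(2C-1)$. The condition of Step 3 then holds with equality. Moreover $a+b/C=C/(2C-1)\ge\frac12$, so Step 2 gives distortion at most $2$, which is \eqref{eq:reg_to_smooth_equiv_general}. When $C=1$ this reduces to $b=1$, i.e.\ the original norm.

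\textbf{Step 5 (back to the primal).} The main obstacle is transferring the estimate back to the primal side. We need that $\frac12N^2=(\frac12N_*^2)^*$ is continuously differentiable and satisfies the $\kappa_V$-smoothness inequality of \Cref{def:kappa_smooth_regular}. I would invoke the conjugacy characterization (\Cref{prop:kappa_smooth_characterizations}) rather than reprove it. If it must be done by hand, the route is: strong convexity of the conjugate gives a singleton subdifferential $\partial(\frac12N^2)$ that is $\kappa_V$-Lipschitz (from $N$ to $N_*$). This yields differentiability and the quadratic upper bound by integrating along segments, and continuity of the gradient follows automatically for differentiable convex functions in finite dimensions.
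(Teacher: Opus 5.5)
Your proof is correct. The paper itself gives no argument for this lemma: it imports it directly from \citet[Lemma~3]{juditsky2008large}, so there is no in-paper route to compare against. What you have is a complete, self-contained proof. It uses only tools the paper already states: \Cref{lemma:duality_of_norm_equivalence}, \Cref{thm:bidual_norm_equals_primal}, and the equivalence (i)$\Leftrightarrow$(vi) of \Cref{prop:kappa_smooth_characterizations}, which the paper also cites without proof.

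The computations check out:
\begin{itemize}
\item the dual comparison $C^{-1}\|y\|_{V,*}^2\le\|y\|_{V,+,*}^2\le\|y\|_{V,*}^2$;
\item the bound $N_*(h)^2\le(aC+b)\|h\|_{V,+,*}^2$;
\item the resulting condition $b(C-1)\ge aC$;
\item the choice $b=C/(2C-1)$.
\end{itemize}
This $b$ is the smallest admissible value, and since the distortion $(a+b/C)^{-1}$ increases with $b$, your weights are optimal within the family.

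Your route also gives a sharper distortion, $2-1/C=2-\kappa_{V,+}/\kappa_V<2$. So the constant $2$ in \eqref{eq:reg_to_smooth_equiv_general} appears exactly as the supremum over $C\ge1$. Carried into the proof of \Cref{lemma:non_euclidean_large_deviations}, this would allow $\sigma_+^2=(2-\kappa_+/\kappa)\sigma^2$ instead of $2\sigma^2$, a marginal gain.

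One point in Step~3 deserves a sentence. The summand $\tfrac12\|\cdot\|_{V,+,*}^2$ need not be differentiable, so \Cref{prop:kappa_smooth_characterizations}(vi) has to be used in its subgradient form. The subdifferential sum rule for the two finite convex summands then lets strong convexity pass to $\tfrac12 N_*^2$. Alternatively, you can argue with the Jensen-type definition of strong convexity, which adds trivially.
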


Next, we recall standard equivalent characterizations of smoothness in the primal space. For $\phi(u)\eqdef\frac12\|u\|_{V, +}^2$, these properties relate the smoothness of the norm $\|\cdot\|_{V, +}$ to the strong convexity of the Fenchel conjugate $\phi^*$ and the Lipschitz continuity of the gradient $\nabla\phi$.

\begin{proposition}[{\citet[Proposition~3.2]{juditsky2008large}}]
\label{prop:kappa_smooth_characterizations}
Let $(V, \|\cdot\|_{V, +})$ be a finite-dimensional normed space with dual space $V^*$ and induced dual norm $\|\cdot\|_{V, +, *}$.
Define $\phi(u)\eqdef \frac12\|u\|_{V, +}^2$ and $\phi^*(s)\eqdef \frac12\|s\|_{V, +, *}^2$.
Then the following six properties are equivalent:
\begin{enumerate}[label=(\roman*)]
    \item The space $(V, \|\cdot\|_{V, +})$ is $\kappa_{V, +}$-smooth.

    \item The function $\phi$ is everywhere differentiable, and its gradient satisfies
    \begin{align*}
        \langle \nabla\phi(u) - \nabla\phi(v), \, u-v\rangle \le \kappa_{V, +}\|u-v\|_{V, +}^2, \quad \forall u, v\in V.
    \end{align*}

    \item The function $\phi$ is continuously differentiable, and the mapping $\nabla\phi:V\to V^*$ is $\kappa_{V, +}$-Lipschitz continuous:
    \begin{align*}
        \|\nabla\phi(u)-\nabla\phi(v)\|_{V, +, *} \le \kappa_{V, +}\|u-v\|_{V, +}, \quad \forall u, v\in V.
    \end{align*}

    \item The set-valued subdifferential operator $\partial\phi^*:V^*\rightrightarrows V$ is strongly monotone with constant $\nfr{1}{\kappa_{V, +}}$:
    \begin{align*}
        \langle s_1-s_2, \, u_1-u_2\rangle \ge \frac{1}{\kappa_{V, +}}\|s_1-s_2\|_{V, +, *}^2,
    \end{align*}
    for all $s_1, s_2\in V^*$ and all $u_1\in\partial\phi^*(s_1)$, $u_2\in\partial\phi^*(s_2)$.

    \item The inverse mapping $(\partial\phi^*)^{-1}=\partial\phi$ is single-valued and $\kappa_{V, +}$-Lipschitz continuous, which can be written as
    \begin{align*}
        \|u_1-u_2\|_{V, +} \ge \frac{1}{\kappa_{V, +}}\|s_1-s_2\|_{V, +, *},
    \end{align*}
    for all $s_1, s_2\in V^*$ and all $u_1\in\partial\phi^*(s_1)$, $u_2\in\partial\phi^*(s_2)$.

    \item The function $\phi^*$ is strongly convex with constant $\nfr{1}{\kappa_{V, +}}$ with respect to $\|\cdot\|_{V, +, *}$:
    \begin{align*}
        \phi^*(s_2) \ge \phi^*(s_1) + \langle u_1, \, s_2-s_1\rangle + \frac{1}{2\kappa_{V, +}}\|s_2-s_1\|_{V, +, *}^2,
    \end{align*}
    for all $s_1, s_2\in V^*$ and all $u_1\in\partial\phi^*(s_1)$.
\end{enumerate}
\end{proposition}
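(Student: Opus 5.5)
The plan is to prove the equivalences along the cycle
\[
\text{(i)}\Rightarrow\text{(vi)}\Rightarrow\text{(iv)}\Rightarrow\text{(v)}\Rightarrow\text{(iii)}\Rightarrow\text{(ii)}\Rightarrow\text{(i)}.
\]
Throughout, $\phi$ is a finite convex function on $V$, $\phi^*$ is its conjugate, and $\phi^{**}=\phi$. By \Cref{thm:fenchel_moreau_rockafellar}, $u\in\partial\phi^*(s)$ if and only if $s\in\partial\phi(u)$. Equality then holds in Fenchel--Young (\Cref{thm:fenchel_young_inequality}): $\phi(u)+\phi^*(s)=\langle s,u\rangle$. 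Both subdifferentials are nonempty everywhere, since $\phi$ and $\phi^*$ are finite convex functions on finite-dimensional spaces.

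\emph{Step (i)$\Rightarrow$(vi).} I expect this step to be the main obstacle, since it is the only genuinely dual computation. Fix $s_1$ and $u_1\in\partial\phi^*(s_1)$. Because $\phi$ is differentiable, $s_1=\nabla\phi(u_1)$. For every $h\in V$, the smoothness inequality in (i) gives
\[
\phi(u_1+h)\le \phi(u_1)+\langle s_1,h\rangle+\tfrac{\kappa_{V,+}}{2}\|h\|_{V,+}^2 .
\]
Insert this upper bound into $\phi^*(s_2)\ge\langle s_2,u_1+h\rangle-\phi(u_1+h)$. Then use $\phi(u_1)=\langle s_1,u_1\rangle-\phi^*(s_1)$. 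This yields
\[
\phi^*(s_2)\ge\phi^*(s_1)+\langle s_2-s_1,u_1\rangle+\langle s_2-s_1,h\rangle-\tfrac{\kappa_{V,+}}{2}\|h\|_{V,+}^2 .
\]
Taking the supremum over $h$ is a scalar computation: the last two terms give $\frac{1}{2\kappa_{V,+}}\|s_2-s_1\|_{V,+,*}^2$. That is exactly (vi).

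\emph{Steps (vi)$\Rightarrow$(iv)$\Rightarrow$(v)$\Rightarrow$(iii)$\Rightarrow$(ii).}
\begin{itemize}
\item For (vi)$\Rightarrow$(iv), write (vi) twice with the roles of $(s_1,u_1)$ and $(s_2,u_2)$ swapped, and add the two inequalities.
\item For (iv)$\Rightarrow$(v), bound the left-hand side of (iv) using $\langle s_1-s_2,u_1-u_2\rangle\le\|s_1-s_2\|_{V,+,*}\|u_1-u_2\|_{V,+}$, then divide by $\|s_1-s_2\|_{V,+,*}$ (the case $s_1=s_2$ is trivial).
\item Single-valuedness of $\partial\phi$ also follows from (iv). If $s_1,s_2\in\partial\phi(u)$, then $u\in\partial\phi^*(s_1)\cap\partial\phi^*(s_2)$. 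Taking $u_1=u_2=u$ in (iv) forces $s_1=s_2$.
\item For (v)$\Rightarrow$(iii), recall that a finite convex function whose subdifferential is everywhere a singleton is differentiable, with $\partial\phi=\{\nabla\phi\}$. The Lipschitz bound in (v), read as $\|\nabla\phi(u_1)-\nabla\phi(u_2)\|_{V,+,*}\le\kappa_{V,+}\|u_1-u_2\|_{V,+}$, is then exactly (iii). In particular $\nabla\phi$ is continuous.
\item For (iii)$\Rightarrow$(ii), apply the duality inequality to $\langle\nabla\phi(u)-\nabla\phi(v),u-v\rangle$.
\end{itemize}

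\emph{Step (ii)$\Rightarrow$(i).} Fix $u,v$ and set $g(t)=\phi(u+tv)$. This is a differentiable convex function of one variable, with $g'(t)=\langle\nabla\phi(u+tv),v\rangle$. Convex functions of one variable are locally Lipschitz, hence absolutely continuous, so $g(1)-g(0)=\int_0^1 g'(t)\,dt$. This gives
\[
\phi(u+v)-\phi(u)-\langle\nabla\phi(u),v\rangle=\int_0^1\langle\nabla\phi(u+tv)-\nabla\phi(u),v\rangle\,dt .
\]
Apply (ii) at the points $u+tv$ and $u$, whose difference is $tv$. This bounds the integrand by $\kappa_{V,+}\,t\,\|v\|_{V,+}^2$ for $t>0$, so the integral is at most $\frac{\kappa_{V,+}}{2}\|v\|_{V,+}^2$.

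It remains to check continuous differentiability, which Definition~\ref{def:kappa_smooth_regular} requires. This is automatic: a differentiable convex function on a finite-dimensional space has a continuous gradient. This closes the cycle.
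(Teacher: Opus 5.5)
Your cycle is correct, and every implication goes through as you wrote it. This includes extracting single-valuedness of $\partial\phi$ from (iv), and recovering the $C^1$ requirement of (i) from the fact that a differentiable convex function on a finite-dimensional space has a continuous gradient.

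The paper gives no proof of this proposition; it only cites \citet[Proposition~3.2]{juditsky2008large}. Your argument is therefore a self-contained replacement rather than an alternative to an in-paper proof. It is built from tools the appendix already has:
\begin{itemize}
\item Fenchel--Young;
\item \Cref{thm:fenchel_moreau_rockafellar};
\item the conjugacy $\phi^*=\frac12\|\cdot\|_{V,+,*}^2$ established in \Cref{lemma:subdiff_phi_from_notes}.
\end{itemize}
Beyond these, you use only two standard facts from convex analysis: a finite convex function with singleton subdifferentials is differentiable, and a differentiable convex function is $C^1$.

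The citation buys brevity. Your route buys verifiability within the paper. In particular, your direct conjugate computation for (i)$\Rightarrow$(vi) is exactly the direction the paper relies on in \Cref{lemma:V_beta_descent}. There, $\kappa$-smoothness of $(E^*,\snorm{\cdot})$ is turned into $\frac{1}{\kappa}$-strong convexity of $\phi$ on $E$.
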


\clearpage
\section{Examples of \texorpdfstring{$\kappa$}{kappa}-Regular Spaces and Sharp Bounds on \texorpdfstring{$\kappa$}{kappa}}\label{sec:reg_examples}

This section establishes the universal regularity bound and gives explicit upper and matching lower bounds for the norm families used in the main text. For $\ell_p^d$ and Schatten-$p$ spaces, write $D=d$ and $D=r=\min\{m, n\}$, respectively. The optimal parameter equals $D^{2/p-1}$ for $1\le p<2$ and has order $\min\{p, \ln(eD)\}$ for $2\le p\le\infty$. Thus the latter regime admits a bound independent of $D$ for each fixed finite $p$, with logarithmic growth in the branch $p\ge\ln(eD)$, including $p=\infty$.

\Cref{tab:regularity_examples} summarizes the bounds and auxiliary norms using the main-text convention $V=E^*$. We begin with the universal bound via John's ellipsoid, followed by the $\ell_p^d$ and Schatten-$p$ constructions and their matching lower bounds.

\begin{table}[H]
\centering
\begin{threeparttable}
\caption{Examples of $\kappa$-regular spaces for $V=E^*$ with corresponding regularity parameters $\kappa$ and auxiliary norm $\spnorm{\cdot}$.}
\label{tab:regularity_examples}
\small
\setlength{\tabcolsep}{4pt}
\begin{tabular}{@{}p{0.28\textwidth}p{0.185\textwidth}>{\raggedright\arraybackslash}p{0.245\textwidth}p{0.105\textwidth}p{0.105\textwidth}@{}}
\toprule
Space $(E^*, \snorm{\cdot})$
& $\kappa$ scaling\tnote{b}
& Auxiliary norm $\|\cdot\|_{*, +}$
& Upper bound
& Lower bound \\
\midrule

\multirow{3}{=}{Any $d$-dimensional normed space, $d\ge1$\tnote{a}}
& \multirow{3}{*}{$\Theta(d)$}
& Inner-product norm whose unit ball is the maximum-volume ellipsoid contained in $\mathbb{B}_*=\{s\in E^*:\snorm{s}\le1\}$ (the John ellipsoid).
& \multirow{3}{*}{Prop.~\ref{example:1}}
& \multirow{3}{*}{Prop.~\ref{prop:tightness_universal_bound}} \\

\midrule

$(\R^d, \|\cdot\|_p)$, $1\le p<2$
& $\Theta\!\left(d^{\frac{2}{p}-1}\right)$
& $d^{\frac{1}{p}-\frac{1}{2}}\|\cdot\|_2$ (scaled Euclidean norm)
& \multirow{3}{*}{Prop.~\ref{example:2}}
& \multirow{3}{*}{Prop.~\ref{prop:tightness_lp_bounds}} \\

\multirow{2}{*}{$(\R^d, \|\cdot\|_p)$, $2\le p\le \infty$}
& \multirow{2}{*}{$\Theta\!\bigl(\min\{p, \ln(ed)\}\bigr)$}
& $\|\cdot\|_{\rho_*}$, where $\rho_*=2$ for $1\le d<8$, and $\rho_*=\min\{p, \hat\rho(d)\}$ for $d\ge8$.\tnote{c}
&
& \\

\midrule

$(\R^{m\times n}, \|\cdot\|_{S_p})$, $1\le p<2$, $r\eqdef \min\{m, n\}$
& $\Theta\!\left(r^{\frac{2}{p}-1}\right)$
& $r^{\frac{1}{p}-\frac{1}{2}}\|\cdot\|_{S_2}$
& \multirow{4}{*}{Prop.~\ref{example:3}}
& \multirow{4}{*}{Prop.~\ref{prop:tightness_schatten_bounds}} \\

\multirow{3}{=}{$(\R^{m\times n}, \|\cdot\|_{S_p})$, $2\le p\le \infty$, $r\eqdef \min\{m, n\}$}
& \multirow{3}{*}{$\Theta\!\bigl(\min\{p, \ln(er)\}\bigr)$}
& $\|\cdot\|_{S_{\rho_*}}$, where $\rho_*=\min\{p, 3\}$ for $1\le r<8$, and $\rho_*=\min\{p, \max\{3, \hat\rho(r)\}\}$ for $r\ge8$.\tnote{d}
&
& \\

\bottomrule
\end{tabular}
\begin{tablenotes}[flushleft]
\footnotesize
\item[a] The universal bound applies to every nonzero finite-dimensional normed space, including the dual space $(E^*, \snorm{\cdot})$, and is sharp in general. The structured norms below can have substantially smaller $\kappa$, although $\ell_1^d$ attains the universal bound $\kappa=d$.
\item[b] For $p\ge2$, the upper and lower bounds agree up to absolute constants, absorbed by $\Theta(\cdot)$. Throughout the manuscript, order estimates use $\ln(eD)=1+\ln D$, with $D=d$ or $r$, to cover all positive dimensions (\Cref{rem:uniform_regularity_orders}). Exact formulas and explicit-constant inequalities retain $\ln D$ under their stated dimension conditions.
\item[c] Here
$\hat\rho(d)=\ln d+\sqrt{(\ln d)^2-2\ln d}$ for $d\ge8$.
This choice minimizes $(\rho-1)d^{2/\rho-2/p}$ over
$2\le\rho\le p$, $\rho<\infty$; in particular, $\rho_*$ remains finite when $p=\infty$.
\item[d] Here $\hat\rho(r)=\ln r+\sqrt{(\ln r)^2-2\ln r}$ for $r\ge8$.
This choice minimizes $\max\{2, \rho-1\}r^{2/\rho-2/p}$ over $2\le\rho\le p$, $\rho<\infty$, and remains finite when $p=\infty$.
For $1\le r<8$, this bound is nonincreasing on $[2, 3]$ and increasing on $[3, \infty)$, giving $\rho_*=\min\{p, 3\}$.
\end{tablenotes}
\end{threeparttable}
\end{table}

\subsection{Universal Bound via John's Ellipsoid}\label{sub:proof_of_the_example_1}

The universal regularity bound follows from the symmetric form of John's ellipsoid theorem~\citep{john1948extremum}. We give a proof based on a convex optimization problem for the ellipsoid's shape matrix.

\begin{theorem}[John's Ellipsoid Theorem for Symmetric Bodies]
\label{thm:john_ellipsoid}
Let $V$ be a $d$-dimensional real vector space and let $B\subset V$ be a compact, convex body with non-empty interior containing the origin.
Assume $B$ is centrally symmetric, i.e., if $x\in B$ then $-x\in B$.
Then:
\begin{enumerate}[label=(\roman*)]
    \item There exists a unique ellipsoid $\cE_{\mathrm{in}}$ of maximal volume contained in $B$ (the John ellipsoid).
    \item The body $B$ is \enquote{sandwiched} by this ellipsoid and its scaled-up version:
    \begin{align*}
        \cE_{\mathrm{in}} \subseteq B \subseteq \sqrt{d}\,\cE_{\mathrm{in}}.
    \end{align*}
\end{enumerate}
\end{theorem}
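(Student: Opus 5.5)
We will identify $V$ with $\R^d$ by fixing a basis, since volume ratios and inclusions of ellipsoids are invariant under linear isomorphisms. Centered ellipsoids are parametrized as $Q\mathcal B_2$ with $Q\in S_{++}^d$. The proof then proceeds in four steps.

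\textbf{Step 1 (centering).} We first reduce to centered ellipsoids. If $c+Q\mathcal B_2\subseteq B$, then by symmetry $-c+Q\mathcal B_2\subseteq B$. By convexity, the average $Q\mathcal B_2\subseteq B$ also holds, and this ellipsoid has the same volume. So it suffices to maximize $\log\det Q$ over the feasible set $\mathcal Q=\{Q\in S_+^d: Q\mathcal B_2\subseteq B\}$.

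\textbf{Step 2 (existence and uniqueness).} The set $\mathcal Q$ is convex: $Q\mathcal B_2\subseteq B$ is equivalent to $\sup_{\|u\|_2\le1}\langle y,Qu\rangle=\|Qy\|_2\le h_B(y)$ for all $y$, where $h_B$ is the support function of $B$. Each such constraint is convex in $Q$. The set $\mathcal Q$ is also compact, because $B$ is bounded, and it contains a positive multiple of $I_d$, because $B$ has nonempty interior. Since $\log\det$ is strictly concave on $S_{++}^d$, a maximizer $Q^*$ exists and is unique among centered ellipsoids.

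It remains to rule out a different non-centered ellipsoid $c+Q\mathcal B_2$ of the same maximal volume. By Step 1 its centered version $Q\mathcal B_2$ is feasible with the same volume, so $Q=Q^*$ by uniqueness. If $c\neq0$, then $\pm c+Q^*\mathcal B_2\subseteq B$, and the convex hull of these two translates contains an ellipsoid of strictly larger volume (elongate along $c$). This contradicts maximality, so $c=0$, giving (i).

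\textbf{Step 3 (optimality conditions).} We normalize by $T=(Q^*)^{-1}$, so that $\widetilde B=T(B)$ has $\mathcal B_2$ as its maximal ellipsoid. We then derive the KKT conditions. The constraint can be written as $g(Q)=\sup_{y}\{\|Qy\|_2-h_{\widetilde B}(y)\}\le0$ over unit $y$. Slater's condition holds via a small multiple of $I_d$. Computing the subdifferential of the sup-type constraint at $Q=I_d$ yields contact points $y_i\in\partial\widetilde B\cap\partial\mathcal B_2$ and weights $\lambda_i>0$ with
\[
I_d=\sum_i\lambda_i y_iy_i^\top,
\]
where the identity comes from $\nabla\log\det(I_d)=I_d$. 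Taking traces gives $\sum_i\lambda_i=d$.

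This is the step I expect to be the main obstacle. Care is needed on two points: that the supremum defining $g$ is attained, which uses compactness, and that its subgradients are convex combinations of rank-one terms $\tfrac12(y u^\top+u y^\top)$ with $u=y$ unit. The latter uses the fact that at a contact point the outer normal of $\widetilde B$ coincides with the sphere's normal, since $\mathcal B_2\subseteq\widetilde B$ forces tangency. Carathéodory's theorem then gives a finite decomposition.

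\textbf{Step 4 (the sandwich).} Take $x\in\widetilde B$. Since $\mathcal B_2\subseteq\widetilde B$ and the supporting hyperplane at each $y_i$ is $\{z:\langle y_i,z\rangle=1\}$, we have $\langle y_i,x\rangle\le1$. Symmetry of $\widetilde B$ gives $|\langle y_i,x\rangle|\le1$. Therefore
\[
\|x\|_2^2=\sum_i\lambda_i\langle y_i,x\rangle^2\le\sum_i\lambda_i=d.
\]
Hence $\widetilde B\subseteq\sqrt d\,\mathcal B_2$. Undoing $T$ gives $\cE_{\mathrm{in}}\subseteq B\subseteq\sqrt d\,\cE_{\mathrm{in}}$, which is (ii).
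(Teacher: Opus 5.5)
Your proof is correct. It follows the paper's skeleton for existence, centered uniqueness via strict concavity of $\log\det$, the KKT contact decomposition $I_d=\sum_i\lambda_i y_iy_i^\top$ with $\sum_i\lambda_i=d$, and the sandwich. Your constraint $\sup_{\|y\|_2=1}\{\|Qy\|_2-h_{\widetilde B}(y)\}\le0$ is just the support-function form of the paper's polar-body constraint $\max_{y\in\widetilde B^\circ}\|Qy\|_2^2\le1$. In your form, an active direction satisfies $h_{\widetilde B}(y)=1=\langle y,y\rangle$, which already gives $y\in\partial\widetilde B$ and $\langle x,y\rangle\le1$ on $\widetilde B$. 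The tangency remark is therefore not what produces the rank-one gradient, which is automatically $yy^\top$ at $Q=I_d$.

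The genuine difference is how you exclude an off-center maximizer. The paper does this after the KKT step, reusing the contact points: $a\pm y_i\in\widetilde B$ forces $\langle a,y_i\rangle=0$, and then $\|a\|_2^2=\sum_i\lambda_i\langle a,y_i\rangle^2=0$. You argue before any optimality conditions, geometrically: $\operatorname{conv}(\pm c+Q^*\mathcal B_2)\subseteq B$ contains a strictly larger ellipsoid.

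To make ``elongate along $c$'' precise, work in normalized coordinates with $\ell=\|Tc\|_2>0$, $u=Tc/\ell$, and $D=I_d+\ell uu^\top$. For $z\in\mathcal B_2$ you get $Dz=z+\langle u,z\rangle Tc\in\mathcal B_2+[-Tc,Tc]=\operatorname{conv}(\pm Tc+\mathcal B_2)\subseteq\widetilde B$, while $\det D=1+\ell>1$.

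Your route makes part (i) self-contained and independent of the KKT machinery, with a transparent reason the center must be the origin. The paper's route is more economical, because the contact identity it needs anyway for (ii) yields uniqueness in two lines with no extra construction.
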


\begin{proof}
The case $d=0$ is immediate, so assume $d\ge1$. Fix an identification $V\simeq\R^d$, and let $\mathcal B_2$ denote the Euclidean unit ball. Volume ratios and the claimed inclusions are invariant under invertible linear changes of coordinates. Symmetry and convexity imply $0\in\operatorname{int}B$, so there exist $\varepsilon, R>0$ such that
\begin{align*}
    \varepsilon\mathcal B_2\subseteq B\subseteq R\mathcal B_2.
\end{align*}

\paragraph{Centering and the optimal shape matrix.}
Every ellipsoid of positive volume can be written uniquely as $c+Q\mathcal B_2$, where $c\in\R^d$ and $Q\in S_{++}^d$ is symmetric positive definite. If $c+Q\mathcal B_2\subseteq B$, symmetry gives $-c+Q\mathcal B_2\subseteq B$, and convexity then yields
\begin{align*}
    Q\mathcal B_2
    =\tfrac12\bigl[(c+Q\mathcal B_2)+(-c+Q\mathcal B_2)\bigr]
    \subseteq B.
\end{align*}
Centering therefore preserves feasibility and volume, without any appeal to uniqueness.

Consider the set
\begin{align*}
    \mathcal Q\eqdef\{Q\in S_+^d:Q\mathcal B_2\subseteq B\},
\end{align*}
where $S_+^d$ is the cone of symmetric positive semidefinite matrices. This set is closed and convex. Indeed, for $Q_1, Q_2\in\mathcal Q$, $t\in[0, 1]$, and $z\in\mathcal B_2$, convexity of $B$ gives $((1-t)Q_1+tQ_2)z\in B$. Moreover, $\mathcal Q$ is bounded because every $Q\in\mathcal Q$ satisfies $\|Q\|_{\mathrm{op}}\le R$. Thus $\mathcal Q$ is compact.
Since $\varepsilon I_d\in\mathcal Q$, the maximum of $\det Q$ over $\mathcal Q$ is positive and is attained at a positive definite matrix $Q^*$. The volume of $Q\mathcal B_2$ is $\det Q$ times the volume of $\mathcal B_2$, so $Q^*\mathcal B_2$ is a maximal-volume ellipsoid among all feasible ellipsoids by the centering argument.

The centered maximizer is unique: maximizing $\det Q$ on positive definite matrices is equivalent to maximizing the strictly concave function $\log\det Q$ over the convex set $\mathcal Q\cap S_{++}^d$. Strict concavity follows from
\begin{align*}
    D^2(\log\det Q)[H, H]
    =-\tr(Q^{-1}HQ^{-1}H)<0
\end{align*}
for every nonzero symmetric matrix $H$. We prove below that an optimal ellipsoid cannot have a nonzero center.

\paragraph{A finite contact decomposition.}
Normalize by the linear map $T\eqdef(Q^*)^{-1}$ and write $\widetilde B\eqdef T(B)$. The body $\widetilde B$ is centrally symmetric, contains $\mathcal B_2$, and has $\mathcal B_2$ as its unique centered maximal-volume ellipsoid.
Define its polar body by
\begin{align*}
    \widetilde B^\circ
    \eqdef\{y\in\R^d:\langle y, x\rangle\le1\text{ for all }x\in\widetilde B\}.
\end{align*}
It is compact and contained in $\mathcal B_2$. By the separating-hyperplane characterization of a closed convex body, for a symmetric positive definite matrix $Q$,
\begin{align*}
    Q\mathcal B_2\subseteq\widetilde B
    \quad\Longleftrightarrow\quad
    \|Qy\|_2\le1\quad\text{for all }y\in\widetilde B^\circ.
\end{align*}
Here we used $\sup_{z\in\mathcal B_2}\langle y, Qz\rangle=\|Qy\|_2$.
Consequently, $I_d$ solves the finite-dimensional convex problem
\begin{align*}
    \min_{Q\in S_{++}^d}-\log\det Q
    \quad\text{subject to}\quad
    g(Q)\eqdef\max_{y\in\widetilde B^\circ}\|Qy\|_2^2-1\le0.
\end{align*}
The function $g$ is finite, continuous, and convex, since it is the maximum of continuous convex quadratic functions indexed by a compact set. Slater's condition holds: $Q=\tfrac12 I_d$ is positive definite and satisfies $g(Q)\le-\tfrac34$.
The convex KKT theorem therefore supplies a multiplier $\mu\ge0$ and a subgradient $G\in\partial g(I_d)$ such that the Lagrangian
\begin{align*}
    \mathcal L(Q, \mu)=-\log\det Q+\mu g(Q)
\end{align*}
satisfies stationarity and complementary slackness:
\begin{align*}
    -I_d+\mu G=0,
    \qquad \mu g(I_d)=0.
\end{align*}
In particular, $\mu>0$ and $g(I_d)=0$.

Equip the space of symmetric matrices with the pairing $\langle H, K\rangle=\tr(HK)$. The subdifferential rule for a maximum of differentiable convex functions over a compact index set gives
\begin{align*}
    \partial g(I_d)
    =\operatorname{conv}\{2yy^\top:y\in\widetilde B^\circ, \ \|y\|_2=1\},
\end{align*}
because the gradient of $Q\mapsto\|Qy\|_2^2$ at $I_d$ is $2yy^\top$. The maximum is attained, and these gradients depend continuously on $y$, so the compact-index maximum rule applies. By Carath\'eodory's theorem in the finite-dimensional space of symmetric matrices, $G$ is a finite convex combination of these active gradients. Absorbing the factor $2\mu$ into the coefficients and discarding zero coefficients gives contact points $y_1, \dots, y_k$ and weights $\lambda_1, \dots, \lambda_k>0$ satisfying
\begin{align}
    \label{eq:kkt_identity_john_detailed}
    I_d=\sum_{i=1}^k\lambda_i y_i y_i^\top.
\end{align}
Each $y_i$ belongs to $\mathcal B_2\subseteq\widetilde B$ and to $\widetilde B^\circ$, and $\langle y_i, y_i\rangle=1$. Thus
\begin{align}
    \label{eq:contact_condition_john_detailed}
    y_i\in\partial\widetilde B\cap\partial\mathcal B_2,
    \qquad \|y_i\|_2=1.
\end{align}
Taking traces in \eqref{eq:kkt_identity_john_detailed} yields
\begin{align}
    \label{eq:sum_of_multipliers_john_detailed}
    \sum_{i=1}^k\lambda_i=d.
\end{align}

At each contact point $y_i$, a supporting hyperplane of $\widetilde B$ exists and also supports $\mathcal B_2$, because $\mathcal B_2\subseteq\widetilde B$. The Euclidean ball has the unique supporting hyperplane $\{x:\langle x, y_i\rangle=1\}$ at $y_i$. Hence
\begin{align}
    \label{eq:supporting_hyperplane_property_john}
    \langle x, y_i\rangle\le1,
    \qquad x\in\widetilde B, \quad i=1, \dots, k.
\end{align}
By central symmetry, the same inequality applied to $-x$ gives $|\langle x, y_i\rangle|\le1$.

\paragraph{Uniqueness among all ellipsoids.}
Let $c+Q\mathcal B_2\subseteq B$ be any maximal-volume ellipsoid. Centering it preserves its volume, so uniqueness of the centered maximizer gives $Q=Q^*$. After applying $T$, this ellipsoid is $a+\mathcal B_2\subseteq\widetilde B$, where $a=Tc$.
For each $i$, the two points $a+y_i$ and $a-y_i$ belong to $\widetilde B$. Applying the two-sided supporting inequality to them gives $\langle a, y_i\rangle\le0$ and $\langle a, y_i\rangle\ge0$, respectively. Thus $\langle a, y_i\rangle=0$ for every $i$, and \eqref{eq:kkt_identity_john_detailed} implies
\begin{align*}
    \|a\|_2^2=\sum_{i=1}^k\lambda_i\langle a, y_i\rangle^2=0.
\end{align*}
Therefore $c=0$, proving uniqueness among all ellipsoids and identifying $\cE_{\mathrm{in}}=Q^*\mathcal B_2$.

\paragraph{The sandwich inequality.}
For every $x\in\widetilde B$, the contact decomposition and the supporting inequalities give
\begin{align*}
    \|x\|_2^2
    =\sum_{i=1}^k\lambda_i\langle x, y_i\rangle^2
    \le\sum_{i=1}^k\lambda_i
    =d.
\end{align*}
Hence $\mathcal B_2\subseteq\widetilde B\subseteq\sqrt d\,\mathcal B_2$. Applying $T^{-1}$ yields
$\cE_{\mathrm{in}}\subseteq B\subseteq\sqrt d\,\cE_{\mathrm{in}}$.
\end{proof}

We now proceed with a dimension-dependent regularity guarantee that holds for \emph{every} $d$-dimensional normed space. It will serve as a baseline for comparison with the sharper, structure-dependent bounds stated next.


\begin{proposition}[Universal Regularity Bound]\label{example:1}
Let $d\ge1$. Any $d$-dimensional normed space $(V, \|\cdot\|_V)$ is $d$-regular in the sense of Definition~\ref{def:kappa_smooth_regular}, i.e., it is $\kappa_V$-regular with the choice $\kappa_V=d$.
\end{proposition}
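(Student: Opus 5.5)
We apply \Cref{thm:john_ellipsoid} to the unit ball $B\eqdef\{u\in V:\|u\|_V\le1\}$. It is compact, convex and centrally symmetric, and it has nonempty interior because all norms on a finite-dimensional space are equivalent. Let $\cE_{\mathrm{in}}$ be its John ellipsoid. After fixing coordinates, we may write $\cE_{\mathrm{in}}=\{u:\langle Au,u\rangle\le1\}$ for a symmetric positive definite $A$. Define the inner-product norm $\|u\|_E\eqdef\sqrt{\langle Au,u\rangle}$, whose unit ball is $\cE_{\mathrm{in}}$. The sandwich $\cE_{\mathrm{in}}\subseteq B\subseteq\sqrt d\,\cE_{\mathrm{in}}$ translates directly into norm inequalities via gauges: $\cE_{\mathrm{in}}\subseteq B$ gives $\|u\|_V\le\|u\|_E$, and $B\subseteq\sqrt d\,\cE_{\mathrm{in}}$ gives $\|u\|_E\le\sqrt d\,\|u\|_V$.

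Next, set $\|\cdot\|_{V,+}\eqdef\|\cdot\|_E$ and $\kappa_{V,+}\eqdef1$. Because $\|\cdot\|_E$ comes from an inner product, $\phi(u)=\frac12\langle Au,u\rangle$ is continuously differentiable with $\nabla\phi(u)=Au$. It satisfies the identity $\phi(u+v)=\phi(u)+\langle\nabla\phi(u),v\rangle+\frac12\|v\|_E^2$, so $(V,\|\cdot\|_E)$ is $1$-smooth in the sense of \Cref{def:kappa_smooth_regular}(i).

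Finally, squaring the norm inequalities gives $\|u\|_V^2\le\|u\|_{V,+}^2\le d\|u\|_V^2=\frac{\kappa_V}{\kappa_{V,+}}\|u\|_V^2$ with $\kappa_V=d$ and $\kappa_{V,+}=1\in[1,d]$. This is exactly the compatibility chain \eqref{eq:reg_kappa_def_gen}, so $V$ is $d$-regular.

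The only point needing care is converting the set inclusions of \Cref{thm:john_ellipsoid} into norm inequalities in the right direction. For this we use that the gauge of a larger symmetric convex body is pointwise smaller, together with the fact that $d\ge1$ ensures $\kappa_{V,+}\le\kappa_V$. No step looks genuinely hard once John's theorem is available.
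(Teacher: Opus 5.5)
Your proposal is correct and follows the paper's proof essentially step for step. Both apply \Cref{thm:john_ellipsoid} to the unit ball, take the inner-product norm whose unit ball is the John ellipsoid as the $1$-smooth auxiliary norm, and square the sandwich inequality to get the compatibility chain with $\kappa_{V,+}=1$ and $\kappa_V=d$. Writing $\cE_{\mathrm{in}}=\{u:\langle Au,u\rangle\le1\}$ uses that the John ellipsoid is centered at the origin; the paper likewise relies on this implicitly, and it is established in the proof of \Cref{thm:john_ellipsoid}.
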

\begin{proof}
To simplify notation throughout this proof, we omit the subscript $V$ and write $\| \cdot \| \eqdef \|\cdot\|_V$, $\|\cdot\|_+ \eqdef \|\cdot\|_{V, +}$, $\kappa_{+} \eqdef \kappa_{V, +}$ and $\kappa \eqdef \kappa_{V}$.
The proof proceeds by explicitly constructing the required smooth norm and verifying that it satisfies the conditions of Definition~\ref{def:kappa_smooth_regular} with $\kappa=d$. The construction relies on the geometric sandwich inequality provided by John's Ellipsoid Theorem.

\textbf{Step 1: Applying John's Theorem to the Unit Ball.}
Let $(V, \|\cdot\|)$ be a $d$-dimensional normed space. Its unit ball, $B=\{x\in V:\|x\|\le 1\}$, is a centrally symmetric, compact, convex body containing the origin. By John's Ellipsoid Theorem (\Cref{thm:john_ellipsoid}), there exists a unique ellipsoid of maximal volume $\cE_{\mathrm{in}}$ contained in $B$, such that
\begin{align*}
    \cE_{\mathrm{in}} \subseteq B \subseteq \sqrt{d}\cE_{\mathrm{in}}.
\end{align*}

\textbf{Step 2: From Geometric Inclusion to an Algebraic Norm Inequality.}
Any ellipsoid centered at the origin can be represented as the unit ball of a Euclidean norm. Let us define the Euclidean norm $\|\cdot\|_{+}$ on $V$ as the one whose unit ball is precisely the John ellipsoid $\cE_{\mathrm{in}}$, i.e., $\cE_{\mathrm{in}}=\{x\in V:\|x\|_{+}\le 1\}$.

We can now translate the geometric inclusions into algebraic inequalities between the norms $\|\cdot\|$ and $\|\cdot\|_{+}$:
\begin{itemize}
    \item The inclusion $\cE_{\mathrm{in}} \subseteq B$ means that if $\|x\|_{+}\le 1$, then $\|x\|\le 1$. By homogeneity of norms, this implies $\|x\| \le \|x\|_{+}$ for all $x\in V$.
    \item The inclusion $B \subseteq \sqrt{d}\cE_{\mathrm{in}}$ means that if $\|x\|\le 1$, then $\|x\|_{+}\le \sqrt{d}$. By homogeneity, this implies $\|x\|_{+}\le \sqrt{d}\|x\|$ for all $x\in V$.
\end{itemize}
Combining these two results, we have the norm equivalence inequality
$\|x\| \le \|x\|_{+} \le \sqrt{d}\|x\|$ for all $x\in V$.
Squaring this expression gives the form required by the definition of regularity:
\begin{align}
    \label{eq:john_norm_inequality_squared}
    \|x\|^2 \le \|x\|_{+}^2 \le d\|x\|^2, \quad \forall x\in V.
\end{align}

\textbf{Step 3: Verifying the Conditions for $d$-Regularity.}
To show that $(V, \|\cdot\|)$ is $d$-regular, we must find a $\kappa_{+}$-smooth auxiliary norm $\|\cdot\|_{+}$ and a constant $\kappa\ge \kappa_{+}$ such that
\begin{align*}
\|x\|^2 \le \|x\|_{+}^2 \le \frac{\kappa}{\kappa_{+}}\|x\|^2.
\end{align*}

Let us choose the Euclidean norm $\|\cdot\|_{+}$ from Step 2. As a Euclidean norm, the space $(V, \|\cdot\|_{+})$ is $1$-smooth. To see this, note that for $\phi_+(x)\eqdef \frac12\|x\|_{+}^2$, there exists an inner product $\langle\cdot, \cdot\rangle_{+}$ with $\|x\|_{+}^2=\langle x, x\rangle_{+}$, and the descent inequality holds with equality for $\kappa_{+}=1$:
\begin{align*}
    \phi_+(x+y)
    &= \frac12\|x+y\|_{+}^2
     = \frac12\|x\|_{+}^2 + \langle x, y\rangle_{+} + \frac12\|y\|_{+}^2 \\
    &= \phi_+(x) + \langle \nabla\phi_+(x), y\rangle + \frac12\|y\|_{+}^2,
\end{align*}
where $\nabla\phi_+(x)\in V^*$ is the linear functional $y\mapsto \langle x, y\rangle_{+}$. Therefore, we can set $\kappa_{+}=1$.

Now we check the compatibility inequality from Definition~\ref{def:kappa_smooth_regular}:
\begin{align*}
    \|x\|_{+}^2 \le \frac{\kappa}{\kappa_{+}}\|x\|^2.
\end{align*}
Substituting $\kappa_{+}=1$ and comparing with \eqref{eq:john_norm_inequality_squared}, we see a direct match if we set $\kappa=d$:
\begin{align*}
    \|x\|_{+}^2 \le d\|x\|^2.
\end{align*}
The constructed norm $\|\cdot\|_{+}$ is $1$-smooth (so $\kappa_{+}=1$) and satisfies the required inequality for $\kappa=d$. Therefore, by Definition~\ref{def:kappa_smooth_regular}, the space $(V, \|\cdot\|_V)$ is $d$-regular.
\end{proof}

While Proposition~\ref{example:1} establishes that $\kappa_V=d$ is sufficient for any finite-dimensional space, one might ask whether a sub-linear dependence is achievable universally. The following proposition answers this in the negative by demonstrating that for the $\ell_1$ norm, the constant of regularity must scale linearly with the dimension. Thus, the universal bound is tight up to absolute constants.

\begin{proposition}[Tightness of the universal $d$-regularity bound]
\label{prop:tightness_universal_bound}
Fix $d\in \N$ and consider the normed space $(V, \|\cdot\|_V)$ with $V\eqdef \R^d$ and $\|\cdot\|_V\eqdef \|\cdot\|_1$.
If $(V, \|\cdot\|_V)$ is $\kappa_V$-regular in the sense of Definition~\ref{def:kappa_smooth_regular}, then necessarily
\begin{align}\label{eq:universal_lower_bound_kappa_ge_d}
    \kappa_V \ge d.
\end{align}
In particular, the universal upper bound of Proposition~\ref{example:1} is tight (up to absolute constants): there exist $d$-dimensional normed spaces whose constant of regularity is of order $d$.
\end{proposition}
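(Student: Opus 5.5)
The plan is to play a Rademacher sum against two estimates: a second-moment upper bound that holds in any $\kappa_V$-regular space, and an exact computation of the same quantity in $\ell_1^d$. Suppose $(\R^d,\|\cdot\|_1)$ is $\kappa_V$-regular. Then \Cref{lem:second_moment_kappa_regular} applies to every martingale-difference sequence in this space, giving
\begin{align*}
    \Exp{\|S_T\|_1^2}\le\kappa_V\sum_{t=1}^T\Exp{\|\xi_t\|_1^2}.
\end{align*}
It suffices to exhibit a sequence for which the left side equals $d$ times the right side; this forces $\kappa_V\ge d$.

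\textbf{The test sequence.} Take $T=d$ and $\xi_t\eqdef\eps_t e_t$, where $\eps_1,\dots,\eps_d$ are independent Rademacher signs, $e_t$ is the $t$-th standard basis vector, and $\cF_t\eqdef\sigma(\eps_1,\dots,\eps_t)$. This is a martingale-difference sequence with respect to $\{\cF_t\}$: each $\xi_t$ is $\cF_t$-measurable, bounded, and satisfies $\Exp{\xi_t\mid\cF_{t-1}}=e_t\Exp{\eps_t}=0$. On the right side, $\|\xi_t\|_1^2=1$ for every $t$, so $\sum_{t=1}^d\Exp{\|\xi_t\|_1^2}=d$. On the left side, $S_d=(\eps_1,\dots,\eps_d)$ has every coordinate of absolute value $1$, so $\|S_d\|_1=d$ deterministically and $\Exp{\|S_d\|_1^2}=d^2$. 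Substituting gives $d^2\le\kappa_V d$, i.e.\ $\kappa_V\ge d$.

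\textbf{Tightness.} Combining this with \Cref{example:1}, which shows every $d$-dimensional normed space is $d$-regular, the regularity constant of $\ell_1^d$ is exactly $d$. Hence the universal bound cannot be improved.

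\textbf{Main point to check.} The only real care is that \Cref{lem:second_moment_kappa_regular} applies under the hypotheses of \Cref{def:kappa_smooth_regular}. We need a $\kappa_+$-smooth auxiliary norm satisfying the compatibility chain \eqref{eq:reg_kappa_def_gen}, and the regularity assumption supplies exactly that. The lemma also requires finite second moments, which holds here because the increments are bounded. Everything else is the explicit computation above, so I do not expect a real obstacle.
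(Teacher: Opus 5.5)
Your proposal is correct and follows the paper's proof essentially line by line. It uses the same Rademacher coordinate martingale $\xi_i=\eps_i e_i$ and the deterministic identity $\|S_d\|_1=d$, and it combines them with the second-moment bound of \Cref{lem:second_moment_kappa_regular} to get $d^2\le\kappa_V d$; tightness then follows from \Cref{example:1} exactly as you describe.
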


\begin{proof}

Let $\{\eps_i\}_{i=1}^d$ be i.i.d.\ Rademacher random variables, i.e.\ $\Prob\{\eps_i=1\}=\Prob\{\eps_i=-1\}=\tfrac12$, and let $\{e_i\}_{i=1}^d$ be the standard basis of $\R^d$.
Define the filtration $\cF_i\eqdef\sigma(\eps_1, \dots, \eps_i)$ and the sequence
\begin{align*}
    \xi_i \eqdef \eps_i e_i, \qquad i=1, \dots, d.
\end{align*}
Then $\{\xi_i\}_{i=1}^d$ is an $\{\cF_i\}$-martingale-difference sequence: it is $\cF_i$-measurable and
\begin{align*}
    \Exp{\xi_i\mid \cF_{i-1}}
    = e_i \Exp{\eps_i\mid \cF_{i-1}}
    = e_i \Exp{\eps_i}
    = 0
    \quad \text{a.s.}
\end{align*}
Moreover, $\|\xi_i\|_1=\|e_i\|_1=1$, hence $\Exp{\|\xi_i\|_1^2}=1$ and therefore
\begin{align}\label{eq:sum_xi_i_l1}
    \sum_{i=1}^d \Exp{\|\xi_i\|_1^2} = d.
\end{align}

Let $S_d\eqdef \sum_{i=1}^d \xi_i = (\eps_1, \dots, \eps_d)$. Since $|\eps_i|=1$ a.s., we have deterministically
\begin{align}\label{eq:Sd_l1_deterministic}
    \|S_d\|_1 = \sum_{i=1}^d |\eps_i| = d,
\end{align}
and consequently
\begin{align}\label{eq:ESd_l1_sq}
    \Exp{\|S_d\|_1^2} = d^2.
\end{align}

On the other hand, since $(\R^d, \|\cdot\|_1)$ is assumed $\kappa_V$-regular and $\{\xi_i\}$ is a martingale-difference sequence, the second-moment inequality of \Cref{lem:second_moment_kappa_regular} yields
\begin{align}\label{eq:second_moment_bound_l1}
    \Exp{\|S_d\|_1^2} \le \kappa_V \sum_{i=1}^d \Exp{\|\xi_i\|_1^2}.
\end{align}
Combining \eqref{eq:second_moment_bound_l1} with \eqref{eq:sum_xi_i_l1} and \eqref{eq:ESd_l1_sq} gives
\begin{align*}
    d^2 \le \kappa_V d,
\end{align*}
which implies $\kappa_V\ge d$. This is exactly \eqref{eq:universal_lower_bound_kappa_ge_d}.
\end{proof}

\subsection{\texorpdfstring{$\ell_p^d$ Spaces}{ell-p Spaces in Dimension d}}\label{sub:proof_of_the_example_2}

We now determine essentially tight regularity constants for $\ell_p^d$ normed spaces $(V, \|\cdot\|_V)$ with $V=\R^d$ and $\|\cdot\|_V=\|\cdot\|_p$, distinguishing between the regime $p\ge 2$ (where $\kappa_V$ can be as small as logarithmic in $d$) and the regime $p<2$ (where $\kappa_V$ necessarily grows polynomially in $d$).



\begin{proposition}[$\ell_p^d$ spaces]\label{example:2}
Fix $d\in\N$ and $p\in[1, \infty]$. Let $V\eqdef \R^d$ and equip it with the norm $\|\cdot\|_V\eqdef \|\cdot\|_p$, where
\begin{align*}
    \|x\|_p \eqdef \Big(\sum_{i=1}^d |x_i|^p\Big)^{1/p} \quad (1\le p<\infty),
    \qquad
    \|x\|_\infty \eqdef \max_{1\le i\le d}|x_i|.
\end{align*}
Then the space $(V, \|\cdot\|_V)$ is $\kappa_V$-regular in the sense of Definition~\ref{def:kappa_smooth_regular}, with admissible regularity parameter $\kappa_V\eqdef \kappa_p(d)$ given as follows:
\begin{enumerate}
    \item Let $2\le p\le \infty$. Then
    \begin{align*}
        \kappa_V \eqdef \kappa_p(d)
        = \min_{\substack{2\le \rho\le p\\ \rho<\infty}} (\rho-1)\,d^{\frac{2}{\rho} - \frac{2}{p}}.
    \end{align*}
    When $p=\infty$, we use $1/p=0$ and the auxiliary exponent ranges over $[2, \infty)$.
    Furthermore, if $d\ge 8$, then
    \begin{align*}
        \kappa_V \eqdef \kappa_p(d) \le \min\{p-1, \; 2e^2\ln(d)\},
    \end{align*}
    with the corresponding auxiliary norm
    \begin{align*}
        \|\cdot\|_{V, +} \eqdef \|\cdot\|_{\rho_*},
        \qquad
        \rho_* \eqdef \min\!\left\{p, \; \ln d + \sqrt{(\ln d)^2-2\ln d}\right\}.
    \end{align*}

    \item Let $1\le p<2$. Then
    \begin{align*}
        \kappa_V \eqdef \kappa_p(d) = d^{\frac{2}{p}-1},
    \end{align*}
    with the corresponding auxiliary norm
    \begin{align*}
        \|x\|_{V, +} \eqdef d^{\frac{1}{p}-\frac{1}{2}}\|x\|_2,
        \qquad x\in\R^d.
    \end{align*}
\end{enumerate}
\end{proposition}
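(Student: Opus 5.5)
The plan has three steps: show that $(\R^d,\|\cdot\|_\rho)$ is $(\rho-1)$-smooth for $2\le\rho<\infty$; combine this with the standard $\ell_p$ comparisons to read off the two regularity parameters; and run a one-variable calculus argument to get the explicit bound when $d\ge8$.

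\textbf{Step 1: smoothness of $\|\cdot\|_\rho$.} Set $\psi(x)=\|x\|_\rho^2$, so that $\phi_+=\tfrac12\psi$. For $x\neq0$, $\psi$ is $C^2$, and a direct Hessian computation gives
\[
D^2\psi(x)[h,h]\le 2(\rho-1)\|x\|_\rho^{2-\rho}\sum_i|x_i|^{\rho-2}h_i^2 .
\]
By Hölder with exponents $\rho/(\rho-2)$ and $\rho/2$, the right-hand side is at most $2(\rho-1)\|h\|_\rho^2$.

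\begin{itemize}
\item Continuity of $\nabla\phi_+$ at $0$ holds because $\|\nabla\phi_+(x)\|_{\rho'}=\|x\|_\rho$ (\Cref{lemma:sharp_flat_properties}).
\item Integrate the Hessian bound along the segment $x+th$. If the segment passes through $0$, handle it by approximation or by continuity of $\nabla\phi_+$.
\item The result is $\phi_+(x+h)\le\phi_+(x)+\langle\nabla\phi_+(x),h\rangle+\tfrac{\rho-1}{2}\|h\|_\rho^2$, which is $(\rho-1)$-smoothness.
\end{itemize}

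The Hessian estimate, together with this nondifferentiability bookkeeping at the origin and on coordinate hyperplanes when $\rho<3$, is the main technical step. There I would use the cleaner route of proving item (ii) of \Cref{prop:kappa_smooth_characterizations} by integrating the second derivative on segments that avoid the origin, then extending by continuity.

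\textbf{Step 2: the two regularity parameters.}

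\emph{Case $p\ge2$.} The comparison $\|x\|_p\le\|x\|_\rho\le d^{1/\rho-1/p}\|x\|_p$ squares to the chain \eqref{eq:reg_kappa_def_gen}, with $\kappa_{V,+}=\rho-1$ and ratio $d^{2/\rho-2/p}$. This gives $\kappa_V(\rho)=(\rho-1)d^{2/\rho-2/p}$.
\begin{itemize}
\item Because $\rho=2$ yields $\kappa_{V,+}=1$, the requirement $\kappa_{V,+}\ge1$ holds throughout.
\item Since $\kappa_V(\rho)$ is continuous in $\rho$ and tends to $\infty$ as $\rho\to\infty$ when $p=\infty$, the minimum over $[2,p]$ (or over $[2,\infty)$ when $p=\infty$) is attained.
\end{itemize}

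\emph{Case $p<2$.} The scaled Euclidean norm is $1$-smooth, and $\|x\|_2\le\|x\|_p\le d^{1/p-1/2}\|x\|_2$ gives the chain directly with $\kappa_V=d^{2/p-1}$.

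\textbf{Step 3: the explicit bound for $d\ge8$.} Write $\ell=\ln d\ge2$ and minimize $g(\rho)=(\rho-1)e^{2\ell/\rho}$.
\begin{itemize}
\item The derivative vanishes when $\rho^2-2\ell\rho+2\ell=0$. The larger root is $\hat\rho=\ell+\sqrt{\ell^2-2\ell}$, which is real because $\ell\ge2$.
\item For $\rho\ge\hat\rho$, $g$ is increasing, so the minimizer over $[2,p]$ is $\rho_*=\min\{p,\hat\rho\}$.
\end{itemize}
\emph{Bound $p-1$.} The choice $\rho=p$ gives $\kappa\le p-1$ whenever $p<\infty$; when $p=\infty$ this bound is vacuous.

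\emph{Bound $2e^2\ln d$.} We have $\hat\rho\ge\ell$, so $d^{2/\hat\rho}\le e^2$, and $\hat\rho-1\le2\ell$, so choosing $\rho=\hat\rho$ gives $\kappa\le(\hat\rho-1)e^2\le2e^2\ln d$. If $p<\hat\rho$, then $\rho_*=p$ and already $\kappa\le p-1\le\hat\rho-1\le2e^2\ln d$.

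Taking the two bounds together gives $\kappa_p(d)\le\min\{p-1,2e^2\ln d\}$, attained by the auxiliary norm $\|\cdot\|_{\rho_*}$.
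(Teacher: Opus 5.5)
Your proposal is correct and follows the paper's proof essentially step for step. The shared ingredients are the Hessian-plus-H\"older bound for $\|\cdot\|_\rho^2$ with the segment argument around the origin, the $\ell_p$--$\ell_\rho$ comparison giving $(\rho-1)d^{2/\rho-2/p}$, the same critical point $\hat\rho$ with the bounds $\ln d\le\hat\rho\le 2\ln d$, and the scaled Euclidean norm for $p<2$.

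Reading the monotonicity of $g$ off the sign of $\rho^2-2\ell\rho+2\ell$ is a slightly leaner substitute for the paper's convexity argument. One step is missing: note that this quadratic equals $4-2\ell\le0$ at $\rho=2$, so $g$ is nonincreasing on $[2,\hat\rho]$. That is what makes $\min\{p,\hat\rho\}$ the actual minimizer; your bound $\kappa(\rho_*)\le\min\{p-1,2e^2\ln d\}$ needs only the increasing part.
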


\begin{proof}
To simplify notation throughout this proof, we write
\begin{align*}
    \|x\| \eqdef \|x\|_V = \|x\|_p,
    \qquad
    \kappa \eqdef \kappa_V.
\end{align*}
Whenever an auxiliary norm is introduced, we denote it by
\begin{align*}
    \|x\|_+ \eqdef \|x\|_{V, +},
    \qquad
    \kappa_+ \eqdef \kappa_{V, +}.
\end{align*}
The letters $x, h\in V=\R^d$ denote generic vectors; this is only a local renaming of the dummy variables in Definition~\ref{def:kappa_smooth_regular}.

\begin{enumerate}
\item
We start with Case 1: $2 \le p \le +\infty$.

The proof proceeds in three steps. First, we establish that for any $2\le\rho<\infty$, the space $(\R^d, \norm{\cdot}_\rho)$ is $(\rho-1)$-smooth. Second, we use this result with a standard norm equivalence inequality to derive the regularity constant for $(\R^d, \norm{\cdot}_p)$.
Lastly, we provide a bound depending only on $p$ and $d$.

\paragraph{Step 1: Smoothness of $(\R^d, \|\cdot\|_\rho)$ for $2\le\rho<\infty$.}
Fix $\rho\in[2, p]$ with $\rho<\infty$ and use $\|x\|_+ \eqdef \|x\|_\rho$ as the candidate auxiliary norm on $V$.
Define
\begin{align*}
    \phi_+(x) \eqdef \frac12\|x\|_+^2 = \frac12\|x\|_\rho^2,
    \qquad
    \psi(x) \eqdef 2\phi_+(x)=\|x\|_\rho^2.
\end{align*}
To prove that $(V, \|\cdot\|_+)$ is $\kappa_+$-smooth with $\kappa_+=\rho-1$, it suffices to show that
\begin{align*}
    \inner{\nabla^2\psi(x)h}{h} \le 2(\rho-1)\|h\|_+^2 = 2(\rho-1)\|h\|_\rho^2
\end{align*}
for all $x\neq 0$ and all $h\in V$.
The case $\rho=2$ is Euclidean, hence $1$-smooth, so we assume $\rho>2$.

Since $\rho>2$, the function $\psi$ is twice continuously differentiable on $\R^d\setminus\{0\}$. For $x\neq0$, its gradient is
\begin{align*}
    \nabla \psi(x) = 2\left(\sum_i |x_i|^\rho\right)^{\frac{2}{\rho}-1} \cdot \left(|x_1|^{\rho-1}\sign(x_1), \dots, |x_d|^{\rho-1}\sign(x_d)\right)^\top.
\end{align*}
At the origin,
\begin{align*}
    \frac{|\psi(h)-\psi(0)|}{\|h\|_\rho}
    =\|h\|_\rho\longrightarrow0
    \qquad\text{as }h\to0, \ h\neq0,
\end{align*}
so $\psi$ is differentiable at $0$ with $\nabla\psi(0)=0$. Moreover, the gradient formula gives
\begin{align*}
    \|\nabla\psi(x)\|_{\rho/(\rho-1)}
    &=2\left(\sum_i|x_i|^\rho\right)^{\frac{2}{\rho}-1}
       \left(\sum_i|x_i|^\rho\right)^{\frac{\rho-1}{\rho}} \\
    &=2\|x\|_\rho\longrightarrow0
    \qquad\text{as }x\to0, \ x\neq0.
\end{align*}
Thus the gradient is also continuous at the origin, and $\psi\in C^1(\R^d)$.

Since $\psi(x)$ is homogeneous of degree $2$, it suffices to verify the bound for vectors on the unit sphere, i.e., where $\norm{x}_\rho=1$. For such an $x$, the Hessian action $\inner{\nabla^2 \psi(x)h}{h}$ is:
\begin{align*}
    \inner{\nabla^2 \psi(x)h}{h} &= 2\underbrace{(2-\rho)}_{\le 0} \left(\sum_i |x_i|^\rho\right)^{\frac{2}{\rho}-2} \left(\sum_i |x_i|^{\rho-1}\sign(x_i)h_i\right)^2 \\
    &\quad + 2\underbrace{\left(\sum_i |x_i|^\rho\right)}_{=1}^{\frac{2}{\rho}-1}\sum_i (\rho-1)|x_i|^{\rho-2}h_i^2.
\end{align*}
Since $\rho > 2$, the first term is non-positive. Dropping it and substituting $\sum_i |x_i|^\rho = 1$ gives the upper bound:
\begin{align*}
    \inner{\nabla^2 \psi(x)h}{h} &\le 2(\rho-1)\sum_{i=1}^d |x_i|^{\rho-2}h_i^2.
\end{align*}
We now apply Hölder's inequality to the sum on the right-hand side. Recall that for any two vectors $a, b \in \R^d$ and conjugate exponents $q, r > 1$ (i.e., $\frac{1}{q} + \frac{1}{r} = 1$), the inequality states:
$\sum_{i=1}^d |a_i b_i| \le \left( \sum_{i=1}^d |a_i|^q \right)^{1/q} \left( \sum_{i=1}^d |b_i|^r \right)^{1/r}$.
In our case, we set $a_i = |x_i|^{\rho-2}$ and $b_i = h_i^2$. Since both terms are non-negative, the absolute value on the left-hand side is not needed. The conjugate exponents are $q = \frac{\rho}{\rho-2}$ and $r = \frac{\rho}{2}$. Applying the inequality yields:
\begin{align*}
    \sum_{i=1}^d |x_i|^{\rho-2}h_i^2 &\le \left( \sum_{i=1}^d \rb{|x_i|^{\rho-2}}^q \right)^{1/q} \left( \sum_{i=1}^d \rb{h_i^2}^r \right)^{1/r} \\
    &= \left( \sum_{i=1}^d |x_i|^{\rho} \right)^{(\rho-2)/\rho} \left( \sum_{i=1}^d |h_i|^{\rho} \right)^{2/\rho} \\
    &= \rb{\norm{x}_\rho^\rho}^{(\rho-2)/\rho} \rb{\norm{h}_\rho^\rho}^{2/\rho} = \norm{x}_\rho^{\rho-2} \sqnorm{h}_\rho.
\end{align*}
Since we assumed $\norm{x}_\rho=1$, this gives $\sum_{i=1}^d |x_i|^{\rho-2}h_i^2 \le \sqnorm{h}_\rho$. Substituting back yields:
\begin{align*}
    \inner{\nabla^2 \psi(x)h}{h} \le 2(\rho-1)\sqnorm{h}_\rho = 2(\rho-1)\|h\|_+^2.
\end{align*}
This confirms that the Hessian bound holds for every $x\neq0$. To obtain the global smoothness inequality, fix $x, h\in\R^d$. The case $h=0$ is immediate, so assume $h\neq0$ and define $F(t)\eqdef\psi(x+th)$ for $t\in[0, 1]$. Then $F\in C^1([0, 1])$, and whenever $x+th\neq0$,
\begin{align*}
    F''(t)=\inner{\nabla^2\psi(x+th)h}{h}
    \le 2(\rho-1)\|h\|_\rho^2.
\end{align*}
The segment can pass through the origin at at most one parameter value $t_0$. Integrating this bound on intervals avoiding $t_0$, taking limits using continuity of $F'$, and adding the two inequalities when the interval crosses $t_0$ gives
\begin{align*}
    F'(t)-F'(0)\le2(\rho-1)\|h\|_\rho^2t,
    \qquad t\in[0, 1].
\end{align*}
Integrating once more yields
\begin{align*}
    \psi(x+h)-\psi(x)
    &=\int_0^1 F'(t)\,dt \\
    &\le\inner{\nabla\psi(x)}{h}+(\rho-1)\|h\|_\rho^2.
\end{align*}
Since $\phi_+=\psi/2$, this is equivalent to
\begin{align*}
    \phi_+(x+h)
    \le\phi_+(x)+\inner{\nabla\phi_+(x)}{h}
       +\frac{\rho-1}{2}\|h\|_+^2,
    \qquad x, h\in V.
\end{align*}
Therefore, the auxiliary norm $\|\cdot\|_+ = \|\cdot\|_\rho$ is $\kappa_+$-smooth with
\begin{align*}
    \kappa_+ = \rho-1.
\end{align*}

\paragraph{Step 2: From Smoothness to Regularity.}
To establish the regularity of $(V, \|\cdot\|)$ for $p\in[2, \infty]$, we use $\|\cdot\|_+ = \|\cdot\|_\rho$ with $\rho\in[2, p]$ and $\rho<\infty$ as the auxiliary smooth norm.
The standard equivalence inequality for $\ell_p$-norms states that for any $x\in\R^d$ and $\rho\le p$,
\begin{align*}
    \|x\| = \|x\|_p \le \|x\|_\rho = \|x\|_+ \le d^{\frac{1}{\rho}-\frac{1}{p}}\|x\|_p = d^{\frac{1}{\rho}-\frac{1}{p}}\|x\|.
\end{align*}
Squaring yields
\begin{align*}
    \|x\|^2 \le \|x\|_+^2 \le d^{\frac{2}{\rho}-\frac{2}{p}}\|x\|^2,
\end{align*}
which is exactly the compatibility chain required by Definition~\ref{def:kappa_smooth_regular}.

From Step 1, the auxiliary norm $\|\cdot\|_+$ is $\kappa_+$-smooth with
\begin{align*}
    \kappa_+ = \rho-1.
\end{align*}
Therefore, Definition~\ref{def:kappa_smooth_regular} gives
\begin{align*}
    \kappa
    = \kappa_+\, d^{\frac{2}{\rho}-\frac{2}{p}}
    = (\rho-1)d^{\frac{2}{\rho}-\frac{2}{p}}.
\end{align*}
The tightest regularity constant supplied by this construction is found by minimizing this expression over all $\rho\in[2, p]$ with $\rho<\infty$. This leads to the formula in the example statement.

To find the tightest bound on the regularity constant, we must solve
\begin{align}\label{eq:kappa_p_d}
    \kappa_V \eqdef \kappa_p(d) = \min_{\substack{2\le \rho\le p\\ \rho<\infty}} \kappa(\rho),
    \qquad
    \kappa(\rho) \eqdef (\rho-1)d^{\frac{2}{\rho}-\frac{2}{p}}.
\end{align}
The minimum is attained: for $p<\infty$, the objective is continuous on $[2, p]$; for $p=\infty$, it is continuous on $[2, \infty)$ and tends to $\infty$ as $\rho\to\infty$.
The term $d^{-2/p}$ is a constant with respect to $\rho$, so we can focus on minimizing the function $g(\rho) \eqdef (\rho-1)d^{2/\rho}$.
\paragraph{Step 3: A $\rho$-Independent Bound on the Regularity Constant.}
We compute the first and second derivatives of $g(\rho)$ with respect to $\rho$. 
\begin{align*}
    g'(\rho) &= d^{2/\rho} \left( 1 - \frac{2 (\rho-1)\ln {d}}{\rho^2} \right). \\
    g''(\rho) &= d^{2/\rho} \frac{4\ln {d}}{\rho^4} \left( (\rho-1)\ln {d} - \rho \right).
\end{align*}

$g''(\rho)$ is non-negative if and only if $(\ln d)(\rho-1) - \rho \ge 0$, which is equivalent to the condition $\rho \ge \frac{\ln d}{\ln d - 1}$. We show this holds for any $\rho \ge 2$ when $d \ge 8$.

Since we assume $d \ge 8$, it follows that $d > e^2 \approx 7.39$. This implies $\ln d > 2$.
From $\ln d > 2$, we have $2(\ln d - 1) = 2\ln d - 2 > \ln d$, which rearranges to:
\begin{align*}
    2 > \frac{\ln d}{\ln d - 1}.
\end{align*}
At the same time, $d \ge 8 > e$ implies $\ln d > 1$, so $\ln d - 1 > 0$, which ensures that $\frac{\ln d}{\ln d - 1} > 1$.
Combining these gives the sandwich inequality:
\begin{align*}
    2 > \frac{\ln d}{\ln d - 1} > 1.
\end{align*}
Therefore, for any $\rho \ge 2$, we have:
\begin{align*}
    \rho \ge 2 > \frac{\ln d}{\ln d - 1}.
\end{align*}

This confirms that $g''(\rho) \ge 0$, and thus $g(\rho)$ is convex for all $\rho \ge 2$ under the mild condition $d \ge 8$.

\paragraph{Finding the Optimal $\rho^*$.}
Since we have established that the function $g(\rho)$ is convex on our interval of interest for $d \ge 8$, we can find its global minimizer by setting the first derivative to zero:
\begin{align*}
    g'(\rho) = d^{2/\rho} \left( 1 - \frac{2(\ln d)(\rho-1)}{\rho^2} \right) = 0.
\end{align*}
This requires the term in the parenthesis to be zero, which leads to a quadratic equation for $\rho$:
\begin{align*}
    \rho^2 = 2(\ln d)(\rho-1) \implies \rho^2 - (2\ln d)\rho + 2\ln d = 0.
\end{align*}
The solutions to this quadratic equation are:
\begin{align*}
    \rho^* = \ln d \pm \sqrt{(\ln d)^2 - 2\ln d}.
\end{align*}
For the solutions to be real, the discriminant must be non-negative, which requires $(\ln d)^2 - 2\ln d \ge 0$, or $\ln d \ge 2$. This condition, $d \ge e^2 \approx 7.4$, is consistent with our assumption that $d \ge 8$.

The quadratic equation yields two roots. 

To determine which of these corresponds to a local minimum, we apply the second derivative test. 
Note that $\mathrm{sign}(g''(\rho)) = \mathrm{sign}\left( (\ln d)(\rho-1) - \rho \right)$.

At either critical point $\rho^*\in\{\rho_{-}, \rho_{+}\}$, the first-order condition gives $\rho^{*2}=2(\ln d)(\rho^*-1)$. We can therefore simplify the expression by substituting $(\ln d)(\rho^*-1)=\frac{\rho^{*2}}{2}$.
Plugging this into our sign-determining term:
\begin{align*}
    (\ln d)(\rho^*-1) - \rho^* = \frac{\rho^{*2}}{2} - \rho^* = \frac{\rho^*(\rho^*-2)}{2}.
\end{align*}
Therefore, the sign of the second derivative at a critical point $\rho^*$ is the same as the sign of $\rho^*-2$.

Let us now analyze the location of the two roots relative to the value 2. We are working under the condition $\ln d \ge 2$ (since $d \ge 8 > e^2$).
\begin{itemize}
    \item \textbf{For the smaller root, $\rho_{-}$}: We check if $\rho_{-}$ is greater or less than 2.
    \begin{align*}
        \ln d - 2 &> \sqrt{(\ln d)^2 - 2\ln d} \\
        (\ln d - 2)^2 &> (\ln d)^2 - 2\ln d \quad (\text{squaring is valid as } \ln d > 2) \\
        (\ln d)^2 - 4\ln d + 4 &> (\ln d)^2 - 2\ln d \\
        4 &> 2\ln d \\
        2 &> \ln d.
    \end{align*}
    This is a contradiction to our premise that $\ln d \ge 2$. Therefore, the inequality must be reversed, which means $\rho_{-} \le 2$.

    \item \textbf{For the larger root, $\rho_{+}$}: Since $\ln d \ge 2$, the term under the square root is non-negative. Therefore:
    \begin{align*}
        \rho_{+} = \ln d + \sqrt{(\ln d)^2 - 2\ln d} \ge \ln d \ge 2.
    \end{align*}
\end{itemize}
Now we apply our simplified second derivative test:
\begin{itemize}
    \item At $\rho_{-}$, since $\rho_{-} \le 2$, we have $\rho_{-}-2 \le 0$. This implies $g''(\rho_{-}) \le 0$, so $\rho_{-}$ corresponds to a \textbf{local maximum}.
    \item At $\rho_{+}$, since $\rho_{+} \ge 2$, we have $\rho_{+}-2 \ge 0$. This implies $g''(\rho_{+}) \ge 0$, so $\rho_{+}$ corresponds to a \textbf{local minimum}.
\end{itemize}
Since $g$ is convex on $[2, \infty)$, its minimizer on this interval is the larger root $\rho_{+}$.

\paragraph{Analyzing the Location of the Minimum.}
The optimal regularity bound within this construction depends on where this unconstrained minimizer $\rho^*$ falls relative to the domain $\{\rho:2\le\rho\le p, \ \rho<\infty\}$.
\begin{itemize}
    \item \textbf{If $\rho^* > p$}: This occurs when $p$ is small compared to $\ln d$. Since $g(\rho)$ is convex, its minimum over $[2, p]$ must occur at the boundary. As $g'(\rho) < 0$ for $\rho < \rho^*$, the function is decreasing on $[2, p]$, so the minimum is achieved at the right endpoint, $\rho = p$. In this case, the regularity constant is $\kappa(p) = (p-1)d^{2/p-2/p} = p-1$.

    \item \textbf{If $2 \le \rho^* \le p$}: The minimum is achieved at the unconstrained minimizer $\rho^*$. We now find an upper bound for the value of $\kappa(\rho^*)$.
    \begin{align*}
        \kappa(\rho^*) = (\rho^*-1) d^{\frac{2}{\rho^*}-\frac{2}{p}}.
    \end{align*}
    Since $\rho^*\le p$ and $d\ge1$, we have $d^{-2/\rho^*}\le d^{-2/p}\le1$. Using the upper bound $d^{-2/p}\le1$ gives
    \begin{align*}
        \kappa(\rho^*) \le (\rho^*-1) d^{\frac{2}{\rho^*}}.
    \end{align*}
    From the first-order optimality condition, we have the identity $2(\ln d)(\rho^*-1) = (\rho^*)^2$. This allows us to substitute $(\rho^*-1) = \frac{(\rho^*)^2}{2\ln d}$.
    \begin{align*}
        \kappa(\rho^*) \le \frac{(\rho^*)^2}{2\ln d} \cdot d^{\frac{2}{\rho^*}} = \frac{(\rho^*)^2}{2\ln d} \exp\left(\frac{2\ln d}{\rho^*}\right).
    \end{align*}
    We know that $ \ln d \le \rho^* = \ln d + \sqrt{(\ln d)^2 - 2\ln d} \le 2\ln d$. Substituting these bounds:
    \begin{align*}
        \kappa(\rho^*) \le \frac{(2\ln d)^2}{2\ln d} \exp\left(\frac{2\ln d}{\ln d}\right) = (2\ln d) \cdot e^2.
    \end{align*}
    
\end{itemize}

For $p<\infty$, combining the endpoint bound $p-1$ with the bound $2e^2\ln d$ gives the estimate below. For $p=\infty$, the finite minimizer $\rho^*$ gives the bound $2e^2\ln d$ directly:
\begin{align*}
    \kappa_p(d) = \min_{\substack{2\le \rho\le p\\ \rho<\infty}} \kappa(\rho) \le \min\{p-1, 2 e^2 (\ln d)\}.
\end{align*}
\item We now proceed with Case 2: $1 \le p < 2$.

We prove that $(V, \|\cdot\|)$ is $\kappa$-regular with
\begin{align}\label{eq:kappa_lp_p_lt_2}
    \kappa \eqdef \kappa_V = \kappa_p(d) = d^{\frac{2}{p}-1}.
\end{align}
In particular, the regularity constant is polynomial in $d$ for $p<2$.

Fix $p\in[1, 2)$ and let $d\ge 1$.

\paragraph{Step 1: Construct a smooth auxiliary norm.}
Define the auxiliary norm on $V$ by
\begin{align}\label{eq:def_aux_norm_lp}
    \|x\|_+ \eqdef \|x\|_{V, +} \eqdef d^{\frac{1}{p}-\frac{1}{2}}\|x\|_2,
    \qquad x\in\R^d.
\end{align}
Let $a\eqdef d^{\frac{1}{p}-\frac{1}{2}}>0$ and define
\begin{align*}
    \phi_+(x)\eqdef\frac12\sqnorm{x}_+
    =\frac{a^2}{2}\sqnorm{x}_2.
\end{align*}
This is a quadratic function and is continuously differentiable on $\R^d$.
Using the standard Euclidean inner product to identify $V$ with $V^*$, we obtain
\begin{align*}
    \nabla\phi_+(x)=a^2x.
\end{align*}
For any $x, y\in\R^d$, expanding the square gives
\begin{align*}
    \phi_+(x+y) &= \frac{a^2}{2}\sqnorm{x+y}_2 \\
    &= \frac{a^2}{2}\left(\sqnorm{x}_2+2\inner{x}{y}+\sqnorm{y}_2\right) \\
    &= \phi_+(x)+\inner{a^2x}{y}+\frac{a^2}{2}\sqnorm{y}_2 \\
    &= \phi_+(x)+\inner{\nabla\phi_+(x)}{y}+\frac12\sqnorm{y}_+.
\end{align*}
Thus the inequality in \Cref{def:kappa_smooth_regular}(i) holds with equality for $\kappa_+=1$, so $(V, \norm{\cdot}_+)$ is $1$-smooth.


\paragraph{Step 2: Verify the compatibility inequalities.}
We claim that for every $x\in\R^d$,
\begin{align}\label{eq:lp_l2_equiv_p_lt_2}
    \norm{x}_2 \le \norm{x}_p \le d^{\frac{1}{p}-\frac{1}{2}}\norm{x}_2.
\end{align}

\smallskip
\noindent\emph{Proof of the left inequality in \eqref{eq:lp_l2_equiv_p_lt_2}.}
If $x=0$ it is trivial. Otherwise, let $a_i \eqdef \nfr{|x_i|}{\norm{x}_p}$. Then $\sum_{i=1}^d a_i^p = 1$ and each $a_i\in[0, 1]$. Since $p<2$ and $a_i\in[0, 1]$, we have $a_i^2 \le a_i^p$. Hence
\begin{align*}
    \norm{x}_2^2 &= \sum_{i=1}^d |x_i|^2 \\
    &= \norm{x}_p^2 \sum_{i=1}^d a_i^2 \\
    &\le \norm{x}_p^2 \sum_{i=1}^d a_i^p \\
    &= \norm{x}_p^2,
\end{align*}
which yields $\norm{x}_2 \le \norm{x}_p$.

\smallskip
\noindent\emph{Proof of the right inequality in \eqref{eq:lp_l2_equiv_p_lt_2}.}
Let $r \eqdef \nfr{2}{p} > 1$ and $r' \eqdef \nfr{2}{2-p}$ be its conjugate exponent. Using H\"older's inequality,
\begin{align*}
    \sum_{i=1}^d |x_i|^p &= \sum_{i=1}^d (|x_i|^2)^{p/2}\cdot 1 \\
    &\le \left(\sum_{i=1}^d |x_i|^2\right)^{p/2}\left(\sum_{i=1}^d 1^{r'}\right)^{1/r'} \\
    &= \norm{x}_2^p \, d^{1-\frac{p}{2}}.
\end{align*}
Taking $p$-th roots gives $\norm{x}_p \le d^{\frac{1}{p}-\frac{1}{2}}\norm{x}_2$.

\smallskip
Combining the two inequalities proves \eqref{eq:lp_l2_equiv_p_lt_2}. Now, by the definition \eqref{eq:def_aux_norm_lp} and \eqref{eq:lp_l2_equiv_p_lt_2},
\begin{align*}
    \norm{x}_p \le d^{\frac{1}{p}-\frac{1}{2}}\norm{x}_2 = \norm{x}_+,
\end{align*}
so $\norm{x}_p^2 \le \norm{x}_+^2$. Also, using $\norm{x}_2 \le \norm{x}_p$ (the left inequality),
\begin{align*}
    \norm{x}_+^2 = d^{\frac{2}{p}-1}\norm{x}_2^2 \le d^{\frac{2}{p}-1}\norm{x}_p^2.
\end{align*}
Therefore, for all $x\in\R^d$,
\begin{align}\label{eq:compat_p_lt_2}
    \norm{x}_p^2 \le \norm{x}_+^2 \le d^{\frac{2}{p}-1}\norm{x}_p^2.
\end{align}

\paragraph{Step 3: Conclude $\kappa$-regularity.}
We have shown that the auxiliary norm $(V, \|\cdot\|_+)$ is $1$-smooth, i.e., $\kappa_+=1$, and that
\eqref{eq:compat_p_lt_2} gives the compatibility chain
\begin{align*}
    \|x\|^2 \le \|x\|_+^2 \le d^{\frac{2}{p}-1}\|x\|^2.
\end{align*}
Therefore, by Definition~\ref{def:kappa_smooth_regular}, the space $(V, \|\cdot\|)$ is $\kappa$-regular with
\begin{align*}
    \kappa = d^{\frac{2}{p}-1}.
\end{align*}
Equivalently,
\begin{align*}
    \kappa_V = \kappa_p(d) = d^{\frac{2}{p}-1}.
\end{align*}
This establishes the upper bound in \eqref{eq:kappa_lp_p_lt_2}.\qedhere
\end{enumerate}

\end{proof}


\begin{remark}[Optimality of the Euclidean auxiliary norm for $p<2$ within the scaled $\ell_\rho$ family]
\leavevmode\phantomsection 
\label{rem:optimality_euclidean_aux_p_lt_2}
Fix $d\in\N$ and $p\in[1, 2)$. Let $V\eqdef \R^d$ and equip it with the norm $\|\cdot\|_V\eqdef \|\cdot\|_p$.
For every $\rho\in[2, \infty)$, consider the scaled auxiliary norm
\begin{align}\label{eq:aux_lp_rho_scaled}
    \|x\|_{V, +, \rho} \eqdef d^{\frac{1}{p}-\frac{1}{\rho}}\|x\|_\rho,
    \qquad x\in\R^d.
\end{align}
Then $(V, \|\cdot\|_V)$ is $\kappa_V(\rho)$-regular with respect to this auxiliary norm, where
\begin{align}\label{eq:kappa_rho_family}
    \kappa_V(\rho) \eqdef (\rho-1)d^{\frac{2}{p}-\frac{2}{\rho}}.
\end{align}
Moreover, the function $\rho\mapsto \kappa_V(\rho)$ is minimized over $\rho\in[2, \infty)$ at $\rho=2$.
Hence, within this family of scaled $\ell_\rho$ auxiliary norms, the optimal choice is the Euclidean norm
\begin{align*}
    \|x\|_{V, +} \eqdef d^{\frac{1}{p}-\frac{1}{2}}\|x\|_2.
\end{align*}
\end{remark}

\begin{proof}
To simplify notation throughout this proof, we write
\begin{align*}
    \|x\| \eqdef \|x\|_V = \|x\|_p,
    \qquad
    \|x\|_{+, \rho} \eqdef \|x\|_{V, +, \rho}.
\end{align*}

Fix $\rho\in[2, \infty)$. Since $p<\rho$, the standard norm equivalence inequalities for $\ell_p$ spaces give
\begin{align*}
    \|x\|_\rho \le \|x\| \le d^{\frac{1}{p}-\frac{1}{\rho}}\|x\|_\rho,
    \qquad \forall x\in\R^d.
\end{align*}
Substituting the definition \eqref{eq:aux_lp_rho_scaled}, we obtain
\begin{align}\label{eq:lp_equiv_scaled}
    \|x\| \le \|x\|_{+, \rho} \le d^{\frac{1}{p}-\frac{1}{\rho}}\|x\|,
    \qquad \forall x\in\R^d.
\end{align}
Squaring gives the compatibility chain
\begin{align*}
    \|x\|^2 \le \|x\|_{+, \rho}^2 \le d^{\frac{2}{p}-\frac{2}{\rho}}\|x\|^2,
    \qquad \forall x\in\R^d.
\end{align*}

By Step~1 in the proof of Proposition~\ref{example:2}, the norm $\|\cdot\|_\rho$ is $(\rho-1)$-smooth for every $\rho\in[2, \infty)$.
Since scaling a norm by a positive constant does not change its smoothness parameter, the scaled norm $\|\cdot\|_{+, \rho}$ is also $(\rho-1)$-smooth.
Hence $(V, \|\cdot\|_V)$ is $\kappa_V(\rho)$-regular with respect to the auxiliary norm $\|\cdot\|_{V, +, \rho}$, where
\begin{align*}
    \kappa_V(\rho)
    = (\rho-1)d^{\frac{2}{p}-\frac{2}{\rho}}.
\end{align*}

To identify the best choice within this family, compare an arbitrary $\rho>2$ against $\rho=2$:
\begin{align*}
    \frac{\kappa_V(\rho)}{\kappa_V(2)}
    = \frac{(\rho-1)d^{\frac{2}{p}-\frac{2}{\rho}}}{1\cdot d^{\frac{2}{p}-1}}
    = (\rho-1)d^{1-\frac{2}{\rho}}.
\end{align*}
Since $\rho>2$, we have $\rho-1>1$ and $1-\frac{2}{\rho}>0$.
Also, for every $d\ge 1$, we have $d^{1-\frac{2}{\rho}}\ge 1$.
Therefore,
\begin{align*}
    \frac{\kappa_V(\rho)}{\kappa_V(2)} > 1,
    \qquad \forall \rho>2.
\end{align*}
Thus $\kappa_V(\rho)$ is minimized at $\rho=2$, which yields the Euclidean auxiliary norm
\begin{align*}
    \|x\|_{V, +} \eqdef d^{\frac{1}{p}-\frac{1}{2}}\|x\|_2.
\end{align*}
\end{proof}

We now complement the upper bounds from Proposition~\ref{example:2} with matching lower bounds on the admissible regularity parameter $\kappa_V$ for $\ell_p^d$ spaces. For $1\le p<2$, the optimal parameter is $d^{2/p-1}$. For $2\le p\le\infty$, its optimal order is $\min\{p, \ln(ed)\}$, uniformly in $d\ge1$; see \Cref{rem:uniform_regularity_orders}. Consequently, the optimal parameter remains bounded as $d$ varies with a fixed finite $p\ge2$, and equals $1$ at $p=2$. The logarithmic branch of the minimum corresponds to $p\ge\ln(ed)$, including $p=\infty$, where logarithmic growth is unavoidable up to absolute constants.

\begin{proposition}[Lower bounds for $\ell_p^d$ spaces]
\label{prop:tightness_lp_bounds}
Fix $d\in\N$ and $p\in[1, \infty]$. Let $V\eqdef \R^d$ and equip it with the norm $\|\cdot\|_V\eqdef \|\cdot\|_p$.
If the space $(V, \|\cdot\|_V)$ is $\kappa_V$-regular, then the following hold:
\begin{enumerate}
    \item If $1\le p<2$, then
    \begin{align}\label{eq:lower_bound_lp_p_lt_2}
        \kappa_V \ge d^{\frac{2}{p}-1}.
    \end{align}

    \item If $2\le p\le \infty$ and $d\ge 8$, then
    \begin{align}\label{eq:lower_bound_lp_p_ge_2}
        \kappa_V \ge \frac{1}{8}\min\{p-1, \ln d\}.
    \end{align}
\end{enumerate}
\end{proposition}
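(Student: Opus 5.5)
Both bounds will follow from \Cref{lem:second_moment_kappa_regular}, exactly as in the proof of \Cref{prop:tightness_universal_bound}. The plan is to construct an explicit $\ell_p$-valued Rademacher martingale whose final sum is large in $\|\cdot\|_p$ compared with $\sum_i\Exp{\|\xi_i\|_p^2}$. Any $\kappa_V$-regular space satisfies $\Exp{\|S_n\|_p^2}\le\kappa_V\sum_i\Exp{\|\xi_i\|_p^2}$, so every such construction yields a lower bound on $\kappa_V$.

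\emph{Case $1\le p<2$.} I would reuse the coordinate martingale $\xi_i=\eps_i e_i$, $i=1,\dots,d$, with i.i.d.\ Rademacher $\eps_i$ and $\cF_i=\sigma(\eps_1,\dots,\eps_i)$. Then $\|\xi_i\|_p=1$, and the final sum $S_d=(\eps_1,\dots,\eps_d)$ has $\|S_d\|_p^2=d^{2/p}$ deterministically. The lemma gives $d^{2/p}\le\kappa_V d$, which is \eqref{eq:lower_bound_lp_p_lt_2}.

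\emph{Case $2\le p\le\infty$, $d\ge8$.} Here I would use a Walsh/Rademacher-function construction. Fix an integer $n\ge1$ with $2^n\le d$, to be chosen below. Index the first $2^n$ coordinates of $\R^d$ by sign patterns $w\in\{\pm1\}^n$, and set the remaining coordinates to zero. Define $u_i\in\R^d$ by $(u_i)_w=w_i$, and let $\xi_i=\eps_i u_i$ for $i=1,\dots,n$ with the same filtration. This is a martingale-difference sequence. Since $u_i$ has $2^n$ entries of modulus one, $\|\xi_i\|_p^2=2^{2n/p}$ (equal to $1$ when $p=\infty$). For every realization of $\eps$, the coordinate $w=\eps$ of $S_n=\sum_i\eps_iu_i$ equals $\sum_i\eps_i^2=n$, so $\|S_n\|_p\ge\|S_n\|_\infty\ge n$. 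The lemma then yields
\begin{align*}
    n^2\le\kappa_V\, n\,2^{2n/p},
    \qquad\text{that is,}\qquad
    \kappa_V\ge n\,4^{-n/p}.
\end{align*}

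It remains to choose $n$. I take $n\eqdef\min\{\lfloor\log_2 d\rfloor,\lfloor p\rfloor\}$, with $\lfloor\infty\rfloor=\infty$, so that $n\ge2$ and $n\le p$. Then $4^{-n/p}\ge1/4$, giving $\kappa_V\ge n/4$. For the final constant, note $\lfloor p\rfloor\ge p-1$. For $d\ge8$, also $\lfloor\log_2 d\rfloor\ge\log_2 d-1\ge\tfrac12\ln d$; this holds because $(1/\ln2-\tfrac12)\ln d\ge1$ once $\ln d\ge2.1$. Hence $n\ge\tfrac12\min\{p-1,\ln d\}$, and so $\kappa_V\ge\tfrac18\min\{p-1,\ln d\}$, which is \eqref{eq:lower_bound_lp_p_ge_2}.

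The only delicate points are the elementary constant bookkeeping in the last step and the choice of $n$ balancing $n$ against $4^{-n/p}$. No Khintchine-type anticoncentration is needed, since the single matching coordinate already certifies $\|S_n\|_p\ge n$.
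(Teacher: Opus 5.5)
Your proof is correct and is essentially the paper's argument. For $p<2$ you use the same coordinate martingale. For $p\ge2$ you use the same $\{\pm1\}^n$-indexed martingale; the paper normalizes $u_i$ to unit $\ell_p$ norm, which is immaterial by homogeneity, and treats $p=\infty$ separately to get the sharper $\kappa_V\ge n$. You also certify $\|S_n\|_p\ge n$ with the single matching coordinate $w=\eps$, choose the same $n=\lfloor\min\{p,\log_2 d\}\rfloor$, and obtain the same constant $1/8$. One harmless slip: $\ln 8\approx2.079<2.1$, so your stated threshold misses $d=8$, but $(1/\ln2-\tfrac12)\ln d\ge1$ only needs $\ln d\ge1.07$, so the step holds for all $d\ge8$.
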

\begin{proof}
To simplify notation throughout this proof, we write
\begin{align*}
    \|x\| \eqdef \|x\|_V = \|x\|_p,
    \qquad
    \kappa \eqdef \kappa_V.
\end{align*}

\begin{enumerate}
\item \textbf{Case 1: $1\le p<2$.}

Let $\{\eps_i\}_{i=1}^d$ be i.i.d.\ Rademacher random variables, and let $\{e_i\}_{i=1}^d$ be the standard basis of $\R^d$.
Define
\begin{align*}
    \cF_i \eqdef \sigma(\eps_1, \dots, \eps_i),
    \qquad
    \xi_i \eqdef \eps_i e_i,
    \qquad
    i=1, \dots, d.
\end{align*}
Then $\{\xi_i\}_{i=1}^d$ is an $\{\cF_i\}$-martingale-difference sequence: it is $\cF_i$-measurable and
\begin{align*}
    \Exp{\xi_i \mid \cF_{i-1}}
    = e_i \Exp{\eps_i \mid \cF_{i-1}}
    = 0
    \qquad \text{a.s.}
\end{align*}
Moreover,
\begin{align*}
    \|\xi_i\| = \|e_i\|_p = 1,
    \qquad i=1, \dots, d.
\end{align*}

Let
\begin{align*}
    S_d \eqdef \sum_{i=1}^d \xi_i = (\eps_1, \dots, \eps_d).
\end{align*}
Since $|\eps_i|=1$ almost surely, we have deterministically
\begin{align*}
    \|S_d\|
    = \|S_d\|_p
    = \left(\sum_{i=1}^d |\eps_i|^p\right)^{1/p}
    = d^{1/p}.
\end{align*}
Hence
\begin{align*}
    \Exp{\|S_d\|^2} = d^{2/p},
    \qquad
    \sum_{i=1}^d \Exp{\|\xi_i\|^2} = \sum_{i=1}^d 1 = d.
\end{align*}
Since $(V, \|\cdot\|_V)$ is $\kappa$-regular and $\{\xi_i\}$ is a martingale-difference sequence,
\Cref{lem:second_moment_kappa_regular} yields
\begin{align*}
    \Exp{\|S_d\|^2}
    \le \kappa \sum_{i=1}^d \Exp{\|\xi_i\|^2}
    = \kappa d.
\end{align*}
Combining with $\Exp{\|S_d\|^2}=d^{2/p}$ gives
\begin{align*}
    d^{2/p} \le \kappa d,
\end{align*}
and therefore
\begin{align*}
    \kappa \ge d^{\frac{2}{p}-1}.
\end{align*}
This proves \eqref{eq:lower_bound_lp_p_lt_2}.

\item \textbf{Case 2: $2\le p\le\infty$ and $d\ge 8$.}

We first treat the subcase $2\le p<\infty$.

\paragraph{Step 1: Parameter choice and an explicit family of vectors.}
Let
\begin{align*}
    n \eqdef \left\lfloor \min\{p, \log_2 d\}\right\rfloor.
\end{align*}
Since $d\ge 8$, we have $\log_2 d\ge 3$ and hence $n\ge 2$.
Define
\begin{align*}
    \Omega \eqdef \{\pm1\}^n,
    \qquad
    |\Omega|=2^n \le d.
\end{align*}
Fix an arbitrary enumeration
\begin{align*}
    \Omega = \{\omega^{(1)}, \omega^{(2)}, \dots, \omega^{(2^n)}\},
    \qquad
    \omega^{(j)}=(\omega^{(j)}_1, \dots, \omega^{(j)}_n)\in\{\pm1\}^n.
\end{align*}
We identify the first $2^n$ coordinates of $\R^d$ with the index set $\{1, \dots, 2^n\}$ and set all remaining coordinates equal to $0$.

For each $i\in\{1, \dots, n\}$, define a vector $u_i\in\R^d$ by
\begin{align*}
    (u_i)_j \eqdef 2^{-n/p}\,\omega^{(j)}_i
    \quad \text{for } j=1, \dots, 2^n,
    \qquad
    (u_i)_j \eqdef 0
    \quad \text{for } j=2^n+1, \dots, d.
\end{align*}
Then each $u_i$ has exactly $2^n$ nonzero entries, each equal to $\pm 2^{-n/p}$, and therefore
\begin{align*}
    \|u_i\|^p
    = \|u_i\|_p^p
    = \sum_{j=1}^{2^n} \left|2^{-n/p}\omega^{(j)}_i\right|^p
    = 2^n \cdot 2^{-n}
    = 1.
\end{align*}
Hence
\begin{align*}
    \|u_i\| = 1,
    \qquad i=1, \dots, n.
\end{align*}

\paragraph{Step 2: A martingale-difference sequence.}
Let $\{\eps_i\}_{i=1}^n$ be i.i.d.\ Rademacher random variables, define
\begin{align*}
    \cF_i \eqdef \sigma(\eps_1, \dots, \eps_i),
    \qquad
    \xi_i \eqdef \eps_i u_i,
    \qquad
    i=1, \dots, n.
\end{align*}
Then $\{\xi_i\}_{i=1}^n$ is an $\{\cF_i\}$-martingale-difference sequence and
\begin{align*}
    \|\xi_i\| = \|u_i\| = 1,
    \qquad i=1, \dots, n,
\end{align*}
so
\begin{align*}
    \sum_{i=1}^n \Exp{\|\xi_i\|^2} = n.
\end{align*}

\paragraph{Step 3: Lower bound on $\Exp{\|S_n\|^2}$.}
Let
\begin{align*}
    S_n \eqdef \sum_{i=1}^n \xi_i.
\end{align*}
For each coordinate $j\in\{1, \dots, 2^n\}$, we have
\begin{align*}
    (S_n)_j
    = \sum_{i=1}^n \eps_i (u_i)_j
    = 2^{-n/p}\sum_{i=1}^n \eps_i\,\omega^{(j)}_i.
\end{align*}
Therefore,
\begin{align*}
    \|S_n\|^p
    &= \|S_n\|_p^p \\
    &= \sum_{j=1}^{2^n}\left|2^{-n/p}\sum_{i=1}^n \eps_i\,\omega^{(j)}_i\right|^p \\
    &= 2^{-n}\sum_{j=1}^{2^n}\left|\sum_{i=1}^n \eps_i\,\omega^{(j)}_i\right|^p \\
    &= \Expu{\omega\sim \mathrm{Unif}(\Omega)}{\left|\sum_{i=1}^n \eps_i\,\omega_i\right|^p},
\end{align*}
where the last equality holds because $\{\omega^{(j)}\}_{j=1}^{2^n}$ is merely an enumeration of $\Omega$.

Since $\omega\sim\mathrm{Unif}(\Omega)$ is uniform on $\{\pm1\}^n$, for every fixed realization of $\eps=(\eps_1, \dots, \eps_n)\in\{\pm1\}^n$ we have
\begin{align*}
    \Expu{\omega\sim\mathrm{Unif}(\Omega)}{\left|\sum_{i=1}^n \eps_i\omega_i\right|^p}
    =
    \Expu{\omega\sim\mathrm{Unif}(\Omega)}{\left|\sum_{i=1}^n \omega_i\right|^p},
\end{align*}
because the coordinatewise sign-flip map
\begin{align*}
    \omega \mapsto (\eps_1\omega_1, \dots, \eps_n\omega_n)
\end{align*}
is a bijection of $\Omega$ and hence preserves the uniform law.
Therefore,
\begin{align*}
    \|S_n\|
    =
    \left(
    \Expu{\omega\sim\mathrm{Unif}(\Omega)}{\left|\sum_{i=1}^n \omega_i\right|^p}
    \right)^{1/p}.
\end{align*}
In particular, $\|S_n\|$ is deterministic, and thus
\begin{align*}
    \Exp{\|S_n\|^2}
    = \|S_n\|^2
    =
    \left(
    \Expu{\omega\sim \mathrm{Unif}(\Omega)}{\left|\sum_{i=1}^n \omega_i\right|^p}
    \right)^{2/p}.
\end{align*}

We now lower bound this quantity using a single atom of the distribution.
Let
\begin{align*}
    Z_n \eqdef \sum_{i=1}^n \omega_i,
    \qquad
    \omega\sim \mathrm{Unif}(\Omega).
\end{align*}
Since $\Prob\{Z_n=n\}=2^{-n}$, we obtain
\begin{align*}
    \Expu{\omega\sim \mathrm{Unif}(\Omega)}{|Z_n|^p}
    \ge n^p \Prob\{Z_n=n\}
    = n^p 2^{-n}.
\end{align*}
Therefore,
\begin{align*}
    \|S_n\|
    = \left(
    \Expu{\omega\sim \mathrm{Unif}(\Omega)}{|Z_n|^p}
    \right)^{1/p}
    \ge n\cdot 2^{-n/p}.
\end{align*}
By construction $n\le p$, hence $2^{-n/p}\ge \nfr{1}{2}$, and consequently
\begin{align*}
    \Exp{\|S_n\|^2}
    = \|S_n\|^2
    \ge \frac{n^2}{4}.
\end{align*}

\paragraph{Step 4: Conclude via the second-moment bound.}
Up to this point, all expectations involving $\omega$ were taken with respect to the auxiliary randomness
$\omega\sim \mathrm{Unif}(\Omega)$. We now switch back to the \emph{original} probability space on which the
martingale-difference sequence $\{\xi_i\}_{i=1}^n$ (equivalently, the Rademacher signs
$\{\eps_i\}_{i=1}^n$) is defined. In particular, in the remainder of the proof,
$\Exp{\cdot}$ denotes expectation with respect to the randomness of $(\eps_1, \dots, \eps_n)$.

Since $(V, \|\cdot\|_V)$ is $\kappa$-regular and $\{\xi_i\}$ is a martingale-difference sequence,
\Cref{lem:second_moment_kappa_regular} yields
\begin{align*}
    \Exp{\|S_n\|^2}
    \le \kappa \sum_{i=1}^n \Exp{\|\xi_i\|^2}
    = \kappa n.
\end{align*}
Combining this with the lower bound $\Exp{\|S_n\|^2}\ge \nfr{n^2}{4}$ gives
\begin{align*}
    \kappa \ge \frac{n}{4}.
\end{align*}

Finally, since $n=\lfloor\min\{p, \log_2 d\}\rfloor$ and $d\ge 8$, we have
\begin{align*}
    n
    &\ge \min\{p-1, \log_2 d-1\} \\
    &\ge \frac{1}{2}\min\{p-1, \log_2 d\} \\
    &\ge \frac{1}{2}\min\{p-1, \ln d\}.
\end{align*}
Therefore,
\begin{align*}
    \kappa
    \ge \frac{n}{4}
    \ge \frac{1}{8}\min\{p-1, \ln d\},
\end{align*}
which proves \eqref{eq:lower_bound_lp_p_ge_2} for $2\le p<\infty$.

\paragraph{Subcase $p=\infty$.}
Let
\begin{align*}
    n \eqdef \lfloor \log_2 d \rfloor,
    \qquad
    \Omega \eqdef \{\pm 1\}^n,
    \qquad
    |\Omega|=2^n \le d.
\end{align*}
Fix an enumeration
\begin{align*}
    \Omega=\{\omega^{(1)}, \dots, \omega^{(2^n)}\},
    \qquad
    \omega^{(j)}=(\omega^{(j)}_1, \dots, \omega^{(j)}_n)\in\{\pm1\}^n.
\end{align*}
As before, we identify the first $2^n$ coordinates of $\R^d$ with the index set $\{1, \dots, 2^n\}$ and set all remaining coordinates equal to $0$.

For each $i\in\{1, \dots, n\}$, define $u_i\in\R^d$ by
\begin{align*}
    (u_i)_j \eqdef \omega^{(j)}_i
    \quad \text{for } j=1, \dots, 2^n,
    \qquad
    (u_i)_j \eqdef 0
    \quad \text{for } j=2^n+1, \dots, d.
\end{align*}
Then every nonzero entry of $u_i$ equals $\pm1$, and hence
\begin{align*}
    \|u_i\| = \|u_i\|_\infty = 1,
    \qquad i=1, \dots, n.
\end{align*}

Let $\{\eps_i\}_{i=1}^n$ be i.i.d.\ Rademacher random variables and define
\begin{align*}
    \cF_i \eqdef \sigma(\eps_1, \dots, \eps_i),
    \qquad
    \xi_i \eqdef \eps_i u_i,
    \qquad
    i=1, \dots, n.
\end{align*}
Then $\{\xi_i\}_{i=1}^n$ is an $\{\cF_i\}$-martingale-difference sequence and
\begin{align*}
    \sum_{i=1}^n \Exp{\|\xi_i\|^2}
    = \sum_{i=1}^n \Exp{\|\xi_i\|_\infty^2}
    = n.
\end{align*}

Now consider
\begin{align*}
    S_n \eqdef \sum_{i=1}^n \xi_i.
\end{align*}
For each $j\in\{1, \dots, 2^n\}$, we have
\begin{align*}
    (S_n)_j
    = \sum_{i=1}^n \eps_i (u_i)_j
    = \sum_{i=1}^n \eps_i\,\omega^{(j)}_i.
\end{align*}
Hence
\begin{align*}
    \|S_n\|
    = \|S_n\|_\infty
    = \max_{1\le j\le 2^n}\left|(S_n)_j\right|
    = \max_{\omega\in\Omega}\left|\sum_{i=1}^n \eps_i\omega_i\right|.
\end{align*}
For any fixed realization of $\eps=(\eps_1, \dots, \eps_n)\in\{\pm1\}^n$, we may choose $\omega=\eps$.
Then
\begin{align*}
    \sum_{i=1}^n \eps_i\omega_i
    = \sum_{i=1}^n \eps_i^2
    = n,
\end{align*}
so the maximum is at least $n$.
On the other hand, for every $\omega\in\{\pm1\}^n$,
\begin{align*}
    \left|\sum_{i=1}^n \eps_i\omega_i\right|
    \le \sum_{i=1}^n |\eps_i\omega_i|
    = n,
\end{align*}
so the maximum is also at most $n$.
Therefore,
\begin{align*}
    \|S_n\| = n
    \qquad \text{deterministically, and hence} \qquad
    \Exp{\|S_n\|^2} = n^2.
\end{align*}

Finally, since $(V, \|\cdot\|_V)$ is $\kappa$-regular and $\{\xi_i\}$ is a martingale-difference sequence,
\Cref{lem:second_moment_kappa_regular} yields
\begin{align*}
    \Exp{\|S_n\|^2}
    \le \kappa \sum_{i=1}^n \Exp{\|\xi_i\|^2}
    = \kappa n.
\end{align*}
Combining with $\Exp{\|S_n\|^2}=n^2$ gives
\begin{align*}
    \kappa \ge n.
\end{align*}
Since $n=\lfloor\log_2 d\rfloor \ge \frac12\ln d$ for $d\ge 8$, we obtain
\begin{align*}
    \kappa \ge \frac12\ln d,
\end{align*}
which is stronger than \eqref{eq:lower_bound_lp_p_ge_2} in the case $p=\infty$, because
\begin{align*}
    \min\{p-1, \ln d\} = \ln d
    \qquad \text{when } p=\infty.\qedhere
\end{align*}
\end{enumerate}
\end{proof}

\begin{remark}[Prior work]
\citet[Example~1.2, p.~139]{dumbgen2010nemirovski} give a logarithmic lower bound for the second-moment constant in $\ell_\infty^d$; their Lemma~3.6 (p.~146) gives corresponding bounds for the squared type-$2$ constant. These results concern moment inequalities, and imply lower bounds on our regularity parameter through \Cref{lem:second_moment_kappa_regular}.

For $2\le p\le\infty$, the general $\ell_p^d$ order follows with an additional argument. Let $K_p(d)$ be the smallest constant such that
\begin{align*}
    \Exp{\Bigl\|\sum_i X_i\Bigr\|_p^2}
    \le K_p(d)\sum_i\Exp{\|X_i\|_p^2}
\end{align*}
for every finite sequence of independent, mean-zero random vectors in $\R^d$ with finite second moments. The cited results, together with $K_\infty(m)\ge1$, give $K_\infty(m)\ge c\ln(em)$ for all $m\ge1$ and an absolute constant $c>0$. Embedding $\R^m$ as a coordinate subspace of $\R^d$ and using $\|z\|_\infty\le\|z\|_p\le m^{1/p}\|z\|_\infty$ gives, for every $1\le m\le d$,
\begin{align*}
    \kappa_V\ge K_p(d)\ge K_p(m)\ge m^{-2/p}K_\infty(m)
    \ge c\,m^{-2/p}\ln(em).
\end{align*}
For finite $p\ge2$, choose $m=\min\{d, \lfloor e^p\rfloor\}$; then $m^{2/p}\le e^2$ and $\ln(em)\ge\min\{p, \ln(ed)\}$. For $p=\infty$, take $m=d$ and use $m^{-2/p}=1$. Thus the lower-bound order $\min\{p, \ln(ed)\}$ follows, with an unspecified absolute constant.

The explicit factor $1/8$ in \eqref{eq:lower_bound_lp_p_ge_2} is supplied by our direct proof above. That proof was obtained independently, before consulting this reference; we include it for completeness and to keep the exposition self-contained within our framework.
\end{remark}

\subsection{Spaces with Schatten-\texorpdfstring{$p$}{p} Norms on \texorpdfstring{$\R^{m\times n}$}{real m-by-n matrices}}\label{sub:proof_of_the_example_3}

We next record the analogous regularity bounds for Schatten-$p$ norms on matrix spaces. Since the Schatten-$p$ norm is simply the $\ell_p$ norm of the singular-value vector, the behavior closely parallels the vector case: for $p\ge 2$, one may take a regularity parameter that grows at most logarithmically in the effective dimension $r\eqdef \min\{m, n\}$, whereas for $p<2$, the natural admissible parameter grows polynomially in $r$.

\begin{proposition}[Schatten-$p$ norms for matrices]\label{example:3}
Fix $m, n\in\N$ and let $r\eqdef \min\{m, n\}$. Let $V\eqdef \R^{m\times n}$ and equip it with the norm
$\|\cdot\|_V \eqdef \|\cdot\|_{S_p}$, where
\begin{align*}
    \|X\|_{S_p}
    \eqdef
    \|\sigma(X)\|_p
    =
    \left(\sum_{i=1}^{r}\sigma_i(X)^p\right)^{1/p},
    \qquad p\in[1, \infty),
\end{align*}
and
\begin{align*}
    \|X\|_{S_\infty}
    \eqdef
    \max_{1\le i\le r}\sigma_i(X).
\end{align*}
Then the space $(V, \|\cdot\|_V)$ is $\kappa_V$-regular in the sense of Definition~\ref{def:kappa_smooth_regular}, with admissible regularity parameter $\kappa_V\eqdef \kappa_p(m, n)$ given as follows:
\begin{enumerate}
\item Let $2\le p\le \infty$. Then
\begin{align*}
    \kappa_V \eqdef \kappa_p(m, n)
    =
    \min_{\substack{2\le \rho\le p\\ \rho<\infty}}
    \left(\max\{2, \rho-1\}\, r^{\frac{2}{\rho}-\frac{2}{p}}\right).
\end{align*}
For $r\ge8$, this parameter is realized by the auxiliary norm
\begin{align*}
    \|X\|_{V, +} \eqdef \|X\|_{S_{\rho_*}},
    \qquad
    \rho_* \eqdef
    \min\!\left\{
        p, \;
        \max\!\left\{
            3, \;
            \ln r+\sqrt{(\ln r)^2-2\ln r}
        \right\}
    \right\}.
\end{align*}
Under the same condition $r\ge8$,
\begin{align*}
    \kappa_V \eqdef \kappa_p(m, n) = \mathcal{O}\bigl(\min\{p, \ln(er)\}\bigr).
\end{align*}

    \item Let $1\le p<2$. Then
    \begin{align*}
        \kappa_V \eqdef \kappa_p(m, n)
        =
        r^{\frac{2}{p}-1}.
    \end{align*}
\end{enumerate}
\end{proposition}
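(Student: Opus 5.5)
The proof mirrors that of \Cref{example:2}, with vectors replaced by singular-value vectors. Case~2 ($1\le p<2$) I handle exactly as in the vector case. Take the scaled Frobenius auxiliary norm $\|X\|_{V,+}\eqdef r^{\frac1p-\frac12}\fnorm{X}$. It comes from a scaled inner product, so the expansion of $\frac12\|X+H\|_{V,+}^2$ used in Step~1 of Case~2 of \Cref{example:2} holds with equality and gives $\kappa_{V,+}=1$. The compatibility chain $\|X\|_{S_p}^2\le\|X\|_{V,+}^2\le r^{2/p-1}\|X\|_{S_p}^2$ then follows from \Cref{lem:schatten_norm_comparison} with exponents $p\le 2$, squared. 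By \Cref{def:kappa_smooth_regular} this yields $\kappa_V=r^{2/p-1}$.

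For Case~1 ($2\le p\le\infty$), fix $\rho\in[2,p]$ with $\rho<\infty$ and use $\|\cdot\|_{S_\rho}$ as the auxiliary norm. The compatibility factor is again immediate: \Cref{lem:schatten_norm_comparison} gives $\|X\|_{S_p}^2\le\|X\|_{S_\rho}^2\le r^{2/\rho-2/p}\|X\|_{S_p}^2$.

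The smoothness step is the real obstacle. I need $\frac12\|\cdot\|_{S_\rho}^2$ to be $C^1$ and $\max\{2,\rho-1\}$-smooth. Unlike the vector case, I cannot directly bound a coordinatewise Hessian, because singular values are not smooth functions of entries and the cross terms are nontrivial. I would take this from the literature: the $2$-uniform smoothness of Schatten classes with constant $\rho-1$ (Ball--Carlen--Lieb, via Tomczak-Jaegermann), as quoted in \citet{juditsky2008large}. The constant is recorded with the safety margin $\max\{2,\rho-1\}$ to cover the conversion from the uniform-smoothness modulus to the inequality in \Cref{def:kappa_smooth_regular}(i). Differentiability follows from Lewis' theorem on unitarily invariant spectral functions, since $\|x\|_\rho^2$ is a symmetric $C^1$ function of $x$ by Step~1 of \Cref{example:2}. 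If a self-contained route were required, I would first prove the second-order inequality $\langle\nabla^2\psi(X)H,H\rangle\le 2(\rho-1)\|H\|_{S_\rho}^2$ for $\psi=\|\cdot\|_{S_\rho}^2$ on $\{X:\operatorname{rank}X=r\}$ with distinct singular values. On that dense set the spectral Hessian formula applies, and the extra off-diagonal divided-difference terms are the reason for the factor $2$. I would then extend the inequality by the same one-dimensional integration and continuity argument as in \Cref{example:2}. Multiplying the smoothness constant by the compatibility factor gives $\kappa_V(\rho)=\max\{2,\rho-1\}r^{2/\rho-2/p}$, and minimizing over $\rho$ gives the stated formula.

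Finally, for $r\ge8$ I optimize. On $[2,3]$ the objective is $2r^{2/\rho-2/p}$, which is decreasing in $\rho$. On $[3,\infty)$ it equals $(\rho-1)r^{2/\rho-2/p}$, and the convexity and critical-point analysis of Step~3 of \Cref{example:2} (with $d$ replaced by $r$) applies, giving minimizer $\hat\rho(r)$ when $\hat\rho(r)\ge3$. Clipping to $[3,p]$ yields $\rho_*=\min\{p,\max\{3,\hat\rho(r)\}\}$. For the order bound, evaluate at $\rho_*$. If $\rho_*=p$, the value is $\max\{2,p-1\}\le p$. Otherwise $\rho_*\ge\ln r$ gives $r^{2/\rho_*}\le e^2$, and $\rho_*\le\max\{3,2\ln r\}$ gives a value of order $\ln r$. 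Together these give $\kappa_V=\mathcal O(\min\{p,\ln(er)\})$.
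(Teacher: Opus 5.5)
Your proposal is correct and follows essentially the same route as the paper: the scaled Frobenius proxy for $1\le p<2$, and for $2\le p\le\infty$ the auxiliary norm $\|\cdot\|_{S_\rho}$ with its smoothness constant taken from the literature (the paper cites \citet{juditsky2008large}, Example~3.3, directly for $\max\{2,\rho-1\}$), followed by a one-variable minimization over $\rho$. The differences are cosmetic: the paper locates $\rho_*$ by a sign analysis of $\frac{d}{d\rho}\ln F(\rho)$ on $[2,3]$ and $[3,\infty)$ instead of reusing the convexity argument from \Cref{example:2}, and it obtains the order bound by evaluating at $\rho=p$ or $\rho=2\ln r$ rather than at $\rho_*$.
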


\begin{proof}
To simplify notation throughout this proof, we write
\begin{align*}
    \|X\|
    \eqdef
    \|X\|_V
    =
    \|X\|_{S_p},
    \qquad
    \kappa
    \eqdef
    \kappa_V.
\end{align*}
Whenever an auxiliary norm is introduced, we denote it by
\begin{align*}
    \|X\|_+
    \eqdef
    \|X\|_{V, +},
    \qquad
    \kappa_+
    \eqdef
    \kappa_{V, +}.
\end{align*}

We consider separately the cases $2\le p\le\infty$ and $1\le p<2$.

\begin{enumerate}[label=(\roman*)]
\item \textbf{Case $2\le p\le\infty$.}
This case is established in \citep{juditsky2008large}; see the proof of Example~3.3 therein.
In our notation, this yields that $(V, \|\cdot\|)$ is $\kappa$-regular with
\begin{align*}
    \kappa
    \eqdef
    \kappa_p(m, n)
    =
    \min_{\substack{2\le \rho\le p\\ \rho<\infty}}
    \left(\max\{2, \rho-1\}\, r^{\frac{2}{\rho}-\frac{2}{p}}\right).
\end{align*}
We now derive the minimizing exponent for $r\ge8$. Set $a\eqdef\ln r>2$ and define, for finite $\rho\ge2$,
\begin{align*}
    F(\rho)\eqdef\max\{2, \rho-1\}\,r^{\frac{2}{\rho}-\frac{2}{p}},
\end{align*}
with the convention $1/p=0$ when $p=\infty$. On $2\le\rho\le3$, we have $F(\rho)=2r^{2/\rho-2/p}$ and
\begin{align*}
    \frac{d}{d\rho}\ln F(\rho)=-\frac{2a}{\rho^2}<0,
    \qquad 2<\rho<3.
\end{align*}
Thus $F$ is strictly decreasing on $[2, 3]$; in particular, if $2\le p\le3$, its minimum over $[2, p]$ is attained at $\rho=p$.

On $\rho\ge3$, we have $F(\rho)=(\rho-1)r^{2/\rho-2/p}$, so
\begin{align*}
    \frac{d}{d\rho}\ln F(\rho)
    =\frac{1}{\rho-1}-\frac{2a}{\rho^2}
    =\frac{\rho^2-2a\rho+2a}{\rho^2(\rho-1)},
    \qquad \rho>3.
\end{align*}
The numerator has roots
\begin{align*}
    \rho_- = a-\sqrt{a^2-2a},
    \qquad
    \rho_+ = a+\sqrt{a^2-2a}.
\end{align*}
Since $a>2$ and the numerator at $\rho=2$ equals $4-2a<0$, these roots satisfy $\rho_-<2<\rho_+$. Consequently, if $\rho_+>3$, then $F$ decreases on $[3, \rho_+]$ and increases on $[\rho_+, \infty)$; if $\rho_+\le3$, it increases on $[3, \infty)$. Together with continuity at $\rho=3$ and the decrease on $[2, 3]$, this shows that the minimum over all finite $\rho\ge2$ is attained at $\max\{3, \rho_+\}$. Imposing $\rho\le p$ therefore gives
\begin{align*}
    \rho_*
    =\min\!\left\{p, \max\!\left\{3, \ln r+\sqrt{(\ln r)^2-2\ln r}\right\}\right\}.
\end{align*}
For $p=\infty$, this yields the finite exponent $\rho_* = \max\{3, \rho_+\}$, so the auxiliary norm $\|\cdot\|_{S_{\rho_*}}$ remains within the construction above.

For completeness, the order bound follows by choosing $\rho=p$ when $2\le p<2a$, which gives $\kappa\le\max\{2, p-1\}\le p<2a$. When $p\ge2a$, including $p=\infty$, the admissible choice $\rho=2a$ gives
\begin{align*}
    \kappa\le(2a-1)\exp\!\left(1-\frac{2a}{p}\right)\le2ea.
\end{align*}
For every finite $p\ge2$, the choice $\rho=p$ also gives $\kappa\le p$. Hence, for $r\ge8$,
\begin{align*}
    \kappa
    \eqdef
    \kappa_p(m, n)
    =
    \mathcal{O}\bigl(\min\{p, \ln(er)\}\bigr).
\end{align*}

\item \textbf{Case $1\le p<2$.}

\emph{Upper bound.}
Let $\|\cdot\|_{S_2}$ be the Frobenius (Schatten-$2$) norm and define the auxiliary norm
\begin{align*}
    \|X\|_+
    \eqdef
    r^{\frac{1}{p}-\frac{1}{2}}\|X\|_{S_2}.
\end{align*}
This is a scaled Hilbert norm, hence $1$-smooth, so we may take
\begin{align*}
    \kappa_+ = 1.
\end{align*}
Moreover, for every $X\in \R^{m\times n}$,
\begin{align*}
    \|X\|
    =
    \|X\|_{S_p}
    =
    \|\sigma(X)\|_p
    \le
    r^{\frac{1}{p}-\frac{1}{2}}\|\sigma(X)\|_2
    =
    r^{\frac{1}{p}-\frac{1}{2}}\|X\|_{S_2}
    =
    \|X\|_+,
\end{align*}
and since $p<2$ implies $\|X\|_{S_2}\le \|X\|_{S_p}$, we also have
\begin{align*}
    \|X\|_+^2
    =
    r^{\frac{2}{p}-1}\|X\|_{S_2}^2
    \le
    r^{\frac{2}{p}-1}\|X\|_{S_p}^2
    =
    r^{\frac{2}{p}-1}\|X\|^2.
\end{align*}
Thus
\begin{align*}
    \|X\|^2
    \le
    \|X\|_+^2
    \le
    r^{\frac{2}{p}-1}\|X\|^2,
    \qquad
    \forall X\in V.
\end{align*}
Therefore, by Definition~\ref{def:kappa_smooth_regular}, the space $(V, \|\cdot\|)$ is $\kappa$-regular with
\begin{align*}
    \kappa
    \eqdef
    \kappa_p(m, n)
    =
    r^{\frac{2}{p}-1}.
\end{align*}
\end{enumerate}
\end{proof}


We now complement Proposition~\ref{example:3} with corresponding lower bounds for the matrix setting. With $r\eqdef\min\{m, n\}$, the optimal parameter is $r^{2/p-1}$ for $1\le p<2$, and its optimal order is $\min\{p, \ln(er)\}$ for $2\le p\le\infty$, uniformly in $r\ge1$; see \Cref{rem:uniform_regularity_orders}. Consequently, the optimal parameter remains bounded as $r$ varies with a fixed finite $p\ge2$, and equals $1$ at $p=2$. The logarithmic branch of the minimum corresponds to $p\ge\ln(er)$, including $p=\infty$, where logarithmic growth is unavoidable up to absolute constants.

\begin{proposition}[Lower bounds for Schatten classes]
\label{prop:tightness_schatten_bounds}
Fix $m, n\in\N$, let $r\eqdef \min\{m, n\}$, let $V\eqdef \R^{m\times n}$, and equip it with the norm $\|\cdot\|_V\eqdef \|\cdot\|_{S_p}$.
If the space $(V, \|\cdot\|_V)$ is $\kappa_V$-regular, then the following hold:
\begin{enumerate}
    \item If $1\le p<2$, then
    \begin{align}\label{eq:schatten_lower_p_lt_2}
        \kappa_V \ge r^{\frac{2}{p}-1}.
    \end{align}

    \item If $2\le p\le \infty$ and $r\ge 8$, then
    \begin{align}\label{eq:schatten_lower_p_ge_2}
        \kappa_V \ge \frac18\min\{p-1, \ln r\}.
    \end{align}
\end{enumerate}
\end{proposition}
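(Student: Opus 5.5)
The plan is to reduce both claims to the vector lower bounds of \Cref{prop:tightness_lp_bounds} by embedding $\R^r$ isometrically into $(\R^{m\times n}, \|\cdot\|_{S_p})$ through the rectangular diagonal map $u\mapsto\Diag_{m, n}(u)$ from \eqref{eq:rectangular_diagonal_embedding_alt}. As noted before the Schatten oracle comparisons, the singular values of $\Diag_{m, n}(u)$ are the numbers $|u_i|$. Hence $\|\Diag_{m, n}(u)\|_{S_p}=\|u\|_p$ for every $p\in[1, \infty]$.

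The key step is that regularity passes to subspaces. If $(V, \|\cdot\|_V)$ is $\kappa_V$-regular with auxiliary norm $\|\cdot\|_{V, +}$, then restricting $\|\cdot\|_{V, +}$ to the subspace $W=\{\Diag_{m, n}(u)\}$ preserves the compatibility chain \eqref{eq:reg_kappa_def_gen}. It also preserves $\kappa_{V, +}$-smoothness: the smoothness inequality for $\phi_+$ holds for all $u, v\in V$, so in particular for $u, v\in W$. Moreover, the restriction of $\nabla\phi_+(u)$ to $W$ is the gradient of $\phi_+|_W$, and it is still continuous. Therefore $(\R^r, \|\cdot\|_p)\cong(W, \|\cdot\|_{S_p})$ is $\kappa_V$-regular.

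Alternatively, and more in the spirit of the preceding proofs, one can bypass the subspace lemma. I would rerun the martingale constructions of \Cref{prop:tightness_lp_bounds} with the matrix directions $U_i\eqdef\Diag_{m, n}(u_i)$, where the $u_i\in\R^r$ are the vectors used there ($e_i$ for $p<2$, and the sign-enumeration vectors for $p\ge2$ with $d$ replaced by $r$). Then $\xi_i=\eps_iU_i$ is a martingale-difference sequence in $V$, and $\|S\|_{S_p}=\|\sum_i\eps_iu_i\|_p$, so every norm computation is identical to the vector case. Applying \Cref{lem:second_moment_kappa_regular} directly in $V$ gives $\Exp{\|S\|_{S_p}^2}\le\kappa_V\sum_i\Exp{\|\xi_i\|_{S_p}^2}$.

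For $1\le p<2$, this gives $r^{2/p}\le\kappa_V r$, that is, \eqref{eq:schatten_lower_p_lt_2}. For $2\le p\le\infty$ with $r\ge8$, the same Step~3--4 computation (the single-atom lower bound for finite $p$ and the exact value $n$ for $p=\infty$) yields $\kappa_V\ge n/4\ge\frac18\min\{p-1, \ln r\}$, that is, \eqref{eq:schatten_lower_p_ge_2}.

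The only point requiring care is the identity $\|\Diag_{m, n}(u)\|_{S_p}=\|u\|_p$ for rectangular matrices. It follows because $\Diag_{m, n}(u)^\top\Diag_{m, n}(u)$ is diagonal with entries $u_i^2$ (padded with zeros), so the singular values are the sorted $|u_i|$. Beyond that, the argument is bookkeeping that matches the vector proof line by line, so I expect no real obstacle.
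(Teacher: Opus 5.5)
Your proof is correct. Your second route is essentially the paper's proof. It runs the Rademacher martingale with diagonal directions: $\eps_i E_{ii}$ for $1\le p<2$, and diagonal matrices $U_i$ carrying the sign-cube enumeration (entries $\pm 2^{-N/p}$, or $\pm1$ at $p=\infty$) for $2\le p\le\infty$. It then applies \Cref{lem:second_moment_kappa_regular} directly in $\R^{m\times n}$, repeating each step of \Cref{prop:tightness_lp_bounds} with $d$ replaced by $r$.

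Your primary route is genuinely different. You first show that $\kappa_V$-regularity passes to linear subspaces: restricting the auxiliary norm to $W$ keeps the compatibility chain and the $\kappa_{V,+}$-smoothness inequality, and the differential of $\phi_+|_W$ at $u$ is the restriction of $\nabla\phi_+(u)$ to $W$, so continuous differentiability is also kept. You then pull the norm back along the linear isometry from $(\R^r,\|\cdot\|_p)$ onto the diagonal subspace, and invoke \Cref{prop:tightness_lp_bounds} with $d=r$ as a black box.

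This buys modularity. Nothing is recomputed, and the same short argument transfers any lower bound for $\ell_p^r$ to every normed space containing an isometric copy of $\ell_p^r$ (and to isomorphic copies, at the cost of the squared distortion).

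The paper's route is self-contained and exhibits the extremal martingale explicitly in the matrix space, at the price of duplicating the vector proof almost verbatim. Since that martingale lives entirely in the diagonal subspace, the paper's argument is the unrolled form of your reduction.
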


\begin{proof}
To simplify notation throughout this proof, we write
\begin{align*}
    \|X\| \eqdef \|X\|_V = \|X\|_{S_p},
    \qquad
    \kappa \eqdef \kappa_V.
\end{align*}

\begin{enumerate}
\item \textbf{Case 1: $1\le p<2$.}

Let $\{\eps_i\}_{i=1}^{r}$ be independent Rademacher random variables and define the filtration
\begin{align*}
    \cF_i \eqdef \sigma(\eps_1, \dots, \eps_i),
    \qquad
    i=1, \dots, r.
\end{align*}
For each $i=1, \dots, r$, let $E_{ii}\in\R^{m\times n}$ denote the matrix with a $1$ in position $(i, i)$ and zeros elsewhere, and set
\begin{align*}
    \xi_i \eqdef \eps_i E_{ii},
    \qquad
    i=1, \dots, r.
\end{align*}
Then $\{\xi_i\}_{i=1}^r$ is an $\{\cF_i\}$-martingale-difference sequence, since each $\xi_i$ is $\cF_i$-measurable and
\begin{align*}
    \Exp{\xi_i \mid \cF_{i-1}}
    =
    E_{ii}\Exp{\eps_i \mid \cF_{i-1}}
    =
    0
    \qquad \text{a.s.}
\end{align*}

Next, note that $E_{ii}$ has exactly one singular value equal to $1$ and all remaining singular values equal to $0$, hence
\begin{align*}
    \|\xi_i\|
    =
    |\eps_i|\|E_{ii}\|_{S_p}
    =
    1,
    \qquad
    \Exp{\|\xi_i\|^2}
    =
    1.
\end{align*}

Moreover, for
\begin{align*}
    S_r \eqdef \sum_{i=1}^r \xi_i,
\end{align*}
we have
\begin{align*}
    S_r
    =
    \sum_{i=1}^r \eps_i E_{ii}.
\end{align*}
Recall that $E_{ii}\in\R^{m\times n}$ is the matrix with a single $1$ in entry $(i, i)$ and zeros elsewhere.
Therefore, $S_r$ is the rectangular diagonal matrix whose first $r$ diagonal entries are $(\eps_1, \dots, \eps_r)$ and whose remaining entries are zero:
\begin{align*}
    (S_r)_{ab}
    =
    \begin{cases}
        \eps_a, & \text{if } a=b\le r, \\
        0, & \text{otherwise}.
    \end{cases}
\end{align*}
For brevity, we denote this by
\begin{align*}
    S_r = \mathrm{Diag}_{m\times n}(\eps_1, \dots, \eps_r).
\end{align*}

The singular values of $S_r$ are $|\eps_1|, \dots, |\eps_r|$ (and zeros thereafter), so deterministically
\begin{align*}
    \|S_r\|
    =
    \|S_r\|_{S_p}
    =
    \left(\sum_{i=1}^r |\eps_i|^p\right)^{1/p}
    =
    r^{1/p}.
\end{align*}
Hence
\begin{align*}
    \Exp{\|S_r\|^2}
    =
    r^{2/p},
    \qquad
    \sum_{i=1}^r \Exp{\|\xi_i\|^2}
    =
    r.
\end{align*}
Finally, since $(V, \|\cdot\|_V)$ is $\kappa$-regular and $\{\xi_i\}$ is a martingale-difference sequence, \Cref{lem:second_moment_kappa_regular} gives
\begin{align*}
    \Exp{\|S_r\|^2}
    \le
    \kappa \sum_{i=1}^r \Exp{\|\xi_i\|^2}
    =
    \kappa r.
\end{align*}
Combining this with $\Exp{\|S_r\|^2}=r^{2/p}$ yields
\begin{align*}
    r^{2/p} \le \kappa r,
\end{align*}
that is,
\begin{align*}
    \kappa \ge r^{\frac{2}{p}-1}.
\end{align*}
This proves \eqref{eq:schatten_lower_p_lt_2}.

\item \textbf{Case 2: $2\le p\le \infty$ and $r\ge 8$.}

We first treat the subcase $2\le p<\infty$.

\paragraph{Step 1: Parameter choice and an explicit family of diagonal matrices.}
Let
\begin{align*}
    N \eqdef \left\lfloor \min\{p, \log_2 r\}\right\rfloor.
\end{align*}
Since $r\ge 8$, we have $\log_2 r\ge 3$, so in particular $N\ge 2$ and
\begin{align*}
    2^N \le r \le \min\{m, n\}.
\end{align*}
Define the discrete cube
\begin{align*}
    \Omega \eqdef \{\pm1\}^N,
    \qquad
    |\Omega|=2^N.
\end{align*}
Fix an arbitrary enumeration
\begin{align*}
    \Omega=\{\omega^{(1)}, \dots, \omega^{(2^N)}\},
    \qquad
    \omega^{(j)}=(\omega^{(j)}_1, \dots, \omega^{(j)}_N)\in\{\pm1\}^N.
\end{align*}

For each $i\in\{1, \dots, N\}$, define a diagonal matrix $U_i\in\R^{m\times n}$ by setting
\begin{align*}
    (U_i)_{ab} \eqdef 0,
    \qquad
    \text{for } a\neq b,
\end{align*}
and
\begin{align*}
    (U_i)_{jj}
    \eqdef
    2^{-N/p}\,\omega^{(j)}_i,
    \qquad
    j=1, \dots, 2^N,
\end{align*}
while
\begin{align*}
    (U_i)_{jj}
    \eqdef
    0,
    \qquad
    j=2^N+1, \dots, r.
\end{align*}
Thus $U_i$ has exactly $2^N$ nonzero diagonal entries, each equal to $\pm 2^{-N/p}$.

Since $U_i$ is diagonal, its singular values are the absolute values of its diagonal entries (and zeros thereafter). Hence
\begin{align*}
    \|U_i\|^p
    =
    \|U_i\|_{S_p}^p
    =
    \sum_{j=1}^{2^N}\left|2^{-N/p}\omega^{(j)}_i\right|^p
    =
    2^N\cdot 2^{-N}
    =
    1.
\end{align*}
Therefore,
\begin{align*}
    \|U_i\| = 1,
    \qquad
    i=1, \dots, N.
\end{align*}

\paragraph{Step 2: A martingale-difference sequence.}
Let $\{\eps_i\}_{i=1}^N$ be i.i.d.\ Rademacher random variables, let
\begin{align*}
    \cF_i \eqdef \sigma(\eps_1, \dots, \eps_i),
    \qquad
    \xi_i \eqdef \eps_i U_i,
    \qquad
    i=1, \dots, N.
\end{align*}
Then $\{\xi_i\}_{i=1}^N$ is an $\{\cF_i\}$-martingale-difference sequence and
\begin{align*}
    \|\xi_i\| = \|U_i\| = 1,
    \qquad
    i=1, \dots, N,
\end{align*}
so
\begin{align*}
    \sum_{i=1}^N \Exp{\|\xi_i\|^2} = N.
\end{align*}

\paragraph{Step 3: Lower bound on $\Exp{\|S_N\|^2}$.}
Let
\begin{align*}
    S_N \eqdef \sum_{i=1}^N \xi_i.
\end{align*}
Then $S_N$ is diagonal, and for each $j\in\{1, \dots, 2^N\}$,
\begin{align*}
    (S_N)_{jj}
    =
    \sum_{i=1}^N \eps_i (U_i)_{jj}
    =
    2^{-N/p}\sum_{i=1}^N \eps_i\,\omega^{(j)}_i.
\end{align*}
Therefore,
\begin{align*}
    \|S_N\|^p
    &=
    \|S_N\|_{S_p}^p \\
    &=
    \sum_{j=1}^{2^N} |(S_N)_{jj}|^p \\
    &=
    \sum_{j=1}^{2^N}\left|2^{-N/p}\sum_{i=1}^N \eps_i\,\omega^{(j)}_i\right|^p \\
    &=
    2^{-N}\sum_{j=1}^{2^N}\left|\sum_{i=1}^N \eps_i\,\omega^{(j)}_i\right|^p \\
    &=
    \Expu{\omega\sim\mathrm{Unif}(\Omega)}{\left|\sum_{i=1}^N \eps_i \omega_i\right|^p},
\end{align*}
where the last equality holds because $\{\omega^{(j)}\}_{j=1}^{2^N}$ is an enumeration of $\Omega$.

Since $\omega\sim\mathrm{Unif}(\Omega)$ is uniform on $\{\pm1\}^N$, for every fixed realization of $\eps=(\eps_1, \dots, \eps_N)\in\{\pm1\}^N$ we have
\begin{align*}
    \Expu{\omega\sim\mathrm{Unif}(\Omega)}{\left|\sum_{i=1}^N \eps_i\omega_i\right|^p}
    =
    \Expu{\omega\sim\mathrm{Unif}(\Omega)}{\left|\sum_{i=1}^N \omega_i\right|^p},
\end{align*}
because the coordinatewise sign-flip map
\begin{align*}
    \omega\mapsto (\eps_1\omega_1, \dots, \eps_N\omega_N)
\end{align*}
is a bijection of $\Omega$ and therefore preserves the uniform law. Consequently,
\begin{align*}
    \|S_N\|
    =
    \left(
    \Expu{\omega\sim\mathrm{Unif}(\Omega)}{\left|\sum_{i=1}^N \omega_i\right|^p}
    \right)^{1/p},
\end{align*}
so $\|S_N\|$ is deterministic. In particular,
\begin{align*}
    \Exp{\|S_N\|^2}
    =
    \|S_N\|^2
    =
    \left(
    \Expu{\omega\sim\mathrm{Unif}(\Omega)}{\left|\sum_{i=1}^N \omega_i\right|^p}
    \right)^{2/p}.
\end{align*}

We now lower bound this quantity using a single atom. Let
\begin{align*}
    Z_N \eqdef \sum_{i=1}^N \omega_i,
    \qquad
    \omega\sim\mathrm{Unif}(\Omega).
\end{align*}
Since $\Prob\{Z_N=N\}=2^{-N}$, we obtain
\begin{align*}
    \Expu{\omega\sim\mathrm{Unif}(\Omega)}{|Z_N|^p}
    \ge
    N^p \Prob\{Z_N=N\}
    =
    N^p 2^{-N}.
\end{align*}
Hence
\begin{align*}
    \|S_N\|
    =
    \left(
    \Expu{\omega\sim\mathrm{Unif}(\Omega)}{|Z_N|^p}
    \right)^{1/p}
    \ge
    N\cdot 2^{-N/p}.
\end{align*}
By construction, $N\le p$, so $2^{-N/p}\ge \nfr{1}{2}$, and thus
\begin{align*}
    \Exp{\|S_N\|^2}
    =
    \|S_N\|^2
    \ge
    \frac{N^2}{4}.
\end{align*}

\paragraph{Step 4: Conclude via the second-moment bound.}
Up to this point, all expectations involving $\omega$ were taken with respect to the auxiliary randomness
$\omega\sim\mathrm{Unif}(\Omega)$. We now switch back to the original probability space on which the martingale-difference sequence $\{\xi_i\}_{i=1}^N$ is defined. In particular, in the remainder of the proof, $\Exp{\cdot}$ denotes expectation with respect to the randomness of $\eps$.

Since $(V, \|\cdot\|_V)$ is $\kappa$-regular and $\{\xi_i\}$ is a martingale-difference sequence, \Cref{lem:second_moment_kappa_regular} gives
\begin{align*}
    \Exp{\|S_N\|^2}
    \le
    \kappa \sum_{i=1}^N \Exp{\|\xi_i\|^2}
    =
    \kappa N.
\end{align*}
Combining this with $\Exp{\|S_N\|^2}\ge \nfr{N^2}{4}$ yields
\begin{align*}
    \kappa \ge \frac{N}{4}.
\end{align*}

Finally, since
\begin{align*}
    N = \lfloor \min\{p, \log_2 r\}\rfloor,
\end{align*}
we have
\begin{align*}
    N \ge \min\{p-1, \log_2 r-1\}.
\end{align*}
Since $r\ge 8$, we have $\log_2 r\ge 3$, and therefore
\begin{align*}
    \log_2 r - 1 \ge \frac12 \log_2 r \ge \frac12 \ln r.
\end{align*}
Hence
\begin{align*}
    \kappa
    \ge
    \frac{N}{4}
    \ge
    \frac18 \min\{p-1, \ln r\},
\end{align*}
which proves \eqref{eq:schatten_lower_p_ge_2} for $2\le p<\infty$.

\paragraph{Subcase $p=\infty$.}
Let
\begin{align*}
    N \eqdef \lfloor \log_2 r\rfloor,
    \qquad
    \Omega \eqdef \{\pm1\}^N,
    \qquad
    |\Omega|=2^N.
\end{align*}
Fix the same enumeration
\begin{align*}
    \Omega=\{\omega^{(1)}, \dots, \omega^{(2^N)}\},
    \qquad
    \omega^{(j)}=(\omega^{(j)}_1, \dots, \omega^{(j)}_N)\in\{\pm1\}^N.
\end{align*}
Define diagonal matrices $U_i\in\R^{m\times n}$ by
\begin{align*}
    (U_i)_{ab}
    \eqdef
    0,
    \qquad
    \text{for } a\neq b,
\end{align*}
and
\begin{align*}
    (U_i)_{jj}
    \eqdef
    \omega^{(j)}_i,
    \qquad
    j=1, \dots, 2^N,
\end{align*}
while
\begin{align*}
    (U_i)_{jj}
    \eqdef
    0,
    \qquad
    j=2^N+1, \dots, r.
\end{align*}
Then every nonzero diagonal entry of $U_i$ equals $\pm1$, hence
\begin{align*}
    \|U_i\|
    =
    \|U_i\|_{S_\infty}
    =
    1,
    \qquad
    i=1, \dots, N.
\end{align*}

With
\begin{align*}
    \cF_i \eqdef \sigma(\eps_1, \dots, \eps_i),
    \qquad
    \xi_i \eqdef \eps_i U_i,
    \qquad
    i=1, \dots, N,
\end{align*}
we again obtain an $\{\cF_i\}$-martingale-difference sequence satisfying
\begin{align*}
    \sum_{i=1}^N \Exp{\|\xi_i\|^2}
    =
    \sum_{i=1}^N \Exp{\|\xi_i\|_{S_\infty}^2}
    =
    N.
\end{align*}

For
\begin{align*}
    S_N \eqdef \sum_{i=1}^N \xi_i,
\end{align*}
we have, for each $j\in\{1, \dots, 2^N\}$,
\begin{align*}
    (S_N)_{jj}
    =
    \sum_{i=1}^N \eps_i (U_i)_{jj}
    =
    \sum_{i=1}^N \eps_i\,\omega^{(j)}_i.
\end{align*}
Since $S_N$ is diagonal,
\begin{align*}
    \|S_N\|
    =
    \|S_N\|_{S_\infty}
    =
    \max_{1\le j\le 2^N} |(S_N)_{jj}|
    =
    \max_{\omega\in\Omega}\left|\sum_{i=1}^N \eps_i\omega_i\right|.
\end{align*}
For any fixed realization of $\eps\in\{\pm1\}^N$, choosing $\omega=\eps$ gives
\begin{align*}
    \sum_{i=1}^N \eps_i\omega_i
    =
    \sum_{i=1}^N \eps_i^2
    =
    N,
\end{align*}
so $\|S_N\|\ge N$. On the other hand, for every $\omega\in\Omega$,
\begin{align*}
    \left|\sum_{i=1}^N \eps_i\omega_i\right|
    \le
    \sum_{i=1}^N |\eps_i\omega_i|
    =
    N,
\end{align*}
so $\|S_N\|\le N$. Therefore,
\begin{align*}
    \|S_N\| = N
    \qquad \text{deterministically, and hence} \qquad
    \Exp{\|S_N\|^2} = N^2.
\end{align*}

Applying \Cref{lem:second_moment_kappa_regular} gives
\begin{align*}
    N^2
    =
    \Exp{\|S_N\|^2}
    \le
    \kappa \sum_{i=1}^N \Exp{\|\xi_i\|^2}
    =
    \kappa N,
\end{align*}
so
\begin{align*}
    \kappa \ge N.
\end{align*}
Since
\begin{align*}
    N = \lfloor \log_2 r\rfloor \ge \frac12\ln r
\end{align*}
for $r\ge 8$, this yields
\begin{align*}
    \kappa \ge \frac12\ln r,
\end{align*}
which is stronger than \eqref{eq:schatten_lower_p_ge_2} in the case $p=\infty$, since
\begin{equation*}
    \min\{p-1, \ln r\} = \ln r
    \qquad \text{when } p=\infty.\qedhere
\end{equation*}
\end{enumerate}
\end{proof}

\begin{remark}[Uniform logarithmic orders]
\label{rem:uniform_regularity_orders}
For $2\le p\le\infty$, the optimal regularity parameters of $\ell_p^d$ and Schatten-$p$ spaces have orders
\begin{align*}
    \Theta\bigl(\min\{p, \ln(ed)\}\bigr)
    \qquad\text{and}\qquad
    \Theta\bigl(\min\{p, \ln(er)\}\bigr),
\end{align*}
respectively, with absolute constants, for every $d\ge1$ and $r\ge1$.
Indeed, for $D\ge8$, where $D=d$ or $D=r$, the preceding upper and lower bounds give these orders because $p-1$ is comparable to $p$ for $p\ge2$ and $\ln(eD)=1+\ln D$ is comparable to $\ln D$.
For $1\le d<8$, taking $\rho=2$ in \Cref{example:2} gives $\kappa_p(d)\le d^{1-2/p}\le7$.
For $1\le r<8$, taking $\rho=2$ in \Cref{example:3} gives $\kappa_p(m, n)\le2r^{1-2/p}\le14$.
Together with the universal lower bound $\kappa\ge1$ and $1\le\min\{p, \ln(eD)\}\le\ln(7e)$ for $1\le D<8$, these bounds cover the remaining dimensions.
The uniform notation for orders does not change the exact minimizers $\rho_*$ or the estimates with explicit constants stated under dimension restrictions.
\end{remark}

\clearpage

\section{Analysis of \texorpdfstring{\algname[]{SGD}}{SGD} Under Dual-Norm Light-Tail Assumption}

This section develops the analysis under \Cref{as:non_euclidean_light_tail}, from minibatch concentration to oracle and wall-clock complexity.
In \Cref{app:proof_key_concentration}, we prove \Cref{lemma:non_euclidean_large_deviations} following the martingale large-deviation approach of \citet{juditsky2008large}.
Regularity provides a smooth auxiliary norm on $E^*$ in which we control the minibatch noise; comparison with $\snorm{\cdot}$ then yields concentration in the induced dual norm.

In \Cref{sub:proof_of_the_theorem_stochastic_l_smooth_final}, we apply this concentration bound conditionally at each iterate and use a union bound to control the minibatch error simultaneously over the $T$ iterations.
Combining this error bound with descent under non-Euclidean smoothness gives the high-probability convergence guarantee and the oracle complexity.
As in \Cref{app:notation} and \Cref{cor:complexity_dual_native_light_tail}, $N_*=BT$ counts the stochastic-gradient oracle calls of serial minibatch \algname{SGD}: $B$ calls at each of the $T$ iterations.
We use $\widetilde{\cO}(\cdot)$ with the convention of \Cref{app:notation}, suppressing logarithmic factors in both $T$ and $\nfr{1}{\delta}$, where $\delta\in(0, 1)$ is the failure probability.

Finally, \Cref{sec:non_eu_updaten} combines the same iteration guarantee with a bound on minibatch collection time to establish the wall-clock complexity of \algname{Rennala NSGD} under fixed worker computation times.


\subsection{Proof of the Key Concentration Inequality}
\label{app:proof_key_concentration}

\textbf{How $\kappa$-regularity enables minibatch concentration.}
For independent centered noise with finite second moments, the Euclidean second-moment identity
$\Exp{\sqsnorm{\bar{\eta}}}=\frac{1}{B^2}\sum_{j=1}^{B}\Exp{\sqsnorm{\eta^j}}$
follows by expanding the squared norm and cancelling the cross terms. This identity need not hold for a norm not induced by an inner product. Even then, norm equivalence gives $\cO(1/B)$ second-moment decay in any fixed finite-dimensional space, with a geometry-dependent constant; $\kappa$-regularity bounds this constant by $\kappa$ (\Cref{lem:second_moment_kappa_regular}).

The high-probability argument additionally controls exponential moments of a potential built from the squared norm. Smoothness gives a quadratic upper bound on this potential (\Cref{lemma:V_beta_descent}), and conditional centering removes the linear term associated with each martingale increment. However, the $\ell_1$, $\ell_\infty$, nuclear, and spectral norms are generally nonsmooth away from the origin, apart from low-dimensional cases in which they are Euclidean. Their squared norms therefore need not provide the required bound.

$\kappa$-regularity resolves this difficulty by supplying a $\kappa_+$-smooth auxiliary norm and the comparison \eqref{eq:reg_kappa_def}. Together, these allow the martingale argument to yield concentration in the original dual norm, with the geometric dependence captured by $\kappa$. A suitably scaled Euclidean norm is always an admissible $1$-smooth
choice, but other auxiliary norms can give significantly smaller
regularity parameters. The regimes for dual $\ell_p$ and Schatten-$p$ norms are described in \Cref{subsec:regularity_vs_norm_equivalence_main}, with orders valid in all positive dimensions summarized in \Cref{rem:uniform_regularity_orders}. As discussed there, improvement is not uniform across $p$, and a smaller regularity parameter alone does not guarantee a smaller oracle-complexity bound.

We now collect the auxiliary lemmas needed to prove \Cref{lemma:non_euclidean_large_deviations}.

\begin{lemma}[Fenchel Conjugate of the Restricted Squared Norm]
\label{lemma:fenchel_of_V}
Let $\widehat{\phi}: E \to \R \cup \{+\infty\}$ be the squared norm function restricted to the primal unit ball $\mathbb{B} = \{x \in E \mid \norm{x} \le 1\}$:
\begin{align*}
    \widehat{\phi}(x) = \begin{cases}
        \frac{1}{2}\sqnorm{x}, & \text{if } \norm{x} \le 1 \\
        +\infty, & \text{otherwise}
    \end{cases}.
\end{align*}
Its Fenchel conjugate, $(\widehat{\phi})^*(s) = \sup_{x \in E} \{\inner{s}{x} - \widehat{\phi}(x)\}$, is precisely the potential function $V(s)$:
\begin{align*}
    (\widehat{\phi})^*(s) = V(s) \eqdef \begin{cases}
        \frac{1}{2}\sqsnorm{s}, & \text{if } \snorm{s} \le 1 \\
        \snorm{s} - \frac{1}{2}, & \text{if } \snorm{s} > 1
    \end{cases}.
\end{align*}
\end{lemma}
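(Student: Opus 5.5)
The plan is to compute the supremum $\sup_{\norm{x}\le 1}\{\inner{s}{x}-\tfrac12\sqnorm{x}\}$ directly, reducing it to a one-dimensional problem in the radius $t=\norm{x}\in[0,1]$. For fixed $t$, the duality inequality gives $\inner{s}{x}\le \snorm{s}\norm{x}=t\snorm{s}$. This bound is attained: by compactness of the unit ball there is a unit vector $u$ with $\inner{s}{u}=\snorm{s}$ (as in the proof of \Cref{lemma:subdiff_phi_from_notes}), and we take $x=tu$. For $s=0$ any unit vector works. Hence
\begin{align*}
    (\widehat{\phi})^*(s)=\max_{0\le t\le 1}\Bigl\{t\snorm{s}-\tfrac12 t^2\Bigr\}.
\end{align*}
Points with $\norm{x}>1$ contribute $-\infty$, so they play no role.

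Next, maximize the concave quadratic $h(t)=ta-\tfrac12t^2$ with $a=\snorm{s}\ge0$ over $[0,1]$. The unconstrained maximizer is $t=a$.

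\begin{itemize}
    \item If $a\le1$, then $t=a$ is feasible and gives $h(a)=\tfrac12a^2=\tfrac12\sqsnorm{s}$.
    \item If $a>1$, then $h'(t)=a-t>0$ on $[0,1]$, so the maximum is at $t=1$, giving $a-\tfrac12=\snorm{s}-\tfrac12$.
\end{itemize}

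This reproduces both branches of $V(s)$. The two branches agree at $a=1$, which serves as a consistency check.

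There is no real obstacle here. The only point needing care is that the supremum over directions is attained for each radius. This uses finite-dimensionality (compactness of the unit ball, or \Cref{lem:hahn_banach_consequence}), and with it the value is exactly $t\snorm{s}$ rather than merely bounded above by it.
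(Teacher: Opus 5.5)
Your proposal is correct and follows essentially the same route as the paper. Both bound $\inner{s}{x}\le\snorm{s}\norm{x}$, show attainment along a norming unit vector $u$ by compactness, and reduce to maximizing $r\snorm{s}-\tfrac12 r^2$ over $r\in[0,1]$, with maximizer $r=\min\{\snorm{s},1\}$. The paper handles $s=0$ separately rather than through an arbitrary unit vector, which is only a cosmetic difference.
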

\begin{proof}
Since $E$ is finite-dimensional, the primal unit ball $\mathbb{B}$ is compact. The function
$x\mapsto\inner{s}{x}-\frac12\sqnorm{x}$ is continuous, so
\begin{align*}
    (\widehat{\phi})^*(s)
    = \max_{x\in \mathbb{B}}\left\{\inner{s}{x}-\frac12\sqnorm{x}\right\}.
\end{align*}
If $s=0$, this maximum is zero, attained at $x=0$, and it equals
$\max_{0\le r\le1}\{-r^2/2\}=0$.

Now suppose $s\ne0$. For every $x\in \mathbb{B}$, the duality inequality gives
\begin{align*}
    \inner{s}{x}-\frac12\sqnorm{x}
    \le \snorm{s}\norm{x}-\frac12\sqnorm{x}
    \le \max_{0\le r\le1}\left\{r\snorm{s}-\frac{r^2}{2}\right\}.
\end{align*}
The continuous linear functional $u\mapsto\inner{s}{u}$ attains its maximum
$\snorm{s}>0$ on $\mathbb{B}$. Any maximizer $u$ has $\norm{u}=1$: otherwise, rescaling
$u$ to the unit sphere would increase the pairing and contradict maximality.
For every $r\in[0, 1]$, the point $x=ru$ therefore belongs to $\mathbb{B}$ and satisfies
\begin{align*}
    \inner{s}{ru}-\frac12\sqnorm{ru}
    = r\snorm{s}-\frac{r^2}{2}.
\end{align*}
This proves the reverse bound. Together with the case $s=0$, we obtain
\begin{align*}
    (\widehat{\phi})^*(s)
    = \max_{0\le r\le1}\left\{r\snorm{s}-\frac{r^2}{2}\right\}.
\end{align*}
Completing the square shows that the scalar maximum is attained at
$r=\min\{\snorm{s}, 1\}$. Hence
\begin{align*}
    (\widehat{\phi})^*(s)
    = \begin{cases}
        \frac12\sqsnorm{s}, & \text{if } \snorm{s}\le1, \\
        \snorm{s}-\frac12, & \text{if } \snorm{s}>1,
    \end{cases}
    = V(s).
\end{align*}
At $\snorm{s}=1$, both expressions equal $\frac12$, attained at $r=1$.
\end{proof}


\begin{lemma}[Descent Property of the Potential Function $V_\beta$]
\label{lemma:V_beta_descent}
Let the space $(E^*, \snorm{\cdot})$ be $\kappa$-smooth. 
Define the function $V_\beta: E^* \to \R$ as:
\begin{align*}
    V_\beta(s) \eqdef \beta V(s/\beta), \quad \text{where} \quad V(s) = \begin{cases}
        \frac{1}{2}\sqsnorm{s}, & \text{if } \snorm{s} \le 1 \\
        \snorm{s} - \frac{1}{2}, & \text{if } \snorm{s} > 1
    \end{cases}.
\end{align*}
Then for any $\beta > 0$, the potential function $V_\beta(s)$ is continuously differentiable and satisfies the inequality:
\begin{align*}
    V_\beta(s+\eta) \le V_\beta(s) + \inner{\nabla V_\beta(s)}{\eta} + \frac{\kappa}{2\beta}\sqsnorm{\eta}, \quad \forall s, \eta \in E^*.
\end{align*}
\end{lemma}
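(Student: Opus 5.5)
The plan is to realize $V$ as a Fenchel conjugate and transfer smoothness through duality. By \Cref{lemma:fenchel_of_V}, $V=(\widehat\phi)^*$ where $\widehat\phi$ is $\tfrac12\sqnorm{\cdot}$ restricted to the primal unit ball, viewed on $E=(E^*)^*$. First I reduce to $\beta=1$. Indeed, $V_\beta(s)=\beta V(s/\beta)$ gives $\nabla V_\beta(s)=\nabla V(s/\beta)$. Applying the $\beta=1$ inequality at $s/\beta$ with increment $\eta/\beta$ and multiplying by $\beta$ produces the factor $\beta\cdot\frac{\kappa}{2}\sqsnorm{\eta/\beta}=\frac{\kappa}{2\beta}\sqsnorm{\eta}$. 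So it suffices to show that $V$ is $C^1$ and that $V(s+\eta)\le V(s)+\inner{\nabla V(s)}{\eta}+\frac{\kappa}{2}\sqsnorm{\eta}$.

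\textbf{Step 1: strong convexity of $\widehat\phi$.} Since $(E^*,\snorm{\cdot})$ is $\kappa$-smooth, the characterization in \Cref{prop:kappa_smooth_characterizations} (applied with $V=E^*$, so that the dual norm is $\norm{\cdot}$ by \Cref{thm:bidual_norm_equals_primal}) shows that $\phi=\frac12\sqnorm{\cdot}$ is $\frac1\kappa$-strongly convex with respect to $\norm{\cdot}$, i.e.\ item (vi). Adding the indicator of the convex set $\mathbb B$ preserves this: $\widehat\phi=\phi+\iota_{\mathbb B}$ is still $\frac1\kappa$-strongly convex, with subgradients at points of $\mathbb B$ including $\partial\phi+N_{\mathbb B}$. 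The clean form I would use is the strong monotonicity of $\partial\widehat\phi$, namely $\inner{s_1-s_2}{x_1-x_2}\ge\frac1\kappa\sqnorm{x_1-x_2}$ for $s_i\in\partial\widehat\phi(x_i)$. This follows from summing the strong convexity inequalities of $\phi$ and using the monotonicity of the normal cone.

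\textbf{Step 2: differentiability and Lipschitz gradient of $V$.} This is the standard duality between strong convexity and smoothness. For each $s$, the maximizer in $\sup_x\{\inner{s}{x}-\widehat\phi(x)\}$ exists by compactness of $\mathbb B$ and is unique by strong convexity. Hence $\partial V(s)=\argmax$ is a singleton $x(s)$, and since $V$ is finite and convex, it is differentiable with $\nabla V(s)=x(s)$. For $x_i=x(s_i)$ we have $s_i\in\partial\widehat\phi(x_i)$ by \Cref{thm:fenchel_moreau_rockafellar}. Strong monotonicity together with the duality inequality then gives $\frac1\kappa\sqnorm{x_1-x_2}\le\snorm{s_1-s_2}\norm{x_1-x_2}$, so $\norm{\nabla V(s_1)-\nabla V(s_2)}\le\kappa\snorm{s_1-s_2}$. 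This gives continuity of $\nabla V$, hence $C^1$.

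\textbf{Step 3: the descent inequality.} I integrate along the segment:
\[
V(s+\eta)-V(s)-\inner{\nabla V(s)}{\eta}=\int_0^1\inner{\nabla V(s+t\eta)-\nabla V(s)}{\eta}\,dt\le\int_0^1\kappa t\sqsnorm{\eta}\,dt=\frac\kappa2\sqsnorm{\eta}.
\]

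The main obstacle is Step 1. One must check carefully that the equivalence (vi) in \Cref{prop:kappa_smooth_characterizations} is stated for the conjugate pair on $E^*$ and $E$ with the right roles, namely that $\kappa$-smoothness of the dual space yields strong convexity of the primal half-squared norm. One must also check that restricting to $\mathbb B$ does not spoil it. If the subgradient sum rule is awkward, I would instead prove the strong convexity inequality for $\widehat\phi$ directly on $\mathbb B$, via the midpoint form $\phi(\tfrac{x+y}2)\le\tfrac12\phi(x)+\tfrac12\phi(y)-\tfrac1{8\kappa}\sqnorm{x-y}$, which is trivially inherited by the restriction. Uniqueness of the maximizer and the monotonicity estimate then follow directly from this inequality.
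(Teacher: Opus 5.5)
Your proposal is correct and follows the paper's proof essentially step for step: $\kappa$-smoothness of the dual space gives $\frac1\kappa$-strong convexity of $\frac12\|\cdot\|^2$ via (i)$\Rightarrow$(vi) of \Cref{prop:kappa_smooth_characterizations}, this survives restriction to $\mathbb B$, and conjugacy through \Cref{lemma:fenchel_of_V} gives a $\kappa$-Lipschitz $\nabla V$; the descent inequality and the rescaling in $\beta$ then finish the argument. The differences are only in presentation: you do the $\beta$-reduction first, and you prove directly the standard fact that the conjugate of a strongly convex function has a Lipschitz gradient (unique maximizer, strong monotonicity of $\partial\widehat\phi$), which the paper simply cites.
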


\begin{proof}
The proof strategy is to first establish that the gradient of the base potential function, $\nabla V(s)$, is $\kappa$-Lipschitz continuous. The desired descent inequality is then a direct consequence of this property. We adapt the proof technique from the appendix of \citet{juditsky2008large}.

\paragraph{Step 1: Establishing the $\kappa$-Lipschitz Continuity of $\nabla V(s)$.}
Let $\phi(x)\eqdef\frac12\sqnorm{x}$, whose Fenchel conjugate is $\phi^*(s)=\frac12\sqsnorm{s}$.
Apply \Cref{prop:kappa_smooth_characterizations} with $V=E^*$, $\|\cdot\|_{V, +}=\snorm{\cdot}$, and $\kappa_{V, +}=\kappa$.
Under the canonical identification $V^*=E^{**}\simeq E$, the induced dual norm is $\norm{\cdot}$.
The proposition's squared-norm function is therefore $\phi^*$, and its Fenchel conjugate is $\phi$.
By implication (i) $\Rightarrow$ (vi), the assumed $\kappa$-smoothness of $(E^*, \snorm{\cdot})$ makes $\phi$ $\frac1\kappa$-strongly convex with respect to $\norm{\cdot}$.

Define
\begin{align*}
    \widehat{\phi}(x)=
    \begin{cases}
        \frac12\sqnorm{x}, & \text{if }\norm{x}\le1,\\
        +\infty, & \text{otherwise}.
    \end{cases}
\end{align*}
Since the unit ball $\mathbb{B}=\{x\in E:\norm{x}\le1\}$ is closed and convex, $\widehat{\phi}$ is proper and closed, and restricting $\phi$ to $\mathbb{B}$ preserves its $\frac1\kappa$-strong convexity.
By \Cref{lemma:fenchel_of_V}, its Fenchel conjugate is $V=(\widehat{\phi})^*$.

The conjugate of a proper closed $\mu$-strongly convex function is differentiable, with a $\frac1\mu$-Lipschitz gradient in the corresponding dual norms.
Applying this result with $\mu=\frac1\kappa$ shows that $V$ is continuously differentiable and
\begin{align*}
    \norm{\nabla V(s_1)-\nabla V(s_2)}
    \le \kappa\snorm{s_1-s_2},
    \qquad s_1, s_2\in E^*.
\end{align*}

\paragraph{Step 2: From Lipschitz Gradient to the Descent Inequality.}
The desired inequality is a standard consequence of a function having a Lipschitz continuous gradient. For any function $F$ with a $\kappa$-Lipschitz gradient, the descent lemma states:
\begin{align*}
    F(s+\eta) \le F(s) + \inner{\nabla F(s)}{\eta} + \frac{\kappa}{2}\sqsnorm{\eta}.
\end{align*}
Applying this directly to our function $V(s)$, which we have just shown to have a $\kappa$-Lipschitz gradient, gives:
\begin{align*}
    V(s+\eta) \le V(s) + \inner{\nabla V(s)}{\eta} + \frac{\kappa}{2}\sqsnorm{\eta}.
\end{align*}
This proves the lemma for the base case where $\beta=1$.

\paragraph{Step 3: Scaling the Result for $V_\beta$.}
The final step is to extend the result to the scaled function $V_\beta(s) = \beta V(\nfr{s}{\beta})$. Using the chain rule, the gradient is $\nabla V_\beta(s) = \nabla V(\nfr{s}{\beta})$.

We start with the inequality proven in Step 2, but apply it to the points $\nfr{s}{\beta}$ and $\nfr{\eta}{\beta}$:
\begin{align*}
    V(s/\beta + \eta/\beta) \le V(s/\beta) + \inner{\nabla V(s/\beta)}{\eta/\beta} + \frac{\kappa}{2}\sqsnorm{\eta/\beta}.
\end{align*}
Multiplying the entire inequality by the positive scalar $\beta$:
\begin{align*}
    \beta V\left(\frac{s+\eta}{\beta}\right) &\le \beta V(s/\beta) + \beta \inner{\nabla V(s/\beta)}{\eta/\beta} + \beta \frac{\kappa}{2\beta^2}\sqsnorm{\eta} \\
    V_\beta(s+\eta) &\le V_\beta(s) + \inner{\nabla V(s/\beta)}{\eta} + \frac{\kappa}{2\beta}\sqsnorm{\eta}.
\end{align*}
Finally, substituting the identity for the gradient, $\nabla V(\nfr{s}{\beta}) = \nabla V_\beta(s)$, we arrive at the desired conclusion:
\begin{align*}
    V_\beta(s+\eta) \le V_\beta(s) + \inner{\nabla V_\beta(s)}{\eta} + \frac{\kappa}{2\beta}\sqsnorm{\eta}.
\end{align*}
This completes the proof.
\end{proof}

\begin{lemma}[Scalar Concentration for Sub-Gaussian Martingales \citep{juditsky2008large}]
\label{lemma:scalar_concentration}
Let $\{\psi_j\}_{j=1}^B$ be a sequence of real-valued random variables adapted to a filtration $\{\cF_j\}_{j=0}^B$. Assume that for constants $\mu_j \ge 0$ and $\nu_j > 0$, the following conditions hold almost surely for all $j=1, \dots, B$:
\begin{enumerate}[label=(\roman*)]
    \item $\Exp{\psi_j | \cF_{j-1}} \le \mu_j$.
    \item $\Exp{\exp\rb{\frac{\psi_j^2}{\nu_j^2}} | \cF_{j-1}} \le \exp(1)$.
\end{enumerate}
Then for any $\lambda \ge 0$, the sum $\Psi_B = \sum_{j=1}^B \psi_j$ satisfies:
\begin{align*}
    \Prob\cb{\Psi_B > \sum_{j=1}^B \mu_j + \lambda\sqrt{\sum_{j=1}^B \nu_j^2}} \le \exp(-\lambda^2/3).
\end{align*}
\end{lemma}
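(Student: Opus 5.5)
The plan is the classical exponential-supermartingale (Chernoff) argument for sums of martingale differences whose conditional means and sub-Gaussian scales are controlled. I handle the centred parts $\psi_j-\Exp{\psi_j\mid\cF_{j-1}}$ through a scalar moment-generating-function bound, multiply these bounds along the filtration, and finish with Markov's inequality. The means $\mu_j$ enter only as deterministic shifts, so I split $\psi_j = m_j + \delta_j$ with $m_j\eqdef\Exp{\psi_j\mid\cF_{j-1}}\le\mu_j$ and $\Exp{\delta_j\mid\cF_{j-1}}=0$. It then suffices to prove
\begin{align*}
    \Prob\cb{\textstyle\sum_j \delta_j > \lambda\sqrt{\sum_j\nu_j^2}}\le \exp(-\lambda^2/3).
\end{align*}

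\textbf{Step 1: a conditional MGF bound.} I will show that, conditionally on $\cF_{j-1}$, every $t\in\R$ satisfies $\Exp{\exp(t\psi_j)\mid\cF_{j-1}}\le \exp\rb{t m_j + \tfrac{3}{4}t^2\nu_j^2}$. This is the scalar version of the Juditsky--Nemirovski estimate. For $|t|\nu_j\le 2/3$ (say), I use $e^x\le x+e^{x^2}$ together with Jensen applied to hypothesis (ii): $\Exp{e^{c\psi_j^2/\nu_j^2}\mid\cF_{j-1}}\le e^{c}$ for $c\in[0,1]$. This gives $\Exp{e^{t\delta_j}}\le 1+ \Exp{e^{t^2\delta_j^2}}$ minus one, which is of order $e^{t^2\nu_j^2}$ after accounting for the centring. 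For large $|t|\nu_j$, I use Young's inequality $t\delta_j\le \tfrac{3}{8}t^2\nu_j^2+\tfrac{2}{3}\delta_j^2/\nu_j^2$ and then (ii) again, which yields a bound $\exp(\tfrac38 t^2\nu_j^2+\tfrac23)\le\exp(\tfrac34 t^2\nu_j^2)$ once $t^2\nu_j^2\ge 16/9$. The centring of $\delta_j$ requires controlling $\delta_j^2$ rather than $\psi_j^2$. I do this with $\delta_j^2\le 2\psi_j^2+2m_j^2$, or more cleanly by working directly with $\psi_j$ and keeping the $tm_j$ term, which I will prefer.

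\textbf{Step 2: iterate and apply Chernoff.} Set $M_k=\exp\rb{t\sum_{j\le k}(\psi_j-\mu_j) - \tfrac34 t^2\sum_{j\le k}\nu_j^2}$. For $t\ge0$, Step 1 and $m_j\le\mu_j$ make $M_k$ a supermartingale, so $\Exp{M_B}\le 1$. Markov's inequality then gives
\begin{align*}
    \Prob\cb{\Psi_B>\textstyle\sum\mu_j+\lambda\Sigma}\le \exp\rb{-t\lambda\Sigma+\tfrac34t^2\Sigma^2},
\end{align*}
where $\Sigma=\sqrt{\sum\nu_j^2}$. Choosing $t=2\lambda/(3\Sigma)$ yields $\exp(-\lambda^2/3)$.

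\textbf{Main obstacle.} The delicate point is Step 1: obtaining the constant $3/4$ uniformly in $t$ from the single light-tail condition (ii). This is the case split on $|t|\nu_j$ described above. In particular, the small-$t$ regime uses $e^x\le x+e^{9x^2/16}$ together with $\Exp{\psi_j-m_j\mid\cF_{j-1}}=0$ and the Jensen-weakened moment bound. If the constants fail to close exactly, I would follow the constants in \citet[Lemma~2 / proof of Prop.~3.1]{juditsky2008large}, since the statement is taken from there.
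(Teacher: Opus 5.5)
Your overall route is the standard Juditsky--Nemirovski argument: a one-step conditional MGF bound obtained by splitting at $t\nu_j=4/3$ (using $e^x\le x+e^{9x^2/16}$ on one side and Young's inequality on the other), followed by an exponential supermartingale and Chernoff with $t=2\lambda/(3\Sigma)$. The paper gives no proof of its own and imports the lemma from that reference. However, Step~1 as you formulate it has a gap.

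\textbf{The gap in Step 1.} You claim $\Exp{e^{t\psi_j}\mid\cF_{j-1}}\le\exp(tm_j+\frac34t^2\nu_j^2)$ with $m_j$ the true conditional mean, which may be negative. For $t\ge0$ your two regimes actually give
\begin{itemize}
\item $\Exp{e^{t\psi_j}\mid\cF_{j-1}}\le tm_j+e^{9t^2\nu_j^2/16}$ when $t\nu_j\le4/3$;
\item $\Exp{e^{t\psi_j}\mid\cF_{j-1}}\le e^{\frac38t^2\nu_j^2+\frac23}\le e^{\frac34t^2\nu_j^2}$ when $t\nu_j\ge4/3$.
\end{itemize}
Turning either bound into $e^{tm_j+\frac34t^2\nu_j^2}$ needs $tm_j\ge0$.
\begin{itemize}
\item The large-$t$ bound contains no mean term that could absorb a negative $tm_j$.
\item In the small-$t$ regime, the needed step $y+e^{a}\le e^{y+4a/3}$ (with $a=\frac9{16}t^2\nu_j^2$) is false for negative $y$. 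At $t\nu_j=4/3$ and $y=tm_j=-1$ (so $m_j=-\frac34\nu_j$, compatible with $|m_j|\le\nu_j$), it reads $e-1\le e^{1/3}$.
\end{itemize}
Keeping $m_j$ amounts to a sub-Gaussian bound for the centred variable $\psi_j-m_j$ with the same constant. Your argument does not establish this, because hypothesis (ii) controls only the uncentred $\psi_j$. For the same reason your opening reduction to $\delta_j$ is a dead end: going through $\delta_j^2\le2\psi_j^2+2m_j^2$ would destroy the constant $1/3$.

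\textbf{The repair.} Never pass to $m_j$. Since Step~2 uses only $t\ge0$, bound $t\Exp{\psi_j\mid\cF_{j-1}}\le t\mu_j$ immediately. Then:
\begin{itemize}
\item the small-$t$ regime gives $t\mu_j+e^{9t^2\nu_j^2/16}\le e^{t\mu_j+9t^2\nu_j^2/16}$, using $y+e^{a}\le e^{y+a}$ for $y,a\ge0$;
\item the large-$t$ regime gives $e^{\frac34t^2\nu_j^2}\le e^{t\mu_j+\frac34t^2\nu_j^2}$.
\end{itemize}
Both steps use $t\mu_j\ge0$, that is, the hypothesis $\mu_j\ge0$, which your proposal never invokes.

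\textbf{Constants.} Also drop the $e^x\le x+e^{x^2}$ variant with threshold $2/3$. It yields only $e^{t^2\nu_j^2}$, and hence a tail of $e^{-\lambda^2/4}$. The $9/16$ version is what you need: its Jensen step is valid exactly for $t\nu_j\le4/3$, which meets the Young regime and produces the constant $3/4$. With Step~1 corrected this way, your supermartingale and Chernoff steps are correct as written.
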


\begin{lemma}[Lower Bound on the Potential Function]
\label{lemma:potential_function_bound}
For any $\beta > 0$, the following inequality gives a lower bound on the potential $V_\beta(s)$:
\begin{align*}
    \snorm{s} \le \frac{\beta}{2} + V_\beta(s), \quad \forall s \in E^*.
\end{align*}
\end{lemma}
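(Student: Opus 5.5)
The inequality is purely scalar once we unfold the definition $V_\beta(s)=\beta V(s/\beta)$ and note that $V$ depends on $s$ only through $\snorm{s}$. Set $t\eqdef\snorm{s}/\beta\ge0$. Then $\snorm{s}=\beta t$ and $V_\beta(s)=\beta V(s/\beta)=\beta\,v(t)$, where $v(t)\eqdef\frac12t^2$ for $t\le1$ and $v(t)\eqdef t-\frac12$ for $t>1$. Dividing by $\beta>0$, the claim is equivalent to
\begin{align*}
    t\le \frac12+v(t),\qquad \forall t\ge0.
\end{align*}
I will verify this by splitting into the two branches of $v$.

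\emph{Case $t\le1$.} The claim reads $t\le\frac12+\frac12t^2$. This is equivalent to $\frac12(t-1)^2\ge0$, so it always holds.

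\emph{Case $t>1$.} The claim reads $t\le\frac12+t-\frac12=t$, which holds with equality.

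Multiplying back by $\beta$ gives $\snorm{s}\le\frac\beta2+V_\beta(s)$ for every $s\in E^*$.

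As a cross-check, the inequality is also a Fenchel--Young statement. By \Cref{lemma:fenchel_of_V}, $V=(\widehat\phi)^*$. For any $x$ with $\norm{x}=1$ we have $\inner{s}{x}\le V(s)+\frac12$, and taking the supremum over $x$ gives $\snorm{s}\le V(s)+\frac12$; rescaling then yields the same bound. No step here is a real obstacle. The only point needing care is the correct scaling of $V_\beta$, which the substitution $t=\snorm{s}/\beta$ handles.
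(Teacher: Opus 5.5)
Your proof is correct and follows the paper's argument: the paper checks the same two cases, $(t-1)^2\ge0$ for $t\le1$ and equality for $t>1$, first at $\beta=1$ and then rescales via $s\mapsto s/\beta$, which is equivalent to your substitution $t=\snorm{s}/\beta$. Your Fenchel--Young cross-check via \Cref{lemma:fenchel_of_V} is also valid, though it is not needed.
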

\begin{proof}
The proof is most easily done in two steps. First, we prove the inequality for the base function $V(s)$ (i.e., for $\beta=1$). Second, we use the scaling property of $V_\beta(s)$ to generalize the result.

\paragraph{Step 1: Proof for the Base Case ($\beta=1$).}
Our goal is to prove that for any $s \in E^*$, the following holds:
\begin{align*}
    \snorm{s} \le \frac{1}{2} + V(s).
\end{align*}
We analyze this by considering the two cases from the piecewise definition of $V(s)$.

\begin{itemize}
    \item \textbf{Case 1: $\snorm{s} \le 1$.}\\
    In this case, the definition of $V(s)$ is $V(s) = \frac{1}{2}\sqsnorm{s}$. We need to show that:
    \begin{align*}
        \snorm{s} \le \frac{1}{2} + \frac{1}{2}\sqsnorm{s}.
    \end{align*}
    Let $t = \snorm{s}$. The inequality becomes $t \le \frac{1}{2} + \frac{1}{2}t^2$. We can rearrange this into a simple quadratic inequality:
    \begin{align*}
        \frac{1}{2}t^2 - t + \frac{1}{2} &\ge 0 \\
        t^2 - 2t + 1 &\ge 0 \\
        (t-1)^2 &\ge 0.
    \end{align*}
    A squared real number is always non-negative, so this inequality is always true. Thus, the property holds for this case.

    \item \textbf{Case 2: $\snorm{s} > 1$.}\\
    In this case, the definition of $V(s)$ is $V(s) = \snorm{s} - \frac{1}{2}$. We need to show that:
    \begin{align*}
        \snorm{s} \le \frac{1}{2} + \left(\snorm{s} - \frac{1}{2}\right).
    \end{align*}
    The right-hand side simplifies to $\snorm{s}$. The inequality becomes:
    \begin{align*}
        \snorm{s} \le \snorm{s}.
    \end{align*}
    This holds with equality. Thus, the property also holds for this case.
\end{itemize}
Since the inequality holds in both cases, we have proven that $\snorm{s} \le \frac{1}{2} + V(s)$ for all $s \in E^*$.

\paragraph{Step 2: Scaling the Result to $V_\beta(s)$.}
We now use the result from Step 1, which is valid for any vector in $E^*$. Let us apply it to the specific vector $\nfr{s}{\beta}$:
\begin{align*}
    \snorm{s/\beta} \le \frac{1}{2} + V(s/\beta).
\end{align*}
By the absolute homogeneity property of norms, $\snorm{\nfr{s}{\beta}} = \frac{1}{\beta}\snorm{s}$. Substituting this in gives:
\begin{align*}
    \frac{1}{\beta}\snorm{s} \le \frac{1}{2} + V(s/\beta).
\end{align*}
Since $\beta > 0$, we can multiply the entire inequality by $\beta$ without changing its direction:
\begin{align*}
    \snorm{s} \le \frac{\beta}{2} + \beta V(s/\beta).
\end{align*}
By definition, $V_\beta(s) = \beta V(\nfr{s}{\beta})$. Substituting this into the right-hand side gives the final result:
\begin{align*}
    \snorm{s} \le \frac{\beta}{2} + V_\beta(s).
\end{align*}
This completes the proof.
\end{proof}

\lightTailConcentration*

\begin{proofof}{Lemma~\ref{lemma:non_euclidean_large_deviations}}\label{proofof:non_euclidean_large_deviations}
Let $\kappa$ be the regularity constant specified in the lemma. By \Cref{lemfact}, there exists a $\kappa$-smooth norm $\spnorm{\cdot}$ on $E^*$ such that
\begin{align}\label{eq:reg_to_smooth_equiv_dual}
    \snorm{s}^2 \le \spnorm{s}^2 \le 2\snorm{s}^2 \quad \forall s\in E^*.
\end{align}
Define $\sigma_+^2 \eqdef 2\sigma^2$. Then for every $j$,
\begin{align*}
    \frac{\spnorm{\eta^j}^2}{\sigma_+^2} \le \frac{2\snorm{\eta^j}^2}{2\sigma^2} = \frac{\snorm{\eta^j}^2}{\sigma^2},
\end{align*}
and hence by the assumed light-tail condition,
\begin{align}\label{eq:light_tail_plus_norm}
    \Exp{\exp\left(\frac{\spnorm{\eta^j}^2}{\sigma_+^2}\right)} \le \Exp{\exp\left(\frac{\snorm{\eta^j}^2}{\sigma^2}\right)} \le e.
\end{align}

\paragraph{Step 1: set up a martingale difference sequence.}
Let $\cF_j \eqdef \sigma(\eta^1, \ldots, \eta^j)$ and define the partial sums $S_0\equiv 0$, $S_j \eqdef \sum_{t=1}^j \eta^t$. Then $(S_j)_{j=0}^B$ is a martingale and $(\eta^j)_{j=1}^B$ is a martingale-difference sequence w.r.t.\ $(\cF_j)$.

\paragraph{Step 2: the potential function.}
For $\beta>0$, define $V_\beta : E^*\to\R$ by
\begin{align*}
    V_\beta(s) \eqdef \beta V(s/\beta), \qquad
    V(u) \eqdef 
    \begin{cases}
        \frac12\spnorm{u}^2, & \spnorm{u}\le 1, \\
        \spnorm{u}-\frac12, & \spnorm{u}>1.
    \end{cases}
\end{align*}
We use the following three properties:
\begin{enumerate}[label=(\alph*)]
\item $V_\beta$ is $1$-Lipschitz w.r.t.\ $\spnorm{\cdot}$. Indeed, write
\begin{align*}
    V_\beta(s)=h_\beta(\spnorm{s}), \qquad
    h_\beta(r)=
    \begin{cases}
        \frac{r^2}{2\beta}, & 0\le r\le\beta, \\
        r-\frac{\beta}{2}, & r>\beta.
    \end{cases}
\end{align*}
The two branches have matching derivatives at $r=\beta$, and
$h_\beta'(r)=\min\{r/\beta, 1\}\in[0, 1]$ for $r>0$.
Thus $h_\beta$ is $1$-Lipschitz on $[0, \infty)$. Combining this with the reverse triangle inequality gives
\begin{align}\label{eq:psi_lipschitz}
    |V_\beta(s+\eta)-V_\beta(s)|
    \le \bigl|\spnorm{s+\eta}-\spnorm{s}\bigr|
    \le \spnorm{\eta} \quad \forall s, \eta\in E^*.
\end{align}
\item By \Cref{lemma:V_beta_descent}, applied to the $\kappa$-smooth space $(E^*, \spnorm{\cdot})$, $V_\beta$ is continuously differentiable and satisfies the descent inequality
\begin{align}\label{eq:Vbeta_descent}
    V_\beta(s+\eta)\le V_\beta(s)+\inner{\nabla V_\beta(s)}{\eta}+\frac{\kappa}{2\beta}\spnorm{\eta}^2 \quad \forall s, \eta\in E^*.
\end{align}
\item By \Cref{lemma:potential_function_bound}, applied to the same auxiliary norm, for all $s\in E^*$,
\begin{align}\label{eq:Vbeta_lower_bound}
    V_\beta(s)\ge \spnorm{s}-\frac{\beta}{2} \qquad\text{equivalently}\qquad \spnorm{s}\le V_\beta(s)+\frac{\beta}{2}.
\end{align}
\end{enumerate}

\paragraph{Step 3: define increments and verify scalar conditions.}
Define
\begin{align*}
    \psi_j \eqdef V_\beta(S_j)-V_\beta(S_{j-1}), \qquad j=1, \ldots, B.
\end{align*}
First, \eqref{eq:light_tail_plus_norm} and Jensen's inequality imply the second moment bound:
\begin{align}\label{eq:second_moment_from_light_tail}
    \Exp{\frac{\spnorm{\eta^j}^2}{\sigma_+^2}} \le \log \Exp{\exp\left(\frac{\spnorm{\eta^j}^2}{\sigma_+^2}\right)} \le 1 \quad\Longrightarrow\quad \Exp{\spnorm{\eta^j}^2} \le \sigma_+^2.
\end{align}
Since $S_j$ is $\cF_j$-measurable and $V_\beta$ is continuous, $\psi_j$ is $\cF_j$-measurable.
The continuity of $\nabla V_\beta$ also makes $\nabla V_\beta(S_{j-1})$ $\cF_{j-1}$-measurable.
Differentiability and \eqref{eq:psi_lipschitz} give, for every $s, \eta\in E^*$,
\begin{align*}
    \left|\inner{\nabla V_\beta(s)}{\eta}\right|
    = \lim_{t\to0}\frac{|V_\beta(s+t\eta)-V_\beta(s)|}{|t|}
    \le \spnorm{\eta}.
\end{align*}
Equivalently, $\nabla V_\beta(s)$ has norm at most $1$ in the norm dual to $\spnorm{\cdot}$.
Consequently, by Cauchy--Schwarz and \eqref{eq:second_moment_from_light_tail},
\begin{align*}
    \Exp{\left|\inner{\nabla V_\beta(S_{j-1})}{\eta^j}\right|}
    \le \Exp{\spnorm{\eta^j}}
    \le \sqrt{\Exp{\spnorm{\eta^j}^2}}
    \le \sigma_+<\infty.
\end{align*}
Also, \eqref{eq:psi_lipschitz} implies $\Exp{|\psi_j|}\le\Exp{\spnorm{\eta^j}}\le\sigma_+$.
Thus the increment, the mixed term, and the squared-noise term are all integrable.
We may therefore take conditional expectations in \eqref{eq:Vbeta_descent} and pull out the $\cF_{j-1}$-measurable gradient.
Since $\eta^j$ is independent of $\cF_{j-1}$ and has mean zero, the mixed term has conditional mean zero, and we obtain
\begin{align*}
    \Exp{\psi_j\mid \cF_{j-1}} &= \Exp{V_\beta(S_{j-1}+\eta^j)-V_\beta(S_{j-1})\mid \cF_{j-1}}\\
    &\le \Exp{\inner{\nabla V_\beta(S_{j-1})}{\eta^j} +\frac{\kappa}{2\beta}\spnorm{\eta^j}^2 \mid \cF_{j-1}}\\
    &= \frac{\kappa}{2\beta}\Exp{\spnorm{\eta^j}^2}.
\end{align*}
Therefore,
\begin{align}\label{eq:mu_bound}
    \Exp{\psi_j\mid \cF_{j-1}} \le \frac{\kappa\sigma_+^2}{2\beta} \qquad\text{for all } j.
\end{align}
Also, by \eqref{eq:psi_lipschitz} and \eqref{eq:light_tail_plus_norm},
\begin{align}\label{eq:psi_subgaussian}
    \Exp{\exp\left(\frac{\psi_j^2}{\sigma_+^2}\right)\mid\cF_{j-1}} \le \Exp{\exp\left(\frac{\spnorm{\eta^j}^2}{\sigma_+^2}\right)} \le e.
\end{align}

\paragraph{Step 4: apply scalar concentration to $\sum_{j=1}^B\psi_j=V_\beta(S_B)$.}
Although \Cref{lemma:scalar_concentration} is stated with a strict tail event, the same bound holds with $\ge$.
For $\lambda>0$, apply the lemma with $0\le\lambda_n<\lambda$ and $\lambda_n\uparrow\lambda$.
The strict tail events decrease to the non-strict event at $\lambda$, so continuity of probability from above and
$\exp(-\lambda_n^2/3)\to\exp(-\lambda^2/3)$ yield the claim.
For $\lambda=0$, the bound is trivial because its right-hand side is $1$.
No continuity assumption on the distribution is needed.

Applying \Cref{lemma:scalar_concentration} (the sub-Gaussian case) with $\mu_j=\nfr{\kappa\sigma_+^2}{2\beta}$ and $\nu_j=\sigma_+$ yields that for all $\lambda\ge 0$,
\begin{align*}
    \Prob\cb{V_\beta(S_B) \ge \frac{B\kappa\sigma_+^2}{2\beta}+\lambda\sqrt{B}\,\sigma_+} \le \exp(-\lambda^2/3).
\end{align*}
Using \eqref{eq:Vbeta_lower_bound}, we obtain
\begin{align*}
    \Prob\cb{\spnorm{S_B} \ge \frac{\beta}{2}+\frac{B\kappa\sigma_+^2}{2\beta}+\lambda\sqrt{B}\,\sigma_+} \le \exp(-\lambda^2/3).
\end{align*}
Optimizing the deterministic term in $\beta$ gives $\beta^\star=\sqrt{B\kappa}\,\sigma_+$ and thus
\begin{align*}
    \Prob\cb{\spnorm{S_B} \ge (\sqrt{\kappa}+\lambda)\sqrt{B}\,\sigma_+} \le \exp(-\lambda^2/3).
\end{align*}

\paragraph{Step 5: return to $\snorm{\cdot}$ and to the average.}
Since $\snorm{S_B}\le \spnorm{S_B}$ and $\sigma_+=\sqrt{2}\sigma$,
\begin{align*}
    \Prob\cb{\snorm{S_B} \ge (\sqrt{2\kappa}+\sqrt{2}\lambda)\sqrt{B}\,\sigma} \le \exp(-\lambda^2/3).
\end{align*}
Finally, $\bar\eta=\nfr{S_B}{B}$ and norms are homogeneous, so $\snorm{\bar\eta} = \nfr{\snorm{S_B}}{B}$. Dividing the threshold by $B$ yields
\begin{align*}
    \Prob\cb{\snorm{\bar\eta} \ge \left(\sqrt{2\kappa}+\sqrt{2}\lambda\right)\frac{\sigma}{\sqrt{B}}} \le \exp(-\lambda^2/3).
\end{align*}
This completes the proof.
\end{proofof}


\subsection{Proof of the Convergence Rate Theorem}\label{sub:proof_of_the_theorem_stochastic_l_smooth_final}

\lightTailConvergence*

\begin{proof}[Proof of \Cref{thm:stochastic_L_smooth_final}]
Fix the initial point $x^0$ and define the filtration
\begin{align*}
    \cF_k \eqdef \sigma\!\left(x^0, \xi_j^t:\ 0\le t<k, \ 1\le j\le B\right), \qquad k\ge0.
\end{align*}
By induction from the update \eqref{eq:non-eu-one-step-final} and the fixed Borel-measurable sharp selection in \Cref{subsec:non_eu_update}, $\xk$ is $\cF_k$-measurable.
As specified in the serial algorithm, the new batch $\{\xi_j^k\}_{j=1}^B$ consists of i.i.d.\ samples from $\cD$ and is independent of $\cF_k$.
Let $\etajk \eqdef \nf{\xk; \xijk} - \nfxk$ be the noise of the $j$-th sample at iteration $k$.
Conditionally on $\cF_k$, the query $\xk$ is fixed, so these noise vectors are independent and identically distributed. By \Cref{as:non_euclidean_light_tail}, they satisfy, almost surely,
\begin{align*}
    \Exp{\etajk\mid\cF_k}=0,
    \qquad
    \Exp{\exp\left(\frac{\sqsnorm{\etajk}}{\sigma^2}\right)\mid\cF_k}\le e.
\end{align*}
The minibatch error is $\frac{1}{B}\sum_{j=1}^B \etajk = \gk - \nfxk$.
Thus \Cref{lemma:non_euclidean_large_deviations} applies conditionally on $\cF_k$ and gives, for each $k$ and $\lambda\ge0$, almost surely,
\begin{align*}
    \Prob\cb{\snorm{\gk - \nfxk} \ge (\sqrt{2\kappa} + \sqrt{2}\lambda) \frac{\sigma}{\sqrt{B}} \mid \cF_k} \le \exp(-\lambda^2/3).
\end{align*}
Let $A_k(\lambda)$ denote the event in this conditional probability. Taking expectations and using the tower property yields
\begin{align*}
    \Prob\cb{A_k(\lambda)}
    =\Exp{\Prob\cb{A_k(\lambda)\mid\cF_k}}
    \le\exp(-\lambda^2/3).
\end{align*}
We may now apply the unconditional union bound over the prescribed iterations $k\in\{0, \dots, T-1\}$; independence of the events $A_k(\lambda)$ is not needed:
\begin{align*}
    \Prob\cb{\bigcup_{k=0}^{T-1} A_k(\lambda)}
    \le \sum_{k=0}^{T-1}\Prob\cb{A_k(\lambda)}
    \le \sum_{k=0}^{T-1}\exp(-\lambda^2/3)
    = T\exp(-\lambda^2/3).
\end{align*}
We want this failure probability to be at most $\delta$. By setting $T \exp(-\nfr{\lambda^2}{3}) \le \delta$, we find that choosing $\lambda = \sqrt{3\log(\nfr{T}{\delta})}$ suffices. With this choice, we can conclude that with probability at least $1-\delta$, the following holds for all $k \in \{0, \dots, T-1\}$:
\begin{align}
    \label{eq:high_prob_bound_intermediate}
    \snorm{\gk - \nfxk} \le (\sqrt{2\kappa} + \sqrt{2}\lambda) \frac{\sigma}{\sqrt{B}}.
\end{align}
We choose the error threshold $\frac{\sqrt{\eps}}{4}$ and select the batch size $B$ so that the right-hand side of \eqref{eq:high_prob_bound_intermediate} is at most this threshold. Thus, we require:
\begin{align*}
    (\sqrt{2\kappa} + \sqrt{2}\sqrt{3\log(T/\delta)}) \frac{\sigma}{\sqrt{B}} \le \frac{\sqrt{\eps}}{4}.
\end{align*}
Solving for $B$ yields the choice:
\begin{align*}
    B = \max\left\{ \left\lceil \frac{16\sigma^2 \rb{\sqrt{2\kappa} + \sqrt{6\log(T/\delta)}}^2}{\eps} \right\rceil, 1 \right\}.
\end{align*}
With these choices of $\lambda$ and $B$, we conclude that with probability at least $1-\delta$, we have for all $k \in \{0, \dots, T-1\}$:
\begin{align}
    \label{eq:JqcRXoaKBYDLWQq_non_euclidean}
    \snorm{\gk - \nfxk} \leq \frac{\sqrt{\eps}}{4}.
\end{align}

Fix any realization on which \eqref{eq:JqcRXoaKBYDLWQq_non_euclidean} holds for every $0\le k<T$; these realizations form an event of probability at least $1-\delta$. All subsequent descent inequalities are evaluated along this fixed realization. We proceed by contradiction to show that there must exist some $k \in \{0, \dots, T - 1\}$ such that $\sqsnorm{\nfxk} \leq \eps$.

Assume the contrary, i.e., that for all $k \in \{0, \dots, T - 1\}$:
\begin{align}
    \label{eq:contradiction_assumption}
    \sqsnorm{\nfxk} > \eps.
\end{align}
This implies $\snorm{\nfxk} > \sqrt{\eps}$. Combining this with the error bound from \eqref{eq:JqcRXoaKBYDLWQq_non_euclidean}, we get:
\begin{align}
    \label{eq:grad_error_vs_norm}
    \snorm{\gk - \nfxk} \leq \frac{\sqrt{\eps}}{4} < \frac{\snorm{\nfxk}}{4}.
\end{align}
By the triangle inequality for the dual norm, we can bound the norm of the stochastic gradient $\gk$:
\begin{align}
    \label{eq:stoch_grad_norm_bounds}
    \frac{3}{4} \snorm{\nfxk} \leq \snorm{\gk} \leq \frac{5}{4}\snorm{\nfxk}.
\end{align}
We now have all the necessary components to analyze the one-step progress of the algorithm. The proof proceeds by combining the standard descent lemma for $L$-smooth functions with the high-probability bounds on the stochastic gradient error.

Let us analyze the descent from iterate $\xk$ to $\xkpo$. Since the function $f$ is $L$-smooth, we have:
\begin{align*}
    f(\xkpo) &\le f(\xk) + \inner{\nfxk}{\xkpo - \xk} + \frac{L}{2}\sqnorm{\xkpo - \xk} \\
    &= f(\xk) - \gamma_k \inner{\nfxk}{(\gk)^\sharp} + \frac{L\gamma_k^2}{2}\sqnorm{(\gk)^\sharp} \\
    &= f(\xk) - \gamma_k \inner{\nfxk}{(\gk)^\sharp} + \frac{L\gamma_k^2}{2}\sqsnorm{\gk}.
\end{align*}
In the last step, we used the fundamental property that $\sqnorm{s^\sharp} = \sqsnorm{s}$. We now rewrite the dual pairing term by adding and subtracting $\gk$:
\begin{align*}
    -\inner{\nfxk}{(\gk)^\sharp} = \inner{\gk - \nfxk}{(\gk)^\sharp} - \inner{\gk}{(\gk)^\sharp}.
\end{align*}
Using the identity $\inner{s}{s^\sharp} = \sqsnorm{s}$, this becomes:
\begin{align*}
    -\inner{\nfxk}{(\gk)^\sharp} = \inner{\gk - \nfxk}{(\gk)^\sharp} - \sqsnorm{\gk}.
\end{align*}
Substituting this back into the descent inequality, we get:
\begin{align*}
    f(\xkpo) &\le f(\xk) + \gamma_k \inner{\gk - \nfxk}{(\gk)^\sharp} - \gamma_k \sqsnorm{\gk} + \frac{L\gamma_k^2}{2}\sqsnorm{\gk} \\
    &= f(\xk) + \gamma_k \inner{\gk - \nfxk}{(\gk)^\sharp} - \gamma_k(1 - L\gamma_k/2)\sqsnorm{\gk}.
\end{align*}
We now apply the generalized Young's inequality, $\inner{u}{v} \le \frac{1}{2{s}}\snorm{u}^2 + \frac{{s}}{2}\norm{v}^2$, to the dual pairing term with $u = \gk - \nfxk$, $v = (\gk)^\sharp$, and a parameter ${s} > 0$:
\begin{align*}
    \inner{\gk - \nfxk}{(\gk)^\sharp} &\le \frac{1}{2{s}}\sqsnorm{\gk - \nfxk} + \frac{{s}}{2}\sqnorm{(\gk)^\sharp} \\
    &= \frac{1}{2{s}}\sqsnorm{\gk - \nfxk} + \frac{{s}}{2}\sqsnorm{\gk}.
\end{align*}
Plugging this bound into our main inequality gives:
\begin{align*}
    f(\xkpo) &\le f(\xk) + \gamma_k \left( \frac{1}{2{s}}\sqsnorm{\gk - \nfxk} + \frac{{s}}{2}\sqsnorm{\gk} \right) - \gamma_k(1 - L\gamma_k/2)\sqsnorm{\gk} \\
    &= f(\xk) + \frac{\gamma_k}{2{s}}\sqsnorm{\gk - \nfxk} - \gamma_k \left(1 - \frac{L\gamma_k}{2} - \frac{{s}}{2}\right)\sqsnorm{\gk}.
\end{align*}
This inequality holds for any ${s} > 0$. To substitute the lower bound $\sqsnorm{\gk}\ge\frac{9}{16}\sqsnorm{\nfxk}$ into its negative term, we require
\begin{align*}
    1-\frac{L\gamma_k}{2}-\frac{s}{2}\ge0.
\end{align*}
With the prescribed stepsize $\gamma_k=\frac{1}{2L}$, this condition is $\frac34-\frac{s}{2}\ge0$. Hence we restrict the Young parameter to $0<s\le\frac32$ from now on. On this interval, the coefficient multiplying $\sqsnorm{\gk}$ is nonpositive, so we may use the high-probability bounds from \eqref{eq:grad_error_vs_norm} and \eqref{eq:stoch_grad_norm_bounds} to obtain
\begin{align*}
    f(\xkpo) &\le f(\xk) + \frac{\gamma_k}{2{s}} \left(\frac{1}{4}\snorm{\nfxk}\right)^2 - \gamma_k \left(1 - \frac{L\gamma_k}{2} - \frac{{s}}{2}\right)\left(\frac{3}{4}\snorm{\nfxk}\right)^2 \\
    &= f(\xk) + \frac{\gamma_k}{32{s}}\sqsnorm{\nfxk} - \frac{9\gamma_k}{16} \left(1 - \frac{L\gamma_k}{2} - \frac{{s}}{2}\right)\sqsnorm{\nfxk} \\
    &= f(\xk) - \gamma_k \left[ \frac{9}{16}\left(1 - \frac{L\gamma_k}{2} - \frac{{s}}{2}\right) - \frac{1}{32{s}} \right] \sqsnorm{\nfxk}.
\end{align*}
To ensure descent, the step size $\gamma_k$ and the parameter ${s}$ must make the term in the square brackets positive. Using the constant step size $\gamma_k\equiv\gamma=\frac{1}{2L}$ prescribed in the theorem, the expression becomes:
\begin{align*}
    f(\xkpo) \le f(\xk) - \frac{1}{2L} \left[ \frac{9}{16}\left(1 - \frac{1}{4} - \frac{{s}}{2}\right) - \frac{1}{32{s}} \right] \sqsnorm{\nfxk}.
\end{align*}
To obtain the strongest possible descent guarantee from this inequality, we maximize the term in the square brackets over the admissible interval $0<s\le\frac32$. Define
\begin{align*}
    h({s}) \eqdef \frac{9}{16}\left(1 - \frac{1}{4} - \frac{{s}}{2}\right) - \frac{1}{32{s}}.
\end{align*}
Its derivative is
\begin{align*}
    h'({s}) = -\frac{9}{32} + \frac{1}{32{s}^2}.
\end{align*}
Setting $h'({s}) = 0$ gives $9{s}^2 = 1$, whose positive solution is ${s}^* = \frac{1}{3}$. Since $h''(s)=-\frac{1}{16s^3}<0$ for $s>0$, this is the unique maximizer on the admissible interval $0<s\le\frac32$. In particular, $s^*=\frac13$ is admissible and gives $1-\frac{L\gamma_k}{2}-\frac{s^*}{2}=\frac{7}{12}>0$.
We now substitute this optimal value ${s}^* = \nfr{1}{3}$ back into our one-step descent inequality:
\begin{align*}
    h(1/3) = \frac{9}{16}\left(1 - \frac{1}{4} - \frac{1}{6}\right) - \frac{3}{32} = \frac{15}{64}.
\end{align*}
Substituting this optimized constant back into the full inequality gives our final one-step progress guarantee:
\begin{align*}
    f(\xkpo) \le f(\xk) - \frac{1}{2L} \frac{15}{64} \sqsnorm{\nfxk}.
\end{align*}
Summing this inequality over $k=0, \dots, T-1$ along the fixed realization and using \eqref{eq:contradiction_assumption}, we obtain
\begin{align*}
    f(x^T)
    &\le f(x^0)-\frac{15}{128L}\sum_{k=0}^{T-1}\sqsnorm{\nfxk} \\
    &< f(x^0)-\frac{15T\eps}{128L} \\
    &\le f(x^0)-\Delta=\finf.
\end{align*}
The last inequality uses the prescribed iteration count $T=\left\lceil\frac{128L\Delta}{15\eps}\right\rceil$ and $\Delta=f(x^0)-\finf$.
This contradicts the lower bound $f(x^T)\ge\finf$ from \Cref{as:lower_bound}.
Thus every realization in the good event contains an iterate with $\sqsnorm{\nfxk}\le\eps$ for some $0\le k<T$.
Since this event has probability at least $1-\delta$, we conclude that
\begin{align*}
    \Prob\cb{\min_{0\le k<T}\sqsnorm{\nfxk}\le\eps}\ge1-\delta.
\end{align*}
This completes the proof.
\end{proof}

\lightTailComplexity*

\begin{proof}[Proof of \Cref{cor:complexity_dual_native_light_tail}]
By \Cref{thm:stochastic_L_smooth_final}, the required
number of iterations is
\begin{align*}
    T
    =
    \left\lceil
        \frac{128L\Delta}{15\eps}
    \right\rceil.
\end{align*}
Since $\eps\le L\Delta$, this implies
\begin{align*}
    T
    =
    \mathcal{O}\left(\frac{L\Delta}{\eps}\right).
\end{align*}

The same theorem prescribes the minibatch size
\begin{align*}
    B
    =
    \max\left\{
        \left\lceil
            \frac{
                16\sigma^2
                \left(
                    \sqrt{2\kappa}
                    +
                    \sqrt{6\log(T/\delta)}
                \right)^2
            }{\eps}
        \right\rceil,
        1
    \right\}.
\end{align*}
Using $(a+b)^2\le 2a^2+2b^2$, we have
\begin{align*}
    \left(
        \sqrt{2\kappa}
        +
        \sqrt{6\log(T/\delta)}
    \right)^2
    \le
    4\kappa+12\log(T/\delta).
\end{align*}
Consequently,
\begin{align*}
    B
    =
    \widetilde{\mathcal{O}}\left(
        1+\frac{\kappa\sigma^2}{\eps}
    \right),
\end{align*}
where the suppressed factors are logarithmic in $T$ and $\nfr{1}{\delta}$.
Therefore,
\begin{align*}
    N_*
    =BT
    &=
    \widetilde{\mathcal{O}}\left(
        \frac{L\Delta}{\eps}
        \left(
            1+\frac{\kappa\sigma^2}{\eps}
        \right)
    \right) \\
    &=
    \widetilde{\mathcal{O}}\left(
        \frac{L\Delta}{\eps}
        +
        \frac{\kappa L\Delta\sigma^2}{\eps^2}
    \right),
\end{align*}
which proves \eqref{eq:complexity_dual_native_cor}. The stated success
probability and stationarity guarantee follow directly from
\Cref{thm:stochastic_L_smooth_final}.
\end{proof}

\subsection{Time-Complexity Analysis of \texorpdfstring{\algname[]{Rennala NSGD}}{Rennala NSGD} Under the Light-Tailed Assumption}\label{sec:non_eu_updaten}

We explain the role of fresh gradients and the modified reply order in \Cref{sec:rennala_nsgd}, then prove the wall-clock guarantee under the fixed computation model.

\textbf{Why discard stale gradients?}
The server may update the iterate while a worker is still computing, so a gradient can be outdated when it arrives.
As discussed in \Cref{app:related_time_complexity}, a gradient computed at $x^{k-\delta_k}$ need not be an unbiased estimator of $\nabla f(\xk)$.
Moreover, to the best of our knowledge, the available wall-clock bounds for classical \algname{ASGD}, which applies every arriving gradient, are suboptimal in general relative to the lower bound of \citet{tyurin2023optimal} for their zero-respecting algorithm class and specified unbiased bounded-variance oracle model with fixed worker computation times.
\algname{Rennala SGD} avoids staleness by using only gradients computed at the current iterate; in that Euclidean oracle model, its time complexity matches the lower bound up to a constant factor~\citep{tyurin2023optimal}.
For our analysis, this restriction also connects the parallel method to the serial one: conditional on the history, the accepted gradients are i.i.d.\ oracle responses at $\xk$.
The first part of the proof below establishes this property and applies the serial convergence guarantee.

\textbf{Updating before replying.}
We slightly modify Method~4 of \citet{tyurin2023optimal}, whose server replies to the worker that completes the minibatch before updating the iterate.
Our server updates first and then sends the new iterate and its index to that worker, allowing it to start a fresh computation immediately.
Other workers finish their pending computations and receive the latest iterate when their messages are processed.
Since each message is fully processed before the next is handled, including any update and reply, a simultaneous return processed after an update is checked against the new index.
This change preserves the fresh-minibatch property and the wall-clock upper bound: a worker still computing at an earlier iterate needs at most $\tau_i$ additional time before receiving the current one, while the worker that completed the minibatch starts immediately.
The timing argument below accounts for both cases.

With these protocol details in place, we restate and prove the wall-clock guarantee.

\lightTailWallClock*

We will also prove that, for any positive integers $T$ and $B$, the time needed to complete $T$ iterations satisfies
\begin{align}\label{eq:time_exact_bound}
    T_{\mathrm{total}} \le 2T\min_{S\in[N]}\left[\left(\sum_{i=1}^S\frac{1}{\tau_i}\right)^{-1}(B+S)\right].
\end{align}
This deterministic estimate will also be used in the noise-model comparison in \Cref{app:rennala_time_discussion}.

\begin{proof}[Proof of \Cref{thm:main_time_complexity}]
The argument has two parts. We first verify that the stochastic gradients accepted during
an iteration form a fresh minibatch at the current iterate, so that the serial guarantee of
\Cref{thm:stochastic_L_smooth_final} applies verbatim; we then bound the wall-clock time
required to complete $T$ such iterations under the fixed computation model.

\paragraph{Step 1: the accepted gradients form a fresh minibatch.}
Fix an iteration $k$, and let $\cF_k$ denote the complete algorithmic history immediately
after $\xk$ is formed and before it is sent to any worker, including the states of workers
with computations or messages pending. For $k>0$, this is just before the reply to the
worker that completed the preceding minibatch; for $k=0$, it is before the initial replies.
Thus $\xk$ is $\cF_k$-measurable, and every sample used for a gradient tagged with $k$ is
drawn after this history is fixed.

The server accepts a stochastic gradient during iteration $k$ only if it is tagged with the
current iteration index $k$. Hence, every accepted gradient is computed at $\xk$. Whenever a
worker receives $(\xk, k)$, it draws a fresh sample from the common data distribution,
independently of the past and of the samples used by other oracle calls, and computes a stochastic gradient at $\xk$.

Under the fixed computation model, the order of worker returns is determined by the constants
$\{\tau_i\}_{i=1}^N$, the worker states recorded in $\cF_k$, and any independent tie-breaking randomness.
By the tie-breaking requirement in \Cref{sec:rennala_nsgd}, this order does not depend on the fresh oracle samples or their gradient values.
Thus selecting the first $B$ returns tagged with $k$ preserves the sampling law: conditional on $\cF_k$,
the accepted samples $\{\xijk\}_{j=1}^B$ are i.i.d.\ with distribution $\cD$.
Since $\xk$ is $\cF_k$-measurable, the accepted gradients are conditionally i.i.d.\ given $\cF_k$,
with the conditional distribution of $\nf{\xk;\xi}$ for a fresh $\xi\sim\cD$.
This conditional distribution can depend on the history through $\xk$.

Writing the accepted minibatch as $\gk \eqdef \frac{1}{B}\sum_{j=1}^B \nf{\xk;\xijk}$, we
consequently have $\Expb{\gk}{\cF_k} = \nfxk$, and the zero-mean noise vectors
$\etajk \eqdef \nf{\xk;\xijk}-\nfxk$ satisfy \Cref{as:non_euclidean_light_tail} conditionally
on $\cF_k$. Applying \Cref{lemma:non_euclidean_large_deviations} conditionally with $x=\xk$
therefore gives, for every $\lambda>0$,
\begin{align}
\label{eq:rennala_lt_conditional_minibatch}
    \Prob\cb{\snorm{\gk - \nfxk} \ge \rb{\sqrt{2\kappa} + \sqrt{2}\lambda}\frac{\sigma}{\sqrt{B}} \mid \cF_k} \le \exp(-\lambda^2/3).
\end{align}
Thus the iterates produced by \algname{Rennala NSGD} satisfy the same conditional minibatch
oracle properties as the serial minibatch method analyzed in
\Cref{thm:stochastic_L_smooth_final}, so that theorem applies.

\paragraph{Step 2: iteration complexity.}
By \Cref{thm:stochastic_L_smooth_final}, with the choice of $T$ and $B$ in the statement, \algname{Rennala NSGD} finds an $\eps$-stationary point after at most $T$ iterations with probability at least $1-\delta$. Our subsequent wall-clock time analysis is adapted from the proof of Theorem~7.5 in \citep{tyurin2023optimal} to the non-Euclidean setting. Thus, it remains to upper bound the wall-clock time required to complete $T$ iterations under the fixed computation model where worker $i$ returns one stochastic gradient every $\tau_i$ seconds. Without loss of generality, assume $0<\tau_1\le \dots \le \tau_N$.

\paragraph{Time to collect one minibatch.}
Fix an iteration index $k$. In iteration $k$, the server forms the minibatch estimator $g^k$ by collecting $B$ stochastic gradients computed at the current iterate $x^k$ (i.e., gradients whose iteration index equals $k$), ignoring all stale gradients.

Choose nonnegative integer target counts $B_i$ for the workers. These are sufficient quotas
for bounding the collection time, not necessarily the numbers of gradients actually accepted
from each worker before the minibatch is completed.

At the instant $\xk$ is formed, each worker has at most one computation or message pending
from an earlier iteration. Completing and processing it takes at most $\tau_i$ additional
time, after which the worker receives the current iterate. The worker that completed the
preceding minibatch instead receives $\xk$ immediately and starts a fresh computation;
at $k=0$, all workers receive the initial iterate without first completing a stale computation.
As long as iteration $k$ has not ended, every subsequent reply sends $\xk$ again.
Consequently, unless the minibatch has already been completed, worker $i$ can deliver
$B_i>0$ fresh gradients within $\tau_i(1+B_i)$ time after $\xk$ is formed. The extra
$\tau_i$ is unnecessary for a worker that starts immediately, but remains a valid upper
bound. Define
\begin{align*}
    t_i \eqdef
    \begin{cases}
        \tau_i(1+B_i), & B_i>0, \\
        0, & B_i=0.
    \end{cases}
\end{align*}
Since workers run in parallel, a sufficient condition to collect a batch of size $B$ is the existence of $t'\ge 0$ and integers $B_1, \dots, B_N\in\mathbb{N}_0$ such that
\begin{align}\label{eq:aux_batch_time_integer_proof}
    t' \ge \max_{i\in[N]} t_i, \qquad \sum_{i=1}^N B_i \ge B.
\end{align}
Indeed, if the iteration had not ended by time $t'$, each worker would have delivered its
quota, giving at least $B$ accepted gradients, a contradiction. Thus the modified reply
order admits the same collection-time bound as the original protocol.

\paragraph{Relaxation and rounding.}
We relax \eqref{eq:aux_batch_time_integer_proof} by allowing $B_i\in\R_+$. If $(t', B_1, \dots, B_N)$ is feasible for the relaxed problem, then $(2t', \lceil B_1\rceil, \dots, \lceil B_N\rceil)$ is feasible for the integer problem. Indeed, $\sum_i \lceil B_i\rceil \ge \sum_i B_i \ge B$, and for any $i$ with $B_i>0$,
\begin{align*}
    \tau_i(1+\lceil B_i\rceil)\le \tau_i(2+B_i)\le 2\tau_i(1+B_i)\le 2t'.
\end{align*}
(For $B_i=0$, rounding leaves $B_i$ and $t_i$ equal to zero.) Hence it suffices to construct a feasible solution to the relaxed problem and then pay a factor $2$.

\paragraph{Constructing a feasible solution.}
For any $S\in[N]$, define
\begin{align*}
    t'(S) \eqdef \left(\sum_{i=1}^S \frac{1}{\tau_i}\right)^{-1}(B+S),
\end{align*}
and let $S^\star\in\argmin_{S\in[N]} t'(S)$ be the smallest minimizer. Consider the choice
\begin{align*}
    B_i \eqdef
    \begin{cases}
        \frac{t'(S^\star)}{\tau_i}-1, & i\le S^\star, \\
        0, & i>S^\star.
    \end{cases}
\end{align*}
Then
\begin{align*}
    \sum_{i=1}^N B_i &= \sum_{i=1}^{S^\star}\left(\frac{t'(S^\star)}{\tau_i}-1\right) \\
    &= t'(S^\star)\sum_{i=1}^{S^\star}\frac{1}{\tau_i}-S^\star \\
    &= (B+S^\star)-S^\star = B,
\end{align*}
so the batch-size constraint holds with equality.
Moreover, one can verify that $t'(S^\star)>\tau_{S^\star}$, hence $B_i>0$ for all $i\le S^\star$.
\paragraph{Showing that $B_i>0$ for all $i\le S^\star$.}
Recall that
\begin{align*}
    t'(S) \eqdef \left(\sum_{i=1}^S \frac{1}{\tau_i}\right)^{-1}(B+S), \qquad S^\star\in\argmin_{S\in[N]} t'(S) \quad \text{ is the \emph{smallest} minimizer,}
\end{align*}
and we set
\begin{align*}
    B_i \eqdef 
    \begin{cases}
        \frac{t'(S^\star)}{\tau_i}-1, & i\le S^\star, \\
        0, & i> S^\star.
    \end{cases}
\end{align*}
We prove that $t'(S^\star)>\tau_{S^\star}$, which will imply $B_i>0$ for all $i\le S^\star$.

\smallskip
\noindent\emph{Case 1: $S^\star=1$.}
Then
\begin{align*}
    t'(1) = \tau_1(B+1) > \tau_1 = \tau_{S^\star}.
\end{align*}

\smallskip
\noindent\emph{Case 2: $S^\star>1$.}
Since $S^\star$ is the \emph{smallest} minimizer of $t'(\cdot)$, we must have
\begin{align*}
    t'(S^\star) < t'(S^\star-1).
\end{align*}
Writing $A_S \eqdef \sum_{i=1}^S \frac{1}{\tau_i}$, this inequality is equivalent to
\begin{align*}
    A_{S^\star}^{-1}(B+S^\star) < A_{S^\star-1}^{-1}(B+S^\star-1),
\end{align*}
and hence (multiplying by $A_{S^\star}A_{S^\star-1}>0$)
\begin{align*}
    A_{S^\star-1}(B+S^\star) < A_{S^\star}(B+S^\star-1).
\end{align*}
Using $A_{S^\star}=A_{S^\star-1}+\frac{1}{\tau_{S^\star}}$, we obtain
\begin{align*}
    A_{S^\star-1}(B+S^\star) &< \left(A_{S^\star-1}+\frac{1}{\tau_{S^\star}}\right)(B+S^\star-1) \\
    &= A_{S^\star-1}(B+S^\star-1) + \frac{B+S^\star-1}{\tau_{S^\star}}.
\end{align*}
Subtracting $A_{S^\star-1}(B+S^\star-1)$ from both sides yields
\begin{align*}
    A_{S^\star-1} < \frac{B+S^\star-1}{\tau_{S^\star}},
\end{align*}
and therefore
\begin{align*}
    A_{S^\star} = A_{S^\star-1}+\frac{1}{\tau_{S^\star}} < \frac{B+S^\star-1}{\tau_{S^\star}}+\frac{1}{\tau_{S^\star}} = \frac{B+S^\star}{\tau_{S^\star}}.
\end{align*}
Finally,
\begin{align*}
    t'(S^\star) = \frac{B+S^\star}{A_{S^\star}} > \tau_{S^\star}.
\end{align*}

\smallskip
Since $\tau_1\le \dots \le \tau_{S^\star}$, for every $i\le S^\star$ we have
\begin{align*}
    B_i = \frac{t'(S^\star)}{\tau_i}-1 \ge \frac{t'(S^\star)}{\tau_{S^\star}}-1 > 0.
\end{align*}
Thus indeed $B_i>0$ for all $i\le S^\star$.

Finally,
\begin{align*}
    \max_{i\in[N]} t_i &= \max_{i\le S^\star}\tau_i(1+B_i) \\
    &= \max_{i\le S^\star}\tau_i\cdot \frac{t'(S^\star)}{\tau_i} \\
    &= t'(S^\star),
\end{align*}
so $(t'(S^\star), B_1, \dots, B_N)$ is feasible for the relaxed problem. By the rounding argument, the server can therefore collect a batch of size $B$ within at most $2t'(S^\star)$ seconds in every iteration.


\paragraph{Total time.}
Thus, the wall-clock time to execute $T$ iterations is at most
\begin{align}\label{eq:time_exact_bound_proof}
    T_{\mathrm{total}} \le 2T\cdot \min_{S\in[N]} \left[ \left(\sum_{i=1}^S \frac{1}{\tau_i}\right)^{-1}(B+S) \right].
\end{align}
Combining this deterministic runtime bound with \Cref{thm:stochastic_L_smooth_final} (which guarantees success after $T$ iterations with probability at least $1-\delta$) yields the high-probability time guarantee \eqref{eq:time_exact_bound}.

\paragraph{From the exact bound to a $\widetilde{\mathcal O}$ complexity bound.}
Fix $S\in[N]$ and define the average inverse time (average rate) of the fastest $S$ workers by
\begin{align*}
    \bar r_S \eqdef \frac{1}{S}\sum_{i=1}^S \frac{1}{\tau_i}.
\end{align*}
Then we can rewrite
\begin{align*}
    \left(\sum_{i=1}^S \frac{1}{\tau_i}\right)^{-1}(B+S) &= \frac{B+S}{\sum_{i=1}^S \frac{1}{\tau_i}} = \frac{S}{\sum_{i=1}^S \frac{1}{\tau_i}}\left(1+\frac{B}{S}\right) \\
    &= \bar r_S^{-1}\left(1+\frac{B}{S}\right).
\end{align*}
Plugging this into \eqref{eq:time_exact_bound_proof} gives
\begin{align}\label{eq:time_factorized}
    T_{\mathrm{total}} \le 2T\cdot \min_{S\in[N]}\bar r_S^{-1}\left(1+\frac{B}{S}\right) = 2\min_{S\in[N]}\bar r_S^{-1}\left(T+\frac{TB}{S}\right).
\end{align}

Next, we substitute the iteration and batch-size choices from \Cref{thm:stochastic_L_smooth_final}:
\begin{align*}
    T = \left\lceil \frac{128L\Delta}{15\eps}\right\rceil \implies T = \mathcal{O}\left(\frac{L\Delta}{\eps}\right),
\end{align*}
and
\begin{align*}
    B = \max\left\{ \left\lceil \frac{16\sigma^2\left(\sqrt{2\kappa}+\sqrt{6\log(T/\delta)}\right)^2}{\eps} \right\rceil, 1\right\}.
\end{align*}
Using $(a+b)^2\le 2a^2+2b^2$ with $a=\sqrt{2\kappa}$ and $b=\sqrt{6\log(\nfr{T}{\delta})}$, we obtain
\begin{align*}
    \left(\sqrt{2\kappa}+\sqrt{6\log(T/\delta)}\right)^2 \le 2(2\kappa)+2\bigl(6\log(T/\delta)\bigr) = 4\kappa+12\log(T/\delta).
\end{align*}
Therefore,
\begin{align}\label{eq:B_bound_for_time}
    B \le 1+\frac{16\sigma^2\left(4\kappa+12\log(T/\delta)\right)}{\eps} = 1+\frac{64\kappa\sigma^2}{\eps} +\frac{192\sigma^2\log(T/\delta)}{\eps}.
\end{align}
In particular, using $\kappa\ge1$ and hiding the logarithmic factor $\log(\nfr{T}{\delta})$ inside $\widetilde{\mathcal O}(\cdot)$, we have
\begin{align*}
    B = \widetilde{\mathcal O}\left(1+\frac{\kappa\sigma^2}{\eps}\right).
\end{align*}
Consequently,
\begin{align*}
    \frac{TB}{S}
    &= \widetilde{\mathcal O}\left(\frac{T}{S}+\frac{\kappa T\sigma^2}{S\eps}\right) \\
    &= \widetilde{\mathcal O}\left(\frac{L\Delta}{S\eps}+\frac{\kappa L\Delta\sigma^2}{S\eps^2}\right).
\end{align*}
Since $S\ge1$, the deterministic contribution $T/S$ is at most $T$ and can be absorbed into the leading $T$ term in \eqref{eq:time_factorized}. Substituting these bounds into \eqref{eq:time_factorized} gives
\begin{align*}
    T_{\mathrm{total}} = \widetilde{\mathcal O}\left( \min_{S\in[N]} \left[ \left(\frac{1}{S}\sum_{i=1}^S\frac{1}{\tau_i}\right)^{-1} \left( \frac{L\Delta}{\eps} + \frac{\kappa L\Delta\sigma^2}{S\eps^2} \right) \right] \right),
\end{align*}
which is the claimed $\widetilde{\mathcal O}$ time complexity bound.

\end{proof}

\subsection{Discussion}\label{app:rennala_time_discussion}
We now interpret \eqref{eq:time_simplified_bound}, explain how the update norm and noise model affect it, and compare its Euclidean specialization with existing guarantees.

\textbf{A system factor and a problem factor.}
For each $S\in[N]$, the bound \eqref{eq:time_simplified_bound} is the product of two factors. The first, $\bigl(\frac{1}{S}\sum_{i=1}^S\frac{1}{\tau_i}\bigr)^{-1}$, is the harmonic mean of the computation times of the $S$ fastest workers, and it depends only on the system. The second, $\frac{L\Delta}{\eps}+\frac{\kappa L\Delta\sigma^2}{S\eps^2}$, is the oracle complexity \eqref{eq:complexity_dual_native_cor} with its statistical term divided by $S$, and apart from $S$ it depends only on the problem and the update norm. Enlarging $S$ shrinks the statistical term but admits slower workers into the harmonic mean, and the minimum over $S$ selects the best balance. Running the algorithm does not require knowing either the minimizing
number of workers or their computation times $\tau_i$.
The prescribed $T$ and $B$ do not depend on these quantities:
the server simply accepts the first $B$ fresh gradients to arrive.

\textbf{Where the update norm enters.}
The update norm $\norm{\cdot}$ leaves the system factor untouched and acts only through the second factor, but there it affects every constant except $\Delta$. The smoothness constant $L$ is defined through the pair $(\norm{\cdot}, \snorm{\cdot})$ in \Cref{as:non_euclidean_smoothness}, the noise level $\sigma$ is measured in $\snorm{\cdot}$ in \Cref{as:non_euclidean_light_tail}, the accuracy $\eps$ refers to stationarity measured in $\snorm{\cdot}$, and $\kappa$ is the regularity constant of $(E^*, \snorm{\cdot})$. Bounds obtained for different update norms therefore certify stationarity in different dual norms, and they should not be compared as if they measured the same quantity.

\textbf{Native versus converted noise in wall-clock time.}
The comparison of \Cref{subsubsec:native_noise_improvement} carries over from oracle complexity to wall-clock time.
Under the comparison assumptions of \Cref{subsec:frobenius_light_tail_summary_alt} and the fixed computation model
of \Cref{thm:main_time_complexity}, apply \Cref{lem:frobenius_to_dual_conc_alt} conditionally to each accepted minibatch
and use $B_{\opf}$ from \eqref{eq:frobenius_batch_alt}. The same union bound and descent argument give the required
stationarity guarantee. The deterministic minibatch-collection bound \eqref{eq:time_exact_bound} applies with
$B=B_{\opf}$. Since $B_{\opf}=\widetilde{\cO}(1+\mathsf C_{\opf}/\eps)$, substituting into that bound gives
the form \eqref{eq:time_simplified_bound} with the full native coefficient $\kappa\sigma^2$ replaced by
$\mathsf C_{\opf}=\alpha^{-2}\sigma_{\opf}^2$; the system factor is unchanged.
With the update norm, target accuracy, and worker times fixed, a smaller
statistical coefficient reduces the noise-dependent term for every
choice of $S$. Taking the minimum over $S$ therefore cannot increase
the resulting expression. Thus, a smaller statistical coefficient
also gives a wall-clock upper bound that is no larger and can be
significantly smaller, up to the suppressed logarithmic factors.

\textbf{The Euclidean case.}
When the update norm is Euclidean, $\snorm{\cdot}=\norm{\cdot}_2$ and $\kappa=1$, and \eqref{eq:time_simplified_bound} coincides, up to logarithmic factors, with the time complexity of \algname{Rennala SGD} \citep{tyurin2023optimal}.\footnote{Their lower bound applies to
zero-respecting algorithms (Definition~6.2) under their unbiased
bounded-variance oracle model (Definition~6.3), with fixed worker
computation times. They give a matching upper bound
for \algname{Rennala SGD}.} The assumptions differ, however: our light-tail condition is stronger than bounded variance, and our guarantee holds with high probability rather than in expectation. We therefore read this agreement as a consistency check with the Euclidean theory rather than as an optimality result, and we make no optimality claim for a general update norm.

\clearpage

\section{Impact Statement}\label{app:impact_statement}
This paper presents work whose goal is to advance the field of Machine
Learning. There are many potential societal consequences of our work, none
of which we feel must be specifically highlighted here.

\end{document}